\def\colorful{1}
\newif\ifhyper\IfFileExists{hyperref.sty}{\hypertrue}{\hyperfalse}
\ifhyper\usepackage{hyperref}\fi
\newcommand{\repeattheorem}[1]{\begingroup
  \renewcommand{\thetheorem}{\ref{#1}}\expandafter\expandafter\expandafter\theorem
  \csname reptheorem@#1\endcsname
  \endtheorem
  \endgroup
}
\newcommand{\repeatlemma}[1]{\begingroup
  \renewcommand{\thelemma}{\ref{#1}}\expandafter\expandafter\expandafter\lemma
  \csname replemma@#1\endcsname
  \endlemma
  \endgroup
}
\def\nnewcolor{0}
\newcommand{\nnew}[1]{{\color{red} #1}}
\newcommand{\nnew}[1]{#1}
\newcommand{\new}[1]{{ #1}}
\newcommand{\new}[1]{{#1}}
\newtheorem{theorem}{Theorem}[section]
\newtheorem{lemma}[theorem]{Lemma}
\newtheorem{informal theorem}[theorem]{Theorem (informal statement)}
\newtheorem{proposition}[theorem]{Proposition}
\newtheorem{claim}[theorem]{Claim}
\newtheorem{fact}[theorem]{Fact}
\newtheorem{remark}[theorem]{Remark}
\theoremstyle{definition}
\newtheorem{definition}[theorem]{Definition}
\newcommand{\eqdef}{\stackrel{{\mathrm {\footnotesize def}}}{=}}
\providecommand{\customgenericname}{}
\newcommand{\newcustomtheorem}[2]{\newenvironment{#1}[1]
  {\renewcommand\customgenericname{#2}\renewcommand\theinnercustomgeneric{##1}\innercustomgeneric
  }
  {\endinnercustomgeneric}
}
\newcommand{\wperp}{(\vec w^*)^{\perp_\vec w}}
\newcommand{\lp}{\left}
\newcommand{\rp}{\right}
\newcommand\snorm[2]{\left\| #2 \right\|_{#1}}
\renewcommand\vec[1]{\mathbf{#1}}
\DeclareMathOperator*{\E}{\mathbf{E}}
\newcommand{\proj}{\mathrm{proj}}
\newcommand{\mrm}{\mathrm}
\def\d{\mathrm{d}}
\newcommand{\sample}[2]{#1^{(#2)}}
\newcommand{\tr}{\mathrm{tr}}
\newcommand{\bx}{\mathbf{x}}
\newcommand{\by}{\mathbf{y}}
\newcommand{\bv}{\mathbf{v}}
\newcommand{\bu}{\mathbf{u}}
\newcommand{\bz}{\mathbf{z}}
\newcommand{\bw}{\mathbf{w}}
\newcommand{\br}{\mathbf{r}}
\newcommand{\Sp}{\mathbb{S}}
\newcommand{\err}{\mathrm{err}}
\newcommand{\R}{\mathbb{R}}
\newcommand{\Z}{\mathbb{Z}}
\newcommand{\eps}{\epsilon}
\newcommand{\pr}{\mathbf{Pr}}
\newcommand{\poly}{\mathrm{poly}}
\newcommand{\var}{\mathbf{Var}}
\newcommand{\cov}{\mathbf{Cov}}
\newcommand{\sgn}{\mathrm{sign}}
\newcommand{\sign}{\mathrm{sign}}
\newcommand{\opt}{\mathrm{OPT}}
\newcommand{\D}{\mathcal{D}}
\newcommand{\Ind}{\mathds{1}}
\newcommand{\1}{\Ind}
\newcommand{\littlesum}{\mathop{\textstyle \sum}}
\newcommand{\wh}{\widehat}
\newcommand{\tsya}{{A}}
\newcommand{\tsyb}{{\alpha}}
\newcommand{\conb}{{\beta}}
\newcommand{\dotp}[2]{\left\langle #1, #2 \right\rangle}
\newcommand{\wstar}{\bw^{\ast}}
\newcommand{\citep}{\cite}
\newcommand{\x}{\vec x}
\newcommand{\bounded}{(3, L, R, \conb)}
\newcommand{\boundedU}{(3, L, R, U, \conb)}
\newcommand{\Dperp}{\D^{\pi_{\vec w}}_B}
\newcommand{\nr}{\zeta}
\title{A Polynomial Time Algorithm for Learning Halfspaces with Tsybakov Noise}
\author{
Ilias Diakonikolas\thanks{Supported by NSF Award CCF-1652862 (CAREER), a Sloan Research Fellowship, and
a DARPA Learning with Less Labels (LwLL) grant.}\\
UW Madison\\
{\tt ilias@cs.wisc.edu}\\
\and
Daniel M. Kane\thanks{Supported by NSF Award CCF-1553288 (CAREER) and a Sloan Research Fellowship.}\\
UC San-Diego \\
{\tt dakane@ucsd.edu}\\
\and
Vasilis Kontonis\\
UW Madison\\
{\tt kontonis@wisc.edu }\\
\and
Christos Tzamos\\
UW Madison\\
{\tt tzamos@wisc.edu}
\and
Nikos Zarifis\thanks{Supported in part by NSF Award CCF-1652862 (CAREER) and a DARPA Learning with Less Labels (LwLL) grant.}\\
UW Madison\\
{\tt zarifis@wisc.edu}\\
}
\begin{document}

\maketitle

We study the problem of PAC learning homogeneous halfspaces in the presence of Tsybakov noise.
In the Tsybakov noise model, the label of every sample is independently flipped with an adversarially
controlled probability that can be arbitrarily close to $1/2$ for a fraction of the samples.
{\em We give the first polynomial-time algorithm for this fundamental learning problem.}
Our algorithm learns the true halfspace within any desired accuracy $\eps$ and
succeeds under a broad family of well-behaved distributions including log-concave distributions.
Prior to our work, the only previous algorithm for this problem
required quasi-polynomial runtime in $1/\eps$.

Our algorithm employs a recently developed reduction~\cite{DKTZ20b} from learning
to certifying the non-optimality of a candidate halfspace. This prior work
developed a quasi-polynomial time certificate algorithm based on polynomial regression.
{\em The main technical contribution of the current paper is the first polynomial-time certificate algorithm.} Starting from a non-trivial warm-start, our algorithm performs
a novel ``win-win'' iterative process which, at each step, either finds a valid certificate
or improves the angle between the current halfspace and the true one.
Our warm-start algorithm for isotropic log-concave distributions involves
a number of analytic tools that may be of broader interest. These include
a new efficient method for reweighting the distribution in order to recenter it
and a novel characterization of the spectrum of the degree-$2$ Chow parameters.

\setcounter{page}{0}
\thispagestyle{empty}
\newpage

\section{Introduction} \label{sec:intro}

The main result of this paper is the first polynomial-time algorithm
for learning halfspaces in the presence of Tsybakov
noise under a broad family of distributions. Before we explain our contributions
in detail, we provide some context and motivation for this work.

\subsection{Background} \label{ssec:background}
Learning in the presence of noise is a central challenge in machine learning.
In this paper, we study the (supervised) binary classification setting,
where the goal is to learn a Boolean function from random
labeled examples with noisy labels. In more detail, we focus on the problem of learning
{\em homogeneous halfspaces} in Valiant's PAC learning model~\cite{val84} when the labels have
been corrupted by {\em Tsybakov noise}~\cite{tsybakov2004optimal}.

A (homogeneous) halfspace is any function $h_{\bw}: \R^d \to \{ \pm 1\}$
of the form $h_{\bw}(\bx) = \sgn(\langle \bw, \bx \rangle)$, where the vector $\bw \in \R^d$
is called the weight vector of $h_{\bw}$ and $\sgn: \R \to \{\pm 1\}$
is defined by $\sgn(t) = 1$ if $t \geq 0$ and $\sgn(t) = -1$ otherwise.
Halfspaces (or Linear Threshold Functions) are arguably the most fundamental
and extensively studied concept class in the learning theory and machine learning literature,
starting with early work in the 1950s and 60s~\cite{Rosenblatt:58, Novikoff:62, MinskyPapert:68}
and leading to fundamental and practically important techniques~\cite{Vapnik:98, FreundSchapire:97}.

Halfspaces are known to be efficiently learnable without noise, i.e., when the labels are consistent
with a halfspace, see, e.g.,~\cite{MT:94}. In the presence of noisy labels, the picture
is more muddled. In the agnostic model~\cite{Haussler:92, KSS:94} (when a constant fraction of the
labels can be adversarially chosen), learning halfspaces
is computationally hard~\cite{GR:06, FGK+:06short, Daniely16},
even under the Gaussian distribution~\cite{DKZ20, GGK20}.
This motivates the study of ``benign'' noise models, where positive results may be possible.
The most basic such model, known as Random Classification Noise (RCN)~\cite{AL88},
prescribes that each label is flipped independently with probability {\em exactly} $\eta<1/2$.
In the RCN model, halfspaces are known to be learnable in polynomial time~\cite{BlumFKV96}.

\new{The uniform noise assumption in the RCN model is commonly accepted to be unrealistic.
To address this issue, various natural noise models have been proposed and studied,
capturing a number of realistic noise sources. The two most prominent such models are,
in order of increasing difficulty, the Massart (or bounded) noise model~\cite{Massart2006},
and the Tsybakov noise model~\cite{tsybakov2004optimal}.}
In the Massart model, each label is flipped independently with probability {\em at most} $\eta<1/2$,
but the flipping probability can depend on the example.
The Tsybakov noise condition prescribes that the label of each example
is independently flipped with some probability which is controlled by an adversary
but is not uniformly bounded by a constant less than $1/2$.
In particular, the Tsybakov condition allows the flipping probabilities to be {\em arbitrarily close to $1/2$}
for a fraction of the examples.
More formally, we have the following definition:

\begin{definition}[PAC Learning with Tsybakov Noise] \label{def:tsybakov-learning}
Let $\mathcal{C}$ be a concept class of Boolean-valued functions over $X= \R^d$,
$\mathcal{F}$ be a family of distributions on $X$, $0< \eps <1$ be the error parameter,
and $0 \leq \tsyb < 1$, $\tsya> 0$ be parameters of the noise model.

Let $f$ be an unknown target function in $\mathcal{C}$.
A {\em Tsybakov example oracle}, $\mathrm{EX}^{\mathrm{Tsyb}}(f, \mathcal{F})$, works as follows:
Each time $\mathrm{EX}^{\mathrm{Tsyb}}(f, \mathcal{F})$ is invoked, it returns a
labeled example $(\bx, y)$, such that:
(a) $\bx \sim \D_{\bx}$, where $\D_{\bx}$ is a fixed distribution in $\mathcal{F}$, and
(b) $y = f(\bx)$ with probability $1-\eta(\bx)$ and $y = -f(\bx)$ with probability $\eta(\bx)$.
Here $\eta(\bx)$ is an {\em unknown} function  that satisfies the $(\tsyb, \tsya)$-Tsybakov noise condition.
That is, for any $0<t \leq 1/2$, $\eta(\bx)$ satisfies
$\pr_{\bx \sim \D_{\bx}}[\eta(\bx) \geq 1/2 - t] \leq \tsya \, t^{\frac{\tsyb}{1-\tsyb}}$.

Let $\D$ denote the joint distribution on $(\bx, y)$ generated by the above oracle.
A learning algorithm is given i.i.d. samples from $\D$ and its goal is to output
a hypothesis function $h: X \to \{\pm 1\}$ such that with high probability $h$ is $\eps$-close to $f$,
i.e., it holds $\pr_{\bx \sim \D_{\bx}} [h(\bx) \neq f(\bx)] \leq \eps$.
\end{definition}

The Tsybakov noise model was proposed in~\cite{MT99}, then refined in~\cite{tsybakov2004optimal},
and subsequently studied in a number of works, see, e.g.,~\cite{tsybakov2004optimal, BBL05, BJM06, BalcanBZ07, Hanneke2011, HannekeY15}. All these prior works address information-theoretic aspects of the model,
i.e., do not provide computationally efficient algorithms in high dimensions.
In fact, until very recently,  no non-trivial algorithm was known in the Tsybakov model for any non-trivial concept class,
{\em even under Gaussian marginals}.

The only algorithmic result we are aware of in this model
is the prior work by a subset of the authors~\cite{DKTZ20b}, which gave a {\em quasi-polynomial}
time algorithm for learning homogeneous halfspaces under a family of well-behaved distributions
(including log-concave distributions).

It is easy to see that the Tsybakov model becomes more challenging as the parameter $\tsyb$
in Definition~\ref{def:tsybakov-learning} decreases. In particular, it is well-known that
$\poly(d, 1/\eps^{1/\tsyb})$ samples are necessary (and sufficient)
to learn halfspaces in this model. That is, an exponential dependence in $1/\alpha$
is information-theoretically required for any algorithm that solves this problem.

We note that the error guarantee of Definition~\ref{def:tsybakov-learning} is a strong
identifiability guarantee for the true function, which is information-theoretically
impossible in the agnostic model. In the following remark, we emphasize that even
a constant factor approximation to the optimal misclassification error is insufficient
for identifiability. This is important as it implies a computational separation between
the Tsybakov and agnostic models, even under Gaussian marginals.

\begin{remark}[Identifiability versus Misclassification Error]\label{rem:misclass}
Definition~\ref{def:tsybakov-learning} requires that the learning algorithm identifies {\em the true function}
$f \in \mathcal{C}$ within arbitrary accuracy $\eps$. A related commonly used loss function is the misclassification
error, i.e., the probability $\pr_{(\bx, y) \sim \D} [h(\bx) \neq y]$. We note that having an efficient
algorithm with misclassification error $\opt+\eps$ for all $\eps>0$,
where $\opt = \inf_{g \in \mathcal{C}} \pr_{(\bx, y) \sim \D} [g(\bx) \neq y]$,
is equivalent to having an efficient algorithm with the guarantee of Definition~\ref{def:tsybakov-learning}.
We emphasize however that there is a major qualitative difference between achieving
misclassification error of $\opt+\eps$ and achieving error $c \cdot \opt+\eps$, for a constant $c>1$.
The latter guarantee only allows us to approximate $f$ within error $\Omega(\opt)$.
\end{remark}

Obtaining error $\opt+\eps$ in the agnostic model is known to require time
$d^{\poly(1/\eps)}$ for halfspaces under Gaussian marginals~\cite{KKMS:08, DKZ20, GGK20}.
On the positive side, \cite{ABL17, Daniely15, DKS18a, DKTZ20c}
gave $\poly(d/\eps)$ time algorithms for agnostically learning halfspaces under log-concave marginals.
These algorithms have error of $O(\opt)+\eps$, which
is significantly weaker as explained in Remark~\ref{rem:misclass}.

\subsection{Our Contributions} \label{ssec:results}

\nnew{The existence of a computationally efficient learning algorithm in the presence of Tsybakov noise
for any natural concept class and under any distributional assumptions has been a long-standing open
problem in learning theory. {\em In this work, we make significant progress in this direction by essentially
resolving the complexity of learning halfspaces in this model.}}

In this section, we formally state our contributions.
We start by defining the distribution family for which our algorithms succeed.

\begin{definition}[Well-Behaved Distributions] \label{def:wb}
For $L, R, U>0$ and $k \in \Z_+$, a distribution $\D_{\bx}$ on $\R^d$ is called $(k, L, R, U)$-well-behaved if for any projection $(\D_{\bx})_V$ of
$\D_{\bx}$ on a $k$-dimensional subspace $V$ of $\R^d$, the corresponding pdf
$\gamma_V$ on $V$ satisfies the following properties: (i) $\gamma_V(\bx) \geq L$, for all $\bx \in V$ with $\snorm{2}{\bx} \leq R$
(anti-anti-concentration), and (ii) $\gamma_V(\bx)\leq U$ for all $\x\in V$ (anti-concentration).
If, additionally, there exists $\beta \geq 1$ such that, for any $t > 0$ and unit vector $\vec w\in \R^d$, we have that
$\pr_{\bx \sim \D_{\bx}} [|\dotp{\vec w}{\vec x}|\geq t] \leq \exp(1-t/\conb)$ (sub-exponential concentration),
we call $\D_{\bx}$ $(k, L, R, U, \beta)$-well-behaved.
\end{definition}

\nnew{We focus on the case that the marginal distribution $\D_{\bx}$ on the examples
is well-behaved for some values of the relevant parameters.}
Definition~\ref{def:wb} specifies the concentration and anti-concentration conditions on the
low-dimensional projections of the data distribution that are required for our learning algorithm.
Throughout this paper, we will take $k=3$, i.e., we only require $3$-dimensional projections to have
such properties.

Interestingly, the class of well-behaved distributions is quite broad. In particular, it is easy
to show that the broad class of isotropic log-concave distributions is well-behaved for $L, R, U, \beta$ being universal constants.
Moreover, as Definition~\ref{def:wb} does not require a specific functional form for the underlying density function,
it encompasses a much more general set of distributions.

Since the complexity of our algorithm depends (polynomially) on $1/L, 1/R, U, \beta$,
we state here a simplified version of our main result for the case that these parameters
are bounded by a universal constant.
\nnew{To simplify the relevant theorem statements, we will sometimes
say that a distribution $\D$ of labeled examples in $\R^d \times \{ \pm 1\}$
is well-behaved to mean that its marginal distribution $\D_{\bx}$ is well-behaved.}
We show:

\begin{theorem}[Learning Tsybakov Halfspaces under Well-Behaved Distributions] \label{thm:pac-wb-inf}
Let $\D$ be a well-behaved isotropic distribution on $\R^{d} \times \{\pm 1\}$ that
satisfies the $(\tsyb,\tsya)$-Tsybakov noise condition with respect to an unknown halfspace
$f(\bx) = \sgn(\dotp{\vec w^{\ast}}{\bx} )$. There exists an algorithm that  draws
$N=  O_{\tsya, \tsyb}(d/\eps)^{O(1/\tsyb)}$ samples from $\D$, runs in $\poly(N,d)$ time,
and computes a vector $\wh{\vec w}$ such that, with high probability
we have that $\err_{0-1}^{\D_{\bx}}(h_{\wh{\bw}}, f) \leq \eps$.
\end{theorem}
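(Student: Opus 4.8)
The plan is to derive Theorem~\ref{thm:pac-wb-inf} from the reduction of \cite{DKTZ20b}, which shows that PAC learning homogeneous halfspaces under Tsybakov noise reduces to the following \emph{certification} task: given a candidate unit vector $\bw$ that is not yet $\eps$-close to the target $\wstar$, efficiently produce a low-degree polynomial witnessing that $\bw$ is sub-optimal (from which the reduction extracts an improved candidate). In \cite{DKTZ20b} this certificate was obtained by $L_1$ polynomial regression of degree $\poly(1/\eps)$, which is exactly what forces the quasi-polynomial running time; our contribution is to produce certificates of \emph{constant} degree, keeping everything $\poly(d,1/\eps)$ apart from the unavoidable $(d/\eps)^{O(1/\tsyb)}$ sample bound (the exponential dependence on $1/\tsyb$ being information-theoretically necessary, as noted in the introduction). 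Granted such a certification subroutine, the theorem follows by iterating the resulting angle-contraction step $O(\log(1/\eps))$ times, together with standard uniform-convergence bookkeeping and the extension from isotropic log-concave to general $\boundedU$-well-behaved marginals.

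The certification subroutine has two components. \textbf{Warm start.} The naive degree-$1$ Chow vector $\E[\bx\,y]$ is useless under Tsybakov noise, since near-$1/2$ flipping probabilities can drive it to zero; instead I would use the degree-$2$ Chow matrix $\mathbf{M} = \E[(\bx\bx^\top - \mathbf{I})\,y]$. After an explicit reweighting of the marginal that recenters it appropriately, the restriction of $\mathbf{M}$ to the orthogonal complement of $\wstar$ is (close to) a scalar multiple of the identity, while the $\wstar$-direction carries a spectrally separated eigenvalue whose sign and size are controlled by the Tsybakov parameters; reading off the extremal eigenvector of $\mathbf{M}$ then produces a vector $\bw_0$ with $\mathrm{angle}(\bw_0,\wstar)$ bounded away from $\pi/2$. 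This is the ``characterization of the spectrum of the degree-$2$ Chow parameters'' advertised in the abstract, and carrying it out needs both the recentering reweighting and a careful second-moment computation for one-dimensional projections of isotropic log-concave measures.

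\textbf{Win-win iteration.} Starting from $\bw_0$, the subroutine maintains a candidate $\bw$ with $\theta := \mathrm{angle}(\bw,\wstar)$ bounded away from $\pi/2$, and at each round runs the following dichotomy. It localizes to the band $B_\sigma = \{\bx : |\dotp{\bw}{\bx}| \le \sigma\}$ with $\sigma \asymp \theta$, forms the conditional distribution $\D_{B_\sigma}$, and evaluates a constant-degree test statistic on it --- morally, the gradient at $\bw$ of a rescaled ramp / leaky-ReLU surrogate loss tuned to the Tsybakov exponent $\tsyb$. Then either (i) this gradient has a non-negligible component orthogonal to $\bw$, in which case a single projected step along it yields $\bw'$ with $\mathrm{angle}(\bw',\wstar) \le (1-c)\,\theta$ for an absolute constant $c>0$ --- here the anti-anti-concentration and sub-exponential tails of Definition~\ref{def:wb}, together with the Tsybakov tail bound $\Prob[\eta(\bx) \ge 1/2 - t] \le \tsya\, t^{\tsyb/(1-\tsyb)}$, force the correctly-labeled mass inside the band to dominate the noisy mass and pin down the sign of the step; or (ii) the gradient is small, in which case the same structural estimates show that $h_\bw$ disagrees with $f$ only inside a small subregion of $B_\sigma$, and since $\Prob[\bx \in B_\sigma] = O(\sigma)$ this certifies $\err_{0-1}^{\D_{\bx}}(h_\bw,f) = O(\theta)$. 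Because $\theta$ strictly contracts in case (i), after $O(\log(1/\eps))$ rounds we land in case (ii) with $\theta = O(\eps)$, which is the certificate we want; and every quantity we touch is a bounded-degree moment of an explicit reweighting of $\D$, hence estimable in $\poly(d)$ time from $\poly(d)$ samples --- this low degree is precisely what replaces the degree-$\poly(1/\eps)$ polynomial regression of \cite{DKTZ20b} and removes the quasi-polynomial overhead.

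The step I expect to be hardest is the quantitative analysis of this dichotomy, especially direction (ii): showing, uniformly in the Tsybakov parameters $(\tsyb,\tsya)$, that a small surrogate-loss gradient in the band certifies small $0$-$1$ error while any genuinely sub-optimal $\bw$ necessarily has a large gradient, with the surrogate loss and the band width $\sigma$ calibrated so that both the round count and the sample complexity stay polynomial in $1/\eps$ rather than degrading to quasi-polynomial, despite flipping probabilities approaching $1/2$. The warm-start spectral computation for isotropic log-concave marginals is a secondary but still delicate hurdle; the remaining ingredients --- the reduction of \cite{DKTZ20b}, uniform convergence, and the passage from log-concave to general $\boundedU$-well-behaved marginals --- are routine given the above.
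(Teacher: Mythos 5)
Your outer framework is right: the paper does reduce learning to certifying non-optimality via \cite{DKTZ20b} (though the candidate is improved by running online gradient descent on linear losses built from the certificates for $T=\poly(1/\rho(\eps))=(d/\eps)^{O(1/\tsyb)}$ rounds, not by an $O(\log(1/\eps))$-round angle contraction). The genuine gap is in your certification subroutine, which is not the paper's and, as described, would fail. Your win-win dichotomy lives at the level of the halfspace candidate $\bw$: localize to a band around $\bw$ and test the gradient of a surrogate loss. This is exactly the localization strategy from the Massart-noise literature that the paper explicitly argues is inherently insufficient for Tsybakov noise: even an arbitrarily thin band around $\bw$ places more mass outside the disagreement region than inside it, and the adversary can push $\eta(\bx)$ to $1/2$ precisely on the disagreement region, so neither direction of your dichotomy holds --- a large band gradient need not point toward $\wstar$ and a small one does not certify small $0$-$1$ error. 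The paper's actual certificate is a trapezoidal indicator $\frac{1}{\dotp{\bw}{\bx}}\1\{\sigma_1\leq\dotp{\bw}{\bx}\leq\sigma_2,\,-t_1\leq\dotp{\bv}{\pi_{\bw}(\bx)}\leq -t_2\}$ that isolates a subset of the disagreement region after a \emph{perspective} projection onto $\bw^{\perp}$ (needed to preserve linear separability and hence the Tsybakov condition), and the win-win analysis (Lemma~\ref{lem:improving_w_perp}) is over the \emph{certifying vector} $\bv$ approximating $\wperp$ inside that computation: either some threshold makes the expectation negative, or a perceptron-like update strictly improves $\dotp{\bv}{\wperp}$. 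Your constant-factor angle contraction per round is also unsubstantiated and stronger than anything the paper proves; the usable signal decays like $\theta^{O(1/\tsyb)}$ as the angle shrinks.

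Your warm start is also misplaced for this theorem. For general well-behaved distributions the paper initializes the certifying vector with a \emph{uniformly random} unit vector (correlation $\Omega(1/\sqrt d)$ with $\wperp$), which is what produces the $(d/\eps)^{O(1/\tsyb)}$ bound in the statement. The degree-$2$ Chow-matrix machinery is used only for the log-concave speedup (Theorem~\ref{thm:pac-lc-inf}), it initializes $\bv$ relative to $\wperp$ after banding and projection rather than producing a halfspace candidate $\bw_0$, and its guarantee is weaker than you assert: the paper does not show the extremal eigenvector is aligned with $\wstar$, only that $\wperp$ has a $\poly(\xi)$ projection onto the low-dimensional span of the degree-$1$ Chow vector and the large eigenvectors (with the dimension bound coming from thin-shell estimates for log-concave measures, which are unavailable for general well-behaved marginals), after which a random vector in that subspace is returned.
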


\noindent See Theorem~\ref{thm:pac-wb} for a more detailed statement.

\medskip

For the class of log-concave distributions, we give a significantly more efficient algorithm:

\begin{theorem}[Learning Tsybakov Halfspaces under Log-concave Distributions] \label{thm:pac-lc-inf}
Let $\D$ be a distribution on $\R^{d} \times \{\pm 1\}$ that satisfies the $(\tsyb,\tsya)$-Tsybakov noise condition
with respect to an unknown halfspace $f(\bx) = \sgn(\dotp{\vec w^{\ast}}{\bx} )$ and is such that $\D_{\bx}$ is isotropic log-concave.
There exists an algorithm that  draws $N= \poly(d) \, O(\tsya/\eps)^{O(1/\tsyb^2)}$
samples from $\D$, runs in $\poly(N,d)$ time, and computes a vector $\wh{\vec w}$ such that,
with high probability, we have that $\err_{0-1}^{\D_{\bx}}(h_{\wh{\bw}}, f) \leq \eps$.
\end{theorem}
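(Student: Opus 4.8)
The plan is to combine two components: (i) the general certificate-based learning engine --- the reduction of \cite{DKTZ20b} from learning to certifying the non-optimality of a candidate halfspace, together with the polynomial-time win-win certificate algorithm developed in this paper --- and (ii) a dimension-efficient warm-start algorithm tailored to log-concave marginals. Since every isotropic log-concave distribution is $(3,L,R,U,\beta)$-well-behaved for absolute constants $L,R,U,\beta$, component (i) already yields an algorithm of the type in Theorem~\ref{thm:pac-wb}; the only reason the generic bound carries a $d^{O(1/\alpha)}$ factor is that, absent a good initial guess, the first invocation of the certificate procedure must run polynomial regression of degree $\Theta(1/\alpha)$ over all of $\R^d$. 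Component (ii) removes this cost: once we hold a vector $\bw_0$ making a bounded angle with $\wstar$, every subsequent step of the win-win process can be confined to an $O(1)$-dimensional subspace, so the ambient dimension enters the final bound only polynomially.

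\textbf{Step 1: the log-concave warm-start.} First I would produce, using only $\poly(d)$ samples and time, a unit vector $\bw_0$ with $\theta(\bw_0,\wstar)\le \pi/2-\Omega(1)$ (any constant angle suffices below). The starting object is the degree-$2$ Chow parameter matrix $\vec M = \E_{(\bx,y)\sim\D}[\,y(\bx\bx^\top-\vec I)\,]$, which vanishes for a noiseless halfspace under a symmetric distribution but under Tsybakov noise encodes $\wstar$ through the way the flipping function $\eta$ behaves near the boundary $\dotp{\wstar}{\bx}=0$. Because $\eta$ is adversarial, $\vec M$ itself need not exhibit a usable spectral gap; the fix is to first apply an efficiently computable nonnegative reweighting of the (possibly band-conditioned) distribution that recenters it and restores enough symmetry, and then to prove the key structural fact --- a characterization of the spectrum of the reweighted degree-$2$ Chow matrix --- showing that for at least one reweighting from a small explicitly enumerable family, the eigenspace of $\vec M$ associated with its extreme eigenvalues is low-dimensional and contains a direction with $\Omega(1)$ correlation with $\wstar$. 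A fine net over that low-dimensional subspace then yields $\bw_0$. The anti-concentration and sub-exponential tail properties of log-concave measures, together with standard matrix concentration, make the estimation of $\vec M$ and the net search run in $\poly(d)$ time.

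\textbf{Step 2: win-win refinement to accuracy $\eps$.} Initialize the win-win certificate process at $\bw_0$. In round $t$, holding $\bw_t$, restrict to a low-dimensional subspace $V_t$ (the span of $\bw_t$ and a few candidate update directions), condition on a band $\{|\dotp{\bw_t}{\bx}|\le\sigma_t\}$, recenter via the reweighting subroutine, and run degree-$O(1/\alpha)$ polynomial regression over $V_t$. By the analysis of the certificate algorithm, one of two things happens: either the regression exhibits a valid certificate that $\bw_t$ is more than $\eps$-suboptimal --- in which case the reduction of \cite{DKTZ20b} converts it into a halfspace $\bw_{t+1}$ with a strictly smaller angle to $\wstar$ --- or no such certificate exists, which by the contrapositive of the certificate guarantee forces $\err_{0-1}^{\D_{\bx}}(h_{\bw_t},f)\le\eps$ and we halt. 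Since the angle stays bounded away from $0$ and decreases by a definite amount each round, there are only $O_\alpha(1)$ rounds. Each round performs low-dimensional polynomial regression of degree $O(1/\alpha)$ to an accuracy dictated by the Tsybakov exponent $\alpha/(1-\alpha)$ (which converts the target error $\eps$ into a $\Theta(1/\alpha)$-power accuracy requirement and a band width $\poly(\eps)$); carrying out this sample-complexity bookkeeping across the rounds, and multiplying by the $\poly(d)$ warm-start cost, yields $N=\poly(d)\,O(\tsya/\eps)^{O(1/\tsyb^2)}$ as claimed.

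\textbf{Main obstacle.} The hard part is Step 1: establishing the spectral characterization of the degree-$2$ Chow parameters under a \emph{worst-case} Tsybakov flipping function --- ruling out that the adversary can flatten the spectrum of $\vec M$ so that no direction correlated with $\wstar$ survives --- and designing the reweighting subroutine that recenters an arbitrary (band-conditioned) log-concave distribution while preserving that spectral signal. A secondary but essential point is the bookkeeping in Step 2: one must verify that every operation performed after the warm-start genuinely lives in a constant-dimensional subspace, so that the sole source of $d$-dependence in the final bound is the polynomial factor inherited from estimating $\vec M$ in the warm-start.
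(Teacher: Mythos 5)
Your high-level architecture (certificate-based reduction, a log-concave warm start built from degree-$2$ Chow parameters, a recentering reweighting) matches the paper's, but the proposal has a structural misunderstanding of where the work happens, and the refinement step as written does not go through. First, the paper's initialization is \emph{not} a one-time warm start producing a vector at constant angle to $\wstar$; it is a subroutine invoked inside \emph{every} call to the certificate oracle, and what it produces is a vector correlated with $\wperp$ (the component of $\wstar$ orthogonal to the \emph{current} candidate $\bw$), with correlation only $(\alpha\eps/A)^{O(1/\alpha)}$ --- not $\Omega(1)$. The reason is that the relevant Chow-parameter signal is extracted only after conditioning on a band of width $\approx\eps$ around $\bw$ and projecting onto $\bw^\perp$, which degrades the effective noise level to $\xi\approx(\eps/A)^{1/\alpha}$; the degree-$2$ Chow subspace then has dimension $\poly(1/\xi)$, not $O(1)$. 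Consequently your Step~2 premise --- that after the warm start ``every subsequent step \ldots can be confined to an $O(1)$-dimensional subspace'' --- is false, and the claim of ``$O_\alpha(1)$ rounds'' with a definite angle decrease per round is unsupported: in the paper the online-convex-optimization wrapper makes $(\tsya/(\eps\alpha))^{O(1/\alpha^2)}$ oracle calls, bounded via a regret argument, not via per-round angle progress.

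Second, your refinement engine is degree-$O(1/\alpha)$ polynomial regression over a low-dimensional subspace $V_t$ spanned by $\bw_t$ and ``a few candidate update directions.'' This quietly assumes the hardest part of the problem: for such a regression to certify non-optimality, $V_t$ must contain a direction with substantial correlation with $\wperp$, and you give no mechanism for producing these directions --- that is precisely the initialization problem, and it recurs at every round. The paper's actual certificate algorithm abandons polynomial regression entirely (the authors argue it is inherently quasi-polynomial) in favor of a hard-threshold certifying function $T_{\bw}$ parameterized by a single vector $\vec v$, found by a perceptron-like win-win iteration: either the current $\vec v$ already certifies, or an explicit update provably increases $\dotp{\vec v}{\wperp}$ (Lemma~\ref{lem:improving_w_perp} and Lemma~\ref{lem:gradient}). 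You would also need the two concrete structural lemmas you gesture at but do not supply: that some degree-$2$ polynomial in $\dotp{\wperp}{\bx}$ correlates $\Omega(\xi)$ with the labels despite worst-case $\eta$ (the paper's Lemma~\ref{lem:quad-id}, via Carbery--Wright anti-concentration), and that the large-eigenvalue Chow subspace has dimension $O(\poly(1/\xi))$ (Lemma~\ref{lem:dim-ub}, via the thin-shell variance bound of \cite{LV17}). Finally, your diagnosis that the general well-behaved bound loses $d^{O(1/\alpha)}$ because of ``polynomial regression over all of $\R^d$'' is inaccurate: that algorithm uses random initialization (correlation $1/\sqrt d$ with $\wperp$) followed by the same win-win iteration, and the $d$-dependence enters through the band width and the degraded distribution parameters raised to the $1/\alpha$ power.
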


\noindent See Theorem~\ref{thm:pac-lc} for a more detailed statement.
Since the sample complexity of the problem is $\poly(d, 1/\eps^{1/\tsyb})$,
the algorithm of Theorem~\ref{thm:pac-lc-inf} is qualitatively close to best possible.

\subsection{Overview of Techniques} \label{ssec:techniques}
Here we give an intuitive summary of our techniques in tandem with a comparison
to the most relevant prior work.
A more detailed technical discussion is provided in the proceeding sections.

Our learning algorithms employ the certificate-based framework of \cite{DKTZ20b}.
At a high-level, this framework allows us to efficiently reduce the problem of
{\em finding} a near-optimal halfspace $h_{\widehat{\bw}}(\bx) = \sgn(\langle \widehat{\bw}, \bx \rangle)$
to the (easier) problem of {\em certifying} whether a candidate halfspace
$h_{\bw}(\bx) = \sgn(\langle \bw, \bx \rangle)$ is ``far'' from the optimal halfspace
$f(\bx) = \sgn(\langle \wstar, \bx \rangle)$. The idea is to use
a certificate algorithm (as a black-box) and combine it with an online convex optimization routine.
Roughly speaking, starting from an initial guess $\bw_0$ for $\wstar$,
a judicious combination of these two ingredients allows us to efficiently
compute a near-optimal halfspace $\widehat{\bw}$, i.e., one that the certifying algorithm cannot reject.
We note that a similar approach has been used in \cite{CKMY20} for converting non-proper learners
to proper learners in the Massart noise model.

With the aforementioned approach as the starting point,
the learning problem reduces to that of designing an efficient certifying algorithm.
In recent work~\cite{DKTZ20b}, the authors developed a certifying algorithm for Tsybakov
halfspaces based on high-dimensional polynomial regression.
This method leads to a certifying algorithm with sample complexity and runtime
$d^{\mrm{polylog}(1/\eps)}$, i.e., a quasi-polynomial upper bound.
As we will explain in Section~\ref{ssec:cert-int}, the~\cite{DKTZ20b} approach is inherently limited
to quasi-polynomial time and new ideas are needed to obtain a polynomial time algorithm.
{\em The main contribution of this paper is the design of a polynomial-time certificate algorithm
for Tsybakov halfspaces under well-behaved distributions.}

The key idea to design a certificate in the Tsybakov noise model
is the following simple but crucial observation: If $\wstar$ is the normal vector to true halfspace,
then for any non-negative function $T(\bx)$, it holds that $\E_{(\bx, y) \sim \D} [ T(\bx) y \, \dotp{\wstar}{\bx} ] \geq 0$.
On the other hand, for any $\vec w \neq \wstar$ there exists a non-negative function
$T(\bx)$ such that $\E_{(\bx, y) \sim \D} [T(\bx) \, y \, \dotp{\bw}{\bx}] < 0$.
In other words, there exists a {\em reweighting of the space} that makes the
expectation of $y \dotp{\vec w}{\vec x}$ negative (Fact~\ref{obs:optimal_condition}).
Note that we can always use as $T(\bx)$ the indicator of the disagreement region
between the candidate halfspace $h_{\vec w}(\bx)$ and the optimal halfspace $f(\bx) = h_{\wstar}(\bx)$.
Of course, since optimizing over the space of non-negative functions is intractable, we need
to restrict our search space to a ``simple'' parametric family of functions.
In \cite{DKTZ20b}, squares of low-degree polynomials were used, which led to a quasi-polynomial
upper bound.

In this work, we consider certifying functions of the form:
$$T(\bx) = \frac{1}{\dotp{\bw}{\vec x}}\1\left\{ \sigma_1 \leq \dotp{\bw}{\vec x} \leq \sigma_2 \;,
        -t_1 \leq \dotp{\vec v}{\proj_{\bw^{\perp}} \frac{\vec x}{\dotp{\bw}{\bx}}} \leq - t_2  \right\}$$
that are parameterized by a vector $\bv$ and scalar thresholds $\sigma_1, \sigma_2, t_1, t_2>0$.
Here $\proj_{\bw^{\perp}}$ denotes the orthogonal projection on the subspace orthogonal to $\bw$.
It will be important for our approach that functions of this form are specified by $O(d)$ parameters.

Of course, it may not be a priori clear why functions of this form can be used
as certifying functions in our setting. The intuition behind choosing functions of this simple form is
given in Section~\ref{ssec:cert-int}. In particular, in Claim~\ref{clm:exist_vector}, we show that
for any incorrect guess $\bw$ there {\em exists} a \emph{certifying vector} $\bv$ that makes the expectation
$\E_{(\bx, y) \sim \D} [T(\bx) \, y \, \dotp{\bw}{\bx}]$ negative.  In fact, the vector
$\vec v = \proj_{\bw^{\perp}} \wstar/\snorm{2}{\proj_{\bw^{\perp}} \wstar} := \wperp$ suffices for this purpose.

The key challenge is in finding such a certifying vector $\bv$
algorithmically. We note that our algorithm in general does not find $\wperp$.
But it does find a vector $\bv$ with similar behavior, in the sense of making the
$\E_{(\bx, y) \sim \D} [ T(\bx) \, y \, \dotp{\bw}{\bx}]$ sufficiently negative.
To achieve this goal, we take a two-step approach: The first
step involves computing an initialization vector $\vec v_{0}$
that has non-trivial correlation with $\wperp$.
In our second step, we give a perceptron-like update rule that iteratively improves
the initial guess until it converges to a certifying vector $\bv$. While this algorithm
is relatively simple, its correctness relies on a win-win analysis (Lemma~\ref{lem:improving_w_perp})
whose proof is  quite elaborate. In more detail, we show that for any {\em non-certifying} vector $\bv$
that is sufficiently correlated with $\wperp$, we can efficiently compute a direction that improves
its correlation to $\wperp$. We then argue (Lemma~\ref{lem:gradient}) that by choosing an appropriate
step size this iteration converges to a certifying vector within a small number of steps.

A subtle point is that the aforementioned analysis does not take place in the initial
space, where the underlying distribution is well-behaved and the labels are Tsybakov homogeneous halfspaces,
but in a transformed space. The transformed space is obtained by restricting our points in a band
and then performing an appropriate ``perspective'' projection on the subspace orthogonal to $\bw$
(Section~\ref{ssec:reduction}).
Fortunately, we are able to show (Proposition~\ref{prop:reduction})
that this transformation preserves the structure of the problem:
The transformed distribution remains well-behaved (albeit with somewhat worse parameters) and
satisfies the Tsybakov noise condition (again with somewhat worse parameters)
with respect to a potentially biased halfspace. In fact, this consideration motivated our use
of the perspective projection in the definition of $T(\bx)$.

It remains to argue how to compute an initialization vector $\vec v_{0}$ that acts
as a warm-start for our algorithm. Naturally, the sample complexity and runtime
of our certificate algorithm depend on the quality of the initialization.
The simplest way to initialize is by using a random unit vector.
With random initialization, we achieve initial correlation roughly $1/\sqrt{d}$, which
leads to a certifying algorithm with complexity $(d/\eps)^{O(1/\alpha)}$ (Theorem~\ref{thm:cert-wb}).
This simple initialization suffices to obtain Theorem~\ref{thm:pac-wb-inf} for the general class
of well-behaved distributions.

To obtain our faster algorithm for log-concave marginals (Theorem~\ref{thm:pac-lc-inf}),
we use the exact same approach described above
starting from a better initialization. Our algorithm to obtain a better starting vector
leverages additional structural properties of log-concave distributions.
Our initialization algorithm runs in $\poly(d)$ time (independent of $1/\alpha$)
and computes a unit vector  whose correlation with $\wperp$ is $\Omega(\eps^{1/\alpha})$
(Theorem~\ref{thm:init-lc}).

Specifically, our initialization algorithm works as follows:
\begin{enumerate}
\item It starts by conditioning on a random sufficiently narrow band around the current candidate $\bw$  and projecting the samples on the subspace $\vec w^{\perp}$.
\item It transforms the resulting distribution to ensure that it is isotropic log-concave through rescaling and rejection sampling.
\item It then computes the degree-$2$ \emph{Chow parameters} and uses them
to construct a low-dimensional subspace $V$ inside which $\wperp$ has sufficiently large projection.
This subspace $V$ is the span of the degree-$1$ Chow vector and the large
eigenvectors of the degree-$2$ Chow matrix.
\item Finally, the algorithm outputs a uniformly random vector in $V$
that can be shown to have the desired correlation with $\wperp$.
\end{enumerate}

The resulting distribution after the initial conditioning in Step~1 is still log-concave
and approximately satisfies the Tsybakov noise condition with respect to a near-origin
centered halfspace orthogonal to $\vec w$. However, the distribution may no longer
be zero-centered and may contain a tiny amount of non-Tsybakov noise ---
in the sense that we may end with points $\bx$ having $\eta(\bx)>1/2$.
As we can control the total non-Tsybakov noise, the latter is not a significant issue.
We address the former issue by reweighting the distribution to make it isotropic.
We do this by applying rejection sampling with probability $\min(1,\exp(-\langle \bx, \br\rangle))$,
for some vector $\vec r$ that we compute via SGD (so that the resulting mean is near-zero)
and then rescaling by the inverse covariance matrix.

After the first two steps, our goal is to find any vector with non-trivial correlation $\wperp$,
given that the underlying distribution is isotropic log-concave. We show that the labels $y$
must correlate with some degree-$2$ polynomial in $\dotp{\wperp}{\vec x}$ (Lemma~\ref{lem:quad-id}).
Our algorithm crucially exploits this property, along with recently established ``thin shell'' estimates~\cite{LV17}
for log-concave distributions, to show that a large part of this correlation is explained
by the vector of degree-$1$ Chow parameters and
the top few eigenvectors of the degree-$2$ Chow matrix (Lemma~\ref{lem:dim-ub}).
This implies that the subspace $V$ spanned by those vectors contains a non-trivial part of $\wperp$,
and thus a random vector from $V$  has non-trivial correlation with $\wperp$ with constant probability.

\subsection{Related Work}\label{ssec:related}

Recent work by a subset of the authors~\cite{DKTZ20b} gave the first non-trivial
algorithm for learning homogeneous halfspaces with Tsybakov noise under a family of ``well-behaved'' distributions.
The notion of well-behaved distributions in that work is somewhat different than ours, but also
contains log-concave distributions. The sample complexity and runtime of the
~\cite{DKTZ20b} algorithm is $d^{\mathrm{polylog}(1/\eps)}$
and the quasi-polynomial upper bound is tight for their techniques.

The Tsybakov noise model lies in between the Massart model~\cite{Sloan88, Massart2006}
and the agnostic model~\cite{Haussler:92, KSS:94}.
During the past five years, substantial algorithmic progress has been made
on learning with Massart noise in both the distribution-specific
setting~\cite{AwasthiBHU15, AwasthiBHZ16, ZhangLC17, YanZ17, Zhang20, DKTZ20}
and the distribution-free PAC model~\cite{DGT19, CKMY20}.
\new{The algorithmic techniques in these prior works
are known to inherently fail for the more challenging
Tsybakov noise model, and new ideas are needed for this more general setting.}

Learning in the agnostic model is known to be computationally hard, even under
well-behaved marginals. \nnew{Specifically, recent work~\cite{DKZ20, GGK20}
proved Statistical Query lower bounds of $d^{\poly(1/\eps)}$ for agnostically learning halfspaces
to error $\opt+\eps$ under Gaussian marginals. This lower bound bound is qualitatively matched
by the $L_1$ regression algorithm~\cite{KKMS:08}.}
A related line of work~\cite{KLS09, ABL17, Daniely15, DKS18a, DKTZ20c}
gave efficient algorithms for agnostically learning halfspaces under log-concave marginals.
While these algorithms run in $\poly(d/\eps)$ time, they achieve a ``semi-agnostic'' error guarantee
of $O(\opt)+\eps$, instead of $\opt +\eps$. As already mentioned in Remark~\ref{rem:misclass},
this guarantee is significantly weaker and cannot be used to approximate the true function
within any desired accuracy.

This work is part of the broader direction of designing robust learning algorithms
for a range of statistical models with respect to natural and challenging noise models.
A line of work~\cite{KLS09, ABL17, DKKLMS16, LaiRV16, DKK+17, DKKLMS18-soda,
DKS18a, KlivansKM18, DKS19, DKK+19-sever} has given efficient robust learners
for a range of settings in the presence of adversarial corruptions.
See~\cite{DK19-rs} for a recent survey on the topic.

\subsection{Structure of This Paper} \label{ssec:structure}
After the required preliminaries in Section~\ref{sec:prelims}, in Section~\ref{sec:cert-wb} we
give our certifying algorithm for the class of well-behaved distributions. In Section~\ref{sec:lc},
we give our more efficient certifying algorithm for log-concave distributions.
Finally, in Section~\ref{sec:ogd}, we review the certificate framework and put everything together to prove
our main results.

 \newcommand{\capfun}{\mathrm{cap}}

\section{Preliminaries}\label{sec:prelims}

For $n \in \Z_+$, let $[n] \eqdef \{1, \ldots, n\}$.  We will use small
boldface characters for vectors.  For $\bx \in \R^d$ and $i \in [d]$, $\bx_i$
denotes the $i$-th coordinate of $\bx$, and $\|\bx\|_2 \eqdef
(\littlesum_{i=1}^d \bx_i^2)^{1/2}$ denotes the $\ell_2$-norm of $\bx$.
We will use $\langle \bx, \by \rangle$ for the inner product of $\bx, \by \in
\R^d$ and $ \theta(\bx, \by)$ for the angle between $\bx, \by$.
We will use $\1_A$ to denote the characteristic function of the set $A$,
i.e., $\1_A(\x)= 1$ if $\x\in A$ and $\1_A(\x)= 0$ if $\x\notin A$.

Let $\vec e_i$ be the $i$-th standard basis vector in $\R^d$.
For $d\in \mathbb{N}$, let $\Sp^{d-1} \eqdef \{\bx \in \R^d:\|\bx\|_2 = 1 \}$ be the unit sphere.
We will denote by $\proj_U(\vec x)$ the projection of $\vec x$ onto the subspace
$U \subset \R^d$. For a subspace $U\subset\R^d$, let $U^{\perp}$ be the orthogonal complement of
$U$. For a vector $\vec w\in\R^d$, we use $\vec w^\perp$ to denote the subspace spanned by vectors
orthogonal to $\vec w$, i.e., $\vec w^\perp=\{\vec u\in \R^d: \dotp{\vec w}{\vec u}=0\}$.
Finally, we denote by $\vec w^{\perp_{\vec v}}$ the projection of the vector $\vec w$ on
the subspace $\vec v^{\perp}$ after normalization, i.e.,
$\vec w^{\perp_{\vec v}}= \frac{ \vec w - \dotp{\vec w}{\vec v} \, \vec v}{\snorm{2}{\vec w - \dotp{\vec w}{\vec v} \, \vec v}}$.

We use $\E[X]$ for the expectation of the random variable $X$ and
$\pr[\mathcal{E}]$ for the probability of event $\mathcal{E}$.

We study the binary classification setting where labeled examples $(\bx,y)$ are drawn
i.i.d. from a distribution $\D$ on $\R^d \times \{ \pm 1\}$.
We denote by $\D_{\bx}$ the marginal of $\D$ on $\vec x$.
The zero-one error between two hypotheses $f, h$ (with respect to $\D_{\bx}$) is
$\err_{0-1}^{\D_{\bx}}(f, h) \eqdef \pr_{\bx \sim \D_{\bx}}[f(\bx) \neq h(\bx)]$.

\section{Efficiently Certifying Non-Optimality}\label{sec:cert-wb}

In this section, we give an efficient algorithm that can certify whether a
candidate weight vector $\vec w$ defines a halfspace
$h_{\vec w}(\vec x) = \sgn(\langle \vec w, \vec x \rangle)$
that is far from the optimal halfspace $f(\vec x) = \sgn(\langle \vec w^{\ast}, \vec x \rangle)$.
Before we formally describe and analyze our algorithm, we provide some intuition.

\paragraph{Background: Certifying Non-Optimality.}
Our approach relies on the following simple but powerful idea,
introduced in~\cite{DKTZ20b}: If a candidate weight vector $\vec w$
defines a halfspace $h_{\vec w}(\vec x)  = \sign(\langle \vec w, \vec x \rangle)$ that differs
from the target halfspace $f(\vec x) = \sign(\langle \wstar, \vec x \rangle)$,
there exists a \emph{certifying function} of its non-optimality.
In more detail, there exists a {\em reweighting of the space} that makes the
expectation of $y \dotp{\vec w}{\vec x}$ negative.
This intuition is captured in Fact~\ref{obs:optimal_condition}, stated below.
We note that the only assumption required for this to hold
is that the underlying distribution on examples \nnew{assigns positive mass
to the symmetric difference of any two distinct halfspaces.}

\begin{fact}[Certifying Function] \label{obs:optimal_condition}
Let $\D$ be a distribution on $\R^d \times \{\pm 1\}$ such that:
(a) For any pair of distinct unit vectors $\bv,  \bu \in \R^d$, we have that
$\pr_{\bx \sim \D_\bx}[h_{\vec v}(\vec x) \neq  h_{\vec u}(\vec x)] > 0$.
(b) $\D$ satisfies the Tsybakov noise condition with optimal classifier $f(\vec x) = \sign(\langle \wstar, \vec x \rangle)$.
Then we have:
\begin{enumerate}
\item For any $T: \R^d \mapsto \R_+$,  we have that
$\E_{(\vec x, y) \sim \D}[ T(\vec x) \, y  \dotp{\wstar}{\vec x} ] \geq 0$.

\item For any non-zero vector $\vec w \in \R^d$ such that $\theta(\vec w, \vec w^{\ast}) >0$,
there exists a function $T: \R^d \mapsto \R_+$ satisfying
$\E_{(\vec x, y) \sim \D}[ T(\vec x) \, y  \dotp{\vec w}{\vec x} ] < 0$.
\end{enumerate}
\end{fact}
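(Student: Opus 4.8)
The plan is to derive both parts from a single pointwise identity describing the conditional bias of the label given $\vec x$. Writing $f(\vec x)=\sign(\dotp{\wstar}{\vec x})$ for the optimal classifier, the Tsybakov noise condition says that $y=f(\vec x)$ with probability $1-\eta(\vec x)$ and $y=-f(\vec x)$ with probability $\eta(\vec x)$, where $\eta(\vec x)\le 1/2$ for every $\vec x$ (this is the key structural feature of Tsybakov noise — the flipping probability never exceeds $1/2$, it merely approaches it). Consequently, for every $\vec x$,
\[
\E[y \mid \vec x] = (1-2\eta(\vec x))\, f(\vec x) = (1-2\eta(\vec x))\, \sign(\dotp{\wstar}{\vec x}),
\]
and since $1-2\eta(\vec x)\ge 0$, the quantity $\E[y\mid \vec x]$ always has the same sign as $\dotp{\wstar}{\vec x}$ (or is zero). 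In particular $\E[y\mid\vec x]\cdot\dotp{\wstar}{\vec x} = (1-2\eta(\vec x))\,|\dotp{\wstar}{\vec x}| \ge 0$ pointwise.

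For Part~1, I would fix any $T:\R^d\to\R_+$ and compute by the tower rule, conditioning on $\vec x$ and using that $T(\vec x)$ and $\dotp{\wstar}{\vec x}$ are functions of $\vec x$ alone:
\[
\E_{(\vec x,y)\sim\D}\!\left[T(\vec x)\, y\,\dotp{\wstar}{\vec x}\right]
= \E_{\vec x\sim\D_{\vec x}}\!\left[T(\vec x)\,\dotp{\wstar}{\vec x}\,\E[y\mid\vec x]\right]
= \E_{\vec x\sim\D_{\vec x}}\!\left[T(\vec x)\,(1-2\eta(\vec x))\,|\dotp{\wstar}{\vec x}|\right]\ \ge\ 0,
\]
since the integrand is a product of three nonnegative quantities. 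This handles Part~1 with no further work.

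For Part~2, fix a nonzero $\vec w$ with $\theta(\vec w,\wstar)>0$, so $h_{\vec w}\neq h_{\wstar}$ as functions (they are not equal even up to sign, since $\vec w$ is not a positive multiple of $\wstar$). The natural choice is the indicator of the disagreement region, $T(\vec x)=\1\{h_{\vec w}(\vec x)\neq f(\vec x)\} = \1\{\sign(\dotp{\vec w}{\vec x})\neq\sign(\dotp{\wstar}{\vec x})\}$. On the support of this $T$ we have $\sign(\dotp{\vec w}{\vec x}) = -\sign(\dotp{\wstar}{\vec x})$, hence by the conditional-bias identity,
\[
\E_{(\vec x,y)\sim\D}\!\left[T(\vec x)\, y\,\dotp{\vec w}{\vec x}\right]
= \E_{\vec x}\!\left[T(\vec x)\,\dotp{\vec w}{\vec x}\,(1-2\eta(\vec x))\,\sign(\dotp{\wstar}{\vec x})\right]
= -\,\E_{\vec x}\!\left[T(\vec x)\,(1-2\eta(\vec x))\,|\dotp{\vec w}{\vec x}|\right].
\]
It remains to argue this expectation is strictly negative rather than just $\le 0$. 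Here is where hypotheses (a) and (b) enter: hypothesis (a) applied to $\vec v=\vec w/\|\vec w\|_2$ and $\vec u=\wstar/\|\wstar\|_2$ gives $\pr_{\vec x\sim\D_{\vec x}}[T(\vec x)=1]>0$, so the disagreement region has positive mass; on that region $|\dotp{\vec w}{\vec x}|>0$ almost everywhere (the set where $\dotp{\vec w}{\vec x}=0$ has $\D_{\vec x}$-measure zero, as it lies in the boundary of $h_{\vec w}$ and $T$ vanishes at $\dotp{\wstar}{\vec x}=0$ as well); and the main subtlety is $1-2\eta(\vec x)$ — this factor could in principle be zero on all of the disagreement region. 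To rule that out I would invoke the Tsybakov tail bound from Definition~\ref{def:tsybakov-learning}: $\pr[\eta(\vec x)\ge 1/2-t]\le \tsya\, t^{\tsyb/(1-\tsyb)}\to 0$ as $t\to 0^+$, which forces $\pr[\eta(\vec x)=1/2]=0$; therefore $1-2\eta(\vec x)>0$ almost surely, and combined with the positive mass of the disagreement region we get strict negativity. I expect this last point — verifying that the three nonnegative factors cannot conspire to make the integral vanish, i.e. carefully using hypotheses (a) and (b) to get \emph{strict} inequality — to be the only real obstacle; everything else is the tower-rule computation above.
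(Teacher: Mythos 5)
Your overall strategy coincides with the paper's: the same conditional-bias identity gives Part~1 immediately, and for Part~2 you pick the same certifying function $T(\vec x)=\1\{h_{\vec w}(\vec x)\neq f(\vec x)\}$, reduce to showing $\E_{\vec x}[T(\vec x)\,(1-2\eta(\vec x))\,|\dotp{\vec w}{\vec x}|]>0$, and dispose of the factor $1-2\eta(\vec x)$ exactly as the paper does, via the Tsybakov tail bound forcing $\pr[\eta(\vec x)=1/2]=0$.

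There is, however, one step in Part~2 whose justification does not go through as written. You claim that on the disagreement region $|\dotp{\vec w}{\vec x}|>0$ almost everywhere because ``the set where $\dotp{\vec w}{\vec x}=0$ has $\D_{\vec x}$-measure zero, as it lies in the boundary of $h_{\vec w}$ and $T$ vanishes at $\dotp{\wstar}{\vec x}=0$ as well.'' Neither clause is valid: hypothesis~(a) asserts only that symmetric differences of distinct halfspaces have positive mass, and this does not imply that the hyperplane $\{\dotp{\vec w}{\vec x}=0\}$ (or the half-hyperplane of it lying in the disagreement region) is null --- $\D_{\vec x}$ could in principle place an atom there. Moreover, with the paper's convention $\sgn(0)=+1$, the point set $\{\dotp{\wstar}{\vec x}=0\}$ is \emph{not} automatically outside the support of $T$. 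So positive mass of $\{T=1\}$ alone does not yield $\E[T(\vec x)|\dotp{\vec w}{\vec x}|]>0$. The paper closes this gap with a small extra idea: choose an auxiliary unit vector $\vec w'$ strictly between $\vec w$ and $\wstar$ (angularly), so that $\{h_{\vec w'}\neq f\}\subseteq\{h_{\vec w}\neq f\}$, every point of $\{h_{\vec w'}\neq f\}$ satisfies $|\dotp{\vec w}{\vec x}|>0$ strictly (its boundary hyperplane $\{\dotp{\vec w'}{\vec x}=0\}$ sits inside the disagreement wedge, away from $\{\dotp{\vec w}{\vec x}=0\}$), and hypothesis~(a) applied to the pair $(\vec w',\wstar)$ gives this smaller region positive mass; then $\E[T(\vec x)|\dotp{\vec w}{\vec x}|]\geq\E[\1\{h_{\vec w'}\neq f\}|\dotp{\vec w}{\vec x}|]>0$. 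You would need this (or an equivalent device) to make your ``almost everywhere'' claim rigorous; the rest of your argument is sound.
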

\begin{proof}
For the first statement, note that
\begin{align*}
\E_{(\vec x, y) \sim \D}[ T(\vec x) \, y \dotp{\wstar}{\vec x} ]
& = \E_{\vec x \sim \D_{\bx}}[T(\vec x) |\dotp{\wstar}{\bx}| (1-\eta(\x))]
- \E_{\vec x \sim \D_{\bx}}[ T(\vec x) |\dotp{\wstar}{\bx}| \, \eta(\x)] \\
&=  \E_{\vec x \sim \D_{\bx}} [T(\vec x) |\dotp{\wstar}{\bx}| \, (1-2 \eta(\vec x))] \geq 0\,,
\end{align*}
where we used the fact that $\eta(\bx) \leq 1/2$ and $T(\x)\geq 0$.

For the second statement, let $\vec w \neq \mathbf{0}$ and $\theta(\vec w, \vec w^{\ast}) >0$.
By picking as a certifying function $T$ the indicator function of the disagreement region between $f$ and $h_{\vec w}$,
i.e., $T(\bx) \eqdef \1\{ h_{\vec w}(\vec x) \neq f(\vec x) \}$, we have that
$$\E_{(\vec x, y) \sim \D}[T(\vec x) \, y \dotp{\bw}{\vec x} ]= - \E_{\vec x \sim \D_{\bx}} \left[T(\bx)|\dotp{\bw}{\bx}| \, (1-2 \eta(\vec x))\right] \;.$$
We claim that $\E_{\vec x \sim \D_{\bx}} [T(\bx)|\dotp{\bw}{\bx}| \, (1-2 \eta(\vec x))] >0$,
which proves the second statement. To see this, we use our assumption that the
symmetric difference between any pair of distinct homogeneous halfspaces has positive probability mass.
First, we note that from the Tsybakov condition (for any choice of parameters)
we have that $\pr_{\bx \sim \D_{\bx}}[ \eta(\bx)=1/2] = 0$. So, it suffices to show that $\E_{\vec x \sim \D_{\bx}} [T(\bx)|\dotp{\bw}{\bx}| ] >0$.

Let $\vec w'$ be a non-zero vector such that the hyperplane $\{\vec x : \langle \vec w' , \vec x \rangle = 0\}$
is contained in the disagreement region $\{\vec x : h_{\vec w}(\vec x) \neq f(\vec x) \}$ and
$\theta(\vec w, \vec w'), \theta(\vec w^{\ast}, \vec w') >0$. This implies that
$\{\vec x : h_{\vec w}(\vec x) \neq f(\vec x) \} \supset \{\vec x : h_{\vec w'}(\vec x) \neq f(\vec x) \}$
and $\pr_{\bx \sim \D_\bx}[ h_{\vec w'}(\vec x) \neq f(\vec x)]>0$. Note that $ | \langle \vec w , \vec x \rangle | > 0$
for all $\vec x$ with $h_{\vec w'}(\vec x) \neq f(\vec x)$. Therefore, we get that
$$\E_{\vec x \sim \D_{\bx}} [T(\bx)|\dotp{\bw}{\bx}| ] \geq
\E_{\vec x \sim \D_{\bx}} [\1\{ h_{\vec w'}(\vec x) \neq f(\vec x) \} |\dotp{\bw}{\bx}| ] >0 \;.$$
This completes the proof of Fact~\ref{obs:optimal_condition}.
\end{proof}

\paragraph{Main Result of this Section.}
Fact~\ref{obs:optimal_condition} shows that a certifying function exists.
However, in general, finding such a function is information-theoretically and computationally
hard. By leveraging our distributional assumptions,
we show that a certifying function of a specific simple form exists
and can be computed in polynomial time.

For the rest of this section, we work with distributions that are
$(3, L, R, \beta)$-well-behaved. These distributions satisfy the same
properties as those in Definition~\ref{def:wb}, except the anti-concentration
condition. (The anti-concentration condition is only required at the end of our analysis
in Section~\ref{sec:ogd} to deduce that small angle between two halfspaces implies
small 0-1 error.)

\begin{definition} \label{def:wb-no-U}
For $L, R>0$, $\beta \geq 1$, and $k \in \Z_+$, a distribution $\D_{\bx}$ on $\R^d$ is called $(k, L, R, \beta)$-well-behaved if
the following conditions hold: (i) For any projection $(\D_{\bx})_V$ of
$\D_{\bx}$ on a $k$-dimensional subspace $V$ of $\R^d$, the corresponding pdf
$\gamma_V$ on $V$ satisfies $\gamma_V(\bx) \geq L$, for all $\bx \in V$ with $\snorm{2}{\bx} \leq R$
(anti-anti-concentration).
(ii) For any $t > 0$ and unit vector $\vec w\in \R^d$, we have that
$\pr_{\bx \sim \D_{\bx}} [|\dotp{\vec w}{\vec x}|\geq t] \leq \exp(1-t/\conb)$ (sub-exponential concentration).
\end{definition}

Specifically, we have:

\begin{theorem}[Efficiently Certifying Non-Optimality] \label{thm:cert-wb}
Let $\D$ be a $(3, L, R, \beta)$-well-behaved isotropic distribution on $\R^{d} \times \{\pm 1\}$
that satisfies the $(\tsyb,\tsya)$-Tsybakov noise condition with respect to an unknown
halfspace $f(\bx) = \sgn(\dotp{\vec w^{\ast}}{\bx} )$.
Let $\vec w$ be a unit vector with $\theta(\vec w,\vec w^{\ast})\geq \theta$, where $\theta \in (0, \pi]$.
There is an algorithm that, given as input $\vec w$, $\theta$, and
$N= \large((\tsya/(L R)) \cdot (d/\theta) \large)^{O(1/\tsyb)}\log(1/\delta)$ samples from $\D$,
it runs in $\poly(N, d)$ time, and with probability at least $1-\delta$ returns a certifying function
$T_\bw:\R^d\mapsto \R_+$ such that
\begin{equation} \label{eqn:cert-ineq}
\E_{(\vec x, y) \sim \D} \left[T_{\bw}(\bx)  \, y  \dotp{\vec w}{\bx} \right] \leq
- \frac{1}{\beta} \left(\frac{L R \, \theta}{ \tsya \, d}\right)^{O(1/\alpha)} \;.
\end{equation}
\end{theorem}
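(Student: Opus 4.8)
The plan is to recast the certification problem, via a ``perspective'' change of variables, as the problem of finding a direction in $\bw^{\perp}$ along which the labels are negatively correlated with a thin slab, and then to locate such a direction by a warm-started perceptron-style iteration. Write $\wperp$ for the normalized projection $\proj_{\bw^{\perp}}\wstar/\snorm{2}{\proj_{\bw^{\perp}}\wstar}$, and for $\bx$ in a band $\{0<\sigma_1\le\dotp{\bw}{\bx}\le\sigma_2\}$ set $\bz=\proj_{\bw^{\perp}}(\bx)/\dotp{\bw}{\bx}$. Because the factor $1/\dotp{\bw}{\bx}$ in the candidate certifying function cancels the $\dotp{\bw}{\bx}$, we have
\[
\E_{(\bx,y)\sim\D}\!\left[T_{\bw}(\bx)\,y\,\dotp{\bw}{\bx}\right]=\E_{(\bx,y)\sim\D}\!\left[\1\{\sigma_1\le\dotp{\bw}{\bx}\le\sigma_2\}\,\1\{-t_1\le\dotp{\bv}{\bz}\le-t_2\}\,y\right]
\]
for thresholds $t_1\ge t_2>0$; this equals the band probability times the conditional correlation of $y$ with the slab indicator. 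So it suffices to condition on a band, pass to the distribution of $(\bz,y)$, and find $\bv$ and thresholds making that conditional correlation negative by a quantitatively bounded margin; multiplying back by the band probability (bounded below via anti-anti-concentration, provided $\sigma_2$ is small enough that the band lies within radius $R$) then yields \eqref{eqn:cert-ineq}. First I would verify that the conditional distribution of $(\bz,y)$ is again well-behaved, with only polynomially degraded parameters, and satisfies a Tsybakov-type condition with respect to the \emph{biased} halfspace $\sgn\!\big(\dotp{\wstar}{\bw}+\snorm{2}{\proj_{\bw^{\perp}}\wstar}\,\dotp{\wperp}{\bz}\big)$; this biased structure in $\bz$ is exactly why the certifying function uses a two-sided slab rather than a one-sided threshold, and it is what makes the perspective map the right transformation.

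Second, I would establish the existence of a good direction: taking $\bv=\wperp$ and choosing $t_1>t_2$ both beyond the location $-\dotp{\wstar}{\bw}/\snorm{2}{\proj_{\bw^{\perp}}\wstar}$ of the bias, the slab lands entirely inside the $f=-1$ region of the band, so anti-anti-concentration (on the $2$-plane spanned by $\bw$ and $\bv$) lower-bounds the slab--band mass, while the Tsybakov tail bound shows that only an $O(A\,t^{\alpha/(1-\alpha)})$ fraction of the total mass has flip probability above $1/2-t$; optimizing $t$ gives a correlation at most $-(LR\theta/(A\,d))^{O(1/\alpha)}$, which is where the $1/\alpha$ in the exponent comes from. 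The algorithm, however, need not find $\wperp$ itself. I would start from a warm start $\bv_0$: for a general well-behaved distribution a uniformly random unit vector in $\bw^{\perp}$ has $|\dotp{\bv_0}{\wperp}|=\Omega(1/\sqrt d)$ with constant probability, so repeating, trying $\pm\bv_0$, and keeping the best produces such a $\bv_0$ with high probability. Then I would run a perceptron-like loop: at the current $\bv$, scan a fine grid of threshold pairs; if some pair certifies (makes the empirical conditional correlation sufficiently negative) output the corresponding $T_{\bw}$, and otherwise compute an update direction $\vec g$ and replace $\bv$ by the normalization of $\bv+\rho\,\vec g$ for a suitable step size $\rho$. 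The invariant is that each such update strictly increases $\dotp{\bv}{\wperp}$ by a fixed polynomial amount, so, since $\dotp{\bv}{\wperp}\le 1$, the loop terminates after $\poly(N,d)$ iterations with a certifying $\bv$.

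The remaining work is quantitative bookkeeping. Every quantity above --- the band probability, the slab masses, the conditional label correlations, and the update direction --- is estimated from i.i.d.\ samples; since the target quantity in \eqref{eqn:cert-ineq} has magnitude at least $\tfrac1\beta(LR\theta/(A\,d))^{O(1/\alpha)}$ and the band probability is bounded below by a comparable quantity, a sample size $N=\big((A/(LR))\,(d/\theta)\big)^{O(1/\alpha)}\log(1/\delta)$ makes all these estimates accurate to a constant multiplicative factor simultaneously, after a union bound over the $\poly(N,d)$ iterations and the polynomially many grid points. Each iteration is elementary linear algebra on $O(d)$-dimensional objects, so the running time is $\poly(N,d)$, and combining the slab-correlation bound with the lower bound on the band probability gives \eqref{eqn:cert-ineq}.

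The hard part will be the win--win step: showing that a non-certifying $\bv$ that is already nontrivially correlated with $\wperp$ admits an efficiently computable improving direction $\vec g$. The difficulty is turning a \emph{global} failure statement --- that \emph{every} slab orthogonal to $\bv$ fails to be negatively label-correlated --- into a usable \emph{first-order} statement: one essentially differentiates the slab correlation in $\bv$ and argues that its component along $\wperp$ is large whenever $\bv$ is correlated with $\wperp$ but not yet certifying. Doing this while simultaneously controlling the bias of the halfspace, the degradation of the well-behavedness parameters coming out of the perspective map, and the coupling between the two thresholds is where essentially all the technical difficulty lies; a secondary but nontrivial piece is the step-size analysis needed to convert a first-order gain into a genuine increase of $\dotp{\bv}{\wperp}$ rather than merely a reduction of the angle up to higher-order error.
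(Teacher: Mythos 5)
Your plan coincides with the paper's proof in every structural respect: band conditioning followed by the perspective projection (Proposition~\ref{prop:reduction} and Lemma~\ref{lem:certificate_reduction}), the existence argument with $\bv=\wperp$ (Claim~\ref{clm:exist_vector}), the $\Omega(1/\sqrt d)$ random warm start, and the perceptron-style win--win iteration (Proposition~\ref{prop:cert-wb}, Lemmas~\ref{lem:improving_w_perp} and~\ref{lem:gradient}), with the same source of the $1/\alpha$ exponent and the same sample-complexity bookkeeping. You correctly isolate the win--win lemma as the technical core but leave it unexecuted; in the paper this is done by decomposing the slab into the regions $B_1,B_2,B_3$ and invoking a second mean-value theorem to handle the $|\bx_1|$ weighting, so your proposal is a faithful outline of the paper's argument rather than an alternative route.
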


\subsection{Intuition and Roadmap of the Proof} \label{ssec:cert-int}
In this subsection, we give an intuitive proof overview of Theorem~\ref{thm:cert-wb}
along with pointers to the corresponding subsections where the proof
of each component appears.  First, we discuss the specific
form of the certifying function that we compute. The proof of Fact~\ref{obs:optimal_condition}
shows that a valid choice for the certifying function would be the characteristic function of the
disagreement region between the candidate hypothesis $\vec w$ and the optimal halfspace $\wstar$,
i.e., $T_{\vec w}(\bx)= \1\{\sgn(\dotp{\vec w}{\bx}) \neq \sgn(\dotp{\wstar}{\bx}\}$.
Unfortunately, we do not know $\wstar$ (this is the vector we are trying to approximate!),
and therefore it is unclear how to algorithmically use this certifying function.

Our goal is to judiciously define a parameterized family of ``simple'' certifying functions
and optimize over this family to find one that acts similarly to the indicator of the disagreement region.
A natural attempt to construct a certifying function for a guess $\vec w$
would be to focus on a small ``band'' around the candidate halfspace $\vec w$.
This idea bears some similarity with the technique of ``localization", an approach going back to~\cite{Bartlett05},
which has previously seen success for the problem of efficiently learning homogeneous halfspaces with Massart noise~\cite{AwasthiBHU15, AwasthiBHZ16, Zhang20, DKTZ20}.
Unfortunately, this idea is inherently insufficient to provide us with a certifying function for the following reason:
Even an arbitrarily thin band around $\bw$ will assign more probability mass on points
that do not belong in the disagreement region, and therefore the expectation
$\E_{(\bx, y) \sim \D}[ \1\{\sigma_1 \leq \dotp{\bw}{\bx} \leq \sigma_2\} y\dotp{\bw}{\bx} ]$
will be positive. See Figure~\ref{fig:hardband} for an illustration.

\usetikzlibrary{patterns}
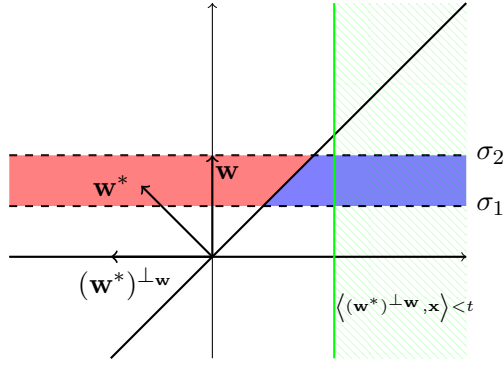
\begin{figure}	[h]
		\centering
    \begin{tikzpicture}[scale=1.35]
	\coordinate (start) at (0.5,0);
	\coordinate (center) at (0,0);
	\coordinate (end) at (0.5,0.5);
	\draw (-2,1) node[left] {};
	\draw (-2,0.5) node[left] {};
	\draw (2,2) node[above] {};
	\draw (3,1.2) node[above] {};
	\draw[black,dashed, thick](-2,1) -- (2.5,1);
	\draw[black,dashed, thick](-2,0.5) -- (2.5,0.5);
	\draw[fill=blue, opacity=0.5,draw=none] (0.5,0.5) -- (1 ,1)--(2.5,1)--(2.5,0.5);
\path[pattern=north west lines, pattern color=green,opacity=0.4] (1.2,-1)--(1.2,2.5)  --
	   (2.5,2.5)--(2.5,-1);
	\draw[fill=red, opacity=0.5,draw=none] (0.5,0.5) -- (1 ,1)--(-2,1)--(-2,0.5);
	\draw[->] (-2,0) -- (2.5,0) node[anchor=north west,black] {};
	\draw[->] (0,-1) -- (0,2.5) node[anchor=south east] {};
	\draw[thick,->] (0,0) -- (-0.7,0.7) node[anchor= south east,below,left=0.1mm] {$\wstar$};
	\draw[black,thick] (-1,-1) -- (2.5,2.5);
		\draw[black,thick] (-2,0) -- (2.5,0);
\draw[thick ,->] (0,0) -- (0,1) node[right=2mm,below] {$\bw$};
	 \draw[thick ,->] (0,0) -- (-1,0) node[right=2mm,below] {$(\bw^{\ast})^{\bot_{\vec w}}$};
	       \draw[green,thick](1.2,-1) -- (1.2,2.5);
	       	\draw (1.1,-0.5) node[right] {$\scriptscriptstyle \dotp{(\bw^{\ast})^{\bot_{\vec w}}}{\x}<t$};
	     \draw (2.5,1) node[right] {$\sigma_2$};
	 \draw (2.5,0.5) node[right] {$\sigma_1$};
\end{tikzpicture}
\caption{
  The indicator of a band $\{\bx: \sigma_1 \leq \dotp{\bw}{\bx} \leq
  \sigma_2\}$ cannot be used as a certificate even when there is no noise
  and the underlying distribution is the standard Gaussian: the
  contribution of the positive points (red region) is larger than the
  contribution of the negative points (blue region). On the other hand, taking the intersection
  of the band and the halfspace with normal vector $\wperp$ and a sufficiently
  negative threshold $t<0$ gives us a subset of the disagreement region
  (intersection of blue and green regions).
}
\label{fig:hardband}
\end{figure}

Intuitively, we need a way to \emph{boost} the contribution of the disagreement region.
One way to achieve this is by constructing a smooth reweighting of the space.
In particular, we can look in the direction of the projection of $\wstar$
on the orthogonal complement of $\bw$, i.e., the vector
$$
\wperp = \frac{\proj_{\bw^{\perp}}(\wstar)}{\snorm{2}{\proj_{\bw^{\perp}}(\wstar)}}\;,
$$
that lies in the $2$-dimensional subspace spanned by $\vec w$ and $\wstar$; see
Figure~\ref{fig:hardband}. Notice that the disagreement region is a
subset of the points that have negative inner product with $\wperp$.
Therefore, a candidate reweighting can be obtained by using
a polynomial $p(\dotp{\wperp}{\bx})$ of moderately large degree
that will boost the points that lie in the disagreement region.
This was the approach used in the recent work \cite{DKTZ20b}.
Since $\wperp$ is not known, one needs to formulate a convex program
(SDP) over the space of all $d$-variate polynomials of sufficiently large
degree $k$ implying that the corresponding SDP has $d^{\Omega(k)}$ variables.
Unfortunately, it is not hard to show that the required degree cannot be smaller than
$\Omega(\log(1/\eps))$. Therefore, this approach can only give
a $d^{\Omega(\log(1/\eps))}$, i.e., quasi-polynomial, certificate algorithm.

In this work, we instead use a \emph{hard threshold function} together with a band to
isolate (a non-trivial subset of) the disagreement region. In more detail, we consider a function
of the form $\1\{\dotp{\wperp}{\bx} < t \}$ for some scalar threshold $t$; see
Figure~\ref{fig:hardband}.  Since $\wperp$ is unknown, we need to find a certifying vector
$\vec v$ that is perpendicular to $\vec w$, i.e., $\vec v \in \bw^\perp$ and acts similarly to $\wperp$.
This leads us to the following \textbf{non-convex} optimization problem
$$
\min_{t\in \R, \vec v \in \bw^{\perp}} \E_{(\bx, y) \sim \D}[ \1\{\sigma_1 \leq \dotp{\bw}{\bx} \leq \sigma_2\}
\1\{\dotp{\bv}{\bx} < t\} \dotp{\bw}{\bx} ] \,.
$$
Thus far, we have succeeded in reducing the number of parameters that we want to compute
down to $O(d)$, but now we are faced with a non-convex optimization problem.
Our main result is an efficient algorithm that computes a \emph{certifying vector} $\vec v$
and a threshold $t$ that does not necessarily minimize the above non-convex objective,
but still suffice to make the corresponding expectation sufficiently negative.

We now describe the main steps we use to compute the certifying vector $\vec v$.  The
first obstacle we need to overcome is that, for $\vec v \in \bw^\perp$, the corresponding
instance fails to satisfy the Tsybakov noise condition.  In particular, when we
project the datapoints on $\vec w^\perp$, the region close to the boundary of
the optimal halfspace becomes ``fuzzy" even without noise: Points with
different labels are mapped to the same point of $\bw^{\perp}$; see
Figure~\ref{fig:orth_proj}. We bypass this difficulty by using a {\em perspective projection}
to map the datapoints onto $\bw^{\perp}$.
For non-zero vectors $\vec w, \vec x \in \R^d$, the perspective projection of
$\vec x$ on $\vec w$ is defined as follows:
\begin{equation} \label{eq:perspective_projection}
  \pi_{\vec w}(\bx) \eqdef \proj_{\bw^{\perp}}  \frac{\bx}{\dotp{\bw}{\bx}} \,.
\end{equation}
Notice that without noise the perspective projection keeps the dataset linearly
separable (see Figure~\ref{fig:pros_projection}), which means that after we
perform this projection the label noise of the resulting instance will again
satisfy the Tsybakov noise condition.  In addition, we show that this
transformation will preserve the crucial distributional properties
(concentration, anti-anti-concentration) of the underlying marginal distribution $\D_{\bx}$.
For a detailed discussion and analysis of this data transformation,
see Subsection~\ref{ssec:reduction}.

\usetikzlibrary{shapes.misc}

\tikzset{cross/.style={cross out, draw=black, minimum size=2*(#1-\pgflinewidth), inner sep=0pt, outer
sep=0pt},cross/.default={1pt}}

\begin{figure}
\subfloat[ Orthogonal projection.]{
  \begin{minipage}[t]{0.47\textwidth}

    \centering
    \begin{tikzpicture}[scale=1.35]
      \coordinate (start) at (0.5,0);
      \coordinate (center) at (0,0);
      \coordinate (end) at (0.5,0.5);
      \draw (-2,1) node[left] {};
      \draw (-2,0.5) node[left] {};
      \draw (2,2) node[above] {};
      \draw (3,1.2) node[above] {};
      \draw[black,dashed, thick,red](-2,1) -- (3.75,1);
      \draw[black,dashed, thick, red](-2,0.5) -- (3.75,0.5);
      \draw[fill=blue, opacity=0.5,draw=none] (0.5,0.5) -- (1 ,1)--(1,0.5)--(0.5,0.5);
      \draw[fill=red, opacity=0.5,draw=none] (0.5,0.5) -- (1 ,1)--(0.5,1)--(0.5,0.5);
      \draw[black,dashed, thick](-2,2) -- (3.75,2);
      \draw[black,dashed, thick](0.5,0) -- (0.5,2);
      \draw[black,dashed, thick](1,0) -- (1,2);
      \draw (0.5,2) node[cross=2.5pt,red,rotate=10] {};
      \draw (0.55,2) node[cross=2.5pt,red,rotate=10] {};
      \draw (0.6,2) node[cross=2.5pt,red,rotate=10] {};
      \draw (0.65,2) node[cross=2.5pt,red,rotate=10] {};
      \draw (0.7,2) node[cross=2.5pt,red,rotate=10] {};
      \draw (0.75,2) node[cross=2.5pt,blue,rotate=10] {};
      \draw (0.8,2) node[cross=2.5pt,blue,rotate=10] {};
      \draw (0.85,2) node[cross=2.5pt,blue,rotate=10] {};
      \draw (0.9,2) node[cross=2.5pt,blue,rotate=10] {};
      \draw (0.95,2) node[cross=2.5pt,blue,rotate=10] {};
      \draw (0.53,2) node[cross=2.5pt,blue,rotate=25] {};
      \draw (0.63,2) node[cross=2.5pt,blue,rotate=25] {};
      \draw (0.85,2) node[cross=2.5pt,red,rotate=25] {};
      \draw (0.90,2) node[cross=2.5pt,red,rotate=25] {};
      \draw[->] (-2,0) -- (3.8,0) node[anchor=north west,black] {};
      \draw[->] (0,-1) -- (0,2.5) node[anchor=south east] {};
      \draw[thick,->] (0,0) -- (-0.7,0.7) node[anchor= south east,below,left=0.1mm] {$\wstar$};
      \draw[black] (-1,-1) -- (2,2);
\draw[thick ,->] (0,0) -- (0,1) node[right=2mm,below] {$\bw$};
      \draw[thick ,->] (0,0) -- (-1,0) node[right=2mm,below] {$(\bw^{\ast})^{\bot_{\vec w}}$};
    \end{tikzpicture}
    \label{fig:orth_proj}
  \end{minipage}
}
\subfloat[Perspective projection.]{
  \begin{minipage}[t]{0.47\textwidth}
    \centering
    \begin{tikzpicture}[scale=1.35]
      \coordinate (start) at (0.5,0);
      \coordinate (center) at (0,0);
      \coordinate (end) at (0.5,0.5);
      \draw (-2,1) node[left] {};
      \draw (-2,0.5) node[left] {};
      \draw (2,2) node[above] {};
      \draw (3,1.2) node[above] {};
      \draw[black,dashed, thick,red](-2,1) -- (3.75,1);
      \draw[black,dashed, thick, red](-2,0.5) -- (3.75,0.5);
      \draw[fill=blue, opacity=0.5,draw=none] (0.5,0.5) -- (1 ,1)--(3/2,1)--(1.5/2,0.5);
      \draw[fill=red, opacity=0.5,draw=none] (0.5,0.5) -- (1 ,1)--(0.5,1)--(0.25,0.5);
      \draw[black,dashed, thick](-2,2) -- (3.75,2);
\draw[thick,->] (0,0) -- (-0.7,0.7) node[anchor= south east,below,left=0.1mm] {$\wstar$};
      \draw[black,dashed] (0,0) -- (1,2);
      \draw[black,dashed] (0,0) -- (3,2);
      \draw[->] (-2,0) -- (3.8,0) node[anchor=north west,black] {};
      \draw[->] (0,-1) -- (0,2.5) node[anchor=south east] {};
      \draw[thick,->] (0,0) -- (-0.7,0.7) node[anchor= south east,below,left=0.1mm] {$\wstar$};
      \draw[black] (-1,-1) -- (2,2);
\draw[thick ,->] (0,0) -- (0,1) node[right=2mm,below] {$\bw$};
      \draw[thick ,->] (0,0) -- (-1,0) node[right=2mm,below] {$(\bw^{\ast})^{\bot_{\vec w}}$};

      \draw (1,2) node[cross=2.5pt,red,rotate=10] {};
      \draw (1.1,2) node[cross=2.5pt,red,rotate=10] {};
      \draw (1.2,2) node[cross=2.5pt,red,rotate=10] {};
      \draw (1.3,2) node[cross=2.5pt,red,rotate=10] {};
      \draw (1.4,2) node[cross=2.5pt,red,rotate=10] {};
      \draw (1.5,2) node[cross=2.5pt,red,rotate=10] {};
      \draw (1.6,2) node[cross=2.5pt,red,rotate=10] {};
      \draw (1.7,2) node[cross=2.5pt,red,rotate=10] {};
      \draw (1.8,2) node[cross=2.5pt,red,rotate=10] {};
      \draw (1.9,2) node[cross=2.5pt,red,rotate=10] {};

      \draw (2.1,2) node[cross=2.5pt,blue,rotate=10] {};
      \draw (2.2,2) node[cross=2.5pt,blue,rotate=10] {};
      \draw (2.3,2) node[cross=2.5pt,blue,rotate=10] {};
      \draw (2.4,2) node[cross=2.5pt,blue,rotate=10] {};
      \draw (2.5,2) node[cross=2.5pt,blue,rotate=10] {};
      \draw (2.6,2) node[cross=2.5pt,blue,rotate=10] {};
      \draw (2.7,2) node[cross=2.5pt,blue,rotate=10] {};
      \draw (2.8,2) node[cross=2.5pt,blue,rotate=10] {};
      \draw (2.9,2) node[cross=2.5pt,blue,rotate=10] {};
    \end{tikzpicture}
    \label{fig:pros_projection}
  \end{minipage}
}
  \caption{
      The dotted line on top of the figures corresponds to the subspace
      $\bw^{\perp}$.  When we project the points to $\bw^{\perp}$ orthogonally,
      we map points with different labels to the same point of $\bw^{\perp}$
      and obtain the ``fuzzy" region where blue points (classified as negative by
      $\wstar$) overlap with red points (positive according to $\wstar$).  On the
      other hand, the perspective projection defined in
      Equation~\ref{eq:perspective_projection} preserves linear
      separability.
      }
\end{figure}
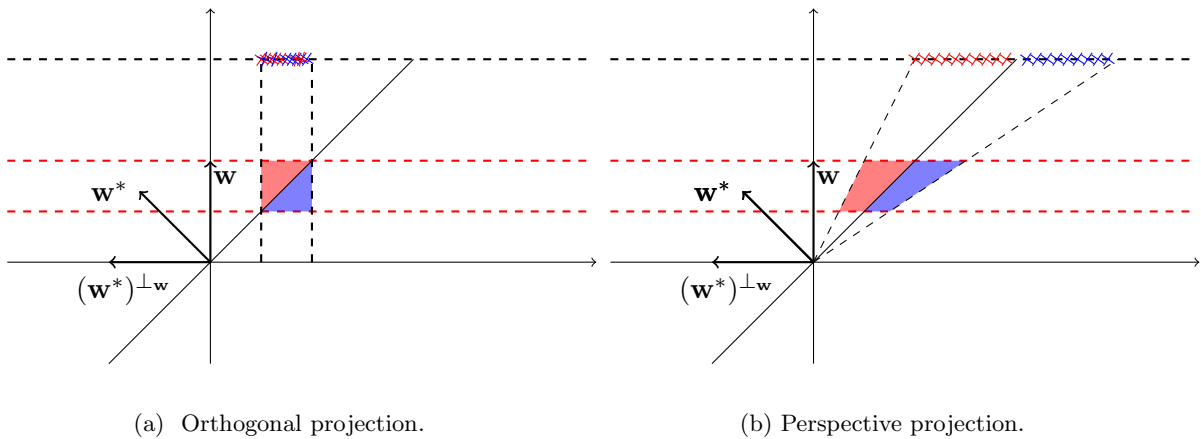

Given this setup, the certificate that our algorithm will compute for a
candidate weight vector $\vec w \in \R^d$ is a function of the form
\begin{align} \label{eq:certificate_form}
  T_\bw(\bx) =
  \frac{1}{\dotp{\bw}{\vec x}}
  \1\left\{ \sigma_1 \leq \dotp{\bw}{\vec x} \leq \sigma_2
    \, ,
    -t_1 \leq \dotp{\vec v}{\pi_{\vec w}(\vec x)} \leq - t_2
  \right\}
  =: \frac{\psi(\bx)}{\dotp{\bw}{\vec x}}
  \,,
\end{align}
for some vector $\vec v \in \R^{\nnew{d}}$ and scalars
$\sigma_1, \sigma_2, t_1, t_2 > 0$.
For an illustration, in Figure~\ref{fig:pros_projection} we plot the set of the
indicator function $\psi(\bx)$ which is a (high-dimensional) trapezoid.

It is not difficult to verify that by choosing $\vec v = \wperp$ and appropriately picking
$\sigma_1, \sigma_2, t_1, t_2 $, the corresponding certificate function $T_{\bw}$
resembles the indicator function of the disagreement region and certifies the
\emph{non-optimality} of the candidate halfspace $\bw$.  In the following claim,
we prove that for any non-optimal halfspace there exists a certifying function of the above form.

\begin{claim}\label{clm:exist_vector}
Let $\D$ be a $(3, L, R, \beta)$-well-behaved isotropic distribution on $\R^{d} \times \{\pm 1\}$
that satisfies the $(\tsyb,\tsya)$-Tsybakov noise condition with respect to an unknown halfspace
$f(\bx) = \sgn(\dotp{\vec w^{\ast}}{\bx} )$.
\nnew{Fix any non-zero vector $\vec w$ such that  $\theta(\vec w, \wstar)>0$}.
Then, by setting $\vec v = \wperp$ in the definition \eqref{eq:certificate_form} of $T_{\bw}(\bx)$,
there exist $\sigma_1, \sigma_2, t_1, t_2 >0$ such that
$\E_{(\bx, y) \sim \D}[T_\bw(\bx) \, y \dotp{\vec w}{\vec x}] < 0$.
\end{claim}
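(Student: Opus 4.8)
The plan is to check directly that, taking $\vec v=\wperp$ in \eqref{eq:certificate_form}, one can choose the four scalar thresholds so that $T_\bw$ becomes (a positive-mass subset of) the indicator of the disagreement region, which is exactly the certificate produced in the proof of Fact~\ref{obs:optimal_condition}. First I would simplify $T_\bw$. Since $\vec v=\wperp\in\vec w^\perp$, the perspective coordinate collapses to $\dotp{\vec v}{\pi_{\vec w}(\bx)}=\dotp{\vec v}{\bx}/\dotp{\vec w}{\bx}$, and the $1/\dotp{\vec w}{\bx}$ prefactor in $T_\bw$ cancels the $\dotp{\vec w}{\bx}$ coming from $y\,\dotp{\vec w}{\bx}$, so that $T_\bw(\bx)\,y\,\dotp{\vec w}{\bx}=y\,\psi(\bx)$ with $\psi$ the trapezoid indicator. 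Conditioning on $\bx$ and using $\E[y\mid\bx]=f(\bx)(1-2\eta(\bx))$ reduces the claim to exhibiting $\sigma_1,\sigma_2,t_1,t_2>0$ with $\E_{\bx\sim\D_\bx}[\psi(\bx)\,f(\bx)\,(1-2\eta(\bx))]<0$.

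The strategy is to pick the thresholds so that two properties hold at once: (a) $f\equiv-1$ on the support $\{\psi=1\}$, and (b) $\{\psi=1\}$ has positive $\D_\bx$-mass. Given (a), the displayed expectation equals $-\E_\bx[\psi(\bx)(1-2\eta(\bx))]$, and this is strictly negative because $1-2\eta(\bx)\ge0$ everywhere while the Tsybakov condition forces $\pr_\bx[\eta(\bx)=1/2]=0$, so $1-2\eta>0$ almost surely and the integrand is strictly positive on a positive-mass set by (b). For (a), I would write $\wstar=\cos\theta\,\vec w+\sin\theta\,\wperp$ with $\theta:=\theta(\vec w,\wstar)\in(0,\pi)$; on $\{\psi=1\}$ we have $\dotp{\vec w}{\bx}\in[\sigma_1,\sigma_2]>0$ and $r:=\dotp{\vec v}{\bx}/\dotp{\vec w}{\bx}\in[-t_1,-t_2]$, so $\dotp{\wstar}{\bx}=\dotp{\vec w}{\bx}\,(\cos\theta+r\sin\theta)$, which is negative as soon as $r<-\cot\theta$, i.e.\ whenever $t_2>\cot\theta$. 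Taking $t_2:=\max(1,2\cot\theta)$ (automatically $>\cot\theta$ and $>0$) and $t_1:=2t_2>t_2$ secures (a).

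For (b), I would note that $\psi$ depends only on $\proj_V(\bx)$ for the two-dimensional subspace $V=\mathrm{span}(\vec w,\vec v)=\mathrm{span}(\vec w,\wstar)$; in coordinates $(a,b)=(\dotp{\vec w}{\bx},\dotp{\vec v}{\bx})$ the set $\{\psi=1\}$ corresponds to the planar trapezoid $\{\sigma_1\le a\le\sigma_2,\ -t_1 a\le b\le -t_2 a\}$, which has area $\tfrac12(t_1-t_2)(\sigma_2^2-\sigma_1^2)$ and is contained in the disk of radius $\sigma_2\sqrt{1+t_1^2}$ in $V$. Choosing $\sigma_2:=R/(2\sqrt{1+t_1^2})$ and $\sigma_1:=\sigma_2/2$ makes it a positive-area subset of the radius-$R/2$ disk of $V$. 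By the anti-anti-concentration property of Definition~\ref{def:wb-no-U}(i) applied to any $3$-dimensional subspace containing $V$ (integrating out the extra coordinate), the density of $(\D_\bx)_V$ is bounded below by a positive constant on that disk, hence $\pr_\bx[\psi(\bx)=1]>0$, and combining with (a) gives $\E_{(\bx,y)\sim\D}[T_\bw(\bx)\,y\,\dotp{\vec w}{\bx}]=-\E_\bx[\psi(\bx)(1-2\eta(\bx))]<0$.

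I expect the only real content to be the \emph{joint} choice of the four thresholds: forcing $f$ to be constant on the certificate's support while simultaneously keeping that support inside the band where the marginal is lower bounded and of positive mass. This is geometrically forced once $\vec v=\wperp$, since $\wperp$ is precisely the direction into which the disagreement region opens up (cf.\ Figure~\ref{fig:hardband}); the only thing to watch is that for small $\theta$ one must take $t_1,t_2\asymp\cot\theta$ and hence $\sigma_i\asymp R\theta$, so the trapezoid's mass degrades polynomially in $LR\theta$. That loss is irrelevant for this existence statement but is exactly the quantitative dependence that reappears in the bound of Theorem~\ref{thm:cert-wb}.
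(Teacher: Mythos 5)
Your proposal is correct and follows essentially the same route as the paper's (sketched) argument: reduce $T_\bw(\bx)\,y\,\dotp{\vec w}{\bx}$ to $y\,\psi(\bx)$, choose the thresholds so that the trapezoid $\{\psi=1\}$ is a positive-mass subset of the disagreement region, and conclude via $1-2\eta(\bx)>0$ almost surely. Your version is in fact more carefully quantified than the paper's sketch — in particular your condition $t_2>\cot\theta$ is the geometrically correct one for containment in $\{f=-1\}$, and your use of anti-anti-concentration on a $3$-dimensional superspace of $V$ to lower-bound the $2$-dimensional marginal is exactly the right way to invoke Definition~\ref{def:wb-no-U}.
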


\nnew{We note here that the proof of Claim~\ref{clm:exist_vector} is sketched below for the sake of intuition
and is not required for the subsequent analysis.}

\begin{proof}[Proof Sketch.]
Setting $\vec v = \wperp$ in \eqref{eq:certificate_form}, we have
\begin{align*}
\E_{(\bx, y) \sim \D}[ T_\bw(\bx) \, y \dotp{\vec w}{\vec x}] =
\E_{(\bx, y) \sim \D}\left[\psi(\bx) \, y \right]
= \E_{(\bx, y) \sim \D}\left[\psi(\bx) \, (1-2 \eta(\bx)) \, \sgn(\dotp{\wstar}{\bx})\right] \;.
\end{align*}
We will show that by appropriate choices of $\sigma_1, \sigma_2, t_1, t_2$
the indicator $\psi(\bx)$ above corresponds to a subset of the disagreement region
$\{\bx: \sgn(\dotp{\vec w}{\vec x}) \neq \sgn(\dotp{\wstar}{\vec x})\}$.
See Figure~\ref{fig:Tsybakov_Regions} for an illustration.
More precisely, since the distribution satisfies an anti-anti-concentration property,
we can choose $\sigma_1, \sigma_2 = \Theta(R)$,
so that inside the band $\{ \sigma_1 \leq \dotp{\bw}{\bx} \leq \sigma_2\}$ there is non-zero probability mass.
In particular, by setting $\sigma_1 = \rho R/2$ and $\sigma_2 = \rho R/\sqrt{2}$, for some $\rho \in (0,1]$,
we have that the band has mass roughly $\Omega(\rho R^3)$.
For these choices of $\sigma_1$ and $\sigma_2$, we can pick
$t_1 = \Theta(R/\rho)$ and guarantee that the slope of the corresponding line
in the two-dimensional subspace is sufficiently small, so that we get a trapezoid whose intersection
with the aforementioned horizontal band is large (see Figure~\ref{fig:Tsybakov_Regions}).
It remains to tune the parameter $t_2$.  Since $\theta = \theta(\vec w, \wstar)$ is known,
we may pick $t_2 = \Theta(R\tan\theta/\rho)$ in order to make sure that the trapezoid is a subset
of the disagreement region between $\wstar$ and $\bw$.
\end{proof}

\begin{figure}[h!]
  \centering
  \begin{tikzpicture}[scale=1.5]
    \coordinate (start) at (0.5,0);
    \coordinate (center) at (0,0);
    \coordinate (end) at (0.5,0.5);
         \path[pattern=north west lines, pattern color=red,opacity=0.2] (0,0) --
         (2,2)--(3.8,2.0)--(3.8,0);
                  \path[pattern=north west lines, pattern color=red,opacity=0.2] (0,0) --
         (-1,-1)--(-2,-1.0)--(-2,0);
    \draw (-2,1) node[left] {$\sigma_2$};
    \draw (-2,0.5) node[left] {$\sigma_1$};
    \draw (2,2) node[above] {$\dotp{(\vec w^{\ast})^{\bot_{\vec w}} }{\pi_{\vec w}(\bx) }= -t_2$};
    \draw (3,1.2) node[above] {$\dotp{(\vec w^{\ast})^{\bot_{\vec w}}} { \pi_{\vec w}(\bx)} = -t_1$};
    \draw[black,dashed, thick,red](-2,1) -- (3.75,1);
    \draw[black,dashed, thick, red](-2,0.5) -- (3.75,0.5);
    \draw[fill=blue, opacity=0.5,draw=none] (0.5,0.5) -- (1 ,1)--(3/1.2,1)--(1.5/1.2,0.5);
\draw[->] (-2,0) -- (3.8,0) node[anchor=north west,black] {};
    \draw[->] (0,-1) -- (0,2.5) node[anchor=south east] {};
    \draw[thick,->] (0,0) -- (-0.7,0.7) node[anchor= south east,below,left=0.1mm] {$\wstar$};
    \draw[black,thick] (-1,-1) -- (2,2);
    \draw[black] (0,0) -- (3,1.2);
\draw[thick ,->] (0,0) -- (0,1) node[right=2mm,below] {$\bw$};
    \draw[thick ,->] (0,0) -- (-1,0) node[right=2mm,below] {$(\bw^{\ast})^{\bot_{\vec w}}$};

\end{tikzpicture}
  \caption{The function $\psi(\bx)$ for $\vec v = \wperp =
    \frac{\proj_{\bw^{\perp}}(\wstar)}{\snorm{2}{\proj_{\bw^{\perp}}(\wstar)}}$ defined in \eqref{eq:certificate_form} and
    appropriate scalars $\sigma_1,\sigma_2, t_1, t_2$ is the indicator of a subset of the disagreement region
    $\{\bx: \sgn(\dotp{\vec w}{\vec x}) \neq \sgn(\dotp{\wstar}{\bx})\}$.}
  \label{fig:Tsybakov_Regions}
\end{figure}

From the above proof, it is clear that one does not really need to optimize the
scalars $\sigma_1, \sigma_2, t_1$. Their values can be chosen according to
the parameters of the underlying well-behaved distribution.
Our optimization problem will be with respect to the vector $\vec v$ and the threshold $t_2$.
However, optimizing the expectation of the certifying function $T_{\bw}$ of
Equation~\eqref{eq:certificate_form} is still a non-convex problem.  Given a
candidate certifying vector $\bv_0$ that has non-trivial correlation with
$\wperp$, our main structural result is a \textbf{win-win} statement showing
that either there exists a threshold $t_2$ that, together with $\vec v_0$,
makes the corresponding expectation of $T_{\vec w}$ sufficiently negative,
or a perceptron-like update rule \emph{will improve the correlation} between
$\wperp$ and $\bw$. In particular, we show that after roughly $\poly(d/\eps)$ updates
the correlation between the guess $\vec v$ and $\wperp$ will be sufficiently large
so that there exists some threshold $t_2$ that makes $\vec v$ a certifying
vector. Having such a vector $\vec v$, it is easy to optimize over all possible
thresholds and find a value for $t_2$ that works.
For the formal statement of this claim and its proof, see
Subsection~\ref{ssec:impr-cert} and Proposition~\ref{prop:cert-wb}.

\subsection{Data Transformation} \label{ssec:reduction}

In this subsection, we show that we can simplify the problem of searching for a
certifying vector $\vec v$ in $T_{\vec w}(\bx)$ defined in
Equation~\eqref{eq:certificate_form} by projecting the samples to an appropriate
$(d-1)$-dimensional subspace via the perspective projection~\eqref{eq:perspective_projection}.
The main proposition of this subsection (Proposition~\ref{prop:reduction}) shows
that this operation \nnew{in some sense preserves the structure of the problem.
In more detail, the transformed distribution remains well-behaved and
satisfies the Tsybakov noise condition (albeit with somewhat worse parameters).}

The transformation we perform is as follows:
\begin{enumerate}
  \item We first condition on the band $B=\{\bx: \dotp{\bx}{\bw} \in [\sigma_1, \sigma_2] \}$, for some
    positive parameters
    $\sigma_1, \sigma_2$.
  \item We then perform the perspective projection on the samples, $\pi_{\vec w}(\cdot)$,
    defined in Equation~\eqref{eq:perspective_projection}.
\end{enumerate}

To facilitate the proceeding formal description, we introduce the following definition.
\begin{definition} [Transformed Distribution] \label{def:transformed_distribution}
Let $\D$ be a distribution on $\R^d \times \{\pm 1\}$, $B \subseteq \R^d$ and $(\bx, y) \sim \D$.
\begin{itemize}
\item We use $\D_B$ to denote $\D$ conditioned on $\bx$ being in the set $B$.
\item Let $q: \R^d \mapsto \R^d$. We denote by $\D^q$ the distribution of the random variable $(q(\bx), y)$.
\end{itemize}
With the above notation, $\D_B^q$ is the distribution obtained by first conditioning on $B$ and
then applying the transformation $q(\cdot)$ to $\D_B$.
\end{definition}

With Definition~\ref{def:transformed_distribution} in place,
the distribution obtained from $\D$ after we condition on the band $B$ is $\D_B$,
and the distribution obtained from $\D_B$ after we perform the perspective projection is $\D_B^{\pi_{\bw}}$.
We can now state the main proposition of this subsection.

\begin{proposition}[Properties of $\D_B^{\pi_{\bw}}$] \label{prop:reduction}
Let $\D$ be a $(3, L, R, \beta)$-well-behaved isotropic distribution on $\R^{d} \times \{\pm 1\}$
that satisfies the $(\tsyb,\tsya)$-Tsybakov noise condition with respect to an unknown
halfspace $f(\bx) = \sgn(\dotp{\vec w^{\ast}}{\bx})$. Fix any unit vector $\bw$ such that $\theta(\bw, \wstar) = \theta$,
and let $B=\{\bx: \dotp{\bx}{\bw} \in [\rho R/2, \rho R/\sqrt{2}] \}$, for some $\rho \in (0, 1]$.
Then, for some $c=(LR)^{O(1)}$, the following conditions hold:
\begin{enumerate}
\item The distribution $\Dperp$ on $\R^{d} \times \{\pm 1\}$ is $\left(2, c \rho^3, \frac{1}{\rho}, \frac{\beta}{c \rho}\log\frac 1 \rho\right)$-well-behaved.

\item The distribution $\Dperp$ satisfies the $\left(\tsyb,\frac A{c \rho}\right)$-Tsybakov noise condition with
optimal classifier $\sgn\left(\dotp{\wperp}{\bx} + 1/\tan\theta\right)$.
\end{enumerate}
\end{proposition}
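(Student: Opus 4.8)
I would prove the two conclusions of Proposition~\ref{prop:reduction} separately. For the well-behavedness of $\Dperp$ (Part~1), the key is a reduction to a three-dimensional computation. Fix a $2$-dimensional subspace $V\subseteq\bw^\perp$ and set $W\eqdef V\oplus\mathrm{span}(\bw)$, a $3$-dimensional subspace. For every $\bx$ one has $\proj_V(\pi_{\bw}(\bx))=\proj_V(\bx)/\dotp{\bw}{\bx}$, which depends on $\bx$ only through $\proj_W(\bx)$, and the band event $\{\dotp{\bw}{\bx}\in[\sigma_1,\sigma_2]\}$ likewise depends only on $\proj_W(\bx)$. Hence the projection of $\Dperp$ onto $V$ equals the result of applying the same ``condition on the band, then perspective project'' operation to the $3$-dimensional projection $\D^{\proj_W}$ of $\D$ onto $W$, which is itself $(3,L,R,\beta)$-well-behaved. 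So it suffices to treat the case $d=3$.

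Working in $\R^3$ with $\bw=\e_1$, $V=\mathrm{span}(\e_2,\e_3)$, write $\bx=(s,x_2,x_3)$ and $\bu\eqdef(x_2/s,x_3/s)$. Conditioning on $s\in[\sigma_1,\sigma_2]=[\rho R/2,\rho R/\sqrt2]$ and applying the change of variables $(s,x_2,x_3)\mapsto(s,\bu)$ --- which introduces the factor $s^2$, the Jacobian of $(s,\bu)\mapsto(s,s\bu)$ --- the density of $\bu$ becomes $\gamma_V(\bu)=\frac1{\pr[B]}\int_{\sigma_1}^{\sigma_2}\gamma_W(s,s\bu)\,s^2\,\d s$. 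For $\|\bu\|\le1/\rho$ and $s\in[\sigma_1,\sigma_2]$ the point $(s,s\bu)$ has norm $s\sqrt{1+\|\bu\|^2}\le\sigma_2\sqrt{1+1/\rho^2}\le\sigma_2\sqrt2/\rho=R$, so $\gamma_W(s,s\bu)\ge L$, whence $\gamma_V(\bu)\ge\tfrac{L}{3}(\sigma_2^3-\sigma_1^3)=\Omega(LR^3\rho^3)\ge c\rho^3$ on the radius-$1/\rho$ ball for a suitable $c=(LR)^{O(1)}$. For the sub-exponential estimate (done directly in $\R^d$): any unit $\bv\perp\bw$ satisfies $\dotp{\bv}{\pi_\bw(\bx)}=\dotp{\bv}{\bx}/\dotp{\bw}{\bx}$, and on $B$ we have $\dotp{\bw}{\bx}\ge\sigma_1=\rho R/2$, so $\{|\dotp{\bv}{\pi_\bw(\bx)}|\ge t\}\subseteq\{|\dotp{\bv}{\bx}|\ge\sigma_1 t\}$. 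Combining the hypothesis $\pr_{\bx\sim\D}[|\dotp{\bv}{\bx}|\ge\sigma_1 t]\le e^{1-\sigma_1 t/\beta}$ with the lower bound $\pr[B]=\Omega(LR^3\rho)$ (from the one-dimensional anti-anti-concentration of $\D$ along $\bw$) yields $\pr_{\bu}[|\dotp{\bv}{\bu}|\ge t]\le\frac1{\pr[B]}e^{1-\rho Rt/(2\beta)}\le e^{1-t/\beta'}$ for $\beta'=O\!\big(\frac{\beta}{\rho R}(1+\log\tfrac1{\pr[B]})\big)$, which is at most $\frac{\beta}{c\rho}\log\frac1\rho$ after shrinking $c$.

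For Part~2, the crucial fact is that the perspective projection keeps each fiber label-consistent. A point $\bx\in B$ with $\pi_{\bw}(\bx)=\bu$ has the form $\bx=s(\bw+\bu)$ with $s=\dotp{\bw}{\bx}>0$, so, writing $\wstar=\cos\theta\,\bw+\sin\theta\,\wperp$, we get $f(\bx)=\sgn\!\big(s(\cos\theta+\sin\theta\,\dotp{\wperp}{\bu})\big)=\sgn\!\big(\dotp{\wperp}{\bu}+1/\tan\theta\big)=:g(\bu)$, independent of $s$; thus $g$ is the optimal classifier of $\Dperp$. Consequently the noise function of $\Dperp$ is the fiberwise average $\eta'(\bu)=\pr[y\neq g(\bu)\mid\pi_{\bw}(\bx)=\bu]=\E_{\bx\mid\bu}[\eta(\bx)]\le1/2$, so $\Dperp$ still carries genuine Tsybakov noise (no point has noise rate $\ge1/2$) --- this is precisely why the perspective projection is used rather than the orthogonal one. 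For the tail bound: if $\eta'(\bu)\ge1/2-t$ then $\E_{\bx\mid\bu}[1/2-\eta(\bx)]\le t$, so by Markov $\pr[\eta(\bx)>1/2-2t\mid\bu]\ge1/2$; integrating over $\bu$, using that conditioning on $B$ inflates probabilities by at most $1/\pr[B]$, and invoking the original $(\alpha,A)$-Tsybakov bound at level $2t$, we get
$$\pr_{\bu}\!\big[\eta'(\bu)\ge\tfrac12-t\big]\;\le\;2\,\pr_{\bx\sim\D_B}\!\big[\eta(\bx)>\tfrac12-2t\big]\;\le\;\frac{2A(2t)^{\alpha/(1-\alpha)}}{\pr[B]}\;=\;O\!\Big(\frac{A}{LR^3\rho}\Big)\,t^{\alpha/(1-\alpha)},$$
which is the claimed $(\alpha,A/(c\rho))$-Tsybakov condition for $c=(LR)^{O(1)}$ once the remaining constants (including the $\alpha$-dependent factor from the Markov step) are absorbed.

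The geometric reduction in Part~1 and the fiber analysis in Part~2 are the conceptual core and become essentially routine once set up; the subtle part is the parameter bookkeeping. In particular, the band endpoints $\sigma_1=\rho R/2$, $\sigma_2=\rho R/\sqrt2$ are calibrated so that the preimage $(s,s\bu)$ of a radius-$1/\rho$ point lands exactly inside the radius-$R$ ball where $\D$ anti-anti-concentrates (this is what forces the specific band), and the normalization $1/\pr[B]=O(1/(LR^3\rho))$ must be tracked through both the concentration estimate (where it enters the exponent logarithmically, via $\log\frac1{\pr[B]}$) and the Tsybakov estimate (where it rescales the parameter). Getting these constants to land on the forms claimed in the proposition is where I would be most careful.
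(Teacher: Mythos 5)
Your proposal is correct and follows essentially the same route as the paper's: Part~1 is the paper's Lemma~\ref{lem:well_beh_projection} (a change of variables producing the Jacobian factor $s^{2}$ on the relevant $3$-dimensional marginal, integrated over $s\in[\rho R/2,\rho R/\sqrt2]$ so that the preimage stays in the radius-$R$ ball, plus rescaling the sub-exponential parameter by $1/\sigma_1$ and paying $\log(1/\pr[B])$ in the exponent), and Part~2 combines the paper's Lemmas~\ref{lem:biased_halfspace} and~\ref{lem:tsybakov_condition}. The one point where you go beyond the paper is the fiberwise-averaging step $\eta'(\bu)=\E[\eta(\bx)\mid\pi_{\bw}(\bx)=\bu]$: the paper's Lemma~\ref{lem:tsybakov_condition} only establishes the Tsybakov condition for $\D_B$ \emph{before} the projection and silently identifies it with the condition for $\Dperp$, whereas you treat the averaging explicitly via Markov's inequality. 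Be aware that your Markov step costs a factor $2^{1+\alpha/(1-\alpha)}$, which cannot literally be absorbed into $c=(LR)^{O(1)}$ as the proposition is stated; this wrinkle is shared with (indeed hidden in) the paper, and it can be sidestepped by noting that the functional form of the Tsybakov condition actually used downstream (Lemma~\ref{lem:tsybakov_expectation}) is preserved exactly under fiberwise averaging of $\eta$, by the tower property.
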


The rest of this subsection is devoted to the proof of Proposition~\ref{prop:reduction}.
Before we proceed with the proof, we express the problem of finding a certifying vector
$\vec v$ satisfying \eqref{eq:certificate_form} in the transformed domain.
Indeed, it is not hard to see that after we condition on $B$ and perform the perspective projection $\pi_{\bw}$,
our goal is to find a vector $\vec v$ and scalars $t_1, t_2 > 0$ such that
\begin{align}\label{eq:certifying_reduction}
\E_{(\bz, y) \sim \Dperp}[ \1\{ -t_1 \leq \dotp{\bv}{\bz} \leq -t_2\} \, y] < 0 \,.
\end{align}
More formally, we have the following simple lemma showing that if we find a
certifying vector $\vec v$ and parameters $t_1, t_2$ in the transformed instance $\Dperp$
satisfying Equation~\eqref{eq:certifying_reduction}, the same vector and parameters will be a
certificate with respect to the initial well-behaved distribution $\D$.
The relevant expectation remains negative but is slightly closer to zero.
\begin{lemma}\label{lem:certificate_reduction}
Let $\D$ be a $(3, L, R, \beta)$-well-behaved distribution on $\R^d$ and let
$B = \{\bx: \dotp{\bx}{\vec w} \in [\rho R/2, \rho R/\sqrt{2}]\}$, for some $\rho \in (0,1]$.
Let $\vec w \in \R^d$ be a unit vector and let $\vec v \in \bw^{\perp}$, $t_1, t_2>0$
be such that $\E_{(\bz, y) \sim \Dperp}[ \1\{ -t_1 \leq \dotp{\bv}{\bz} \leq -t_2\} \, y] < - C$,
for some $C > 0$.  Then we have that
$\E_{(\bx, y) \sim \D}[T_{\vec w}(\bx) \, y \dotp{\vec w}{\bx}] = - \Omega(C L R^3 \rho).$
\end{lemma}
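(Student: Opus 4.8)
The plan is to unfold the definitions so that the expectation over $\D$ factors as (the mass of the band $B$) times (the expectation over $\Dperp$), and then to lower bound the mass of $B$ using anti-anti-concentration. First I would take $\sigma_1 = \rho R/2$ and $\sigma_2 = \rho R/\sqrt2$ in \eqref{eq:certificate_form}, i.e.\ the endpoints of the band $B$. Then the factor $1/\dotp{\bw}{\bx}$ in $T_\bw$ cancels the explicit $\dotp{\bw}{\bx}$, so $T_\bw(\bx)\,\dotp{\bw}{\bx} = \psi(\bx) = \1\{\bx\in B\}\cdot\1\{-t_1\le\dotp{\bv}{\pi_{\bw}(\bx)}\le -t_2\}$. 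Hence $\E_{(\bx,y)\sim\D}[T_\bw(\bx)\,y\,\dotp{\bw}{\bx}] = \E_{(\bx,y)\sim\D}[\psi(\bx)\,y]$, and since $\psi$ is supported on $B$ this equals $\pr_{\bx\sim\D_\bx}[\bx\in B]\cdot\E_{(\bx,y)\sim\D_B}[\1\{-t_1\le\dotp{\bv}{\pi_{\bw}(\bx)}\le -t_2\}\,y]$.

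Next I would observe that, by Definition~\ref{def:transformed_distribution}, pushing $\D_B$ forward through the perspective projection $\pi_{\bw}$ yields exactly $\Dperp$; therefore $\E_{(\bx,y)\sim\D_B}[\1\{-t_1\le\dotp{\bv}{\pi_{\bw}(\bx)}\le -t_2\}\,y] = \E_{(\bz,y)\sim\Dperp}[\1\{-t_1\le\dotp{\bv}{\bz}\le -t_2\}\,y]$, which is $< -C$ by hypothesis. Combining the two displays gives $\E_{(\bx,y)\sim\D}[T_\bw(\bx)\,y\,\dotp{\bw}{\bx}] < -C\cdot\pr_{\bx\sim\D_\bx}[\bx\in B]$, so it only remains to lower bound the mass of the band.

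For this, I would fix any $3$-dimensional subspace $V$ containing $\bw$; by anti-anti-concentration (Definition~\ref{def:wb-no-U}), the density $\gamma_V$ of the projection onto $V$ satisfies $\gamma_V(\bz)\ge L$ whenever $\snorm{2}{\bz}\le R$. Viewed in $V$, the event $\bx\in B$ is the slab $\dotp{\bw}{\bz}\in[\rho R/2,\rho R/\sqrt2]$; intersecting with the radius-$R$ ball, for each fixed value $t$ of the $\bw$-coordinate we have $t\le\rho R/\sqrt2\le R/\sqrt2$ (using $\rho\le1$), so the cross-section is a $2$-dimensional disk of radius $\sqrt{R^2-t^2}\ge R/\sqrt2$, hence of area $\ge\pi R^2/2$; and the slab has thickness $\rho R(\sqrt2-1)/2$. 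Integrating the pointwise lower bound $L$ over this region gives $\pr_{\bx\sim\D_\bx}[\bx\in B]\ge L\cdot(\pi R^2/2)\cdot\rho R(\sqrt2-1)/2 = \Omega(L R^3\rho)$ (which in particular is positive, so $\D_B$ is well defined). Plugging this in yields $\E_{(\bx,y)\sim\D}[T_\bw(\bx)\,y\,\dotp{\bw}{\bx}] < -C\cdot\Omega(L R^3\rho) = -\Omega(C L R^3\rho)$, which is the claim.

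I do not expect a real obstacle here; the statement is essentially bookkeeping. The one point that needs care is recovering the exact power $R^3$: it comes precisely from invoking the anti-anti-concentration bound for \emph{three}-dimensional projections, where the two free dimensions of the cross-section contribute $R^2$ and the band thickness contributes $\rho R$; a merely one-dimensional anti-anti-concentration statement would be off by two factors of $R$.
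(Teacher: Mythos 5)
Your proposal is correct and follows essentially the same route as the paper: both rewrite $\E_{(\bx,y)\sim\D}[T_{\vec w}(\bx)\,y\,\dotp{\vec w}{\bx}]$ as $\pr_{\D}[B]$ times the expectation over $\Dperp$, and then lower bound $\pr_{\D}[B]=\Omega(LR^3\rho)$ via the anti-anti-concentration of a $3$-dimensional projection containing $\vec w$. Your explicit slab-times-disk volume computation is just a spelled-out version of the paper's appeal to the volume of the spherical segment.
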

\begin{proof}
It holds
\begin{align*}
\E_{(\bz, y) \sim \Dperp}[ \1\{ -t_1 \leq \dotp{\bv}{\bz} \leq -t_2\} y]
&= \E_{(\bx, y) \sim \D_B}[ \1\{ -t_1 \leq \dotp{\bv}{\pi_{\bw}(\bx)} \leq -t_2\} y] \\
&= \frac{1}{\pr_{\D}[B]} {\E_{(\bx, y) \sim \D}[ T_{\vec w}(\bx) \dotp{\vec w}{\bx} y]} \,.
\end{align*}
Using the anti-anti concentration property of $\D_{\bx}$, we can bound
$\pr_{\D}[B]$ from below. Observe that since the lower bound $L$ on the $3$-dimensional marginal density
holds inside a ball of radius $R$, to bound the above probability from below,
we can multiply $L$ by the volume of the intersection of $B$ with the ball of radius $R$.
Using the formula for the volume of spherical segments, we obtain
$\pr_{\D}[B] = \Omega(L R^3 \rho)$.
This completes the proof.
\end{proof}

\paragraph{Proof of Proposition~\ref{prop:reduction}.}
Our goal is to compute a certificate of the form~\eqref{eq:certificate_form}.
As we already discussed, if we had chosen to simply project the points on the subspace $\vec w^{\perp}$,
we would have obtained an instance that is not linearly separable --- even if the noise rate
$\eta(\bx)$ was identically zero.  By first conditioning on the set $B = \{\bx: \dotp{\bx}{\vec w} \in [\sigma_1, \sigma_2]\}$,
where $\sigma_1, \sigma_2 > 0$, and then performing the perspective projection $\pi_{\vec w}$,
we keep the dataset linearly separable (with respect to the noiseless distribution, i.e., for $\eta(\bx) = 0$),
albeit by a \emph{biased} linear classifier.

We have the following lemma.
\begin{lemma} \label{lem:biased_halfspace}
Let $\D$ be a distribution on $\R^{d} \times \{\pm 1\}$ such that for $(\bx, y) \sim \D$ we have that $y = \sgn(\dotp{\wstar}{\bx})$.
Let $\vec w$ be any unit vector such that $\theta(\vec w, \wstar) = \theta \in (0, \pi]$.
For $(\bz, y) \sim \Dperp$ it holds
$y = \sgn\left(\dotp{\wperp}{\vec z} + \frac{1}{\tan \theta} \right)$,
i.e., the transformed distribution is linearly separable by a biased hyperplane.
\end{lemma}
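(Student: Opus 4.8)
The plan is to work entirely in the two-dimensional subspace $V$ spanned by $\bw$ and $\wstar$, since the perspective projection, the sign of $\dotp{\wstar}{\bx}$, and the vector $\wperp$ all depend only on $\proj_V(\bx)$. First I would set up coordinates in $V$: let $\bw$ be the ``vertical'' axis and $\wperp = (\wstar)^{\perp_{\bw}}$ the ``horizontal'' axis, so that any $\bx$ with $\proj_V(\bx) = (a, b)$ (meaning $\dotp{\wperp}{\bx} = a$ and $\dotp{\bw}{\bx} = b$) gets mapped by $\pi_{\bw}$ to the point $\bz$ with $\dotp{\wperp}{\bz} = a/b$ (and components orthogonal to $V$ scaled by $1/b$, which are irrelevant to the sign computation). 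Next I would write $\wstar$ in these coordinates: since $\theta = \theta(\bw,\wstar)$, we have $\wstar = \cos\theta \, \bw + \sin\theta \, \wperp$ (note $\sin\theta > 0$ as $\theta \in (0,\pi]$), so $\dotp{\wstar}{\bx} = b\cos\theta + a \sin\theta$ for $\bx$ with $\proj_V(\bx) = (a,b)$.

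The key computation is then immediate: the original label is $y = \sgn(\dotp{\wstar}{\bx}) = \sgn(b \cos\theta + a\sin\theta)$. Now, crucially, after conditioning on the band $B$ we have $b = \dotp{\bw}{\bx} > 0$ (since $\sigma_1 > 0$), so multiplying the argument by $1/(b\sin\theta) > 0$ does not change the sign: $y = \sgn\!\big(\cos\theta/\sin\theta + a/b\big) = \sgn\!\big(a/b + 1/\tan\theta\big)$. Since $a/b = \dotp{\wperp}{\bz}$ for the transformed point $\bz = \pi_{\bw}(\bx)$, this is exactly $y = \sgn\big(\dotp{\wperp}{\bz} + 1/\tan\theta\big)$, as claimed.

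The only thing requiring a small amount of care — and the closest thing to an obstacle here — is the positivity of $\dotp{\bw}{\bx}$, which is what lets us rescale the sign's argument freely; this is precisely why the band $B$ is taken with $\sigma_1 > 0$ (points on the ``wrong side'' of $\bw$, where $\dotp{\bw}{\bx} < 0$, would flip the inequality and break linear separability of the projected instance — this is the ``fuzzy region'' phenomenon illustrated in Figure~\ref{fig:orth_proj}). I would also note in passing the degenerate case $\theta = \pi/2$, where $1/\tan\theta = 0$ and the statement reduces to $y = \sgn(\dotp{\wperp}{\bz})$, which is consistent. Everything else is elementary two-dimensional geometry, so this lemma should be short; the substance of Proposition~\ref{prop:reduction} lies in the well-behavedness and Tsybakov-parameter tracking, handled separately.
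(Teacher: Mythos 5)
Your proof is correct and follows essentially the same route as the paper's: decompose $\wstar = \cos\theta\,\bw + \sin\theta\,\wperp$, then factor out the positive quantity $\dotp{\bw}{\bx}\sin\theta$ (positivity coming from the band conditioning and $\theta\in(0,\pi]$) to rewrite the sign in terms of $\dotp{\wperp}{\pi_{\bw}(\bx)} + 1/\tan\theta$. Your explicit remarks on why $\sigma_1>0$ matters and on the $\theta=\pi/2$ case are consistent with, and slightly more careful than, the paper's one-line justification.
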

\begin{proof}
Observe that $\wstar = \lambda_1 \wperp + \lambda_2 \bw$, where $\lambda_1 \nnew{>} 0$.
We then have
\begin{align*}
\sgn(\dotp{\wstar}{\bx})
&= \sgn\left(\lambda_1 \dotp{\wperp}{\bx} + \lambda_2 \dotp{\bw}{\bx} \right)
= \sgn\left(\lambda_1 \dotp{\bw}{\bx} \left(\frac{\dotp{\wperp}{\bx}}{\dotp{\bw}{\bx}} +  \frac{\lambda_2}{\lambda_1} \right) \right) \\
&= \sgn\left(\dotp{\wperp}{ \pi_{\vec w}(\bx)} +  \frac{\lambda_2}{\lambda_1} \right) \,,
\end{align*}
where to get the last equality we use the fact that $\lambda_1$ and
$\dotp{\bw}{\bx}$ are both positive given that we conditioned on the band $B$.
Observe that if the angle between $\bw$ and $\wstar$ is $\theta$, then $\lambda_1 = \sin \theta$ and $\lambda_2 = \cos \theta$.
This completes the proof.
\end{proof}

We next show that conditioning on the band $B$ will not make the Tsybakov
noise condition substantially worse.

\begin{lemma}\label{lem:tsybakov_condition}
Let $\D$ be a $(3, L, R, \beta)$-well-behaved isotropic distribution on $\R^{d} \times \{\pm 1\}$
that satisfies the $(\tsyb,\tsya)$-Tsybakov noise condition with respect to an unknown halfspace $f(\bx) = \sgn(\dotp{\vec w^{\ast}}{\bx} )$.
Let $B = \{\bx: \dotp{\bx}{\vec w} \in [\rho R/2, \rho R/\sqrt{2}]\}$, for some $\rho \in(0,1]$.
Then $\D_B$ satisfies the Tsybakov noise condition with parameters $(\tsyb,O(A/(R^3 L \rho)))$ and optimal linear classifier $\wstar$.
\end{lemma}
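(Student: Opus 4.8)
The plan is to reduce the statement to the single inequality $\Pr_\D[B]=\Omega(LR^3\rho)$, after observing that conditioning on $\{\bx\in B\}$ changes neither the noise function nor the Bayes-optimal classifier. Indeed, the noise rate $\eta(\cdot)$ is a pointwise property of $\bx$, so it is unaffected by conditioning; and since $\eta(\bx)\le 1/2$ for all $\bx$ (in particular $\Pr_{\D_\bx}[\eta(\bx)=1/2]=0$, as noted in the proof of Fact~\ref{obs:optimal_condition}), for a.e.\ $\bx$ we still have $\sgn(\E[y\mid\bx])=\sgn(\dotp{\wstar}{\bx})$, hence $\wstar$ remains the optimal linear classifier of $\D_B$. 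It therefore suffices to bound $\Pr_{\bx\sim(\D_B)_\bx}[\eta(\bx)\ge 1/2-t]$ for every $0<t\le 1/2$.

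Writing $\gamma$ for the density of $\D_\bx$, the conditional density is $\gamma(\bx)\1_B(\bx)/\Pr_\D[B]$, so
$$
\Pr_{\bx\sim(\D_B)_\bx}[\eta(\bx)\ge 1/2-t]
=\frac{\Pr_{\bx\sim\D_\bx}[\eta(\bx)\ge 1/2-t,\ \bx\in B]}{\Pr_\D[B]}
\le\frac{\Pr_{\bx\sim\D_\bx}[\eta(\bx)\ge 1/2-t]}{\Pr_\D[B]}
\le\frac{\tsya\,t^{\tsyb/(1-\tsyb)}}{\Pr_\D[B]}\,,
$$
where the last step uses the $(\tsyb,\tsya)$-Tsybakov condition for $\D$. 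Thus, if $\Pr_\D[B]=\Omega(LR^3\rho)$, then $\D_B$ satisfies the $\bigl(\tsyb,\,O(\tsya/(R^3L\rho))\bigr)$-Tsybakov condition, which is exactly what we want.

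For the lower bound on $\Pr_\D[B]$ I would argue exactly as in the proof of Lemma~\ref{lem:certificate_reduction}. Fix any $3$-dimensional subspace $V\ni\vec w$; since the event $B$ depends only on $\dotp{\bx}{\vec w}$, a function of $\proj_V(\bx)$, we have $\Pr_\D[B]=\Pr_{\bz\sim(\D_\bx)_V}[\dotp{\bz}{\vec w}\in[\rho R/2,\rho R/\sqrt2]]$. By anti-anti-concentration, $(\D_\bx)_V$ has density at least $L$ on the radius-$R$ ball of $V$, so $\Pr_\D[B]\ge L\cdot\mathrm{vol}\bigl(\{\bz\in V:\dotp{\bz}{\vec w}\in[\rho R/2,\rho R/\sqrt2],\ \snorm{2}{\bz}\le R\}\bigr)$. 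This set is a spherical segment: its height is $\rho R(1/\sqrt2-1/2)=\Theta(\rho R)$, and since $\rho\le 1$ the coordinate along $\vec w$ never exceeds $R/\sqrt2$, so each cross-section orthogonal to $\vec w$ is a disk of radius at least $\sqrt{R^2-R^2/2}=R/\sqrt2$ and hence area $\Omega(R^2)$. Therefore its volume is $\Omega(\rho R^3)$ and $\Pr_\D[B]=\Omega(LR^3\rho)$, completing the argument. The only step requiring any care is this spherical-segment volume estimate, and even that is elementary; everything else is bookkeeping.
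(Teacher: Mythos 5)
Your proposal is correct and follows essentially the same route as the paper: bound the conditional tail probability by the unconditional one divided by $\Pr_\D[B]$, then lower-bound $\Pr_\D[B]=\Omega(LR^3\rho)$ via the anti-anti-concentration property on a $3$-dimensional subspace containing $\vec w$ (the paper cites the spherical-segment volume computation from Lemma~\ref{lem:certificate_reduction} rather than spelling it out, but your explicit cross-section estimate is the same calculation).
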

\begin{proof}
We have that $\pr_{\bx   \sim \D_\bx}[1-2\eta(\bx) > t | \bx \in B] \leq \pr_{\bx \sim \D_{\bx}}[1-2\eta(\bx) > t]/\pr_{\bx \sim \D_\bx}[B]$.
From the proof of Lemma~\ref{lem:certificate_reduction}, we have seen that we can use the anti-anti-concentration property of $\D_{\bx}$
to bound $\pr_{\bx \sim \D_\bx}[B]$ from below.
Specifically, we have $\pr_{\bx \sim \D_\bx}[B] \geq \Omega(L R^3 \rho)$.
Therefore, $\D_B$ satisfies the Tsybakov noise condition with parameters $(\tsyb,O(A/(R^3\rho L))$.
\end{proof}

Finally, we show that the transformation of Equation~\eqref{eq:perspective_projection}
also preserves the anti-anti-concentration and concentration properties
of the marginal distribution $\D_{\bx}$.

\begin{lemma}\label{lem:well_beh_projection}
Let $\D$ be a $(3, L, R, \beta)$-well-behaved distribution.
Fix any unit vector $\bw$ and let $B=\{\bx: \dotp{\bx}{\bw} \in [\rho R/2, \rho R/\sqrt{2}] \}$, for some $\rho \in (0, 1]$.
Then the transformed distribution $\Dperp$ is
$\left(2,  \Omega(L \rho^3 R^3), 1/\rho, O(\beta/(R \rho) \log(1/(L R \rho)))\right)$-well-behaved.
\end{lemma}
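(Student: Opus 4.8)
The plan is to verify directly, from the matching properties of $\D_{\vec x}$, the two conditions in Definition~\ref{def:wb-no-U} that make $\Dperp$ a $\big(2,\Omega(L\rho^3R^3),1/\rho,O(\tfrac{\beta}{R\rho}\log\tfrac1{LR\rho})\big)$-well-behaved distribution, tracking how the perspective projection $\pi_{\vec w}$ distorts densities and tails. Throughout I write $I=[\rho R/2,\rho R/\sqrt2]$ and note that conditioning on $B$ forces $\dotp{\vec w}{\vec x}\in I$ (in particular $\dotp{\vec w}{\vec x}>0$, so $\pi_{\vec w}$ is well defined on $B$). The ambient space of $\Dperp$ is $\vec w^\perp$, of dimension $d-1$, and I will use the bound $\pr_{\D_{\vec x}}[B]=\Omega(LR^3\rho)$ from the proof of Lemma~\ref{lem:certificate_reduction}.

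For anti-anti-concentration, fix an arbitrary $2$-dimensional subspace $V\subseteq\vec w^\perp$ and a point $\vec z_0\in V$ with $\snorm{2}{\vec z_0}\le 1/\rho$. The key structural remark is that, since $V\perp\vec w$, one has $\proj_V(\pi_{\vec w}(\vec x))=\proj_V(\vec x)/\dotp{\vec w}{\vec x}$, so the $V$-marginal of $\Dperp$ is the push-forward of the three-dimensional marginal of $\D_{\vec x}$ on $U:=V\oplus\mathrm{span}(\vec w)$ --- conditioned on its $\vec w$-coordinate lying in $I$ --- under the map $(\vec y,u)\mapsto\vec y/u$. A change of variables (for fixed $u$, the map $\vec z\mapsto u\vec z$ on $V$ has $2$-dimensional Jacobian $u^2$) then gives that the density of this $V$-marginal at $\vec z_0$ equals $\tfrac{1}{\pr_{\D_{\vec x}}[B]}\int_I u^2\,\gamma_U(u\vec z_0,u)\,du$, where $\gamma_U$ is the (three-dimensional, hence governed by Definition~\ref{def:wb-no-U}) density of $\D_{\vec x}$ on $U$. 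For $u\in I$ and $\snorm{2}{\vec z_0}\le1/\rho$ we have $\snorm{2}{(u\vec z_0,u)}^2=u^2(1+\snorm{2}{\vec z_0}^2)\le(\rho R/\sqrt2)^2(1+1/\rho^2)\le R^2$, so anti-anti-concentration of $\D_{\vec x}$ yields $\gamma_U(u\vec z_0,u)\ge L$; combined with $\pr_{\D_{\vec x}}[B]\le1$ and $\int_I u^2\,du=\Omega(\rho^3R^3)$, this gives the desired density lower bound $\Omega(L\rho^3R^3)$ uniformly over all such $V$ and $\vec z_0$.

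For sub-exponential concentration, let $\vec w'\in\vec w^\perp$ be a unit vector and $t>0$. Since $\vec w'\perp\vec w$, $\dotp{\vec w'}{\pi_{\vec w}(\vec x)}=\dotp{\vec w'}{\vec x}/\dotp{\vec w}{\vec x}$, and on $B$ the denominator is at least $\rho R/2$, so $|\dotp{\vec w'}{\pi_{\vec w}(\vec x)}|\le 2|\dotp{\vec w'}{\vec x}|/(\rho R)$. Hence
\[
\pr_{(\vec z,y)\sim\Dperp}\!\big[\,|\dotp{\vec w'}{\vec z}|\ge t\,\big]
\;\le\; \frac{\pr_{\vec x\sim\D_{\vec x}}\!\big[\,|\dotp{\vec w'}{\vec x}|\ge \rho Rt/2\,\big]}{\pr_{\D_{\vec x}}[B]}
\;\le\; \frac{\exp\!\big(1-\rho Rt/(2\beta)\big)}{\Omega(LR^3\rho)},
\]
using the sub-exponential tail of $\D_{\vec x}$ and the lower bound on $\pr_{\D_{\vec x}}[B]$. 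It remains to absorb the factor $1/\pr_{\D_{\vec x}}[B]$ --- an additive $O(\log(1/(LR\rho)))$ in the exponent --- into the decay rate: taking $\beta'$ to be a suitable constant times $\tfrac{\beta}{\rho R}\log\tfrac1{LR\rho}$ and at least $\tfrac{2\beta}{\rho R}$, the bound $\exp(1-t/\beta')$ is $\ge1$ for $t\le\beta'$, and for $t>\beta'$ the term $\rho Rt/(4\beta)$ already dominates the extra additive constant, so half of the original decay rate suffices. This yields the claimed $\beta'=O\!\big(\tfrac{\beta}{R\rho}\log\tfrac1{LR\rho}\big)$.

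The step I expect to require the most care is the density bookkeeping in the anti-anti-concentration argument: one must correctly recognize the $V$-marginal of $\Dperp$ as a push-forward of a \emph{three}-dimensional marginal of $\D_{\vec x}$ (so that the hypothesis of Definition~\ref{def:wb-no-U} is applicable) and handle the $u^2$ Jacobian together with the integration over the band $I$. Once this is in place, the geometric check $\snorm{2}{(u\vec z_0,u)}\le R$ and the elementary integral $\int_I u^2\,du=\Omega(\rho^3R^3)$ are immediate, and the concentration part is entirely routine.
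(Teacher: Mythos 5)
Your proof is correct and follows essentially the same route as the paper's: both reduce the $V$-marginal density of $\Dperp$ to the integral $\frac{1}{\pr_{\D}[B]}\int_{\rho R/2}^{\rho R/\sqrt 2} u^2\,\gamma_U(u\vec z_0,u)\,\d u$ over the band (you via a push-forward of the $3$-dimensional marginal, the paper via a full $d$-dimensional Jacobian followed by marginalization), then apply the anti-anti-concentration hypothesis after the same geometric check $u^2(1+\snorm{2}{\vec z_0}^2)\le R^2$, and handle the tails by dividing the sub-exponential bound by $\pr_{\D}[B]=\Omega(LR^3\rho)$ and rescaling by $\rho R/2$. Your treatment of absorbing the $1/\pr_{\D}[B]$ factor into the tail parameter is in fact slightly more careful than the paper's.
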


\begin{proof}
Let $\gamma(\bx):\R^d \mapsto \R_+$ be the probability density function of $\D_{\bx}$ and
$B = \{\bx : \rho R/2 \leq \dotp{\bw}{\bx} \leq \rho R/\sqrt{2} \}$.
\nnew{Note that the conditional distribution $(\D_{\bx})_B$ of the random vector $\bx \sim \D_{\bx}$ on
the band $B$ has density $\gamma_B(\bx) = \1_{B}(\bx) \gamma(\bx)/(\int_B \gamma(\bx) \d \bx)$.}
Since the transformation $ \pi_{\vec w}(\cdot)$ is not injective, we consider the
transformation $\phi(\bx) = ( \dotp{\bw}{\bx},  \pi_{\vec w}(\bx))$
and observe that $\phi(\bx) : \R^{d} \mapsto \R^{d}$ is injective.
Denote by \nnew{$\vec U$} the random variable corresponding to the image of $\bx$, \nnew{$\bx \sim(\D_{\bx})_B$}, under $\phi$.
Without loss of generality, we may assume that $\bw = \vec e_1$.
By computing the Jacobian of the above one-to-one transformation. we get that the
density function of the random vector $\vec U$ is given by
$\gamma_{\vec U}(\vec u) = |\vec u_1|^{d-1}  \gamma_B(\vec u_1 (1,  \vec u_2, \ldots, \vec u_d))$.
We can marginalize out the ``dummy" variable $\vec u_1$ to obtain
the density function $g$ of \nnew{$\vec z \sim (\D_{\bx})_B^{\pi_{\vec w}}$},
i.e.,
$$g(\vec z) = \int_{-\infty}^\infty |\vec u_1|^{d-1} \gamma_B(\vec u_1 (1, \vec z)) \d \vec u_1\,.
$$
Let $V$ be any $2$-dimensional subspace of $\vec w^{\perp}$.
Without loss of generality, we may assume that $V = \mathrm{span}(\vec e_2, \vec e_3)$.
Denote $\nnew{{\vec z}_{[3, d-1]}} = (\vec z_3, \ldots, \vec z_{d-1})$, $U = \mathrm{span}(\vec e_1, \vec e_2, \vec e_3)$,
and $U^{\perp} = \mathrm{span}(\vec e_4, \ldots, \vec e_d)$.
The marginal density of \nnew{$\vec z \sim(\D_{\bx})_B^{\pi_{\vec w}}$} on $V$ is then given by
\begin{align*}
g_V(\vec z_1,\vec z_2)
&= \int_{U^{\perp}} \int_{-\infty}^\infty |\vec u_1|^{d-1} \gamma_B(\vec u_1  (1, \vec z)) \d \vec u_1\ \d {\vec z}_{[3, d-1]} \\
&= \int_{-\infty}^\infty |\vec u_1|^{d-1} \int_{U^{\perp}} \gamma_B(\vec u_1 (1, \vec z)) \d {\vec z}_{[3, d-1]} \ \d \vec u_1 \\
&= \frac{1}{\int_B \gamma(\bx) \d \bx} \int_{\rho  R/2}^{\rho  R/\sqrt{2}} |\vec u_1|^{d-1} \int_{U^{\perp}} \gamma(\vec u_1  (1, \vec z)) \d {\vec z}_{[3, d-1]} \d \vec u_1 \\
&= \frac{1}{\int_B \gamma(\bx) \d \bx} \int_{\rho  R/2}^{\rho  R/\sqrt{2} } |\vec u_1|^{2} \gamma_U(\vec u_1  (1, \vec z_1, \vec z_2)) \d \vec u_1\,,
\end{align*}
where to get the third equality we used the definition of the conditional density on $B$ and the fact that
the set $B$ only depends on the first coordinate. The last equality follows by a change of variables.
Since $\D_{\bx}$ is $(3,L,R,\beta)$-well-behaved, we have that
if $\vec u_1^2 (1 + \vec z_2^2 + \vec z_3^2) \leq R^2$ we have that
$\gamma_U(\vec u_1(1, \vec z_1, \vec z_2) ) \geq L$.
Therefore, using the fact that $\vec u_1^2 \leq  \rho^2  R^2/2$, we obtain that
for $\vec z_1^2 + \vec z_2^2 \leq 2/\rho^2-1 $ it holds
$\gamma_U(\vec u_1 (1, \vec z_1, \vec z_2)) \geq L$.
Observe that since $\rho \leq 1$, we can get the slightly looser bound
$\vec z_2^2 + \vec z_3^2 \leq 1/\rho^2$.
Note that $\int_B \gamma(\bx) \d \bx \leq 1 $ and also
$\int_{\rho  R/2}^{\rho  R/\sqrt{2}} |\vec u_1|^{2} \d \vec u_1 = \Omega(\rho^3 R^3)$.
Combining these bounds, we obtain that $g_V(\vec z_1, \vec z_2) \geq \Omega(L \rho^3 R^3)$.

It remains to prove that the transformed distribution still has exponentially decaying tails.
In the proof of Lemma~\ref{lem:tsybakov_condition}, we have already argued that
the probability mass of $B$ is bounded below by $C_B = \Omega(L R^3 \rho)$.
Therefore, the distribution \nnew{$(\D_{\bx})_B$} obtained after conditioning
has exponential concentration with parameter $\beta ( 1 - \log C_B)$.
After we perform the perspective projection (Equation~\eqref{eq:perspective_projection})
to obtain \nnew{$(\D_{\bx})_B^{\pi_{\vec w}}$}, the concentration parameter
becomes $2 \beta  (1 - \log C_B)/(\rho R)$, since we divide each coordinate of $\bx$ by
a quantity that is bounded from below by $R \rho/2$.
This completes the proof of Lemma~\ref{lem:well_beh_projection}.
\end{proof}

Proposition~\ref{prop:reduction} follows by
combining Lemmas~\ref{lem:biased_halfspace},~\ref{lem:tsybakov_condition},~\ref{lem:well_beh_projection}.

\subsection{Efficient Certificate Computation Given Initialization} \label{ssec:impr-cert}

In this subsection, we give our main algorithm for computing a non-optimality certificate \nnew{in
the transformed instance}, i.e., a vector $\vec v$ and parameters \nnew{$t_1, t_2 >0$} satisfying
Equation~\eqref{eq:certifying_reduction}. Recall that after the perspective projection
transformation of Subsection~\ref{ssec:reduction}, we now have sample access to i.i.d. labeled
examples $(\bx, y)$ from a well-behaved distribution $\D$ on $\R^{d} \times \{ \pm 1 \}$ satisfying
the Tsybakov noise condition (albeit with somewhat worse parameters) with the optimal classifier
being a non-homogeneous halfspace (see Proposition~\ref{prop:reduction}.)

Our certificate algorithm in this subsection assumes the existence of an initialization vector,
i.e., a vector that has non-trivial correlation with $\wperp$. The simplest way to find such a
vector is by picking a uniformly random unit vector. A random initialization suffices for the guarantees of this
subsection (and in particular for Theorem~\ref{thm:cert-wb}). We note that for the family of log-concave distributions, we
can leverage additional structure to design a fairly sophisticated initialization algorithm
that in turn leads to a faster certificate algorithm (see Section~\ref{sec:lc}).

The main algorithmic result of this section is an efficient algorithm to compute a certifying vector
satisfying Equation~\eqref{eq:certifying_reduction}.  Note that we are essentially working in
$(d-1)$ dimensions, since we have already projected the examples to the subspace $\vec w^{\perp}$.
As shown in Proposition~\ref{prop:reduction}, the transformed distribution $\D_{B}^{\pi_{\bw}}$ is
still well-behaved and follows the Tsybakov noise condition, but with somewhat worse parameters than
the initial distribution $\D$.

To avoid clutter in the relevant expressions, we overload the notation and use $\D$ instead
of $\D_{B}^{\pi_{\bw}}$ in the rest of this section. Moreover, we use the notation
$(L, R, \beta)$ and $(\tsyb, \tsya)$ to denote the well-behaved distribution's parameters
and the Tsybakov noise parameters.
The actual parameters of $\D_{B}^{\pi_{\bw}}$ (quantified in Proposition~\ref{prop:reduction})
are used in the proof of Theorem~\ref{thm:cert-wb}. To simplify notation, we will henceforth denote by $\vec v^{\ast}$ the vector $\wperp$. We show:

\begin{proposition}\label{prop:cert-wb}
Let $\D$ be a $(2, L, R, \beta)$-well-behaved distribution on $\R^{d} \times \{\pm 1\}$ satisfying
the $(\tsyb,\tsya)$-Tsybakov noise condition with respect to an unknown halfspace
$f(\bx) = \sgn(\dotp{\vec v^{\ast}}{\bx} + b)$. Let $\vec v_0 \in \R^{d}$ be a unit vector
such that $\dotp{\vec v_0}{\vec v^{\ast}}\geq 4b/R$. There is an algorithm
(Algorithm~\ref{alg:find_certificate}) with the following performance guarantee: Given $\vec v_0$
and $N = d \, \frac{\conb^2R^2}{b^2}\left(\frac{A}{RL}\right)^{O(1/\tsyb)}\log(1/\delta)$
samples from $\D$, the algorithm runs in $\poly(N, d)$ time, and with probability at least
$1-\delta$ returns a unit vector $\vec v \in \R^{d}$ and a scalar $t \in \R_+$ such that
$$
\E_{(\vec x,y) \sim \D} \left[ \1[- R \leq \dotp{\vec v}{\vec x} \leq - t] \, y \right] \leq
- \frac{b}{R \beta}\left(\frac{RL}{A}\right)^{O(1/\alpha)}\;.
$$
\end{proposition}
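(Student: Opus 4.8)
The plan is to analyze Algorithm~\ref{alg:find_certificate}, a projected‑perceptron iteration that maintains a unit vector $\vec v^{(i)} \in \R^{d}$, initialized at $\vec v^{(0)} = \vec v_0$. I would draw a single sample set $S$ of the stated size $N$ at the outset and reuse it across all iterations, so that correctness follows from one uniform‑convergence event. At iteration $i$, for every $t$ in a fine grid of $[0,R]$ compute the empirical average $\wh L(\vec v^{(i)}, t)$ of $\1\{-R \leq \dotp{\vec v^{(i)}}{\bx} \leq -t\}\, y$ over $S$; if some grid point satisfies $\wh L(\vec v^{(i)}, t) \leq -\tfrac12\cdot\frac{b}{R\beta}(RL/A)^{O(1/\alpha)}$, halt and output $(\vec v^{(i)}, t)$. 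Otherwise, form the empirical boosted mean $\wh{\vec g}^{(i)}$ estimating $-\E[\1\{-R \leq \dotp{\vec v^{(i)}}{\bx} \leq -t_i\}\, y\, \proj_{(\vec v^{(i)})^{\perp}}\bx]$ for the appropriate threshold $t_i$ (after truncating $\bx$ at radius $\tau = O(\beta\log(1/(LR)))$, which is harmless by the sub‑exponential tail in Definition~\ref{def:wb-no-U}), and update $\vec v^{(i+1)} = (\vec v^{(i)} + \lambda\,\wh{\vec g}^{(i)})/\snorm{2}{\vec v^{(i)} + \lambda\,\wh{\vec g}^{(i)}}$ for a step size $\lambda$ fixed below.

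The heart of the matter is a win--win dichotomy (Lemma~\ref{lem:improving_w_perp}): for every unit $\vec v$ with $\dotp{\vec v}{\vec v^{\ast}} \geq 4b/R$ (an invariant preserved since $\phi_i := \dotp{\vec v^{(i)}}{\vec v^{\ast}}$ never decreases), either some threshold $t$ makes $L(\vec v, t) := \E[\1\{-R \leq \dotp{\vec v}{\bx} \leq -t\}\, y]$ below the target, or the population boosted mean $\vec g$ satisfies $\dotp{\vec g}{(\vec v^{\ast})^{\perp_{\vec v}}} \geq \frac{b}{R}(RL/A)^{O(1/\alpha)}$. Its proof is geometric, in the spirit of Claim~\ref{clm:exist_vector} but for general $\vec v$: restrict to the plane $\mathrm{span}(\vec v, \vec v^{\ast})$; use the anti‑anti‑concentration bound $L$ to guarantee the relevant band has mass $\Omega(LR^3)$; lower‑bound the signed contribution of the genuine disagreement region (which pulls toward $\vec v^{\ast}$); and use the Tsybakov tail $\pr[\eta(\bx)\geq 1/2 - s] \leq A s^{\alpha/(1-\alpha)}$, evaluated at the band scale, to show that the points where $\eta(\bx)$ is near $1/2$ contribute only a constant fraction of that. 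Granting the dichotomy, a potential argument on $\phi_i$ using $\snorm{2}{\vec g^{(i)}} = O(R)$, the orthogonality $\dotp{\vec v^{(i)}}{\wh{\vec g}^{(i)}} = 0$, and the step size of Lemma~\ref{lem:gradient} gives $\phi_{i+1} \geq \phi_i + \Delta$ with $\Delta = \Omega\big(\tfrac{b^2}{R^4}(RL/A)^{O(1/\alpha)}\big)$ on every non‑halting iteration with $\phi_i$ below a parameter‑dependent $\phi^{\ast} < 1$ (above which the Claim~\ref{clm:exist_vector} reasoning already yields a certifying threshold). Since $\phi_i < \phi^{\ast} \leq 1$ always, the algorithm halts within $T = 1/\Delta = \tfrac{R^4}{b^2}(A/(RL))^{O(1/\alpha)}$ iterations.

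For correctness with probability $1-\delta$: the $T$ iterations only evaluate functions from the class $\{\bx\mapsto \1\{-R\leq\dotp{\vec v}{\bx}\leq -t\}\}$ (an intersection of two halfspaces, VC dimension $O(d)$) and bounded linear functionals against truncated $\bx$, so a single uniform‑convergence bound guarantees that a sample set $S$ of size $N = d\,\frac{\beta^2 R^2}{b^2}(A/(RL))^{O(1/\alpha)}\log(1/\delta)$ makes every $\wh L(\vec v^{(i)}, t)$ within $\tfrac14\cdot\frac{b}{R\beta}(RL/A)^{O(1/\alpha)}$ of $L(\vec v^{(i)}, t)$ and every $\wh{\vec g}^{(i)}$ close enough to $\vec g^{(i)}$ to preserve the potential increase --- simultaneously across all iterations. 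On this event the algorithm terminates within $T$ steps, and since it outputs a pair whose empirical band expectation is below $-\tfrac12$ of the target, the population band expectation is below $-\frac{b}{R\beta}(RL/A)^{O(1/\alpha)}$, which is the claimed bound. The step I expect to be the main obstacle is Lemma~\ref{lem:improving_w_perp}: one must pick the band widths and thresholds at precisely the scale where the contribution of the Tsybakov‑noisy points near the decision boundary is provably dominated by the clean disagreement‑region contribution, and keep all the $\poly(L, R, 1/A)$ and $(A/(RL))^{O(1/\alpha)}$ factors consistent between the ``negativity'' branch and the ``improvement'' branch, so that a single global choice of parameters makes both conclusions hold.
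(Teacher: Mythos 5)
Your proposal follows essentially the same route as the paper: the same perceptron-style iteration that at each step either detects a certifying threshold empirically or updates along the in-band "Chow-like" vector, with correctness resting on the same win--win dichotomy (Lemma~\ref{lem:improving_w_perp}), the same correlation-improvement step (Lemma~\ref{lem:gradient}), a monotone potential $\dotp{\vec v^{(i)}}{\vec v^{\ast}}$ bounding the number of iterations, and a single uniform-convergence event over thresholds and the gradient estimates. The only blemishes are cosmetic: the sign convention on $\wh{\vec g}^{(i)}$ is internally inconsistent (the update must add $+\lambda$ times the band-restricted mean $\E[\1_{B}(\bx)\, y\, \proj_{(\vec v^{(i)})^{\perp}}\bx]$, which correlates \emph{positively} with $(\vec v^{\ast})^{\perp_{\vec v}}$), the gradient band should be the fixed $B^{R/2}$ rather than a varying $t_i$, and $\snorm{2}{\vec g^{(i)}}$ is controlled by $O(\beta)$ via sub-exponential tails rather than $O(R)$ --- none of which changes the argument.
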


Algorithm~\ref{alg:find_certificate} employs a ``perceptron-like" update rule that in polynomially
many rounds succeeds in improving the angle between the initial guess $\vec v_0$ and the target
vector $\wperp = \vec v^{\ast}$.  While the algorithm is relatively simple, its proof of correctness
relies on a novel structural result (Lemma~\ref{lem:improving_w_perp}) whose proof is the main
technical contribution of this section. Roughly speaking, our structural result establishes the following win-win statement: Given a vector
whose correlation with $\vec v^{\ast}$ is non-trivial, either this vector is already a certifying
vector (see Item 1 of Lemma~\ref{lem:improving_w_perp} and Lemma~\ref{lem:certificate_reduction}) or
the update step will improve the angle with $\vec v^{\ast}$ (Item 2 of Lemma~\ref{lem:improving_w_perp}).

In more detail, starting with a vector $\vec v_0$ that has non-trivial correlation with $\vec v^{\ast}$,
we consider the following update rule
\begin{equation}
  \vec v^{(t+1)} = \vec v^{(t)} + \lambda\vec { g} \,,
\end{equation}
where $\lambda > 0$ is an appropriately chosen step size and
$$\vec g=\E_{(\vec x,y) \sim \D} [\1\{-R \leq \langle \vec v^{(t)}, \vec x \rangle \leq - R / 2\} \, y \, \proj_{(\vec  v^{(t)})^{\perp}}(\bx )] \;,$$
where $\proj_{(\vec  v^{(t)})^{\perp}}(\bx )$ is the projection of $\bx$ to the subspace $(\vec v^{(t)})^\perp $.
In Lemma~\ref{lem:gradient}, we show that if $\vec v^{(t)}$ is not a certifying vector, i.e., it does not satisfy Item 1 of Lemma~\ref{lem:gradient}, then there exists an appropriately small step size $\lambda$
that improves the correlation with $\vec v^{\ast}$ after the update. This is guaranteed by Item 2 of Lemma~\ref{lem:gradient},
which shows that $\vec g$ has positive correlation with $(\vec v^{\ast})^{\perp_{\vec v}}$
(the normalized projection of $\vec v^{\ast}$ onto $\bv^\perp$),
and thus will turn $\vec v^{(t)}$ towards the direction of $\vec v^{\ast}$ decreasing the angle between them.

\begin{algorithm}[H] \caption{Computing a Certificate Given Initialization}
  \label{alg:find_certificate}
  \begin{algorithmic}[1] \Procedure{ComputeCertificate}{$(L, R, \beta),(\tsya, \tsyb), \delta, \vec v_0, \widehat{\D}$}\\
    \textbf{Input:} Empirical distribution $\widehat{\D}$ of a $(2, L, R,
    \beta)$-well-behaved distribution that satisfies the $(\tsyb,\tsya)$-Tsybakov noise
    condition, initialization vector $\vec v_0$, confidence probability $\delta$. \\
    \textbf{Output:} A certifying vector $\vec v$ and positive scalars $t_1, t_2$ that
    satisfy~\eqref{eq:certifying_reduction}. \State ${\vec v}^{(0)} \gets \vec v_0$
    \State $T  \gets  \poly(1/L, 1/R, A)^{1/\alpha} \cdot \poly(1/b, 1/\beta)$ \State $\lambda \gets \frac 1 {\beta^3} \poly(L, R, 1/A)^{1/\alpha}$;
    $c \gets  \frac {b} {R \beta} \poly(L, R, 1/A)^{1/\alpha}$
    \State \textbf{for} $t = 1,  \dots, T$ \textbf{do}
    \State \qquad $B^{t'} = \{\bx :  - R \leq \dotp{\vec v^{(t-1)}}{\vec x} \leq -t' \}$
    \State \qquad \textbf{if} there exists $t_0\in(R/2,R]$ such that $\E_{(\vec
    x,y) \sim \widehat{\D}} \left[\1_{B^{t_0}}(\bx) \,  y \right] \leq- c $\label{alg1:line_check_certificate}
    \State \qquad \qquad $  \textbf{return}
    (\vec v^{(t-1)}, R, t_0)$ \label{alg1:return}\State \qquad    $\sample{{\vec {\hat  g}}}{t}\gets\E_{(\vec x,y) \sim \widehat{\D}}
    \left[\1_{B^{R/2}}(\bx) \, y \, \proj_{(\vec  v^{(t-1)})^{\perp}}(\bx )\right] $
    \State\qquad  ${\vec v}^{(t)} \gets \frac{{\vec v}^{(t-1)} +\lambda  \sample{{\vec {\hat  g}}}{t}}{\snorm{2}{ {\vec v}^{(t-1)} +\lambda
    \sample{{\vec {\hat  g}}}{t}}} $\label{alg:OPGDestep} \EndProcedure
  \end{algorithmic}
\end{algorithm}

\begin{figure}[h!]
\subfloat[The regions $B_1^{t}, B_2^{t}, B_3^{t}$ used
    in the definition of $I_1^t$ in the proof of Lemma~\ref{lem:improving_w_perp}.]{
  \begin{minipage}[t]{0.50\textwidth}
    \centering
    \begin{tikzpicture}[scale=0.8]
      \coordinate (start) at (2.9/5+0.5,0);
      \coordinate (center) at (2.9/5,0);
      \coordinate (end) at (2.9/5+0.5,0.5);

      \draw[black,dashed, thick](-3.75,-2) -- (3.75,-2);
      \draw[black,dashed, thick](-3.75,-1.4) --(3.75,-1.4);
      \draw[fill=red, opacity=0.3,draw=none]  (-1.1 ,-1.4)--(-3.75,-1.4)--(-3.75,-2)--(-1.75225,-2);
      \draw[fill=blue, opacity=0.5,draw=none] (0,-1.4) rectangle (3.75,-2);
      \draw[fill=blue,opacity=0.5,draw=none] (-1.1,-1.4) -- (0 ,-1.4)--(0,-2)--(-1.75225,-2);
      \draw[->] (-3.8,0) -- (3.8,0) node[anchor=north west,black] {};
      \draw[->] (0,-2.5) -- (0,2) node[anchor=south east] {};
      \draw[thick,->] (0,0) -- (-0.7,0.7) node[anchor= south east,below,left=0.1mm] {$\vec v^{\ast}$};
      \draw[black,thick] (-2,-2.22) -- (2.5,4.44/5 * 2.5 -2.22/5);
      \draw[thick ,->] (0,0) -- (0,1) node[right=2mm,below] {$\vec v$};
      \draw[thick ,->] (0,0) -- (-1,0) node[right=2mm,below] {$(\vec v^{\ast})^{\bot_{\vec v}}$};
      \draw (0,-1.4) node[] {$\scriptscriptstyle -t$};
      \draw (0,-2.1) node[] {$\scriptscriptstyle-R$};
      \draw (-3,-1.7)node[] {$B_1^{t}$};
      \fill (0,-1.4) circle [radius=1pt];
      \fill (0,-2) circle [radius=1pt];
      \draw(-0.5,-1.7) node[] {$B_2^{t}$};
      \draw (1,-1.7) node[] {$B_3^{t}$};
      \draw (2.22/5,0) node[below]{$\scriptstyle b$};
      \fill (2.5/5,0) circle [radius=1pt];
      \pic [draw, <->, angle radius=5mm, angle
      eccentricity=1.2, "$\scriptstyle\theta$"] {angle = start--center--end};
  \end{tikzpicture}
  \label{fig:Tsybakov_Regions_Improv_Cert}
\end{minipage}}\quad
\subfloat[ The regions $B_1, B_2, B_3$ defined in the definition of $I_2$ in the proof of Lemma~\ref{lem:improving_w_perp}. ]
{\begin{minipage}[t]{0.50\textwidth}
  \centering
  \begin{tikzpicture}[scale=0.8]
    \coordinate (start) at (2.9/5+0.5,0);
    \coordinate (center) at (2.9/5,0);
    \coordinate (end) at (2.9/5+0.5,0.5);

    \draw[black,dashed, thick](-3.75,-2) -- (3.75,-2); \draw[black,dashed, thick](-3.75,-1) --(3.75,-1);
    \draw[fill=red, opacity=0.3,draw=none]  (-0.626126 ,-1)--(-3.75,-1)--(-3.75,-2)--(-1.75225,-2);
    \draw[fill=red, opacity=0.3,draw=none] (0,-1) rectangle (3.75,-2);
    \draw[fill=blue,opacity=0.5,draw=none] (-0.626126 ,-1) -- (0 ,-1)--(0,-2)--(-1.75225,-2);
    \draw[->] (-3.8,0) -- (3.8,0) node[anchor=north west,black] {};
    \draw[->] (0,-2.5) -- (0,2) node[anchor=south east] {};
    \draw[thick,->] (0,0) -- (-0.7,0.7) node[anchor= south east,below,left=0.1mm] {$\vec v^{\ast}$};
    \draw[black,thick] (-2,-2.22) -- (2.5,4.44/5 * 2.5 -2.22/5);
    \draw[thick ,->] (0,0) -- (0,1) node[right=2mm,below] {$\vec v$};
    \draw[thick ,->] (0,0) -- (-1,0) node[right=2mm,below] {$(\vec v^{\ast})^{\bot_{\vec v}}$};
    \draw (0,-1.1) node[] {$\scriptscriptstyle -\frac R2$};
    \draw (0,-2.1) node[] {$\scriptscriptstyle-R$};
    \draw (-3,-1.5)node[] {$B_1$};
    \fill (0,-1) circle [radius=1pt];
    \fill (0,-2) circle [radius=1pt];
    \draw(-0.5,-1.5) node[] {$B_2$};
    \draw (1,-1.5) node[] {$B_3$};
    \draw (2.22/5,0) node[below]{$\scriptstyle b$};
    \fill (2.5/5,0) circle [radius=1pt];
    \pic [draw, <->, angle radius=5mm, angle
    eccentricity=1.2, "$\scriptstyle\theta$"] {angle = start--center--end};
\end{tikzpicture}
\label{fig:Tsybakov_Regions_Improv}
\end{minipage}
}
\caption{\nnew{
    In the subspace $\bw^\perp$, the certifying function is simply an indicator $\1\{-R \leq \dotp{\bv}{\bx} \leq -t_0\}$,
    for some $t_0 >0$. See also Equation~\eqref{eq:certifying_reduction}. This is shown in
    Figure~\ref{fig:Tsybakov_Regions_Improv_Cert}. The blue regions in
    Figure~\ref{fig:Tsybakov_Regions_Improv_Cert} (resp.
    Figure~\ref{fig:Tsybakov_Regions_Improv}) have negative contribution to the
    value of $I_1^t$ (resp. $I_2$), while the red regions have positive
    contribution.
  }
}
 \end{figure}
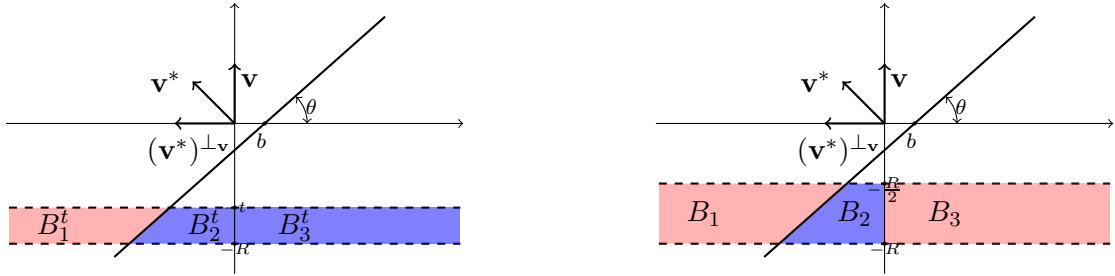

We are now ready to state and prove our win-win structural result:

\begin{lemma}[Win-Win Result] \label{lem:improving_w_perp}
Let $\D$ be a $(2, L, R, \beta)$-well-behaved distribution on $\R^{d} \times \{\pm 1\}$ that satisfies the $(\tsyb,\tsya)$-Tsybakov noise condition with respect to $f(\bx) = \sgn(\dotp{\vec v^{\ast}}{\bx} + b)$, and $\vec v \in \R^{d}$ be a unit vector with
$\dotp{\vec v}{\vec v^{\ast}} \geq 4 b/R$. Consider the band $B^t = \{\bx :  - R \leq \dotp{\vec v}{\vec x} \leq -t \}$
for $t \in [R/2,R]$ and define \nnew{$\vec g = \E_{(\vec x,y) \sim \D} [\1_{B^{R/2}}(\bx) \, y \, \proj_{\bv^\perp}(\bx) ] \,.$}
For some $c = (RL/A)^{O(1/\alpha)}$, one of the following statements is satisfied:
\begin{enumerate}
\item There exists $t_0\in(R/2,R]$, such that $\E_{(\vec x,y) \sim \D} \left[\1_{B^{t_0}}(\bx) \,  y \right] \leq -c^2 \frac{b}{R \beta}$.

\item It holds $\dotp{\vec g}{\vec v^{\ast}} \geq c^2 \frac{\pi b}{4 \beta}$.
\end{enumerate}
Moreover, the first condition always holds if $\theta(\vec v,\vec v^{\ast}) \leq b \, c/\beta$.
\end{lemma}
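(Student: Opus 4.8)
Since $\langle \vec v,\vec v^{\ast}\rangle \ge 4b/R>0$, the angle $\theta:=\theta(\vec v,\vec v^{\ast})$ lies in $(0,\pi/2)$, and I would first observe that everything relevant happens in the plane $P=\mathrm{span}\big(\vec v,(\vec v^{\ast})^{\perp_{\vec v}}\big)$, which contains $\vec v^{\ast}$. Writing $u_1=\langle\vec v,\bx\rangle$ and $u_2=\langle (\vec v^{\ast})^{\perp_{\vec v}},\bx\rangle$, we have $\|\proj_{\vec v^{\perp}}\vec v^{\ast}\|=\sin\theta$, hence $\langle\vec g,\vec v^{\ast}\rangle=\sin\theta\cdot J$ with $J:=\E_{(\bx,y)\sim\D}[\1_{B^{R/2}}(\bx)\,y\,u_2]$, so it suffices to lower bound $J$. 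The target is $f(\bx)=\sgn(\langle\vec v^{\ast},\bx\rangle+b)=\sgn\big(u_2-\ell(u_1)\big)$, where $\ell(u_1)=-(b+\cos\theta\,u_1)/\sin\theta$ is the decision line; on the band $u_1\in[-R,-R/2]$ one checks $\ell(u_1)$ is monotone, $\ell(-R/2)\ge \tfrac{R\cos\theta/4}{\sin\theta}>0$ (using $b\le R\cos\theta/4$), and $\ell(-R)\le 4\,\ell(-R/2)$.

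\textbf{An algebraic identity plus a layer cake.} The plan hinges on the decomposition $u_2=\ell(u_1)+(\langle\vec v^{\ast},\bx\rangle+b)/\sin\theta$. Conditioning on $\bx$ and using $f=\sgn(\langle\vec v^{\ast},\bx\rangle+b)$ exactly as in Fact~\ref{obs:optimal_condition}, we get $\E[\,y\,(\langle\vec v^{\ast},\bx\rangle+b)\mid\bx\,]=(1-2\eta(\bx))\,|\langle\vec v^{\ast},\bx\rangle+b|\ge 0$, so
\[
J \;=\; \E[\1_{B^{R/2}}\,y\,\ell(u_1)] \;+\; \E\big[\1_{B^{R/2}}\,(1-2\eta)\,|u_2-\ell(u_1)|\big]\;\ge\;\E[\1_{B^{R/2}}\,y\,\ell(u_1)].
\]
Since $\ell(\cdot)$ is a known monotone function on the band, a layer-cake identity over $u_1\in[-R,-R/2]$ gives $\1_{B^{R/2}}\,\ell(u_1)=\ell(-R/2)\,\1_{B^{R/2}}+\int_{\ell(-R/2)}^{\ell(-R)}\1_{B^{t(s)}}\,\mathrm{d}s$ with $t(s):=\ell^{-1}(s)\in[R/2,R]$, hence $\E[\1_{B^{R/2}}\,y\,\ell(u_1)]=\ell(-R/2)\,I_1^{R/2}+\int_{\ell(-R/2)}^{\ell(-R)}I_1^{t(s)}\,\mathrm{d}s$, where $I_1^{t}:=\E[\1_{B^{t}}\,y]$. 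This is precisely how the family of band widths appearing in Item~1 enters.

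\textbf{The win--win.} Suppose Item~1 fails, i.e. $I_1^{t}>-c^2 b/(R\beta)$ for all $t\in(R/2,R]$ (and at $t=R/2$ by continuity). Then the displayed sum is $\ge -\ell(-R)\,c^2 b/(R\beta)$. For the other term I would restrict to the slab $S=\{u_1\in[-R,-R/2],\,u_2<0\}$, which lies inside $\{f=-1\}$ (because $\cos\theta\,u_1+\sin\theta\,u_2+b<-R\cos\theta/2+b\le -b$) and on which $|u_2-\ell(u_1)|\ge\ell(u_1)\ge\ell(-R/2)$; so the term is $\ge\ell(-R/2)\,\E[\1_S(1-2\eta)]$. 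By anti-anti-concentration the $2$-dimensional density is $\ge L$ on the $R$-ball, so $S$ has mass $\Omega(LR^2)$; by the Tsybakov condition $\pr[1-2\eta<2c]\le A c^{\alpha/(1-\alpha)}$, which is at most half that mass once $c=(RL/A)^{O(1/\alpha)}$ is small enough, giving $\E[\1_S(1-2\eta)]=\Omega(cLR^2)$. Combining, and using $\ell(-R)\le 4\ell(-R/2)$ with $c$ small, yields $J\ge\Omega(cLR^2)\,\ell(-R/2)$, so
\[
\langle\vec g,\vec v^{\ast}\rangle=\sin\theta\cdot J\;\ge\;\Omega(cLR^2)\cdot\sin\theta\,\ell(-R/2)\;=\;\Omega(cLR^2)\big(R\cos\theta/2-b\big)\;\ge\;\Omega(cLR^2)\,b\;\ge\;c^2\frac{\pi b}{4\beta},
\]
the last step because $c$ is small relative to $\beta LR^2$; this is Item~2. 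For the ``moreover'': if $\theta\le bc/\beta$ then $\ell(-R/2)\gtrsim R\beta/(bc)$ is large, so the $\{f=+1\}$ part of the band has mass $\le\pr[u_2\ge\ell(-R/2)]\le e^{1-\ell(-R/2)/\beta}$, negligible next to $\E[\1_S(1-2\eta)]=\Omega(cLR^2)$; hence $-I_1^{t_0}\ge\Omega(cLR^2)\ge c^2 b/(R\beta)$ for $t_0$ slightly above $R/2$, so Item~1 holds.

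\textbf{Main obstacle.} The difficulty is that a direct region-by-region accounting fails: the ``blue'' near-boundary region $\{0<u_2<\ell(u_1),\,f=-1\}$ can carry mass that is only bounded by a constant (no $c$ in front) and thus swamps the ``red'' contributions. The identity $u_2=\ell(u_1)+(\langle\vec v^{\ast},\bx\rangle+b)/\sin\theta$ together with the sign fact $\E[\1_{B^{R/2}}y(\langle\vec v^{\ast},\bx\rangle+b)]\ge 0$ is the device that sidesteps this; once it is in place, the remaining work is the layer-cake bookkeeping and verifying that a single choice $c=(RL/A)^{\Theta(1/\alpha)}$ simultaneously validates the Tsybakov estimate on $S$ and preserves the $c$-versus-$c^2$ slack between the two items. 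One also has to re-check the regime where $\theta$ is close to $\arccos(4b/R)$ (so $\cos\theta$ is small), but this is handled uniformly by the inequality $\sin\theta\,\ell(-R/2)=R\cos\theta/2-b\ge b$.
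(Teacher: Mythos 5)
Your proof is correct, and while it rests on the same underlying mechanism as the paper's, the decomposition you use is genuinely different. The paper splits the band into the three regions $B_1^t,B_2^t,B_3^t$, lower-bounds the $B_3$ contributions $I_{1,2}^{R/2}$ and $I_{2,2}$ by $c$ (Claim~\ref{clm:first}), and then, assuming $I_2\le c/2$, applies the second integral Mean Value Theorem with the monotone weight $|\xi(\bx_2)|$ to convert the very negative weighted quantity $I_{2,1}$ into a very negative unweighted $I_{1,1}^{t_0}$ at some intermediate threshold (Claim~\ref{clm:third}); a separate concentration argument (Claim~\ref{clm:sec}) handles small angles. You instead use the exact identity $u_2=\ell(u_1)+(\dotp{\vec v^{\ast}}{\bx}+b)/\sin\theta$ — note your $\ell(u_1)$ is precisely the paper's $|\xi|$ — which makes the ``good'' term manifestly equal to $(1-2\eta)$ times the distance to the decision line, hence non-negative everywhere; your layer-cake representation over $\ell$ plays exactly the role of the paper's second MVT in producing the threshold $t_0$. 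Your version buys two things: (i) the $B_1$-versus-$B_2$ cancellation never has to be isolated as a signed object, and (ii) the win-win holds uniformly in $\theta$ because $\sin\theta\cdot\ell(-R/2)=R\cos\theta/2-b\ge b$, so the small-angle case is only needed for the ``moreover'' (where your concentration argument coincides with Claim~\ref{clm:sec}). Two cosmetic points: your $t(s)$ should be $-\ell^{-1}(s)$ (the band is parameterized by $-t$), and the bookkeeping reconciling the $c$ appearing in Item~1's threshold with the $c$ produced by your Tsybakov/anti-anti-concentration estimate on the slab $S$ is left implicit — but the paper absorbs the analogous constants (e.g., the factor $16$ in Claim~\ref{clm:third}) into the $O(1/\alpha)$ exponent in exactly the same way, so this is not a gap.
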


\begin{proof}
Since $\vec v$ and $\vec v^{\ast}$ span a $2$-dimensional subspace,
we can assume without loss of generality that
$\vec v = \vec e_2$ and $\vec v^{\ast} = (- \sin \theta, \cos \theta)$.
Our analysis will consider the following regions:
$B_1^t = \{ \bx \in B^t : f(\bx) = +1 \}$,
$B_2^t = \left\{ \bx  \in B^t: f(\bx) = -1 \text{ and } \dotp{\proj_{\bv^\perp}\bx}{\vec v^{\ast}} \geq 0 \right \}$,
and $B_3^t = \left\{ \bx \in B^t: f(\bx) = -1 \text{ and } \dotp{\proj_{\vec v^\perp} \bx}{\vec v^{\ast}} <0 \right\}.$
See Figures~\ref{fig:Tsybakov_Regions_Improv_Cert}, \ref{fig:Tsybakov_Regions_Improv} for an illustration.

For notation convenience, we will also denote
$(\vec v^{\ast})^{\perp_{\vec v}} = \proj_{\vec v^\perp}(\bv^{\ast})/\snorm{2}{\proj_{\vec v^\perp}(\bv^{\ast}) } $
and $\nr(\bx)=1-2\eta(\bx)$.

Given the above notation, we can rewrite the two quantities appearing in Items 1, 2 of Lemma~\ref{lem:improving_w_perp}
as follows:
\begin{align}  \label{eq:regions_assumption}
I_1^t &= \E_{(\vec x,y) \sim \D} \left[ \1_{B^t}(\bx)  y \right] =
\underbrace{\E_{\vec x \sim \D_\bx} \left[ ( \1_{B_1^t}(\bx) - \1_{B_2^t}(\bx)) \nr(\bx) \right]}_{I_{1,1}^t} -
\underbrace{\E_{\vec x \sim \D_\bx} \left[\1_{B_3^t}(\bx)   \nr(\bx) \right]}_{I_{1,2}^t}\;, \\
I_2 &= \dotp{\vec g}{ (\vec v^{\ast})^{\perp_{\vec v}} } =
\dotp{ \E_{(\vec x,y) \sim \D} [\1_{B^{R/2}}(\bx) y \bx  ] } {(\vec v^{\ast})^{\perp_{\vec v}}} \nonumber \\
&= \underbrace{\E_{\vec x \sim \D_\bx} \left[(\1_{B_1^{R/2}}(\bx) - \1_{B_2^{R/2}}(\bx)  )  \nr(\bx)  |\bx_1| \right]}_{I_{2,1}}
+ \underbrace{\E_{\vec x \sim \D_\bx} \left[ \1_{B_3^{R/2}}(\bx) \nr(\bx)  |\bx_1| \right]}_{I_{2,2}}\;.
 \end{align}
Since $\vec v^{\ast} = (-\sin \theta, \cos \theta)$, the quantity $\dotp{\vec g}{\bv^{\ast}}$ (that appears in Item 2 of
Lemma~\ref{lem:improving_w_perp}) is equal to $\sin(\theta) I_2$.
We work with the normalized $ (\vec v^{\ast})^{\perp_{\vec v}}$ in order to simplify notation.

\medskip

Before we go into the details of the proof, we give a high-level
description of the main steps with pointers to the relevant claims.
Note that the quantity $I_1^t$ corresponds to the value of the certifying function
(in the subspace $\bw^\perp$) when we use $\bv$ as certifying vector
and $t_1 = -R, t_2 = t$ as thresholds. See Equation~\eqref{eq:certifying_reduction}.
When $I_1^t$ is small (see Item 1 of the lemma), we have a certifying function.
On the other hand, $\sin(\theta) I_2$ corresponds to the inner product of the update $\vec g$
and the optimal vector $\vec v^{\ast}$. Item 2 of the lemma states that this quantity is large,
which means that if we update according to $\vec g$ we shall improve the correlation with $\vec v^{\ast}$.

\paragraph{Heuristic Argument.}
\nnew{Since the formal proof is somewhat technical, we start with a useful (but inaccurate) heuristic argument.}
If we ignore the presence of $|\bx_1|$ in $I_{2,1}$ and $I_{2,2}$,
we see from Figure~\ref{fig:Tsybakov_Regions_Improv_Cert} that if the contribution
of region $B_2^{R/2}$ is sufficiently large compared to the positive contribution of $B_1^{R/2}$
(red region in Figure~\ref{fig:Tsybakov_Regions_Improv_Cert}),
then $I_1$ will be negative in total. \nnew{That is,} Item 1 is true.
On the other hand, if the contribution of $B_2^{R/2}$ is not very large,
then when we add the contribution of $B_3$ (red region in Figure~\ref{fig:Tsybakov_Regions_Improv}) overall,
$I_2$ will be positive and Item 2 now holds. Notice that in \new{this setting}
we could take the threshold $t$ in the definition of $I_1^t$ to simply be $R/2$,
i.e., use the entire band in our certificate.

Unfortunately, in the actual proof, we need to deal with the term $|\bx_1|$ in
the expectations of $I_2$ that makes the previous argument invalid.  Using the
Mean Value Theorem (Fact~\ref{fct:mvt}), we show that there exists a threshold
$t \in [-R, -R/2]$ that makes $I_1^t$ sufficiently negative. This is done in
Claim~\ref{clm:third}.

\medskip

\nnew{We can now proceed with the formal proof.}
We will require several technical claims.
First, we bound $I_{1,2}^{R/2}$ and $I_{2,2}$ from below using the fact that our distribution is well-behaved.
We require the following claim in order to show that the expressions in Item 1 (resp. Item 2)
of our lemma are not simply negative (resp. positive), but have a
non-trivial gap instead.  The proof of the claim relies on two important observations.
First, the fact that the distribution is well-behaved means that the contribution of
region $B_3$ would be sufficiently large if we ignore the noise function $\zeta(\bx)$ in the expectations.
Second, we use the fact that the Tsybakov noise rate $\zeta(\bx) = 1 - 2\eta(\bx)$ cannot
reduce the contribution of a region by a lot.

\begin{claim}\label{clm:first}
We have that $I_{1,2}^{R/2}$ and $I_{2,2}$ are bounded from below by some $c=(RL/A)^{O(1/\alpha)}$.
 \end{claim}

The proof of Claim~\ref{clm:first} can be found in Appendix~\ref{ap:cert}.

Now we show that if the angle between the optimal vector and the current one
is small, then $I_1^{R/2}$ is negative.  In particular, the first condition
always holds if $\theta(\vec v, \vec v^{\ast}) \leq b c /(4\beta )$.

\begin{claim}\label{clm:sec}
If $\theta(\vec v,\vec v^{\ast}) \leq b c /(4\beta )$, then $I_1^{R/2} \leq -c/4$.
\end{claim}

The proof of Claim~\ref{clm:sec} can be found in Appendix~\ref{ap:cert}.

Our next claim shows that when Item 2 does not hold, then Item 1
always does.  Having proved Claim~\ref{clm:sec}, we may also assume that
$\theta(\vec v,\vec v^{\ast}) \geq b c /(4\beta )$.
Observe that, in this case, if $I_2 \geq c/2$, we have
$$I_2\geq c/2=c \sin\theta/(2\sin\theta)\geq \pi c^2b/(4\beta\sin\theta) \,,$$
where we used the fact that $\sin(\theta) \geq 2\theta/\pi$ for all $\theta \in [0, \pi/2]$
and the fact that $\theta \geq b \, c/ (4\beta)$.
Therefore, to complete the proof, we need to show the following claim proving
that when $I_2 \leq c/2$, Item 1 of the lemma is always true.

\begin{claim}\label{clm:third}
If $\theta=\theta(\vec v,\vec v^{\ast}) \geq b c/(4\beta )$ and $I_2\leq c/2$,
there exists $t_0\in(-R,-R/2]$ such that $I_1^{t_0}\leq -bc^2 /(16 R\beta )$.
\end{claim}
\begin{proof}
Given the lower bounds on $I_{2,2}$ and $I_{1,2}^R$, we distinguish two
cases.  Assume that $I_2 \leq c/2$. This implies, from Claim~\ref{clm:first}, that
$I_{2,1} \leq - c/2$.  We show that in this case there exists a $t_0$ such that
$I_{1,1}^{t_0} \leq -bc^2 /(16 R\beta)$.
To show this, we are going to use the following variant of the standard Mean Value Theorem (MVT) for integrals.
\begin{fact}[Second Integral MVT] \label{fct:mvt}
Let $G : \R\mapsto \R_+$ be a non-negative, non-increasing, continuous function.
There exists $s \in (a, b]$ such that $\int_a^b G(t) F(t) \d t = G(a) \int_a^s F(t) \d t$.
\end{fact}
Let $\xi(\bx_2) = \bx_2/\tan \theta + b/\sin \theta$ be the first coordinate of a point
$(\bx_1, \bx_2)$ that lies on the halfspace defined by $f$, where
$f(\x)= \sgn(\dotp{\vec v^{\ast}}{\bx} + b)$ (see Figure~\ref{fig:Tsybakov_Regions_Improv}). We have
\begin{align*}
I_{1,1}^t &= \int_{-R}^{-t} \left( \int_{-\infty}^{\xi(\bx_2)}\nr(\bx_1, \bx_2) \gamma(\bx_1, \bx_2) \d \bx_1
                    -\int_{\xi(\bx_2)}^{0} \nr(\bx_1, \bx_2) \gamma(\bx_1, \bx_2) \d \bx_1 \right) \d \bx_2
                 = \int_{-R}^{-t} g(\bx_2) \d \bx_2 \,,
\end{align*}
where $g(\bx_2) = \int_{-\infty}^{\xi(\bx_2)} \nr(\bx_1, \bx_2) \gamma(\bx_1, \bx_2) \d \bx_1 -\int_{\xi(\bx_2)}^0 \nr(\bx_1, \bx_2) \gamma(\bx_1, \bx_2) \d \bx_1$. Moreover,
\begin{align*}
I_{2,1} &= \int_{-R}^{-R/2} \left( \int_{-\infty }^{\xi(\bx_2)} \nr(\bx_1, \bx_2) \gamma(\bx_1, \bx_2) |\bx_1| \d \bx_1
                -\int_{\xi(\bx_2)}^0 \nr(\bx_1, \bx_2) \gamma(\bx_1, \bx_2) |\bx_1| \d \bx_1 \right) \d \bx_2 \nonumber \\
           &\geq \int_{-R}^{-R/2} |\xi(\bx_2)|  g(\bx_2) \d \bx_2 = |\xi(-R)| \int_{-R}^{-t_0} g(\bx_2) \d \bx_2
           = |\xi(-R)| I^{t_0}_{1,1}, \end{align*}
for some $t_0 \in (-R, -R/2]$. Observe that the inequality above follows by replacing $|\bx_1|$ with its lower bound $|\xi(\bx_2)|$
in the first integral and by its upper bound $|\xi({\bx_2})|$ in the second.

We now observe that $|\xi(\bx_2)| = - \bx_2/\tan \theta - b/ \sin \theta$, where to remove the
absolute value we used the assumption that $\cos \theta \geq 4 b/R$. Therefore, $|\xi(\bx_2)|$ is a
decreasing and non-negative function of $\bx_2$. Using the Mean Value Theorem, Fact~\ref{fct:mvt}, we obtain
\begin{equation} \label{eq:mvt_bound}
I_{2,1} \geq \int_{-R}^{-R/2} |\xi(\bx_2)|  g(\bx_2) \d \bx_2 = |\xi(-R)| \int_{-R}^{-t_0} g(\bx_2) \d \bx_2 = |\xi(-R)| I^{t_0}_{1,1} \;.
\end{equation}
Thus,
$$I^{t_0}_{1,1} \leq I_{2,1}/|\xi(-R)|\leq -c \sin\theta /(2R) \leq -bc^2/ (16 R\beta )\;,$$
where we used that $\theta\geq  b c/(4\beta )$.
This completes the proof of Claim~\ref{clm:third}.
\end{proof}
Putting together the above claims, Lemma~\ref{lem:improving_w_perp} follows.
 \end{proof}

In the next lemma, we show that if Item 2 of Lemma~\ref{lem:improving_w_perp} is satisfied,
then an update step decreases the angle between the current vector $\vec v$ and the optimal
vector $\vec v^{\ast}$.

\begin{lemma}[Correlation Improvement]\label{lem:gradient}
For unit vectors $\vec v^{\ast}, \vec v \in \R^d$,
let $\vec {\hat g }\in \R^d$ such that
$\dotp{\vec {\hat g }}{\vec v^{\ast}} \geq \frac{c}{\beta }$, $\dotp{\vec {\hat g }}{\vec v} = 0$,
and  $\snorm{2}{\vec {\hat g }}\leq \beta$, with $c>0$ and $\beta\geq 1$.
Then, for $\vec v'=\frac{{\vec v}+\lambda{\vec{ \hat g}}}{\snorm{2}{{\vec v}+\lambda{\vec{\hat g}} }}$,
with $\lambda=\frac{c}{2\beta^3}$, we have that
$\dotp{\vec v'} {\vec v^{\ast}}\geq  \dotp{\vec v} {\vec v^{\ast}}+\lambda^2 \beta^2/2 $.
\end{lemma}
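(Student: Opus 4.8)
The plan is to expand $\dotp{\vec v'}{\vec v^{\ast}}$ directly using the definition $\vec v' = (\vec v + \lambda \vec{\hat g})/\snorm{2}{\vec v + \lambda \vec{\hat g}}$ and then lower bound the numerator and upper bound the normalizing denominator separately. For the numerator, since $\dotp{\vec{\hat g}}{\vec v} = 0$ and the required correlation lower bound is $\dotp{\vec{\hat g}}{\vec v^{\ast}} \geq c/\beta$, we get $\dotp{\vec v + \lambda \vec{\hat g}}{\vec v^{\ast}} = \dotp{\vec v}{\vec v^{\ast}} + \lambda \dotp{\vec{\hat g}}{\vec v^{\ast}} \geq \dotp{\vec v}{\vec v^{\ast}} + \lambda c/\beta$. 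For the denominator, using $\dotp{\vec{\hat g}}{\vec v} = 0$ and $\snorm{2}{\vec v} = 1$, we have $\snorm{2}{\vec v + \lambda \vec{\hat g}}^2 = 1 + \lambda^2 \snorm{2}{\vec{\hat g}}^2 \leq 1 + \lambda^2 \beta^2$, so $\snorm{2}{\vec v + \lambda \vec{\hat g}} \leq \sqrt{1 + \lambda^2\beta^2} \leq 1 + \lambda^2\beta^2/2$.

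Combining these bounds,
\[
\dotp{\vec v'}{\vec v^{\ast}} \geq \frac{\dotp{\vec v}{\vec v^{\ast}} + \lambda c/\beta}{1 + \lambda^2\beta^2/2} \geq \left(\dotp{\vec v}{\vec v^{\ast}} + \lambda c/\beta\right)\left(1 - \lambda^2\beta^2/2\right),
\]
using $1/(1+u) \geq 1 - u$ for $u \geq 0$. Expanding and discarding the (nonnegative, or provably small) cross terms, the main gain term is $\lambda c/\beta$ and the main loss term is $\dotp{\vec v}{\vec v^{\ast}} \cdot \lambda^2\beta^2/2 \leq \lambda^2\beta^2/2$ (since $\dotp{\vec v}{\vec v^{\ast}} \leq 1$). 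Plugging in $\lambda = c/(2\beta^3)$, the gain term equals $c^2/(2\beta^4)$ while the loss term is at most $\lambda^2\beta^2/2 = c^2/(8\beta^4)$; since $c^2/(2\beta^4) - c^2/(8\beta^4) = 3c^2/(8\beta^4) \geq c^2/(4\beta^4) = \lambda^2\beta^2$, we conclude $\dotp{\vec v'}{\vec v^{\ast}} \geq \dotp{\vec v}{\vec v^{\ast}} + \lambda^2\beta^2/2$, after being slightly careful to also absorb the remaining negative cross term $-(\lambda c/\beta)(\lambda^2\beta^2/2)$, which is of lower order in $\lambda$ and easily dominated.

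The only mildly delicate point is bookkeeping the constants so that the leftover gain, after subtracting all loss terms, is still at least $\lambda^2\beta^2/2$ rather than just positive; this is where the specific choice $\lambda = c/(2\beta^3)$ (as opposed to something larger) matters, since it keeps the quadratic-in-$\lambda$ loss comfortably below the linear-in-$\lambda$ gain. I do not anticipate any real obstacle here — it is a one-paragraph computation — but one should double-check that the inequality $c/\beta \leq \snorm{2}{\vec{\hat g}} \leq \beta$ forces $c \leq \beta^2$, which guarantees $\lambda \leq 1/(2\beta)$ and hence $\lambda^2\beta^2 \leq 1/4$, keeping all the Taylor-type approximations ($\sqrt{1+u} \leq 1+u/2$, $1/(1+u)\geq 1-u$) valid in the regime where they are applied.
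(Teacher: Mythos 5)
Your proposal is correct and follows essentially the same route as the paper: lower-bound $\dotp{\vec v+\lambda\vec{\hat g}}{\vec v^{\ast}}$ using orthogonality of $\vec{\hat g}$ and $\vec v$, upper-bound the normalizer by $1+\lambda^2\beta^2/2$, and check the arithmetic for $\lambda=c/(2\beta^3)$ (the paper phrases the numerator via a $\sin\theta/\cos\theta$ decomposition along $\vec v$ and $(\vec v^{\ast})^{\perp_{\vec v}}$, but that is the same computation). The only cosmetic caveat is that replacing the denominator by its upper bound presumes the numerator is nonnegative; the complementary case is immediate since then $\dotp{\vec v'}{\vec v^{\ast}}\geq \dotp{\vec v}{\vec v^{\ast}}+\lambda c/\beta$ directly, and the paper makes the same implicit assumption.
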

\begin{proof}
We will show that $\dotp{\vec v'}{\vec v^\ast}=\cos\theta'\geq\cos \theta+\lambda^2\beta^2$, where
$\cos\theta=\dotp{\vec v}{\vec v^{\ast}}$. We have that
\begin{equation}\label{eq:square_bound}
\snorm{2}{\vec v+\lambda\vec{ \hat g}}= \sqrt{1+\lambda^2\snorm{2}{\vec{ \hat g} }^2
+2\lambda \dotp{\vec{ \hat g} }{\vec{ v} } }\leq 1+\lambda^2\snorm{2}{\vec{ \hat g} }^2 \;,
\end{equation}
where we used that $\sqrt{1+a}\leq 1+a/2$.
Using the update rule, we have
\begin{align*}
\dotp{ \vec v'}{\vec v^{\ast}}&=  \dotp{ \vec v'}{(\vec v^{\ast})^{\perp_{\vec v}}}\sin\theta +  \dotp{ \vec v'}{\vec v}\cos\theta
= \frac{\lambda\dotp{\vec{ \hat g}}{ (\vec v^{\ast})^{\perp_{\vec v}} }}{\snorm{2}{\vec v+\lambda{\vec{\hat g}} }}\sin\theta +
\frac{\dotp{ {\vec v}+\lambda\vec{ \hat g}}{{\vec v}}}{\snorm{2}{{\vec v}+\lambda{\vec{\hat g}} }}\cos\theta \;.
\end{align*}
Now using Equation~\eqref{eq:square_bound}, we get
\begin{align*}
\dotp{\vec v'}{\vec v^{\ast}}
&\geq \frac{\lambda\dotp{\vec{ \hat g}}{(\vec v^{\ast})^{\perp_{\vec v}} }}{1+\lambda^2\snorm{2}{\vec{ \hat g} }^2}\sin\theta +
\frac{\cos\theta}{1+\lambda^2\snorm{2}{\vec{ \hat g} }^2}
= \cos\theta + \frac{\lambda\dotp{\vec{ \hat g}}{ (\vec v^{\ast})^{\perp_{\vec v}} }}{1+\lambda^2\snorm{2}{\vec{ \hat g} }^2}\sin\theta
+\frac{-\lambda^2\snorm{2}{\vec{ \hat g} }^2\cos\theta}{1+\lambda^2\snorm{2}{\vec{ \hat g} }^2}\;.
\end{align*}
Then, using that $\dotp{\vec {\hat g}}{\vec v^{\ast}}=\dotp{\vec{\hat g}}{(\vec v^{\ast})^{\perp_{\vec v}}\sin\theta}$,
we have  that $\dotp{\vec {\hat g}}{(\vec v^{\ast})^{\perp_{\vec v}}} \geq \frac{c}{\beta \sin\theta}$,
thus
\begin{align*}
\dotp{{\vec v}'}{\vec v^{\ast}}
&\geq \cos\theta + \frac{\lambda  c/\beta-\lambda^2\snorm{2}{\vec{ \hat g}}^2}{1+\lambda^2\snorm{2}{\vec{\hat g} }^2}
\geq 	\cos\theta + \frac{\lambda  c/\beta-\lambda^2\beta^2 }{1+\lambda^2\snorm{2}{\vec{ \hat g}}^2}
= \cos\theta + \frac {1}{2} \frac{\lambda c/\beta}{1+\lambda^2\snorm{2}{\vec{ \hat g} }^2} \;,
\end{align*}
where in the first inequality we used that $\snorm{2}{\vec{\hat g}} \leq \beta$ and in the
second that for $\lambda= c/(2\beta^3)$ it holds $c/\beta-\lambda\beta^2\geq  c/(2\beta)$.
Finally, we have that
\begin{align*}
\cos\theta'=\dotp{{\vec v}'}{\vec v^{\ast}} \geq \cos\theta + \frac{1}{2} \frac{\lambda c/\beta}{1+\lambda^2 (9\beta^2)}
\geq \cos\theta +\frac{1}{4}\lambda c/\beta = \cos\theta +\frac{1}{2} \lambda^2 \beta^2\;.
\end{align*}
This completes the proof.
\end{proof}

To analyze the sample complexity of Algorithm~\ref{alg:find_certificate}, we
require the following simple lemma, which bounds the sample complexity
of estimating the update function and testing the current candidate certificate.
The simple proof can be found in Appendix~\ref{ap:cert}.

\begin{lemma}[Estimating $\vec{g}$]\label{lem:algorithm_function_g}
Let $\D$ be a $(2, L, R, \beta)$-well-behaved distribution.
Given $N=O((d \beta^2/\eps^2)\log(d/\delta))$ i.i.d samples $(\sample{\bx}{i}, \sample{y}{i}))$
from $\D$, the estimator
$\vec{ \hat g}= \frac{1}{N}\sum_{i=1}^N \1_{B^{R/2}}\left(\sample{\bx}{i}\right) \sample{y}{i}  \sample{\bx}{i}$
satisfies the following with probability at least $1-\delta$:
\begin{itemize}
\item $\snorm{2}{\vec{ \hat g} - \vec{ g} } \leq \eps$, where $\vec g=\E_{(\vec x,y) \sim \D} [\1_{B^{R/2}}(\bx) \,  y \, \bx ]$, and
\item $\snorm{2}{\vec{ \hat g} } \leq e\beta+\eps\;.$
\end{itemize}
\end{lemma}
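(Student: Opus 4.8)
The plan is to prove both bullets by a standard concentration-of-measure argument that uses only the sub-exponential concentration property of the $(2,L,R,\beta)$-well-behaved distribution; the parameters $L,R$ play no role (indeed only the tail parameter $\beta$ enters), and $B^{R/2}$ can be treated as an arbitrary fixed measurable set. Write $\vec Z_i \eqdef \1_{B^{R/2}}(\sample{\bx}{i})\,\sample{y}{i}\,\sample{\bx}{i}$, so that the $\vec Z_i$ are i.i.d.\ with $\E[\vec Z_i]=\vec g$ and $\vec{\hat g}=\frac1N\sum_{i=1}^N \vec Z_i$. Since $\vec{\hat g}$ is $d$-dimensional, bounding $\snorm{2}{\vec Z_i}$ directly would cost a spurious $\sqrt d$ factor, so I would instead argue coordinate by coordinate.

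For the first bullet, I would fix $j\in[d]$ and consider the i.i.d.\ scalars $(\vec Z_i)_j=\1_{B^{R/2}}(\sample{\bx}{i})\,\sample{y}{i}\,\sample{\bx}{i}_j$, which satisfy $|(\vec Z_i)_j|\le|\dotp{\vec e_j}{\sample{\bx}{i}}|$. The sub-exponential concentration bound $\pr[|\dotp{\vec e_j}{\bx}|\ge t]\le e^{1-t/\beta}$ shows $\dotp{\vec e_j}{\bx}$ has sub-exponential ($\psi_1$) norm $O(\beta)$, hence so does each centered summand $(\vec Z_i)_j-g_j$. Bernstein's inequality for sums of sub-exponential variables then gives $\pr[|\hat g_j-g_j|>s]\le 2\exp(-\Omega(N\min(s^2/\beta^2,s/\beta)))$; taking $s=\eps/\sqrt d$ and noting that we may assume $\eps\le\beta$ (otherwise even fewer samples suffice), so that the quadratic regime dominates, the stated $N=O((d\beta^2/\eps^2)\log(d/\delta))$ makes this at most $\delta/d$. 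A union bound over $j\in[d]$ together with $\sum_{j}(\hat g_j-g_j)^2\le d\,(\eps/\sqrt d)^2=\eps^2$ yields $\snorm{2}{\vec{\hat g}-\vec g}\le\eps$ with probability at least $1-\delta$.

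For the second bullet, I would first bound $\snorm{2}{\vec g}$ deterministically: with $\vec u=\vec g/\snorm{2}{\vec g}$ (the bound being trivial if $\vec g=\vec 0$), using $\1_{B^{R/2}}\in\{0,1\}$, $y\in\{\pm1\}$, the layer-cake formula, and sub-exponential concentration,
\begin{align*}
\snorm{2}{\vec g}=\E[\1_{B^{R/2}}(\bx)\,y\,\dotp{\vec u}{\bx}]
&\le \E[|\dotp{\vec u}{\bx}|]=\int_0^\infty \pr[|\dotp{\vec u}{\bx}|>t]\,\d t \\
&\le \int_0^\infty \min\!\big(1,e^{1-t/\beta}\big)\,\d t = 2\beta \le e\beta \,.
\end{align*}
On the same $1-\delta$ event as above, the triangle inequality then gives $\snorm{2}{\vec{\hat g}}\le\snorm{2}{\vec g}+\snorm{2}{\vec{\hat g}-\vec g}\le e\beta+\eps$.

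I do not expect a genuine obstacle: the lemma is a routine empirical-process estimate. The only points that require a bit of care are (i) arguing coordinatewise (or via an $\eps$-net on the sphere) rather than bounding $\snorm{2}{\vec Z_i}$, so as not to lose a polynomial-in-$d$ factor and to match the stated sample bound exactly, and (ii) invoking a Bernstein-type tail inequality instead of Hoeffding, since the coordinates of $\bx$ are only sub-exponentially (not boundedly) concentrated.
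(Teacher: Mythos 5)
Your proposal is correct and follows essentially the same route as the paper's proof: a coordinatewise Bernstein bound for sub-exponential summands with a union bound over the $d$ coordinates for the first bullet, and a layer-cake tail integral bounding $\snorm{2}{\vec g}$ by $e\beta$ plus the triangle inequality for the second. The only (harmless) differences are cosmetic — you bound $\int_0^\infty \min(1,e^{1-t/\beta})\,\d t = 2\beta$ where the paper uses the cruder $e\beta$, and you make explicit the reduction to the quadratic regime of Bernstein, which the paper leaves implicit.
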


Before we proceed with the proof of Proposition~\ref{prop:cert-wb}, we show that
we can efficiently check for the certificate in
Line~\ref{alg1:line_check_certificate}  of
Algorithm~\ref{alg:find_certificate} with high probability.

\begin{lemma}\label{lem:check_certificate}
Let $\widehat{\D}_N$ be the empirical distribution obtained from $\D$ with $N=O(\log(1/\delta)/\eps^2)$ samples.
Then, with probability $1-\delta$, for every $t\in \R_+$,
$|\E_{(\vec x,y) \sim \D} \left[\1_{B^{t}}(\bx) \, y \right] -\E_{(\vec x,y) \sim \widehat{\D}_N} \left[\1_{B^{t}}(\bx) \, y \right]|\leq \eps\;.$
\end{lemma}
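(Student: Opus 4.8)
The plan is to reduce the claim to a standard VC-type uniform convergence statement, the point being that the events $B^t=\{\vec x:-R\le\dotp{\vec v}{\vec x}\le -t\}$, as $t$ ranges over $\R_+$, form a set system of constant VC dimension. First I would split off the label by writing, for every $t$, $\1_{B^t}(\vec x)\,y=\1\{\vec x\in B^t,\,y=1\}-\1\{\vec x\in B^t,\,y=-1\}$, and likewise for the empirical distribution, so that $\big|\E_{\D}[\1_{B^t}(\vec x)y]-\E_{\widehat{\D}_N}[\1_{B^t}(\vec x)y]\big|$ is at most the sum over $s\in\{+1,-1\}$ of $\big|\pr_{\D}[\vec x\in B^t,\,y=s]-\pr_{\widehat{\D}_N}[\vec x\in B^t,\,y=s]\big|$. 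Hence it suffices to show that each of the two set families $\mathcal{C}_s=\{\,B^t\times\{s\}:t\in\R_+\,\}$ has empirical-versus-true frequencies that are uniformly (over $t$) within $\eps/2$.

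Next I would bound the VC dimension of $\mathcal{C}_{+1}$ (the argument for $\mathcal{C}_{-1}$ is identical). Since $B^t$ depends on $\vec x$ only through the one-dimensional statistic $\dotp{\vec v}{\vec x}$, and the intervals $[-R,-t]$ (for $t\in\R_+$, with $B^t=\emptyset$ once $t>R$) are linearly ordered by inclusion, the family $\{B^t\}$ has VC dimension $1$; intersecting every member with the fixed slab $\{y=+1\}$ cannot increase it, so $\mathcal{C}_{+1}$ and $\mathcal{C}_{-1}$ have VC dimension at most $1$. Then I would invoke the classical VC uniform convergence bound: for a set system of VC dimension $\nu$ and $N$ i.i.d.\ samples, with probability at least $1-\delta'$ every member has empirical frequency within $O\big(\sqrt{(\nu+\log(1/\delta'))/N}\big)$ of its true probability. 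Applying this to $\mathcal{C}_{+1}$ and to $\mathcal{C}_{-1}$, each with failure probability $\delta/2$, and taking $N=c\log(1/\delta)/\eps^2$ for a large enough absolute constant $c$ (absorbing the constant $\nu\le 1$), makes each of the two deviations at most $\eps/2$ uniformly in $t$. A union bound over the two events together with the triangle inequality from the first step gives, with probability at least $1-\delta$, that $\big|\E_{\D}[\1_{B^t}(\vec x)y]-\E_{\widehat{\D}_N}[\1_{B^t}(\vec x)y]\big|\le \eps$ holds simultaneously for all $t\in\R_+$, which is exactly the assertion of the lemma.

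There is no serious obstacle here; the only subtlety worth flagging is that the bound must hold \emph{uniformly} over the continuum of thresholds $t$, which is precisely what the VC machinery provides and which a naive union bound over a finite grid of thresholds would not give without an additional argument controlling the oscillation of $t\mapsto\pr_{\D}[B^t]$. I note for completeness that one can avoid a generic VC theorem entirely: the maps $t\mapsto\pr_{\D}[\vec x\in B^t,\,y=s]$ are monotone in $t$ (and the fixed endpoint $-R$ contributes only a single extra value), so applying the one-dimensional Dvoretzky-Kiefer-Wolfowitz inequality to the associated sub-distribution functions yields the same $O(\sqrt{\log(1/\delta)/N})$ uniform rate directly.
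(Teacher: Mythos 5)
Your proof is correct and follows essentially the same route as the paper's: both reduce the claim to uniform convergence over a threshold class of VC dimension $1$. The only cosmetic difference is how the sign $y$ is handled --- the paper keeps the signed class and uses McDiarmid plus symmetrization together with the observation that $y_i\eps_i$ is distributed as $\eps_i$, whereas you split into the two label-conditioned set systems $\{B^t\times\{s\}\}$ and apply the packaged VC (or, as you note, DKW) uniform convergence bound to each; both yield the same $O(\sqrt{\log(1/\delta)/N})$ rate.
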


The proof of Lemma~\ref{lem:check_certificate} can be found in Appendix~\ref{ap:cert}.
We are now ready to prove Proposition~\ref{prop:cert-wb}.

\begin{proof}[Proof of Proposition~\ref{prop:cert-wb}]
Consider the $k$-th iteration of Algorithm~\ref{alg:find_certificate}.
Let $\sample{\vec g}{k}=\E_{(\vec x,y) \sim \D} [\1_{B_k^{R/2}}(\bx)y \bx  ]$,
where $B_k^{R/2}(\bx) = \{\bx : -R \leq \dotp{\bx}{\vec v^{(k)}} \leq -R/2\}$ and
$G:=\sqrt{b} (RL/A)^{O(1/\alpha)}$.
Moreover, let $\sample{\vec{ \hat g}}{k} = \frac{1}{N}\sum_{i=1}^N \1_{B_k^{R/2}}\left(\sample{\bx}{i}\right)  \sample{y}{i} \sample{\bx}{i}$
and note that from Lemma~\ref{lem:algorithm_function_g} we have that
given $N=O\left(d\beta^2/G^4\log(1/(LR))\log(d T/\delta)\right)$ samples,
for every iteration $k$, it holds that $\snorm{2}{\sample{\vec{ \hat g}}{k} -\sample{\vec{ g}}{k} }\leq G^2/(16 \beta)$
and $\snorm{2}{ \sample{\vec{ \hat g}}{k} }\leq e\beta+G^2/(16 \beta)\leq 3\beta$, with probability $1-\delta/T$.

We first show that if Condition 1 of Lemma~\ref{lem:improving_w_perp} is satisfied,
then Algorithm~\ref{alg:find_certificate} terminates at Line~\ref{alg1:return} returning a certifying vector.
The only issue is that we have access to the empirical distribution $\widehat{\D}_N$ instead of $\D$.
From Lemma~\ref{lem:check_certificate}, we have that the empirical expectation of Line~\ref{alg1:line_check_certificate} is
sufficiently close to the true expectation that appears in Condition 1 of Lemma~\ref{lem:improving_w_perp}, thus it is going to find it.

We now analyze the case when Condition 1 of Lemma~\ref{lem:improving_w_perp} is not true.
From Lemma~\ref{lem:improving_w_perp}, we immediately get that since Condition 1 is not satisfied, Condition 2 is true.
Then, using the update rule
$\sample{\vec v}{k+1}=\frac{\sample{\vec v}{k}+\lambda\sample{\vec{ \tilde g}}{k}}{\snorm{2}{\sample{\vec v}{k}+\lambda\sample{\vec{\tilde g}}{k} }}$
with $\lambda= G^2/(64\beta^3)$, where $\sample{\vec{ \tilde g}}{k}=\proj_{(\vec v^{(k)})^{\perp}}\sample{\vec{ \hat g}}{k}$
(here $\sample{\vec{ \tilde g}}{k}$ is the $\sample{\vec{ \hat g}}{k}$ with the component on the direction $\vec v^{(k)}$ removed).
Note that this procedure only decreases the norm of $\vec{\tilde{g}}$ (by the Pythagorean theorem).
Then, from Lemma~\ref{lem:gradient}, we have
$\dotp{\sample{\vec v}{k+1}} {\vec v^{\ast}}\geq \dotp{\sample{\vec v}{k}}{\vec v^{\ast}} +G^4/\beta^4$.

The update rule is repeated for at most $O(\beta^4/G^4)$ iterations. From
Lemma~\ref{lem:improving_w_perp}, we have that a certificate exists if the
angle with the optimal vector is sufficiently small.  Putting everything together,
our total sample complexity is
$N=\tilde{O}\left(\frac{d \beta^4}{b^2G^4}\right)\log(1/\delta)$. It is also clear that the runtime
is $\poly(N, d)$, which completes the proof.
\end{proof}

\subsection{Proof of Theorem~\ref{thm:cert-wb}}\label{ssec:wb-cert-thm}

To prove Theorem~\ref{thm:cert-wb}, we will use the iterative algorithm developed in
Proposition~\ref{prop:cert-wb} initialized with a uniformly random unit vector $\vec v_0$.
It is easy to show that such a random vector will have non-trivial correlation with
$\vec v^{\ast}$.

\begin{fact}[see, e.g., Remark 3.2.5 of \cite{Ver18}]\label{fct:random_initialization}
Let $\vec v$ be a unit vector in $\R^d$. For a random unit vector $\vec u\in \R^d$, with constant probability,
it holds $|\dotp{\vec v}{\vec u}|= \Omega(1/\sqrt{d})$.
\end{fact}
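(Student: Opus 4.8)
The plan is to use the standard Gaussian representation of a uniformly random point on the sphere. First I would write $\vec u = \vec g / \snorm{2}{\vec g}$ with $\vec g \sim \normal(\vec 0, \vec I_d)$; the resulting $\vec u$ is uniformly distributed on $\Sp^{d-1}$, so it suffices to lower bound $|\dotp{\vec v}{\vec u}| = |\dotp{\vec v}{\vec g}| / \snorm{2}{\vec g}$. Since $\vec v$ is a unit vector, rotational invariance of the standard Gaussian gives that $\dotp{\vec v}{\vec g}$ is itself a standard one-dimensional Gaussian $Z \sim \normal(0,1)$, so the numerator is a fixed univariate object independent of $d$.

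Next I would bound the numerator and denominator separately. For the numerator, Gaussian anti-concentration gives $\pr[|Z| \geq 1/2] \geq 1/2$ (a crude estimate suffices here). For the denominator, $\E[\snorm{2}{\vec g}^2] = d$, so Markov's inequality yields $\pr[\snorm{2}{\vec g}^2 \geq 4d] \leq 1/4$. Combining these via a union bound, with probability at least $1/2 - 1/4 = 1/4$ both $|\dotp{\vec v}{\vec g}| \geq 1/2$ and $\snorm{2}{\vec g} \leq 2\sqrt{d}$ hold, and on that event $|\dotp{\vec v}{\vec u}| \geq 1/(4\sqrt{d}) = \Omega(1/\sqrt{d})$, which is exactly the claimed bound.

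There is essentially no technical obstacle; the only care needed is in choosing the numerical constants so that the two failure probabilities sum to a quantity bounded away from $1$. If a stronger conclusion were desired, one could replace the Markov bound on $\snorm{2}{\vec g}^2$ by the standard concentration of the norm of a Gaussian vector around $\sqrt{d}$, making the failure probability exponentially small in $d$, but this refinement is unnecessary for the constant-probability guarantee stated in the fact.
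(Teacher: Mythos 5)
Your proof is correct: the Gaussian representation $\vec u = \vec g/\snorm{2}{\vec g}$, the observation that $\dotp{\vec v}{\vec g} \sim \normal(0,1)$, and the union bound combining $\pr[|Z|\geq 1/2]\geq 1/2$ with the Markov bound $\pr[\snorm{2}{\vec g}^2 \geq 4d]\leq 1/4$ all check out (note the union bound needs no independence between the numerator and denominator, which you implicitly handle correctly). The paper gives no proof of its own and simply cites Vershynin; your argument is the standard one underlying that reference, so there is nothing further to compare.
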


We now present the proof of Theorem~\ref{thm:cert-wb} putting together the machinery developed
in the previous subsections.

\begin{proof}[Proof of Theorem~\ref{thm:cert-wb}]
As explained in Section~\ref{ssec:cert-int}, we are looking for a certificate function $T_\bw(\bx)$
of the form given in Equation~\eqref{eq:certificate_form}. As argued in Section~\ref{ssec:reduction},
the search for such a certificate function can be simplified by projecting the samples to a $(d-1)$-dimensional subspace
via the perspective projection.

From Proposition~\ref{prop:reduction}, choosing
$\rho=O(\theta/\sqrt{d})$, there is a $c=(LR)^{O(1)}$ such that the resulting distribution $\Dperp$ is
$(2, c \theta/\sqrt{d}, \sqrt d/\theta, \beta \sqrt{d}/(c\theta)\log(\sqrt{d}/\theta))$-well-behaved and satisfies
the $(\tsyb,A d^{1/2}/(c\theta))$-Tsybakov noise condition.

From Fact~\ref{fct:random_initialization}, a random unit vector $\vec v\in \R^{d-1}$ with constant probability satisfies
$\dotp{\vec v}{{(\vec w^{\ast})^{\perp_{\vec w}}}}= \Omega(1/\sqrt{d})$.
We call this event $\cal E$.

From Proposition~\ref{prop:cert-wb}, conditioning on the event $\cal E$ and using
$\frac{\conb^4}{b^2}\left(\frac{A}{RL}\right)^{O(1/\tsyb)}\log(1/\delta)$ samples,
with probability $1-\delta$, we get a $(\vec v', R, t_0)$ such that
$$\E_{(\vec x,y) \sim \Dperp} \left[\1[-R \leq \dotp{\vec v'}{\vec x} \leq - t_0] \,  y \right]
\leq - \left(\theta L R/(\tsya d)\right)^{O(1/\alpha)}/\beta\;.$$

By inverting the transformation (Lemma~\ref{lem:certificate_reduction}), we get that
$$\E_{(\vec x,y) \sim  \D} \left[T_{\vec w}(\bx)\dotp{\vec x}{\vec w}  y \right] \leq - \left(\theta L R /(\tsya d)\right)^{O(1/\alpha)}/\beta \;.$$

Overall, we conclude that with constant probability Algorithm~\ref{alg:find_certificate} returns a valid certificate.
Repeating the process $k=O(\log(1/\delta))$ times, we can boost the probability to $1-\delta$.
The total number of samples for finding and testing these candidate certificates until we find a correct one with probability at least $1-\delta$
is $N= \left(\frac{d \, \tsya}{\theta L R }\right)^{O(1/\tsyb)}\log(1/\delta)$. It is also clear that the runtime is $\poly(N, d)$,
which completes the proof.
\end{proof}

\section{More Efficient Certificate for Log-Concave Distributions} \label{sec:lc}

In this section, we present a more efficient certificate algorithm for the
important special case of isotropic log-concave distributions. To achieve this,
we use Algorithm~\ref{alg:find_certificate} from the previous section starting
from a significantly better initialization vector.  To obtain such an
initialization,  we leverage the structure of log-concave distributions.  The
main result of this section is the following theorem.

\begin{theorem}[Certificate for Log-concave Distributions] \label{thm:cert-lc}
Let $\D$ be a distribution on $\R^{d} \times \{\pm 1\}$ that satisfies the
$(\tsyb,\tsya)$-Tsybakov noise condition with respect to an unknown halfspace
$f(\bx) =\sgn(\dotp{\vec w^{\ast}}{\bx})$ and is such that $\D_{\bx}$ is isotropic log-concave.
Let $\vec w$ be a unit vector that satisfies $\theta(\vec w,\vec w^{\ast})\geq \theta$,
where $\theta \in (0, \pi]$. There is an algorithm that, given as input $\vec w$, $\theta$,
and $N = \poly(d)\cdot\left(\frac{ \tsya}{\theta}\right)^{O(1/\tsyb^2)}\log(1/\delta)$ samples from
$\D$, it runs in $\poly(d,N)$ time, and with probability at least $1-\delta$
returns a certifying function $T_\bw:\R^d\mapsto \R_+$ such that
\begin{equation}\label{eqn:cert-lc}
\E_{(\vec x,y) \sim \D} \left[T_\bw(\bx) \, y \dotp{\vec w}{\bx} \right] \leq - \left(\frac{\theta}{\tsya}\right)^{O(1/\alpha^2)}\;.
\end{equation}
\end{theorem}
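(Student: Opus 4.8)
The plan is to reuse the certificate pipeline of Section~\ref{sec:cert-wb} --- the band-and-perspective-projection reduction of Proposition~\ref{prop:reduction} feeding into the perceptron-style Algorithm~\ref{alg:find_certificate} (analyzed in Proposition~\ref{prop:cert-wb}) --- but to replace the random initialization used in the proof of Theorem~\ref{thm:cert-wb} by the stronger $\poly(d)$-time initialization of Theorem~\ref{thm:init-lc}, which exploits the structure of log-concave marginals. A random unit vector has correlation only $\Omega(1/\sqrt d)$ with $\wperp$, which forces the band width to be $\rho=\Theta(\theta/\sqrt d)$ and thereby places a factor of $d$ in the base of the error bound; the log-concave initialization produces a vector of correlation $(\theta/\tsya)^{\Theta(1/\alpha)}$ with $\wperp$, which lets us take $\rho$ polynomially small in $\theta$ alone. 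The price is an extra $1/\alpha$ in the exponent, since the error bound depends on $\rho$ through an $O(1/\alpha)$-th power.

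First I would fix $\rho=(\theta/\tsya)^{\Theta(1/\alpha)}$ (the exact exponent pinned down at the end) and apply Proposition~\ref{prop:reduction} with the band $B=\{\bx:\dotp{\bx}{\bw}\in[\rho R_0/2,\rho R_0/\sqrt2]\}$, where $L_0,R_0,\beta_0=\Theta(1)$ are the well-behavedness constants of an isotropic log-concave distribution. This produces the transformed instance $\Dperp$, which is $\big(2,\Theta(\rho^3),1/\rho,\Theta((1/\rho)\log(1/\rho))\big)$-well-behaved and satisfies the $(\alpha,A')$-Tsybakov noise condition with $A'=\Theta(\tsya/\rho)$ and optimal classifier the biased halfspace $\sgn(\dotp{\wperp}{\bx}+1/\tan\theta)$; in the notation of Proposition~\ref{prop:cert-wb} this is an instance with bias $b=1/\tan\theta$ and radius $R=1/\rho$. (The map $\pi_{\vec w}$ is not affine, so $\Dperp$ need not be log-concave, but Proposition~\ref{prop:cert-wb} only uses $(2,L,R,\beta)$-well-behavedness.)

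Next I would run the initialization of Theorem~\ref{thm:init-lc}: condition on a narrow band around $\bw$ and orthogonally project onto $\bw^\perp$ (which keeps the marginal log-concave), make the result isotropic by rejection sampling against $\min(1,\exp(-\dotp{\bx}{\br}))$ with $\br$ computed by SGD and then rescaling by the inverse covariance, and finally output a uniformly random unit vector from the subspace $V$ spanned by the degree-$1$ Chow vector and the top eigenvectors of the degree-$2$ Chow matrix. By Lemma~\ref{lem:quad-id} the label correlates with a degree-$2$ polynomial in $\dotp{\wperp}{\bx}$, and by Lemma~\ref{lem:dim-ub}, which invokes the thin-shell estimates of~\cite{LV17}, a non-trivial fraction of that correlation is carried by $V$; hence with constant probability the output $\vec v_0\in\bw^\perp$ has $\dotp{\vec v_0}{\wperp}=(\theta/\tsya)^{\Theta(1/\alpha)}$. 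Since $\wperp$ is the same direction in the orthogonal-projection instance used here and in the perspective-projection instance $\Dperp$, $\vec v_0$ can be fed straight into Algorithm~\ref{alg:find_certificate} run on $\Dperp$; choosing the constant in $\rho$ so that $\dotp{\vec v_0}{\wperp}\geq 4b/R=4\rho/\tan\theta$, Proposition~\ref{prop:cert-wb} returns, with probability $1-\delta$ and from $N=d\,\frac{\beta^2R^2}{b^2}(A'/(RL))^{O(1/\alpha)}\log(1/\delta)$ samples, a unit vector $\vec v$ and threshold $t$ with $\E_{(\bx,y)\sim\Dperp}[\1[-R\leq\dotp{\vec v}{\bx}\leq-t]\,y]\leq-\frac{b}{R\beta}(RL/A')^{O(1/\alpha)}$.

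Finally I would substitute $b=1/\tan\theta$, $R=1/\rho$, $L=\Theta(\rho^3)$, $\beta=\Theta((1/\rho)\log(1/\rho))$, $A'=\Theta(\tsya/\rho)$: since $RL=\Theta(\rho^2)$ and $RL/A'=\Theta(\rho^3/\tsya)$, the bound becomes, up to $\poly(1/\rho,\log(1/\rho))$ factors, $-(\rho^3/\tsya)^{O(1/\alpha)}$, which equals $-(\theta/\tsya)^{O(1/\alpha^2)}$ for $\rho=(\theta/\tsya)^{\Theta(1/\alpha)}$; the same substitution turns the sample bound into $\poly(d)\cdot(\tsya/\theta)^{O(1/\alpha^2)}\log(1/\delta)$. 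Inverting the transformation via Lemma~\ref{lem:certificate_reduction} (using $L_0,R_0=\Theta(1)$) converts $(\vec v,R,t)$ into a certificate $T_\bw$ of the form~\eqref{eq:certificate_form} for the original $\D$ with $\E_{(\bx,y)\sim\D}[T_\bw(\bx)\,y\dotp{\bw}{\bx}]=-\Omega(\rho)\cdot(\theta/\tsya)^{O(1/\alpha^2)}=-(\theta/\tsya)^{O(1/\alpha^2)}$, in $\poly(N,d)$ time; since the initialization succeeds only with constant probability, $O(\log(1/\delta))$ independent repetitions (testing each returned candidate) bring the overall success probability to $1-\delta$. The assembly above is routine bookkeeping; the real work, and the main obstacle, lies inside Theorem~\ref{thm:init-lc}: proving Lemmas~\ref{lem:quad-id} and~\ref{lem:dim-ub}, which need the spectral characterization of the degree-$2$ Chow matrix together with the thin-shell bounds, and handling the two complications introduced by conditioning on the band, namely that the projected distribution is no longer origin-centered and that it acquires a tiny amount of non-Tsybakov noise (points with $\eta(\bx)>1/2$), both of which must be controlled before the Chow-parameter argument applies. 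Within the present proof, the only point requiring care is tracking how $\rho$, the Tsybakov constant, and the well-behavedness parameters degrade through Proposition~\ref{prop:reduction}, so that the initialization's correlation provably clears the $4b/R$ threshold.
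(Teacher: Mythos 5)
Your proposal follows the paper's proof of Theorem~\ref{thm:cert-lc} essentially step for step: Proposition~\ref{prop:reduction} with $\rho$ of order $\theta\,(\alpha\theta/\tsya)^{O(1/\alpha)}$, the warm start from Theorem~\ref{thm:init-lc} in place of a random vector, Proposition~\ref{prop:cert-wb} on the transformed instance, inversion via Lemma~\ref{lem:certificate_reduction}, and $O(\log(1/\delta))$ repetitions to boost the constant success probability of the initialization. Your parameter bookkeeping (the $4b/R$ threshold, the substitution $b=1/\tan\theta$, $R=1/\rho$, $L=\Theta(\rho^3)$, and the resulting $(\theta/\tsya)^{O(1/\alpha^2)}$ bound) matches the paper's.

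The one step you omit is that Theorem~\ref{thm:init-lc} requires a \emph{two-sided} estimate of the angle --- its hypothesis is $\snorm{2}{\vec w-\wstar}=\Theta(\eps)$ for a known $\eps$, because the band widths $s,s'$ in Lemma~\ref{lem:noise-random-band} and the bias control in Proposition~\ref{prop:nontriavial_angle} are calibrated to the actual angle --- whereas Theorem~\ref{thm:cert-lc} only assumes the lower bound $\theta(\vec w,\wstar)\geq\theta$. You invoke the initialization directly with $\theta$, which does not verify this hypothesis when the true angle is much larger. The paper resolves this by guessing: it runs the whole procedure for each $\eps\in\{\theta,2\theta,4\theta,\dots,1\}$, which costs only a $\log(1/\theta)$ factor and is harmless because the certificate check is one-sided (a spurious run either returns a valid certificate or nothing). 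Adding this enumeration closes the gap; everything else in your argument is sound.
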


In other words, we give an algorithm whose sample complexity and running time
as a function of $d$ is a fixed degree polynomial, independent of the noise
parameters.

To establish Theorem~\ref{thm:cert-lc}, we apply Algorithm~\ref{alg:find_certificate}
starting from a better initialization vector. The main technical contribution of this section
is an efficient algorithm to obtain such a vector for log-concave marginals.

\begin{theorem}[Efficient Initialization for Log-Concave Distributions]\label{thm:init-lc}
Let $\D$ be a distribution on $\R^{d} \times \{\pm 1\}$ that satisfies the
$(\tsyb,\tsya)$-Tsybakov noise condition with respect to an unknown halfspace
$f(\bx) =\sgn(\dotp{\vec w^{\ast}}{\bx})$ and is such that $\D_{\bx}$ is isotropic log-concave.
There exists an algorithm that, given an $\eps>0$, a unit vector $\vec w$ such that
$\snorm{2}{\vec w^{\ast}-\vec w}=\Theta(\eps)$, and $N = \poly(d) \cdot (A/(\alpha\eps))^{O(1/\alpha)}$
samples from $\D$, it runs in $\poly(d,N)$ time, and with constant probability returns a unit
vector $\vec v$ such that
$\dotp{\vec v} {(\vec w^\ast)^{\perp_\bw }}\geq (\alpha\eps/A)^{O(1/\alpha)}$,
where $(\vec w^\ast)^{\perp_\bw }$ is the component of $\vec w^\ast$ perpendicular to $\vec w$.
\end{theorem}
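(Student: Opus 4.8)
Write $\vec v^{\ast}:=(\vec w^{\ast})^{\perp_{\vec w}}$ for the target direction inside $\vec w^{\perp}$; since $\|\vec w^{\ast}-\vec w\|=\Theta(\eps)$, the angle $\theta:=\theta(\vec w,\vec w^{\ast})$ satisfies $\sin\theta=\Theta(\eps)$ and $\vec w^{\ast}=\cos\theta\,\vec w+\sin\theta\,\vec v^{\ast}$, and we must output a unit vector with correlation $(\alpha\eps/A)^{\Omega(1/\alpha)}$ with $\vec v^{\ast}$. The first move is to reduce to an \emph{isotropic log-concave} instance: condition on a thin randomly-placed band $B=\{\bx:\dotp{\vec w}{\bx}\in J\}$ with $|J|$ a small fixed power of $\eps$, and project the conditioned samples orthogonally onto $\vec w^{\perp}$. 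Conditioning and orthogonal projection preserve log-concavity; by anti-anti-concentration (as in the proof of Lemma~\ref{lem:tsybakov_condition}) the band has mass $\poly(\eps)$, so the resulting law on $\R^{d-1}\times\{\pm1\}$ still obeys a Tsybakov condition with parameter $A'=A\cdot\poly(1/\eps)$ with respect to $\sgn(\dotp{\vec v^{\ast}}{\cdot})$; the points whose projected label becomes ambiguous (a sliver orthogonal to $\vec v^{\ast}$ of mass $O(|J|/\eps)$) are folded into an auxiliary adversarial corruption of total mass $\nu=\poly(\eps)$, concentrated in an $O(|J|/\eps)$-neighborhood of the target hyperplane. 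The conditioned-projected marginal need not be centered, so we reweight it: we tune a vector $\br$ by SGD on a convex surrogate so that rejection sampling with acceptance probability $\min(1,\exp(-\dotp{\bz}{\br}))$ produces an (almost) mean-zero measure, then whiten by the empirical inverse covariance. Tilts and invertible linear maps preserve log-concavity, the density ratio is bounded on the bulk (so $A',\nu$ change by a bounded factor), and the whitening has $O(1)$-bounded condition number. We thus reach an isotropic log-concave instance $\wt\D$ with optimal halfspace $\sgn(\dotp{\vec u^{\ast}}{\bz}+b)$, where $\vec u^{\ast}$ is a known invertible linear image of $\vec v^{\ast}$ and $|b|=O(1)$; a vector correlated with $\vec u^{\ast}$ pulls back through the inverse whitening to one correlated with $\vec v^{\ast}$, losing only a bounded factor.

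On $\wt\D$ the Bayes label function is $\E[y\mid\dotp{\vec u^{\ast}}{\bz}=s]=(1-2\bar\eta(s))\,\sgn(s+b)$ up to the $\nu$-corruption, with $0\le\bar\eta\le1/2$. Lemma~\ref{lem:quad-id} shows that $y$ correlates with a univariate degree-$\le2$ polynomial $p$ of $\dotp{\vec u^{\ast}}{\bz}$: pairing $y$ with the sign-matching polynomial $s+b$ gives the nonnegative quantity $\E[(1-2\bar\eta(s))\,|s+b|]\ge\Omega(\kappa^{2})$, where $\kappa:=1-2\E[\eta]\ge\Omega\!\big(\alpha\,(A')^{-(1-\alpha)/\alpha}\big)=(\alpha\eps/A)^{\Omega(1/\alpha)}$ by the Tsybakov tail bound (isotropic anti-concentration of $s$ bounds the mass near $-b$); a quadratic rather than linear polynomial is needed so that the coefficient carrying this correlation stays visible in the presence of the constant bias $b$ and the corruption $\nu$ (a purely linear Chow correlation can be annihilated by the noise when $\alpha$ is small). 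Passing to the degree-$1$ Chow vector $\chi_1:=\E[y\bz]$ and the degree-$2$ Chow matrix $M:=\E[y(\bz\bz^{\top}-I)]$ and using $\E[y\,\dotp{\vec u^{\ast}}{\bz}]=\dotp{\vec u^{\ast}}{\chi_1}$, $\E[y(\dotp{\vec u^{\ast}}{\bz}^{2}-1)]=\vec u^{\ast\top}M\vec u^{\ast}$, we get that either $|\dotp{\vec u^{\ast}}{\chi_1}|$ or $|\vec u^{\ast\top}M\vec u^{\ast}|$ is at least $\tau:=(\alpha\eps/A)^{\Omega(1/\alpha)}$.

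If the degree-$1$ correlation is large, $\vec u^{\ast}$ already points along $\chi_1$. Otherwise $\vec u^{\ast\top}M\vec u^{\ast}\ge\tau$, and the crux (Lemma~\ref{lem:dim-ub}) is that $\vec u^{\ast}$ has a non-trivial projection onto the span of the top few eigenvectors of $M$, for a number of eigenvectors independent of $d$. Expanding $\|M\|_F^{2}=\E_{\bz,\bz'}\big[W(\bz)W(\bz')\big(\dotp{\bz}{\bz'}^{2}-\|\bz\|^{2}-\|\bz'\|^{2}+d\big)\big]$ with $W(\bz)=\E[y\mid\bz]\in[-1,1]$, and using that $W$ is --- up to the $\nu$-corruption --- a function of $\dotp{\vec u^{\ast}}{\bz}$ alone, the $d$-proportional terms cancel by isotropy ($\E[\bz\bz^{\top}]=I$) and the remaining terms are controlled by the improved thin-shell estimates of \cite{LV17}, giving a dimension-independent bound $\|M\|_F=(\alpha\eps/A)^{-O(1/\alpha)}$. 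Splitting $\vec u^{\ast}$ along the top-$k$ and remaining eigenspaces of $M$, using $|\lambda_{k+1}(M)|\le\|M\|_F/\sqrt{k+1}$, and taking $k+1=\Theta(\|M\|_F^{2}/\tau^{2})$, we obtain a subspace $V=\mathrm{span}(\chi_1,\text{top-}k\text{ eigenvectors of }M)$ with $\dim V=(\alpha\eps/A)^{-O(1/\alpha)}$ (independent of $d$) and $\|\proj_V\vec u^{\ast}\|\ge\sqrt{\tau/(2\|M\|_F)}=(\alpha\eps/A)^{\Omega(1/\alpha)}$.

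Finally, estimate $\chi_1$ and $M$ from $N=\poly(d)\cdot(A/(\alpha\eps))^{O(1/\alpha)}$ samples, which by standard matrix concentration approximates them within Frobenius error $o(\tau/\sqrt{\dim V})$ (the $\poly(1/\eps)$ overhead for landing in the band and the $\poly(d)$ cost of the reweighting SGD are absorbed). By Weyl's inequality and the Davis--Kahan theorem the empirical subspace $\wh V$ still has $\|\proj_{\wh V}\vec u^{\ast}\|=(\alpha\eps/A)^{\Omega(1/\alpha)}$, so outputting a uniformly random unit vector $\vec v\in\wh V$ and invoking Fact~\ref{fct:random_initialization} inside $\wh V$ (of dimension $(\alpha\eps/A)^{-O(1/\alpha)}$) gives, with constant probability, $|\dotp{\vec v}{\vec u^{\ast}}|\ge\|\proj_{\wh V}\vec u^{\ast}\|/\sqrt{\dim\wh V}=(\alpha\eps/A)^{\Omega(1/\alpha)}$; pulling $\vec v$ back through the inverse Step-1 whitening and renormalizing yields the claimed correlation with $\vec v^{\ast}=(\vec w^{\ast})^{\perp_{\vec w}}$. \emph{The main obstacle} is the third step, Lemma~\ref{lem:dim-ub}: obtaining a dimension-\emph{independent} bound on $\|M\|_F$ --- equivalently, that the degree-$2$ Chow matrix has only $O_{\alpha,\eps,A}(1)$ large eigenvalues and that $\vec u^{\ast}$ is explained by them --- which is exactly where the thin-shell inequality of \cite{LV17} enters and is the new characterization of the spectrum of the degree-$2$ Chow parameters; a secondary delicate point is establishing Lemma~\ref{lem:quad-id} with the stated quantitative bound while tracking the constant bias $b$ and the corruption $\nu$ produced by the conditioning step.
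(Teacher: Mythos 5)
Your overall architecture matches the paper's: a randomly placed thin band followed by orthogonal projection onto $\vec w^{\perp}$ (with the fuzzy region controlled via the randomness of the band), recentering by exponential-tilt rejection sampling, and then extracting a low-dimensional subspace from the degree-$1$ and degree-$2$ Chow parameters using the thin-shell estimate of \cite{LV17}. However, there is a genuine gap in your version of the crucial dimension bound (the analogue of Lemma~\ref{lem:dim-ub}). You route the argument through a claimed \emph{dimension-independent} bound $\snorm{F}{M}=(\alpha\eps/A)^{-O(1/\alpha)}$, but the expansion you offer, $\snorm{F}{M}^2=\E_{\bz,\bz'}[W(\bz)W(\bz')(\dotp{\bz}{\bz'}^2-\snorm{2}{\bz}^2-\snorm{2}{\bz'}^2+d)]$, is an algebraic identity and does not by itself control anything; and the statement itself is doubtful. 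The paper's argument only controls the \emph{number} of eigenvalues exceeding a threshold $\zeta$, and is consistent with $M$ having up to $d$ eigenvalues of magnitude just below $\zeta$, i.e., with $\snorm{F}{M}=\Omega(\zeta\sqrt{d})$: for a general (non-rotation-invariant) isotropic log-concave marginal, the conditional covariance of $\bz^{\perp}$ given $\dotp{\vec u^{\ast}}{\bz}$ can vary with the ridge coordinate in a high-rank way, so $\E[W\,\bz^{\perp}(\bz^{\perp})^\intercal]-\E[W]\,\vec I$ need not have bounded Frobenius norm. If $\snorm{F}{M}$ grows like $\sqrt{d}$, your choice $k+1=\Theta(\snorm{F}{M}^2/\tau^2)$ gives $\dim V=\Omega(d)$ and your final correlation degrades below the trivial $1/\sqrt{d}$, defeating the purpose of the initialization.

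The fix is the paper's trace argument: let $\vec P$ project onto the span of the $m$ eigenvectors of $M$ with eigenvalue at least $\zeta$; then $m\zeta/2\leq\tr(\vec P M)=\E[y(\snorm{2}{\vec P\bx}^2-m)]\leq\sqrt{\var[\snorm{2}{\vec P\bx}^2]}=O(m^{3/4})$, where the thin-shell bound is applied to the $m$-dimensional projected log-concave vector $\vec P\bx$ (so the exponent $3/2$ is in $m$, not $d$), yielding $m=O(\zeta^{-4})$ independently of $d$; the projection lower bound $\snorm{2}{\proj_V\vec u^{\ast}}=\Omega(\xi)$ then only needs the operator-norm bound $\snorm{2}{M}=O(1)$, which follows from Cauchy--Schwarz and fourth-moment bounds. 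A secondary inaccuracy: the recentering objective $F(\vec r)=\snorm{2}{\E[\bx\min(1,e^{-\dotp{\vec r}{\bx}})]}^2$ is \emph{not} convex and the paper does not use a convex surrogate; it finds an approximate stationary point by projected SGD and separately proves (via negative-definiteness of the Jacobian of $\vec g$) that any interior approximate stationary point has small mean, and that no stationary points lie on the boundary. Your assertion of a convex surrogate is unsupported, though the rejection-sampling mechanism itself is the right one.
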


\subsection{Intuition and Roadmap of the Proof} \label{ssec:lc-init-int}
Here we sketch the proof of Theorem~\ref{thm:init-lc} and point to the relevant
lemmas in the formal argument (Section~\ref{ssec:init-lc-proof}).
Given a weight vector $\bw$ of unit length, our goal is to find a \nnew{unit vector} $\bv$
that has non-trivial correlation with $\wperp$, i.e., such that $\dotp{\nnew{\wperp}}{\bv}$ is
roughly $\eps^{1/\alpha}$, where $\wstar$ is the optimal halfspace.

Our first step is to condition on a thin band around the current candidate
$\bw$ (similarly to Section~\ref{sec:cert-wb}, see Figure~\ref{fig:hardband}).
When the size of the band approaches $0$, we get an instance whose separating hyperplane is
perpendicular to $\wperp$ and has much larger Tsybakov noise rate.
After that, we would like (similarly to Section~\ref{sec:cert-wb}) to project the points on
the subspace $\wperp$. Instead of having a zero length band, we will instead
take a very thin band.  We have already seen in Section~\ref{sec:cert-wb}
that we can apply a perspective transformation in order to project the points on
$\wperp$ and obtain an instance that satisfies the Tsybakov noise condition (with somewhat worse parameters).
Unfortunately, for the current setting of log-concave distributions, we cannot use
the perspective projection, as it \emph{does not preserve the log-concavity} of the underlying distribution.
On the other hand, we know that log-concavity is preserved when we condition on convex sets
(such as the thin band we consider here) and when we perform orthogonal projections.

As we have seen (see Figure~\ref{fig:orth_proj}), an orthogonal projection will
create a ``fuzzy" region with arbitrary sign.  However, we can control the
probability of this ``fuzzy" region by taking a sufficiently thin random band.
In particular, instead of Tsybakov noise, we will end up with the following
noise condition: For some small $\xi > 0$, with probability $2/3$ the noise
$\eta(\bx)$ is bounded  above by \nnew{$1/2-\xi$}, and with probability roughly
$\xi^{\Theta(1)}$ we have $\eta(\bx) > 1/2$ (this corresponds to the
probability of the ``fuzzy" region).  For the proof of this statement and detailed
discussion on how the random band results in this above noise guarantee, see
Lemma~\ref{lem:noise-random-band}.

From this point on, we will be working in the subspace $\bw^{\perp}$ and
assume that the distribution satisfies the aforementioned noise condition. As we
have discussed, the marginal distribution on the examples remains log-concave and
it is not hard to make its covariance be close to the identity.  However,
conditioning on the thin slice may result in a distribution with large mean,
even though originally the distribution was centered.  \nnew{This is a non-trivial technical issue.}
We cannot simply translate the distribution to be origin-centered, as this would result
in a potentially very biased optimal halfspace.  Our proof crucially relies on the assumption of
having a distribution that is \emph{nearly} centered and at the same time
for the optimal halfspace to have \emph{small bias}.
\nnew{We overcome this obstacle in Step~\ref{step:one} below.}

Our approach is as follows:
\begin{enumerate}
  \item \label{step:one} First, we show that there is an efficient rejection sampling procedure
    that preserves log-concavity and gives us a distribution that is nearly isotropic
    (see Definition~\ref{def:apporx-isotro}). For the algorithm and its detailed proof of correctness,
    see Algorithm~\ref{alg:make_isotropic} and Lemma~\ref{lem:sampling}.
  \item \label{step:two} Then we show the following statement: Under the following assumptions
    \begin{itemize}
      \item[(i)] the $\bx$-marginal is nearly isotropic,
      \item[(ii)] the optimal halfspace has sufficiently small bias, and
      \item[(iiii)] the noise $\eta(\bx)$ is bounded away from $1/2$ with constant probability,
    \end{itemize}
    we can compute in polynomial time a vector $\bv$ with good correlation to the target $\wperp$.
    This is established in Proposition~\ref{prop:nontriavial_angle}.
\end{enumerate}

We start by describing our algorithm to transform the distribution to nearly
isotropic position (Step~\ref{step:one} above).
We avoid translating the samples by reweighting the distribution using rejection sampling.
\nnew{To achieve this, we find an approximate stationary point  of the non-convex objective}
$F(\vec r) = \snorm{2}{\E_{\bx \sim \D_{\bx}} [ \bx \max(1, \exp(- \dotp{\vec r}{\bx})]}^2$.
Notice that, since this is a non-convex objective as a function of $\vec r$, we can only use
(projected) SGD to efficiently find a stationary point. In particular, we show
that a $\gamma$-stationary point $\vec r$ of $F(\vec r)$ will make the above
norm of the expectation roughly $O(\gamma)$ (Claim~\ref{clm:stationary-point}).
Therefore, in time $\poly(d/\gamma)$, we find a reweighting of the initial distribution
whose mean is close to $\vec 0$. Given this point $\vec r$, we then perform rejection
sampling: We draw $\bx$ from the initial distribution $\D$ and accept it with
probability $ \max(1, \exp(- \dotp{\vec r}{\bx}))$, i.e., we ``shrink" the
distribution along the direction $\vec r$.

We now explain how to handle the setting that the distribution is approximately log-concave
(Step~\ref{step:two} above).
After we make our distribution nearly isotropic, we compute the degree-$2$ Chow parameters
of the distribution, i.e., the vector $\E_{(\bx, y) \sim \D}[y \bx]$ and
the matrix $\E_{(\bx, y) \sim \D}[y (\bx \bx^\intercal-\vec I)]$.  We show that there exists a degree-$2$
polynomial $p(\dotp{\wperp}{\bx})$ that correlates non-trivially with the
labels $y$ (Lemma~\ref{lem:quad-id}).  This means that $\wperp$ correlates
reasonably with the degree-$2$ Chow parameters. In particular, $\wperp$ has a
non-trivial projection on the subspace \nnew{$V$} spanned by the degree-$1$ Chow parameters
(this is a single vector) and the eigenvectors of the degree-$2$ Chow matrix with
large eigenvalues.  Our plan is to return a random unit vector of the subspace
$V$.  However, in order for this random vector to have non-trivial correlation
with $\wperp$, we also need to show that the dimension of $V$ is not very large.

The last part of our argument shows that $V$ has reasonably small dimension.
To prove this, we first show that the dimension of $V$ can be bounded above by
the variance of the projection of $\D$ onto $V$, $\D^{\proj_V}$,
$\var_{\bx \sim \D^{\proj_V}}[\snorm{2}{\bx}^2]$.
Then we make essential use of a recent ``thin-shell'' result about log-concave measures that
bounds from above $\var_{\bx \sim \D^{\proj_V}}[\snorm{2}{\bx}^2]$, see
Lemma~\ref{lem:elads} and Lemma~\ref{lem:dim-ub}.

\subsection{Proof of Theorem~\ref{thm:init-lc}} \label{ssec:init-lc-proof}

The proof of Theorem~\ref{thm:init-lc} requires a number of intermediate results.
As already mentioned, our initialization algorithm works by restricting $\D$ to a narrow band perpendicular to $\vec w$.
Unfortunately, this restriction will be log-concave but will no longer be isotropic, even in the directions perpendicular to $\vec w$. However, it will be close in the following sense.

\begin{definition}[$(\alpha, \beta)$-isotropic distribution]\label{def:apporx-isotro}
We say that a distribution $\D$ is $(\alpha, \beta)$-isotropic,
if for every unit vector $\vec u \in \R^d$, it holds
$|\E_{\x \sim \D}[\dotp{\vec x}{\vec u}]|\leq \alpha$ and $1/\beta\leq\E_{\x \sim \D}[\dotp{\vec x}{\vec u}^2]\leq \beta$.
\end{definition}

\paragraph{Useful Technical Tools.} We will require the following standard
anti-concentration result for low-degree multivariate polynomials under log-concave distributions.

\begin{lemma}[Theorem 8 of \cite{CW:01}]\label{lem:carbery-wright} Let $\D$ be a
log-concave distribution on $\R^d$ and $p:\R^d \mapsto R$ be a polynomial of degree at most $n$.
Then there is an absolute constant $C>0$ such that for any $0<q<\infty$ and $t\in \R_+$, it holds
$ \pr_{\x\sim \D}[|p(\x)|\leq t]\leq C q t^{1/n} \E_{\x\sim \D}[|p(\x)|^{q/n}]^{1/q}\;.$
\end{lemma}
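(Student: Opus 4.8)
The statement is the Carbery--Wright polynomial anti-concentration inequality (Theorem~8 of~\cite{CW:01}); below I outline the route one would take to establish it from scratch. First I would reduce, by homogeneity in $p$, to the normalized form: it suffices to prove that for a log-concave probability measure $\D$ on $\R^d$ and a polynomial $p$ with $\deg p\le n$ and $\E_{\x\sim\D}[|p(\x)|^{q/n}]=1$, one has $\pr_{\x\sim\D}[|p(\x)|\le t]\le C\,q\,t^{1/n}$ for every $t>0$. The core of the argument is then a reduction from dimension $d$ to dimension $1$: the quantity $\pr_{\D}[|p|\le t]-Cqt^{1/n}\,\E_{\D}[|p|^{q/n}]$ is an affine functional of $\D$, of the form $\int\phi\,\d\D$ with $\phi(\x)=\1\{|p(\x)|\le t\}-Cqt^{1/n}\,|p(\x)|^{q/n}$, so by the Lov\'asz--Simonovits localization lemma it is enough to check $\int\phi\,\d\D\le0$ when $\D$ is a one-dimensional measure supported on a segment with density proportional to $e^{\ell(s)}$ for an affine function $\ell$. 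Restricting $p$ to such a segment gives a univariate polynomial of degree at most $n$, so everything reduces to the one-dimensional case.

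In one dimension I would proceed as follows. Fix a log-concave density $g$ on an interval, normalized so that $\int|p(s)|^{q/n}g(s)\,\d s=1$, with $\deg p\le n$. The sublevel set $\{s:|p(s)|\le t\}$ is a union of at most $n$ intervals, since $p^{2}-t^{2}$ has at most $2n$ real zeros. On each such sub-interval $J$ I would bound $\int_J g$ from above by exploiting the fact that $|p|$ cannot be uniformly tiny on a piece of $\mathrm{supp}(g)$ carrying large $g$-mass; quantitatively, this is a one-dimensional reverse H\"older inequality for polynomials, $\|p\|_{L^a(g)}\le\bigl(C\max(1,a/b)\bigr)^{n}\,\|p\|_{L^b(g)}$, which in turn follows (again via localization, or by a direct argument) from explicit Gamma-integral computations for the extremal densities $s\mapsto s^{k}e^{-s}$ on $[0,\infty)$. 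Optimizing the resulting estimate over the location inside $J$ where $|p|$ is forced to be large gives $\int_J g\lesssim (q/n)\,t^{1/n}$, so that summing the at most $n$ sub-intervals yields a total of $\lesssim q\,t^{1/n}$, i.e.\ the one-dimensional bound. Feeding this back through the localization lemma gives the $\R^d$ statement, and undoing the normalization recovers the displayed inequality.

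I expect the two reductions to dimension one to be the main obstacles: setting up the integrand $\phi$ (and the analogous constrained test function for the norm comparison) precisely enough that localization applies, and proving the one-dimensional reverse H\"older inequality for log-concave densities with a constant that is polynomial in $\deg p$ and linear in $q$. By contrast, the geometry of the sublevel set (a union of $\le n$ intervals) and the bookkeeping over the pieces are routine.
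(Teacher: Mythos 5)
The paper offers no proof of this lemma to compare against: it is imported verbatim as Theorem~8 of Carbery and Wright~\cite{CW:01} and used purely as a black box (only in the proof of Lemma~\ref{lem:quad-id}, with $q=4$ and $n=2$). As an aside, the statement as printed in the paper contains a typo --- the moment term should carry a negative power, i.e., the correct form is $\pr_{\x\sim\D}[|p(\x)|\le t]\,\E_{\x\sim\D}[|p(\x)|^{q/n}]^{1/q}\le C\,q\,t^{1/n}$ --- and your normalized reformulation ($\E[|p|^{q/n}]=1$ implies $\pr[|p|\le t]\le Cqt^{1/n}$) is the correct one, so you have implicitly fixed this.

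Judged on its own merits, your outline follows a recognized route to Carbery--Wright-type bounds (localize to one-dimensional log-affine needles, then prove a univariate Remez/reverse-H\"older estimate), but the localization step as you state it does not go through. The quantity $\int\phi\,\d\D$ with $\phi=\1\{|p|\le t\}-Cqt^{1/n}|p|^{q/n}$ is indeed linear in $\D$, but the Lov\'asz--Simonovits lemma does not reduce a \emph{single} unconstrained linear functional over log-concave probability measures to needles: the extreme points relevant to one functional are Dirac masses, and at a Dirac mass placed on a zero of $p$ you get $\int\phi\,\d\D=1>0$, so the ``reduced'' claim is simply false. The correct move is to negate the target inequality as \emph{two} simultaneous strict integral inequalities --- say $\int(\1\{|p|\le t\}-\beta)\,\d\D>0$ and $\int\bigl(|p|^{q/n}-(Cqt^{1/n}/\beta)^{q}\bigr)\,\d\D>0$ for some rational $\beta$ --- and invoke the two-function localization lemma; note that this formulation also automatically excludes the problematic Dirac needles, since the second inequality fails there. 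Beyond this, the one-dimensional reverse H\"older inequality with constant $(C\max(1,a/b))^{n}$, which you correctly identify as the analytic heart of the matter, is asserted rather than proved. So as written the proposal is a plausible roadmap with the two decisive steps still open, one of which is mis-stated in a way that would need repair before the argument could be completed.
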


The following statement is well-known.
(It follows for example by combining Theorem 5.14 of \cite{LV07} and Lemma 7 of \cite{KlivansLT09}.)

\begin{fact}\label{fact:lc-basics}
Let $\vec z$ be an isotropic log-concave distribution on $\R^d$ and let $\gamma(\cdot)$ be its density function.
There exists a constant $c_d>0$ such that:
\begin{enumerate}
\item For any $\vec z$ with $\snorm{2}{\vec z}\leq c_d$, we have that $\gamma(\vec z)\geq c_d$.
\item For any $\vec z$, we have that $\gamma(\vec z)\leq \nnew{1/c_d}\exp(-\nnew{1/c_d}\snorm{2}{\vec z})$.
\end{enumerate}
\end{fact}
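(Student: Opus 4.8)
The plan is to derive both inequalities from standard structural facts about isotropic log-concave measures; in fact the statement is a direct consequence of combining Theorem~5.14 of \cite{LV07} with Lemma~7 of \cite{KlivansLT09}, and I will describe the mechanism. Throughout, write $\gamma=e^{-V}$ for a convex $V:\R^d\to\R\cup\{+\infty\}$, and recall that isotropy means $\E_{\vec z\sim\gamma}[\vec z]=\vec 0$ (so the origin is the centroid of $\gamma$), $\E_{\vec z\sim\gamma}[\vec z\vec z^\intercal]=\vec I_d$, and in particular $\E_{\vec z\sim\gamma}[\snorm{2}{\vec z}^2]=d$.

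The first ingredient is a two-sided bound on $\snorm{\infty}{\gamma}$. For the lower bound, Markov's inequality gives $\pr_{\vec z\sim\gamma}[\snorm{2}{\vec z}\le 2\sqrt d]\ge 3/4$, so the ball $\B(\vec 0,2\sqrt d)$ carries mass at least $3/4$ and hence contains a point of density at least $a_d:=(3/4)/\mathrm{vol}(\B(\vec 0,2\sqrt d))>0$; thus $\snorm{\infty}{\gamma}\ge a_d$. For the upper bound $\snorm{\infty}{\gamma}\le A_d$ I would invoke the classical estimate on the isotropic constant of a log-concave density (any bound that is a function of $d$ suffices here). With these in hand, item~2 follows from the usual argument for exponential tails: given $\vec z$ with $\snorm{2}{\vec z}=t$, let $\vec m$ be the mode of $\gamma$ (which satisfies $\snorm{2}{\vec m}=O(\sqrt d)$ for an isotropic log-concave density) and restrict $V$ to the ray from $\vec m$ through $\vec z$; it is convex and minimized at $\vec m$, and it must increase by a fixed amount over the first $O_d(1)$ units of that ray, for otherwise convexity would keep $\gamma$ above a positive constant on a set whose volume is bounded below by a constant depending only on $d$, contradicting $\int\gamma=1$. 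Convexity then propagates this slope outward and yields $\gamma(\vec z)\le A_d\,e^{-\snorm{2}{\vec z}/A_d}$ after adjusting the constant.

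For item~1 I would first apply Fradelizi's theorem, by which a log-concave density at its centroid is at least $e^{-d}$ times its maximum; since the centroid is the origin, $\gamma(\vec 0)\ge e^{-d}\snorm{\infty}{\gamma}\ge e^{-d}a_d=:b_d>0$. The step I expect to be the main obstacle is upgrading this pointwise bound at the origin to a lower bound on a whole ball. I would argue that the convex superlevel set $K=\{\vec z:\gamma(\vec z)\ge b_d/2\}$, which contains $\vec 0$, must contain a ball of radius $r_d>0$: if not, Steinhagen's inequality would place $K$ inside a slab $\{|\dotp{\vec u}{\vec z}|\le 2\sqrt d\,r_d\}$ for some unit vector $\vec u$, so $\gamma<b_d/2$ outside this slab; splitting $\int\gamma$ into its restriction to the slab (where $\gamma\le A_d$) and to the complement (where $\gamma<b_d/2$), truncating both pieces to a ball of dimension-dependent radius using the tail bound of item~2, and choosing $r_d$ to be the appropriate dimension-dependent constant, one gets $\int\gamma<1$, a contradiction. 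Hence $\gamma\ge b_d/2$ on $\B(\vec 0,r_d)$, and taking $c_d:=\min\{r_d,\,b_d/2,\,1/A_d\}$ yields both items. The only delicate point is the consistency of the dimension-dependent constants $a_d,A_d,b_d,r_d$ in this final contradiction, which is precisely what the cited results verify.
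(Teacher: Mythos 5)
The paper never proves this Fact: it is stated as well known, with a pointer to Theorem~5.14 of \cite{LV07} and Lemma~7 of \cite{KlivansLT09}, so there is no in-paper argument to compare against and your sketch must be judged purely on correctness. Your overall strategy is indeed the standard one from those references (Markov's inequality to lower-bound $\snorm{\infty}{\gamma}$, Fradelizi's theorem at the centroid, a width-versus-inradius argument to fatten the pointwise bound into a ball, and convexity of $V=-\log\gamma$ along rays for the tail), and several steps are fine as written. But two steps do not go through as stated.

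First, in item~2 the dichotomy ``either $V$ increases by a fixed amount over the first $O_d(1)$ units of the ray, or $\gamma$ stays above a positive constant on a set of volume bounded below'' is false: along a single ray the set you produce is a one-dimensional segment, which has Lebesgue measure zero and yields no contradiction with $\int\gamma=1$. The standard repair is to fatten the segment by taking convex combinations of the far point with a \emph{full-dimensional} set on which $\gamma$ is bounded below (essentially the ball from item~1, or the superlevel set of mass $\ge 1/2$), which produces a cone of volume $\Omega_d(t)\cdot r^{d-1}$; this forces you to establish item~1 before item~2, reversing your order (your item~1 argument in turn invokes the tail bound of item~2, though plain Markov suffices there, so the circularity is avoidable). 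Second, in item~1 Steinhagen only guarantees that $K=\{\gamma\ge b_d/2\}$ contains \emph{some} ball of radius $r_d$; since $0$ may be an extreme point of $K$ (e.g., the apex of the convex hull of $\{0\}$ and that inball), the conclusion ``$\gamma\ge b_d/2$ on $\B(\vec 0,r_d)$'' does not follow. The standard fix argues pointwise: if $\gamma(\vec z)<\beta$ for some $\snorm{2}{\vec z}\le\rho$, separate $\vec z$ from the convex set $\{\gamma\ge\beta\}$ by a hyperplane; the resulting halfspace misses the origin by at most $\rho$, and Gr\"unbaum's inequality together with the $O(1)$ bound on one-dimensional marginal densities forces $\beta$ to exceed a dimension-dependent constant. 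Finally, your closing line ``take $c_d=\min\{r_d,b_d/2,1/A_d\}$'' does not deliver item~2 in the form stated: what you prove is $\gamma(\vec z)\le A_d e^{-\snorm{2}{\vec z}/A_d}$ with $A_d$ large (slow rate), whereas plugging $c_d=1/A_d$ into the Fact asserts the much faster rate $e^{-A_d\snorm{2}{\vec z}}$; the exponent in the Fact wants a \emph{large} constant while the ball radius and density lower bound want a \emph{small} one, so the two requirements must be tracked separately rather than collapsed into one minimum.
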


Our proof makes essential use of the following ``thin-shell'' estimate
bounding the variance of the norm of any isotropic log-concave random vector.

\begin{lemma}[Corollary 13 of \cite{LV17}] \label{lem:elads}
Let $\D$ be any isotropic log-concave distribution on $\R^d$.
We have that $\var_{\vec x\sim \D}[\snorm{2}{\vec x}^2]\leq d^{3/2}\;.$
\end{lemma}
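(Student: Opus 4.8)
The statement is exactly Corollary~13 of~\cite{LV17}; we use it verbatim as a black box, and sketch here the two ingredients behind it. The first is a completely elementary reduction of $\var[\snorm{2}{\vec x}^2]$ to the ``thin-shell width'' $\sigma^2 \eqdef \var_{\vec x \sim \D}[\snorm{2}{\vec x}]$. Write $R = \snorm{2}{\vec x}$ and $m = \E[R]$, so by isotropy $\E[R^2] = d$ and hence $m^2 = d - \sigma^2 \le d$ and $m \le \sqrt d$. Setting $Y = R - m$ (so $\E[Y]=0$, $\E[Y^2] = \sigma^2$) and using $R^2 - d = Y^2 + 2mY - \sigma^2$, one expands and simplifies the second moment to
\[
\var[R^2] = 4 m^2 \sigma^2 + 4 m\, \E[Y^3] + \E[Y^4] - \sigma^4 \;\le\; 4 d\, \sigma^2 + 4\sqrt d\, \E|Y|^3 + \E[Y^4].
\]
Since $\vec x \mapsto \snorm{2}{\vec x}$ is $1$-Lipschitz, the Poincar\'e inequality of $\D$ gives $\sigma^2 = \var[Y] \le C_P(\D)$ as well as the standard exponential concentration $\E|Y|^k \le (C k)^k (C_P(\D))^{k/2}$ for an absolute constant $C$; in particular $\E|Y|^3 = O(\sigma^3)$ and $\E|Y|^4 = O(\sigma^4)$. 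Hence $\var[R^2] \le 4 d\, \sigma^2 + O(\sqrt d\, \sigma^3 + \sigma^4)$, and everything is governed by $\sigma^2$.

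It therefore suffices to bound the thin-shell width, and this is the nontrivial input: $\sigma^2 = O(\sqrt d)$, equivalently every isotropic log-concave measure has Poincar\'e constant $O(\sqrt d)$, which is the Lee--Vempala bound $\psi_d = O(d^{1/4})$ on the KLS constant. At a high level their argument runs Eldan's stochastic localization: starting from $\D$, one evolves a family of log-concave measures $\mu_t$ with density proportional to $\exp(\dotp{\theta_t}{\vec x} - t\snorm{2}{\vec x}^2/2)\,\d\D(\vec x)$, where $\theta_t$ is a martingale driven by Brownian motion, so that $\mu_0 = \D$ and $(\mu_t)$ is a measure-valued martingale; as $t$ grows the quadratic tilt makes the $\mu_t$ increasingly Gaussian-like, and a recursion controlling the operator norm of their covariance matrices $A_t$ over a short time horizon yields a self-improving bound on $\psi_d$. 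Feeding $\psi_d = O(d^{1/4})$ into the previous paragraph gives $\sigma^2 = O(\sqrt d)$ and hence $\var[\snorm{2}{\vec x}^2] = O(d^{3/2})$; the explicit constant in the displayed bound is obtained by tracking the constants in~\cite{LV17}, and only the $d^{3/2}$ order is used in the sequel.

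The main obstacle is thus entirely contained in the Lee--Vempala KLS bound obtained via stochastic localization, which the present paper does not reprove and invokes as a black box; the only self-contained content is the second-moment identity above, which uses nothing beyond isotropy and the exponential concentration of $1$-Lipschitz functions under a Poincar\'e inequality.
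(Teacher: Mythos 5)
The paper does not prove this lemma at all---it is quoted verbatim as Corollary~13 of \cite{LV17}---and you likewise invoke it as a black box, so your treatment matches the paper's. Your background sketch of the Lee--Vempala ingredients is accurate in outline (one quibble: exponential concentration of the $1$-Lipschitz function $\snorm{2}{\vec x}$ bounds its central moments by powers of $\sqrt{C_P}$, not of $\sigma$, though the final $O(d^{3/2})$ bound is unaffected; alternatively, once $C_P = O(\sqrt d)$ is granted, the whole estimate follows in one line from the Poincar\'e inequality applied to $f(\vec x)=\snorm{2}{\vec x}^2$, since $\var[\snorm{2}{\vec x}^2] \leq C_P \, \E\left[\snorm{2}{2\vec x}^2\right] = 4 C_P d$).
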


\noindent \nnew{In particular, it is important for our analysis that the above bound is sub-quadratic in $d$.}

Finally, we will require the following simple lemma bounding the sample complexity of approximating
the degree-$2$ Chow parameters of a halfspace under isotropic log-concave distributions.

\begin{lemma} \label{lem:chow-params-complexity}
Let $\D$ be an isotropic log-concave distribution on $\R^d$ and $\widehat{\D}_N$ be the empirical
distribution obtained from $\D$ with $N=\poly(d/\eps)$ samples. Then, with high constant probability,
we have
$ \snorm{2}{\E_{(\vec x,y)\sim \D}[y\vec x] -\E_{(\vec x,y)\sim \widehat{\D}_N}[y\vec x] }\leq \eps$ and
$ \snorm{F}{\E_{(\vec x,y)\sim \D}[y(\vec x\vec x^\intercal-\vec I)] -\E_{(\vec x,y)\sim \widehat{\D}_N}[y(\vec x\vec x^\intercal-\vec I)]}\leq \eps$.
\end{lemma}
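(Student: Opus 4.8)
The plan is to bound both quantities by a simple second-moment (Chebyshev/Markov) argument on the norm of the empirical deviation; since the lemma only asks for success with constant probability, no sharp concentration inequality is needed. Write the $N$ i.i.d.\ samples as $(\vec x^{(i)}, y^{(i)})$. For the degree-$1$ statement, set $\vec Z^{(i)} = y^{(i)} \vec x^{(i)}$, so $\E_{(\vec x,y) \sim \widehat{\D}_N}[y \vec x] = \frac 1N \sum_{i} \vec Z^{(i)}$. By independence of the samples and $|y| = 1$,
$$\E\Big[\snorm{2}{\tfrac 1N \textstyle\sum_i \vec Z^{(i)} - \E[\vec Z]}^2\Big] = \tfrac 1N \,\E\big[\snorm{2}{\vec Z - \E[\vec Z]}^2\big] \le \tfrac 1N\,\E\big[\snorm{2}{\vec x}^2\big] = \tfrac dN,$$
using that $\D_{\bx}$ is isotropic. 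Markov's inequality then shows that $N = \Theta(d/\eps^2)$ suffices to make the deviation at most $\eps$ in $\ell_2$ norm with probability at least, say, $9/10$.

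For the degree-$2$ statement I would run the identical argument with the matrix-valued summands $\vec M^{(i)} = y^{(i)}\big(\vec x^{(i)}(\vec x^{(i)})^\intercal - \vec I\big)$, measuring the deviation in Frobenius norm (equivalently, the $\ell_2$ norm of its vectorization). Again by independence,
$$\E\Big[\snorm{F}{\tfrac 1N \textstyle\sum_i \vec M^{(i)} - \E[\vec M]}^2\Big] \le \tfrac 1N\, \E\big[\snorm{F}{\vec x \vec x^\intercal - \vec I}^2\big] = \tfrac 1N\Big(\E\big[\snorm{2}{\vec x}^4\big] - 2\,\E\big[\snorm{2}{\vec x}^2\big] + d\Big) = \tfrac 1N\Big(\E\big[\snorm{2}{\vec x}^4\big] - d\Big).$$
The only distributional input required is the fourth moment: writing $\E[\snorm{2}{\vec x}^4] = (\E[\snorm{2}{\vec x}^2])^2 + \var[\snorm{2}{\vec x}^2] = d^2 + \var[\snorm{2}{\vec x}^2]$ and invoking the thin-shell estimate of Lemma~\ref{lem:elads}, we get $\E[\snorm{2}{\vec x}^4] \le d^2 + d^{3/2} = O(d^2)$ (alternatively, the sub-exponential tail bound from Fact~\ref{fact:lc-basics} gives the same). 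Hence the expected squared Frobenius deviation is $O(d^2/N)$, and Markov's inequality with $N = \Theta(d^2/\eps^2)$ gives the claimed bound with probability at least $9/10$. A union bound over the two events completes the proof, and in both cases $N = \poly(d/\eps)$.

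The argument is essentially routine; the only place where anything beyond Markov's inequality enters is the fourth-moment estimate $\E[\snorm{2}{\vec x}^4] = O(d^2)$, which is exactly what keeps the sample size polynomial in $d$, and this is supplied directly by the thin-shell bound already recorded above. If one instead wanted these bounds with probability $1-\delta$ for arbitrarily small $\delta$ rather than with constant probability, I would replace the Markov step with entrywise Bernstein-type concentration for the coordinates of $\vec Z^{(i)}$ (sub-exponential) and of $\mathrm{vec}(\vec M^{(i)})$ (products of sub-exponentials, hence tails decaying like $\exp(-c\sqrt t)$), followed by a union bound over the $d$, resp.\ $d^2$, coordinates; this costs only extra logarithmic factors and is unnecessary for the statement as given.
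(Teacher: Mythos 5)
Your proposal is correct, and it takes a mildly but genuinely different route from the paper's. The paper handles the degree-$1$ statement by citing Lemma~\ref{lem:algorithm_function_g} (coordinatewise Bernstein concentration for sub-exponential random variables), and handles the degree-$2$ statement entrywise: it bounds $\var[\widehat{\vec X}_{ij}]=O(1/N)$ using $\E[\vec x_i^4]=O(1)$ for log-concave marginals, applies Chebyshev to each of the $d^2$ entries at scale $\eps/d$, and union-bounds, arriving at $N=O(d^4/\eps^2)$. You instead bound the expected \emph{squared norm} of the full deviation in one shot and apply Markov once; the only distributional input is $\E[\snorm{2}{\vec x}^4]=O(d^2)$, which you correctly extract from the thin-shell bound of Lemma~\ref{lem:elads} (it also follows from $\E[\vec x_i^4]=O(1)$ plus Cauchy--Schwarz, which is the cleaner way to phrase your parenthetical appeal to sub-exponential tails). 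Your computation $\snorm{F}{\vec x\vec x^\intercal-\vec I}^2=\snorm{2}{\vec x}^4-2\snorm{2}{\vec x}^2+d$ and the variance-tensorization step $\E[\snorm{F}{\frac 1N\sum_i \vec M^{(i)}-\E[\vec M]}^2]=\frac 1N\E[\snorm{F}{\vec M-\E[\vec M]}^2]$ are both right. What your approach buys is a tighter sample complexity ($N=\Theta(d^2/\eps^2)$ versus the paper's $O(d^4/\eps^2)$ for the matrix, and $\Theta(d/\eps^2)$ versus $O(d\log d/\eps^2)$-type bounds for the vector) and a more self-contained argument; what it gives up is the high-probability ($1-\delta$) guarantee that the paper's Bernstein-based route provides for the degree-$1$ part, but as you note the lemma only asks for constant success probability, so this is immaterial here.
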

The proof of this lemma can be found in Appendix~\ref{ap:logcon}.

\medskip

We now have the necessary tools to proceed with our proof.
We start by showing how we can find a vector $\vec v$ with non-trivial correlation with
$(\vec w^\ast)^{\perp_\bw }$ if \nnew{the marginal distribution is (approximately) isotropic.
Since in general this will not hold, we will then need to reduce to the isotropic case.}

\begin{proposition}\label{prop:nontriavial_angle}
Let $\D$ be a distribution on $\R^{d} \times \{\pm 1\}$ such that $\D_{\bx}$ is
$(\alpha, \beta)$-isotropic log-concave. Let $f(\bx) = \sgn(\dotp{\vec v^\ast}{\bx} - \theta)$ be
such that $\pr_{(\x,y)\sim \D}[y\neq f(\vec x)|\vec x] = \eta(\vec x)$, where for some $\xi>0$
we have  that $\pr_{\x\sim \D_{\x}}[\eta(\x)<1/2-\xi]\geq 2/3$
and $\pr_{\x\sim \D_{\x}}[\eta(\x)>1/2]\leq \xi'$, where
$\xi'$ is a constant degree polynomial in $\xi$\footnote{It is not difficult to verify
that $\xi' = \Theta(\xi^3)$ suffices.}.
Then, as long as $|\alpha|+|\theta|$ is less than a sufficiently small constant multiple of $1/(\log(1/\xi))$,
there exists an algorithm with sample complexity and runtime $\poly(d/\xi)$ that
with constant probability returns a unit vector $\vec v\in \R^d$ such that
$\dotp{\vec v}{\vec  v^\ast} >\poly(\xi)$.
\end{proposition}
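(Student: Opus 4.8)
The plan is to read off $\vec v^\ast$ from the low-degree \emph{Chow parameters} of $\D$. Compute the degree-one Chow vector $\bc_1 := \E_{(\bx,y)\sim\D}[y\bx]$ and the degree-two Chow matrix $M := \E_{(\bx,y)\sim\D}[y(\bx\bx^\intercal-\vec I)]$; both can be estimated to accuracy $\poly(\xi)$ from $\poly(d/\xi)$ samples (Lemma~\ref{lem:chow-params-complexity}). Fix a threshold $\tau = \poly(\xi)$, let $V$ be the span of $\bc_1$ together with every eigenvector of $M$ whose eigenvalue exceeds $\tau$ in absolute value, and output a uniformly random unit vector $\vec v \in V$ (together with $-\vec v$, to fix the sign). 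Correctness rests on two claims: $\mathrm{(i)}$ $\snorm{2}{\proj_V\vec v^\ast} \ge\poly(\xi)$, and $\mathrm{(ii)}$ $\dim V \le\poly(1/\xi)$, with \emph{no} dependence on $d$. Given both, a uniformly random unit vector of $V$ has $|\dotp{\vec v}{\vec v^\ast}| = \Omega\bigl(\snorm{2}{\proj_V\vec v^\ast}/\sqrt{\dim V}\bigr) \ge\poly(\xi)$ with constant probability, and the whole algorithm runs in $\poly(d/\xi)$ time.

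For $\mathrm{(i)}$ I would first prove a one-dimensional identifiability fact (this is Lemma~\ref{lem:quad-id}, and it is where the hypotheses on $\eta$, $\alpha$, $\theta$ get used). Write $T := \dotp{\vec v^\ast}{\bx}$, let $\mu$ be its marginal (log-concave and near-isotropic), and let $g(t) := \E[y\mid T=t] = \sgn(t-\theta)\,\E[1-2\eta(\bx)\mid T=t]$. There is a degree-$\le 2$ univariate polynomial $p$, $\mu$-orthogonal to constants and with $\E_\mu[p^2]=1$, such that $\E_{(\bx,y)\sim\D}[y\,p(T)] \ge\kappa := \poly(\xi)$: the condition $\pr[\eta(\bx)<1/2-\xi]\ge 2/3$ forces $\E[1-2\eta(\bx)\mid T=t]$ to exceed $\approx 2\xi$ on a set of $\mu$-mass $\Omega(1)$ which, since $|\theta|$ is tiny, straddles the threshold $\theta$; there $g$ has the sign of $t-\theta$ and magnitude $\gtrsim\xi$, so it correlates at level $\Omega(\xi)$ with a low-degree polynomial (roughly the centered linear one $t-\theta$, falling back on a quadratic when the labels are badly unbalanced), the bias $\theta$, the approximate-isotropy error $\alpha$, and the tiny mass of the non-Tsybakov region $\{\eta>1/2\}$ all being absorbed using $|\theta|+|\alpha| = O(1/\log(1/\xi))$ and $\xi' = \Theta(\xi^3)$. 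Since $p(T)$ depends on $\bx$ only through $T$, writing $p(t) = a_0 + a_1 t + a_2 t^2$ gives $\E[y\,p(T)] = (a_0+a_2)\E[y] + a_1\dotp{\bc_1}{\vec v^\ast} + a_2\,\vec v^{\ast\intercal}M\vec v^\ast$; with $p$ $\mu$-orthogonal to constants and of $O(1)$ coefficient size the scalar contribution is controllable, so the bound forces $|\dotp{\bc_1}{\vec v^\ast}|$ or $|\vec v^{\ast\intercal}M\vec v^\ast|$ to be $\ge\poly(\xi)$. As $\norm{\bc_1} = O(1)$, $\norm{M}_{\mathrm{op}} = O(1)$ (since $|y|=1$ and $\D_\bx$ is near-isotropic), $\bc_1 \in V$, and $M$ restricted to $V^\perp$ has operator norm $\le\tau$, taking $\tau$ a small enough multiple of this $\poly(\xi)$ and decomposing $\vec v^\ast$ along $V\oplus V^\perp$ yields $\snorm{2}{\proj_V\vec v^\ast} \ge\poly(\xi)$.

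For $\mathrm{(ii)}$, let $W$ be the span of the large eigenvectors of $M$, so $\dim V \le 1+m$ with $m := \dim W$; it suffices to bound $m$. We have $m\tau \le\sum_{|\lambda_i|\ge\tau}|\lambda_i| = \mathrm{tr}\bigl(|\proj_W M\proj_W|\bigr) = \sup_{P=P^\intercal,\ \norm{P}_{\mathrm{op}}\le 1}\mathrm{tr}\bigl(P\,\proj_W M\proj_W\bigr)$. With $\bz := \proj_W\bx$, each such trace equals $\E_{(\bx,y)\sim\D}[\,y\,(\bz^\intercal P\bz - \mathrm{tr}(P))\,] = \cov\bigl(y,\ \bz^\intercal P\bz\bigr)$ (modulo a small near-isotropy correction, since $\E[\bz\bz^\intercal] \approx\vec I_W$), hence is at most $\sqrt{\var(\bz^\intercal P\bz)}$ because $\var(y) \le 1$. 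This is exactly where the recent thin-shell estimate is indispensable (Lemma~\ref{lem:dim-ub}): any projection of $\D_\bx$ onto a subspace $W' \subseteq W$ is near-isotropic log-concave on $\R^{\dim W'}$, so Lemma~\ref{lem:elads} gives $\var\bigl(\snorm{2}{\proj_{W'}\bx}^2\bigr) = O\bigl((\dim W')^{3/2}\bigr) = O(m^{3/2})$, and writing $\bz^\intercal P\bz$ as a signed combination of such squared norms (while tracking the near-isotropy error) yields $\var(\bz^\intercal P\bz) = O(m^{3/2})$. Therefore $m\tau = O(m^{3/4})$, forcing $m = O(\tau^{-4}) = \poly(1/\xi)$; a merely polynomial-in-$d$ bound on this variance would have yielded no bound on $m$, so the \emph{sub-quadratic} exponent of Lemma~\ref{lem:elads} is essential. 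Combining $\mathrm{(i)}$ and $\mathrm{(ii)}$, using Lemma~\ref{lem:chow-params-complexity} for the empirical estimates, and repeating $O(1)$ times to amplify the constant success probability completes the proof, with $\poly(d/\xi)$ samples and runtime.

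I expect the main obstacle to be claim $\mathrm{(ii)}$: passing from the identity quadratic form (for which Lemma~\ref{lem:elads} is stated) to the signed low-rank form $\bz^\intercal P\bz$ must be carried out while staying with the \emph{variance} rather than the raw second moment $\E[(\bz^\intercal P\bz)^2]$ — the latter carries a $(\dim W)^2$ term that would destroy the bound entirely — all while absorbing the approximate-isotropy slack. The one-dimensional identifiability fact in $\mathrm{(i)}$ is the secondary difficulty, being somewhat intricate because of the halfspace bias $\theta$, though morally it is the same localization-plus-anti-concentration style of argument used to establish the existence of certifying functions in Section~\ref{sec:cert-wb}.
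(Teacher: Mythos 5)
Your proposal is correct and follows essentially the same route as the paper: the paper's Lemma~\ref{lem:quad-id} is your one-dimensional identifiability fact (instantiated with the explicit polynomial $(x-\theta)(x+1/\theta)$ and Carbery--Wright anti-concentration), and its Lemma~\ref{lem:dim-ub} is your dimension bound, phrased with the projection matrices onto the positive and negative eigenspaces separately rather than a supremum over symmetric contractions $P$ — but the optimal $P$ in your formulation is exactly $\Pi_{W_+}-\Pi_{W_-}$, so the two arguments coincide. Your handling of the near-isotropy slack inline (where the paper instead whitens and reduces to the exactly isotropic case) is a cosmetic difference.
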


\begin{proof}
For clarity of the analysis, we begin by presenting our algorithm for the case that $\D_\x$ is exactly isotropic log-concave.
We then show how the algorithm and its analysis can be modified for the approximate log-concave setting.

Our algorithm is fairly simple. We compute high-precision
estimates $\vec T_1'$ and $\vec T_2'$ of the vector $\vec T_1 := \E_{(\vec x,y)\sim \D}[y\vec x]$ and the matrix
$\vec T_2 := \E_{(\vec x,y)\sim \D}[y(\vec x\vec x^\intercal-\vec I)]$ respectively.
This can be easily done by taking $\poly(d/\eps)$ samples from $\D$ and using the empirical estimates
(see Lemma~\ref{lem:chow-params-complexity}). We then define $V$ to be the subspace spanned by $\vec T_1$ and the eigenvectors of $\vec T_2$ whose eigenvalue has absolute value at least $2\zeta$, for $\zeta$ some sufficiently
large \nnew{constant} power of $\xi$. The algorithm returns a uniform random unit vector $\vec v$ from $V$.

It is clear that the above algorithm has polynomial sample complexity and runtime.
We need to show that with constant probability it holds that
$\dotp{\vec v}{\vec v^\ast}>\poly(\xi)$. The desired statement will follow by establishing the following two claims:
\begin{enumerate}
\item The size of the projection of $\vec v^{\ast}$ onto $V$ is at least $\poly(\xi)$.
\item The dimension of $V$ is at most $\poly(1/\xi)$.
\end{enumerate}
The desired result then follows by noting that the median value of $|\dotp{\vec{v^\ast}}{\vec v}|$ is on the
order of $\snorm{2}{\proj_V(\vec v^\ast)}/\sqrt{\dim(V)}$, and observing
that the sign of the inner product is independent of its size.

To establish the first claim, we prove the following lemma for isotropic log-concave distributions.
\begin{lemma}\label{lem:quad-id}
Let $\D_\x$ be isotropic log-concave. There exists a degree-$2$ polynomial $p: \R \to \R$ such that
$\E_{\vec x\sim \D_\x}[p(\dotp{\vec v^\ast}{\vec x})]=0,$
$\E_{\vec x\sim \D_\x}[p(\dotp{\vec v^\ast}{\vec x})^2]=1$, and
$\E_{(\vec x,y)\sim \D}[y \, p(\dotp{\vec v^\ast}{\vec x})] = \Omega(\xi)$.
\end{lemma}
\begin{proof}
We consider the polynomial $$q(x) = (x-\theta)(x+1/\theta) = x^2 + (1/\theta-\theta)x-1$$
and we set $p(x)=q(x)/\sqrt{\E_{\vec x\sim \D_\x}[q(\dotp{\vec v^\ast}{\vec x})^2]}$.
It is easy to see that $\E_{\vec x\sim \D_\x}[p(\dotp{\vec v^\ast}{\vec x})]=0$ and
$\E_{\vec x\sim \D_\x}[p(\dotp{\vec v^\ast}{\vec x})^2]=1$. To show that
$\E_{(\vec x,y)\sim \D}[yp(\dotp{\vec v^\ast}{\vec x})] = \Omega(\xi)$,
we note that
$$\E_{(\vec x,y)\sim \D}[yp(\dotp{\vec v^\ast}{\vec x})] = \E_{\vec x\sim \D_\x}[(1-2\eta(\vec x))f(\vec x)p(\dotp{\vec v^\ast}{\vec x})] \;.$$
We observe that if $|\dotp{\vec v^\ast}{\vec x}| \leq  1/|\theta|$, then
$\sign(p(\dotp{\vec v^\ast}{\vec x}))=f(\vec x)$, where $f(\bx) = \sgn(\dotp{\vec v^\ast}  {\bx} - \theta)$.
Thus, unless $|\dotp{\vec v^\ast}{\vec x}| > 1/|\theta|$ or $\eta(\vec x)>1/2$ (which happens with
probability at most $\xi'$, a sufficiently high power of $\xi$), we have that
$(1-2\eta(\vec x))f(\vec x)p(\dotp{\vec v^\ast}{\vec x})\geq 0$ except with probability at most $\xi'$.

Let $I(\vec x)$ denote the indicator of the event $(1-2\eta(\vec x))f(\vec x)p(\dotp{\vec v^\ast}{\vec x})< 0$.
We have that
\begin{align*}
\E_{(\vec x,y)\sim \D}[y \, p(\dotp{\vec v^\ast}{\vec x})]
& =\E_{\vec x\sim \D_\x}[|(1-2\eta(\vec x))p(\dotp{\vec v^\ast}{\vec x})|]- 2\E_{\vec x\sim \D_\x}[|(1-2\eta(\vec x))p(\dotp{\vec v^\ast}{\vec x})|I(\vec x)] \\
& \geq \E_{\vec x\sim \D_\x}[|(1-2\eta(\vec x))p(\dotp{\vec v^\ast}{\vec x})|]
- 2\sqrt{\E_{\vec x\sim \D_\x}[I^2(\vec x)]\E_{\vec x\sim \D_\x}[p(\dotp{\vec v^\ast}{\x})^2]}\\
& \geq \E_{\vec x\sim \D_\x}[|(1-2\eta(\vec x))p(\dotp{\vec v^\ast}{\vec x})|] - 2\sqrt{\xi'}.
\end{align*}
Recall that by assumption there is at least a $2/3$ probability that $(1-2\eta(\vec x))\geq \xi$.

By anti-concentration of Gaussian polynomials, Lemma~\ref{lem:carbery-wright}, applied for $q=4$ and $n=2$,
we have that
$\pr_{\vec x\sim \D_\x}[|p(\dotp{\vec v^\ast}{\vec x})|\leq t]=O(\sqrt{t})$.
Thus, for small enough $t$, we have that
$|p(\dotp{\vec v^\ast}{\vec x})| = \Omega(1)$ with probability at least $2/3$.
Therefore, with probability at least $1/3$ both statements hold.
Since $|1-2\eta(\vec x)||p(\dotp{\vec v^\ast}{\vec x})| \geq 0$ for all $\vec x$, we have that
$\E_{\vec x\sim \D_\x}[|1-2\eta(\vec x)||p(\dotp{\vec v^\ast}{\vec x})|]= \Omega(\xi)$.
This completes our proof.
\end{proof}

Given Lemma \ref{lem:quad-id}, it is not hard to see that
$p(\dotp{\vec v^\ast}{\vec x}) = a(\dotp{\vec v^\ast}{\vec x})+b((\dotp{\vec v^\ast}{\vec x})^2-1)$
for some real numbers $a$ and $b$ with $|a|+|b|=\Theta(1)$.
We note that there is another way to compute $\E_{(\vec x,y)\sim \D}[yp(\dotp{\vec v^\ast}{\vec x})]$ relating it to
$\vec T_1$ and $\vec T_2$. In particular, we can write
\begin{align*}
\E_{(\vec x,y)\sim \D}[y \, p(\dotp{\vec v^\ast}{\vec x})]
& = a \E_{(\vec x,y)\sim \D}[y(\dotp{\vec v^\ast}{\vec x})] + b \E_{(\vec x,y)\sim \D}[y((\dotp{\vec v^\ast}{\vec x})^2-1)]\\
& = a \dotp{\vec v^\ast}{\E_{(\vec x,y)\sim \D}[y \vec x]}+ b \E_{(\vec x,y)\sim \D}[y((\vec v^\ast)^\intercal (\vec x\vec x^\intercal-\vec I)\vec v^\ast)]\\
& = a \dotp{\vec v^\ast}{\vec T_1} + b (\vec v^\ast)^\intercal \vec T_2 \vec v^{\ast}.
\end{align*}
Thus, Lemma \ref{lem:quad-id} implies that either $|\dotp{\vec v^\ast}{\vec T_1}|=\Omega(\xi)$ or
$|(\vec v^\ast)^\intercal \vec T_2 \vec v^\ast| = \Omega(\xi)$.

Assuming that $\vec T_1'$ and $\vec T_2'$ estimate $\vec T_1$ and $\vec T_2$ to error less than this quantity, i.e., $O(\xi)$,
the above implies that either $\dotp{\vec v^\ast} {\vec T_1' } = \Omega(\xi)$ or
$(\vec v^\ast)^\intercal \vec T_2' \vec v^\ast = \Omega(\xi).$
In the former case, we have that $\snorm{2}{\proj_V(\vec v^\ast)} \geq |\dotp{\vec v^\ast}{\vec T_1}| = \Omega(\xi).$
In the latter case, we note that since $V$ contains the span of all eigenvectors of $\vec T_2'$ with eigenvalue having absolute
value at least $\zeta$, it holds that
$|(\vec v^\ast)^\intercal \vec T_2' \vec v^\ast| \leq \zeta + \snorm{2}{\vec T_2'}\snorm{2}{\proj_V(\vec v^\ast)}$.
This will imply that in this case as well we have that $\snorm{2}{\proj_V(\vec v^\ast)}= \Omega(\xi)$,
if $\|\vec T_2'\|_{2}$ is $O(1)$.  To show this, we note that for any unit vector $\vec v$, we have
\begin{align*}
\vec v^\intercal \vec T_2 \vec v
& = \E_{(\vec x,y)\sim \D}[y(\vec v^\intercal(\vec x\vec x^\intercal - \vec I) \vec v)] = \E_{(\vec x,y)\sim \D}[y(\dotp{\vec v}{\vec x}^2-1)]
\leq \sqrt{\E_{(\vec x,y)\sim \D}[y^2]\E_{\vec x\sim \D_\x}[(\dotp{\vec v}{\vec x}^2-1)^2]} = O(1) \;.
\end{align*}
This completes the proof that the projection of $\vec v^\ast$ onto $V$ has size at least
$\poly(\xi)$.

It remains to show that the dimension of $V$ is at most $\poly(\xi)$.
We prove the following lemma:
\begin{lemma}\label{lem:dim-ub}
We have that  $\dim(V)=O(\zeta^{-4})$.
\end{lemma}
\begin{proof}
Let $V_{+}$ denote the subspace spanned by the eigenvectors of $\vec T_2'$ with eigenvalue at least
$\zeta$. Let $V_{-}$ denote the subspace spanned by eigenvectors of eigenvalue at most $-\zeta$.
Clearly $\dim(V) \leq \dim(V_+)+\dim(V_-)+1$. We will show that $\dim(V_+)=O(\zeta^{-4})$ and the
bound on $\dim(V_-)$ will follow symmetrically.

Let $m=\dim(V_+)$ and let $\vec P$ be the projection matrix that maps a vector $\vec z$ onto $V_+$.
Since $\vec T_2'$ is sufficiently close to $\vec T_2$, the restriction of $\vec T_2'$ to $V_+$ will have
all of its eigenvalues at least $\zeta/2$. Therefore, it holds that
\begin{align*}
m \zeta/2 & \leq \tr(\vec P\vec T_2) = \E_{(\vec x,y)\sim \D}[y\,\tr(\vec P(\vec x\vec x^\intercal-\vec I))] =
\E_{(\vec x,y)\sim \D}[y(\snorm{2}{\vec P\vec x}^2-m)]\\ &\leq \sqrt{\E_{(\vec x,y)\sim \D}[y^2]\E_{\vec x\sim \D_\x}[(\snorm{2}{\vec P\vec x}^2-m)^2]}
= \sqrt{\var[\snorm{2}{\vec P\vec x}^2]} \;.
\end{align*}
In other words, we have that
$$m^2 \zeta^2 \leq 4 \var[\snorm{2}{\vec P\vec x}^2] \;.$$
To conclude the proof, observe that $\vec P\vec x$ is a log-concave distribution in $m$ dimensions,
since projections preserve log-concavity. From Lemma~\ref{lem:elads}, we have that
$\var[\snorm{2}{\vec P\vec x}^2 ]= O(m^{3/2})$ and together with the above, we obtain that
$m=O(\zeta^{-4})$. This completes our proof.
\end{proof}

Thus far, we have shown the desired claim if the distribution is in
isotropic position, $\theta=O(1/\log(1/\xi))$, and we have access to sufficiently accurate approximations
$\vec T_1', \vec T_2'$ to the degree-$2$ Chow parameters with accuracy $\zeta/2$.
To handle the case that the distribution $\D$ is $(\alpha,\beta)$-isotropic, we can let $\vec z\sim \D_{\vec z}'$,
where $\vec z = \cov[\vec x]^{-1/2}(\vec x-\E_{\vec x\sim \D_\x}[\vec x])$, be an isotropic log-concave distribution.
We need to show that if we have good approximations of $\E_{\vec x\sim \D_\x}[\vec x]$ and $\cov[\vec x]$,
we can compute $O(\zeta)$-approximations to $\vec T_1$ and $\vec T_2$ for $\vec z$
(i.e., $\E_{(\vec z,y)\sim \D'}[y\vec z]$ and $\E_{(\vec z,y)\sim \D'}[y(\vec z\vec z^\intercal-\vec I)]$).
By taking $\poly(d/\zeta)$ samples, we can compute $\widehat{\vec m}$ and $\widehat{\vec M}$ such that
$\snorm{2}{\widehat{\vec m}-\E_{\vec x\sim \D_\x}[\vec x]}\leq \zeta/16$ and $\snorm{2}{\widehat{\vec M}-\cov[\vec x]}\leq \zeta/16$.
Let  $\widehat{\vec z} = \widehat{\vec M}^{-1/2}(\vec x-\widehat{\vec m})$.
Then we have that $\snorm{2}{\widehat{\vec z}-\vec z}\leq \zeta/4$.
Thus, we obtain that
$\snorm{2}{\E_{(\vec z,y)\sim \D'}[y\vec z] -\E_{(\vec z,y)\sim \D'}[y\widehat{\vec z}] } \leq \E_{(\vec z,y)\sim \D'}[\snorm{2}{\vec z-\widehat{\vec z} }]\leq \zeta/4$
and similarly that
$\snorm{2}{\E_{(\vec z,y)\sim \D'}[y(\vec z\vec z^\intercal-\vec I)] -\E_{(\vec z,y)\sim \D'}[y(\widehat{\vec z}\widehat{\vec z}^\intercal-\vec I)] }\leq \zeta/4 \;.$
By approximating the degree-$2$ Chow parameters $\vec T_1,\vec T_2$ to accuracy $\zeta/4$, we obtain
overall error $\zeta/2$.

We note that $(\vec z,y)$ satisfies our assumptions for the function
$$f'(\vec x) = \sgn\left((\vec v^\ast)^\intercal\cov[\vec x]^{1/2} \vec x -(\theta- \langle \vec v^\ast , \E_{\vec x\sim \D_\x}[\vec x] \rangle)\right) \;.$$
From our assumptions, we have that $|\theta-\dotp{\vec v^\ast }{ \E_{\vec x\sim \D_\x}[\vec x]}|= O(1/\log(1/\xi))$.
Using the aforementioned algorithm for $\vec z$, this allows us to compute a $\vec v$ so that with constant
probability $\vec v^\intercal \cov[\vec x]^{1/2} \vec v^\ast \geq \poly(\xi)$,
or $\dotp{\cov[\vec x]^{1/2} \vec v} {\vec v^\ast }\geq \poly(\xi)$.
This completes the proof.
\end{proof}

Thus far, we have dealt with the case that the mean of our \nnew{log-concave}
distribution is sufficiently close to zero. As already mentioned, this property
will not hold in general after projection.  The following
important lemma shows that by conditioning on a random thin band before
projecting onto $\vec w^\perp$, we obtain a log-concave distribution whose mean
has small distance from the origin.  Moreover, we show that the noise condition
of the instance after we perform this transformation satisfies the assumptions
of Proposition~\ref{prop:nontriavial_angle}. We note that this is the step that crucially
relies on picking a {\em random} thin band.

\begin{lemma}[Properties of Transformed Instance]\label{lem:noise-random-band}
Let $\D$ be a distribution on $\R^d \times \{\pm 1\}$ that satisfies the $(\alpha,A)$-Tsybakov noise condition
with respect to an unknown halfspace $f(\bx) =\sgn(\dotp{\vec w^{\ast}}{\bx})$ and is such that
$\D_{\bx}$ is isotropic log-concave.  Fix $\eps > 0$ and unit vector $\vec w$ such that $\theta(\vec w, \wstar) = \Theta(\eps)$.
Let $s$ be a sufficiently small multiple of $\eps$\footnote{We need $s$ to be smaller than the absolute constant of Fact~\ref{fact:lc-basics} for dimension $d=2$.}. Set $\xi = (\Theta(s/A))^{1/\alpha}$ and $s' = \Theta(\xi^3 \, s \, \eps)$.
Pick $x_0$ uniformly at random from $[s, 2s]$ and define the random band $B_{x_0} = \{\x \in \R^d : \dotp{\x}{\vec w} \in [x_0, x_0 + s'] \}$.

Define the distribution $\D^\perp = \D_{B_{x_0}}^{\proj_{\vec w^\perp}}$,
the classifier $f^\perp(\x^\perp) = \sgn( x_0/\tan\theta + \dotp{\x^\perp}{\wperp})$, and the noise function
$$\eta^\perp(\x^\perp) = \pr_{(\vec z, y) \sim \D^\perp}[ y \neq f^\perp(\vec z) | \vec z = \x^\perp] \,.$$
Then $\D^\perp$ is an $(O(1), O(1))$-isotropic log-concave distribution and, with probability at least $99\%$,
satisfies the following noise condition:
$$\pr_{\x^\perp \sim \D_\x^\perp}[\eta^\perp(\x^\perp) \leq 1/2 - \xi  ] \geq 2/3 \quad \textrm{ and } \quad
 \pr_{\x^\perp \sim \D_\x^\perp}[\eta^\perp(\x^\perp) \geq 1/2] \leq \xi^3 \;.$$
\end{lemma}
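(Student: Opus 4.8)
The plan is to follow the original noise through the two operations that build $\D^\perp$: conditioning on the random thin band $B_{x_0}$ and then projecting orthogonally onto $\vec w^\perp$. Writing $\theta=\theta(\vec w,\wstar)$ and decomposing $\wstar=\sin\theta\,\wperp+\cos\theta\,\vec w$ with $\sin\theta>0$, for $(\x,y)\sim\D$ conditioned on $\x\in B_{x_0}$ I would write $\x=\vec z+t\vec w$ with $\vec z=\proj_{\vec w^\perp}(\x)\sim\D_\x^\perp$ and $t=\dotp{\vec w}{\x}\in[x_0,x_0+s']$. Then $\dotp{\wstar}{\x}=\sin\theta\,(\dotp{\wperp}{\vec z}+t/\tan\theta)$, so $f(\x)=\sgn(\dotp{\wperp}{\vec z}+t/\tan\theta)$, which equals $f^\perp(\vec z)=\sgn(\dotp{\wperp}{\vec z}+x_0/\tan\theta)$ for every $t$ in the band \emph{unless} $\vec z$ lies in the ``fuzzy slab'' $S=\{\vec z:\dotp{\wperp}{\vec z}\in[-(x_0+s')/\tan\theta,\,-x_0/\tan\theta]\}$, of width $s'/\tan\theta$ along $\wperp$. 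The resulting clean fact is: for $\vec z\notin S$, $\eta^\perp(\vec z)=\E_{t\mid\vec z}[\eta(\vec z+t\vec w)]$ is literally an average of the original Tsybakov noise over the band, whereas on $S$ we know only $\eta^\perp(\vec z)\in[0,1]$.

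For the structural claim, log-concavity of $\D^\perp$ is immediate: restricting to the convex set $B_{x_0}$ and marginalizing both preserve log-concavity (Prékopa). For near-isotropy I would fix a unit $\vec u\perp\vec w$, pass to the $2$-dimensional marginal of $\D_\x$ onto $\mathrm{span}(\vec w,\vec u)$ (isotropic log-concave in $\R^2$), and invoke Fact~\ref{fact:lc-basics}: its density is $\ge c_2$ on the radius-$c_2$ ball and decays sub-exponentially, and the one-dimensional $\vec w$-marginal of $\D_\x$ is $\Theta(1)$ on $[s,3s]$. Since $s$ is a small enough constant, the band sits well inside radius $c_2/2$, so the conditional-then-projected density of $\dotp{\vec z}{\vec u}$ is one-dimensional log-concave, bounded below by a constant on a constant-length interval around $0$, with sub-exponential tail; this forces $|\E[\dotp{\vec z}{\vec u}]|=O(1)$ and $\E[\dotp{\vec z}{\vec u}^2]=\Theta(1)$, and since this holds for all such $\vec u$, $\D^\perp$ is $(O(1),O(1))$-isotropic log-concave. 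As a byproduct, the $\wperp$-marginal of $\D_\x^\perp$ has density $O(1)$.

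The bound $\pr_{\vec z\sim\D_\x^\perp}[\eta^\perp(\vec z)\ge 1/2]\le\xi^3$ I would prove deterministically in $x_0$: the Tsybakov condition gives $\pr_{\D_\x}[\eta(\x)=1/2]=0$, so for $\vec z\notin S$ we have $\eta^\perp(\vec z)<1/2$ almost surely, hence $\{\eta^\perp\ge 1/2\}\subseteq S$ up to a null set; then $\pr_{\D_\x^\perp}[S]\le O(1)\cdot s'/\tan\theta$ by the byproduct above, and since $s'=\Theta(\xi^3 s\eps)$ and $\tan\theta=\Theta(\eps)$, choosing the hidden constant in $s'$ small enough makes this at most $\xi^3$.

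The last bound, $\pr_{\vec z}[\eta^\perp(\vec z)\le 1/2-\xi]\ge 2/3$ with probability $99\%$ over $x_0$, is the heart of the lemma and where I expect the difficulty; it uses two nested Markov arguments. First, since $\int_s^{2s}\1\{\dotp{\vec w}{\x}\in[x_0,x_0+s']\}\,dx_0\le s'$ for every $\x$, averaging over $x_0\sim U[s,2s]$ gives $\E_{x_0}[\pr_{\D_\x}[\eta(\x)\ge 1/2-2\xi,\ \x\in B_{x_0}]]\le \tfrac{s'}{s}\pr_{\D_\x}[\eta(\x)\ge 1/2-2\xi]\le \tfrac{s'}{s}A(2\xi)^{\alpha/(1-\alpha)}$; since $\pr_{\D_\x}[B_{x_0}]=\Theta(s')$ uniformly (Fact~\ref{fact:lc-basics} on the $\vec w$-marginal), dividing gives $\E_{x_0}[\pr_{\D_{B_{x_0}}}[\eta\ge 1/2-2\xi]]=O(A\xi^{\alpha/(1-\alpha)}/s)$, which is below any prescribed constant once $\xi=(\Theta(s/A))^{1/\alpha}$ with a small enough constant (using $\tfrac1{1-\alpha}\ge 1$ and $s/A\le 1$, so $\xi^{\alpha/(1-\alpha)}=(\Theta(s/A))^{1/(1-\alpha)}\le\Theta(s/A)$), and Markov over $x_0$ then gives $\pr_{\D_{B_{x_0}}}[\eta\ge 1/2-2\xi]\le 1/7$ with probability $\ge 99\%$. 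Second, for $\vec z\notin S$, if $\eta^\perp(\vec z)\ge 1/2-\xi$ then $\E_{t\mid\vec z}[1/2-\eta(\vec z+t\vec w)]\le\xi$, so by Markov over $t$ at least half the $t$-mass has $\eta(\vec z+t\vec w)\ge 1/2-2\xi$; integrating over $\vec z$ yields $\pr_{\vec z}[\eta^\perp(\vec z)\ge 1/2-\xi,\ \vec z\notin S]\le 2\pr_{\D_{B_{x_0}}}[\eta\ge 1/2-2\xi]\le 2/7$, and adding $\pr_{\D_\x^\perp}[S]\le\xi^3$ gives $\pr_{\vec z}[\eta^\perp\ge 1/2-\xi]\le 2/7+\xi^3\le 1/3$. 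The main obstacle is making these two Markov steps mesh quantitatively: the band must be thin enough ($s'\ll s$) that conditioning on it — which inflates probabilities by $1/\pr[B_{x_0}]=\Theta(1/s')$ — still only dilutes the Tsybakov-bad set by the factor $s'/s\ll 1$, and the factor $2$ together with the genuinely non-Tsybakov remainder $\xi^3$ is exactly the price of $\eta^\perp$ being an average of $\eta$ over the band rather than a pointwise-Tsybakov quantity.
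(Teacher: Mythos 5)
Your proposal is correct and follows essentially the same route as the paper's proof: log-concavity plus Fact~\ref{fact:lc-basics} on two-dimensional marginals for the $(O(1),O(1))$-isotropy, a deterministic anti-concentration bound on the width-$O(s'/\tan\theta)$ fuzzy slab for the $\xi^3$ bound, and a double-averaging argument over the random $x_0$ and over the fiber $t\mid\vec z$ for the main noise bound. The only difference is cosmetic: the paper packages the last step as a single Markov inequality applied to the random variable $Y=\pr_{t\mid\vec z}[\eta\geq 1/2-2\xi]$ after bounding $\E_{x_0,\vec z}[Y]$ by a Fubini swap, whereas you split it into two nested Markov inequalities (first over $x_0$, then over $t\mid\vec z$); the underlying computation and the quantitative dependence on $s$, $s'$, and $\xi$ are identical.
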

\begin{proof}
We first calculate how far the distribution $\D_{B_{x_0}}^{\proj_{\vec w^\perp}}$ is
from being isotropic.  Since our final goal is to have a distribution whose
mean is arbitrarily close to $\vec 0$, we need to bound the distance from $\vec 0$
of the mean of the distribution obtained after we condition on $B$ and project onto
$\bw^\perp$. The following claim shows that the mean and covariance of $\D_{B_{x_0}}^{\proj_{\vec w^\perp}}$
differ from these of the initial distribution $\D$ only by constant factors
(additive for the mean and multiplicative for the covariance).
\begin{claim}\label{clm:conditional-projected-bounds}
$\D_{B_{x_0}}^{\proj_{\vec w^\perp}}$ is $(O(1), O(1))$-isotropic.
\end{claim}

\noindent The proof of Claim~\ref{clm:conditional-projected-bounds} relies on
Fact~\ref{fact:lc-basics} and is given in Appendix~\ref{ap:logcon}.

It remains to prove how the noise condition changes via the transformation.
In our argument, we are going to repeatedly use the following anti-concentration,
and anti-anti-concentration properties of log-concave distributions
that follow directly from Fact~\ref{fact:lc-basics}.  In particular,
for every interval $[a,b]$, we have that:
\begin{enumerate}
  \item $\pr_{\bx   \sim \D_\bx}[\dotp{\vec\x}{\vec v}\in[a,b]]=O(b-a)$ (anti-concentration). \label{local:anti}
  \item If $|a|, |b|$ are smaller than some absolute constant (see Fact~\ref{fact:lc-basics}),
    then it also holds that $\pr_{\bx   \sim \D_\bx}[\dotp{\vec\x}{\vec v}\in[a,b]]=\Omega(b-a)$  (anti-anti-concentration).\label{local:antianti}
\end{enumerate}
Using the condition
$\theta(\vec w, \wstar) = \Theta(\eps)$, we can assume that
$\vec w^\ast= \lambda_1 \vec w+ \lambda_2 (\vec w^\ast)^{\perp_\bw }$,
where $\lambda_1 = \cos\theta$ and $\lambda_2 = \sin \theta$.
It holds $|\lambda_1| = 1- \Theta(\eps)$ and $\lambda_2 = \Theta(\eps)$.
Next we set $\vec x=(\x_\bw,\vec x^\perp)$, where $\x_\bw=\dotp{\vec w}{\vec x}$ and
$\vec x^\perp$ is the projection of $\vec x$ on the subspace $\vec w^\perp$.

For some $\zeta \in (0,1)$, set $\xi = (\zeta s/A)^{1/\alpha}/2$.
In what follows, we shall see that $\zeta$ is some absolute constant, i.e.,
that $\xi = (\Theta(s/A))^{1/\alpha}$.
Recall that the orthogonal projection on $\vec w^\perp$ creates a ``fuzzy"
region, i.e., a region where $\eta^\perp(\x^\perp) \geq 1/2$, see
Figure~\ref{fig:orth_proj}. We first show that we can control the
probability that we get points inside this ``fuzzy" region.  More, formally we
will show that
\begin{equation} \label{eqn:agnostic-tiny}
\pr_{(\x^\perp,y) \sim \D^\perp}[\eta^\perp(\x^\perp) \geq 1/2] \leq \xi^3 \;.
\end{equation}
Notice that in this part of the proof the randomness of $x_0$ is not important
and we are able to establish a stronger claim that holds for every band $B_{x_0}$.
Conditioned on $\x \in B_{x_0}$, i.e., $\x_{\bw} \in [x_0, x_0 + s']$, it holds that
$$
\dotp{\vec w^\ast}{\vec x}
=  \lambda_1 \x_\bw + \lambda_2 \dotp{(\vec w^{\ast})^{\perp_\bw}}{\vec x^\perp}
= \lambda_1 x_0 + \lambda_2 \dotp{(\vec w^{\ast})^{\perp_\bw}}{\vec x^\perp}+ \rho s'
\,,
$$
for some $\rho \in [-1,1]$ (recall that $|\lambda_1|\leq 1$).  Notice that when
$|\lambda_1 x_0 + \lambda_2 \dotp{(\vec w^{\ast})^{\perp_\bw}}{\vec x^\perp}| > s'$,
$f^\perp(\x^\perp)$ is equal to the sign of $\dotp{\wstar}{\x}$ (recall that $\lambda_2 > 0$),
and therefore we are outside of the fuzzy region, see
Figure~\ref{fig:orth_proj}.
Thus, we need to bound the probability of the event
$|\lambda_1 x_0 + \lambda_2 \dotp{(\vec w^{\ast})^{\perp_\bw}}{\vec x^\perp}| \leq s'$,
or equivalently
$\dotp{(\vec w^{\ast})^{\perp_\bw}}{\vec x^\perp} \in [-\lambda_1 x_0 - s', - \lambda_1 x_0 + s'] =: I_{x_0}^{s'}$.
We have that
\begin{align*}
  \pr_{\x^\perp \sim \D^\perp_\x}\left[
  \dotp{(\vec w^{\ast})^{\perp_\bw}}{\vec x^\perp} \in I_{x_0}^{s'}  \right]
  =
  \frac{ \pr_{\x \sim \D_\x}\left[ \dotp{(\vec w^{\ast})^{\perp_\bw}}{\vec x} \in I_{x_0}^{s'} \right]}
  {\pr_{\x \sim \D_\x}[\x \in B_{x_0}]}
  = O(s'/(\lambda_2 s))
  \leq \xi^3
  \,,
\end{align*}
where to bound the numerator we used the anti-concentration property of $\D$, Property~\ref{local:anti},
for the interval $I_{x_0}^{s'}$ of length $s'$,
and to bound the denominator we used the anti-anti-concentration,
Property~\ref{local:antianti}.  The last inequality holds because we have that
$\lambda_2 = \Theta(\eps)$ and also, from the assumptions of the lemma, we have
$s' = \Theta(\xi^3 s \eps)$. This proves~\eqref{eqn:agnostic-tiny}.

Now we deal with the case where $|\lambda_1 x_0 + \lambda_2 \dotp{(\vec w^{\ast})^{\perp_\bw}}{\vec x^\perp}| \leq s'$,
i.e., we are in the non-fuzzy region of Figure~\ref{fig:orth_proj}. This is where the randomness of $x_0$
helps us control the probability that the noise is close to $1/2$.
Recall that,
$$\eta^\perp(\vec x^\perp) =
\pr_{(\x^\perp, y) \sim D^\perp} \left[y\neq \sgn(\dotp{(\vec w^{\ast})^{\perp_\bw}}{\vec x^\perp}+x_0)\right]
= \int_{x_0}^{x_0+s'}\eta(\x_\bw,\x^\perp) \gamma(\x_\bw|\x^\perp)\d \x_\bw \;,
$$
where
$\gamma(\x_\bw|\x^\perp)$ is the density of $\D_{B_{x_0}}$ conditioned on $\x^\perp$,
that is
$\gamma(\x_\bw | \x^\perp) = \gamma(\x_\bw, \x^\perp)/\int \gamma(\x_\bw, \x^\perp) \d \x_\bw\,,$
and $\gamma$ is the density of the $\bx$-marginal of $\D_{B_{\x_0}}$.
Note that, from Lemma~\ref{lem:tsybakov_condition}, it follows that
$\pr[\eta(\vec x)\geq 1/2 -t\ |\ \x_\bw\in[s,2s+s'] ]= O(\frac{A}{s}t^{\alpha})$.
Therefore, $\pr[\eta(\vec x)> 1/2-2\xi\ |\ \x_\bw \in [s, 2s+s'] ]\leq \zeta$,
and it remains to prove that $\pr[\eta^\perp(\vec x^\perp)> 1/2-\xi]$
is at most a small constant multiple of $\zeta$
with high constant probability.

To prove this, let $M(\x)$ be the indicator of the event $\eta(\vec x)>1/2-2\xi$
and consider the random variable
$Y = \int_{x_0}^{x_0 +s'}M(\vec \x_\bw,\x^\perp)\gamma(\vec \x_\bw|\x^\perp)\d \vec \x_\bw$.
Observe that the randomness of $Y$ is over the randomly chosen $x_0$ and $\x^\perp$.
We will first show that the probability that the noise function $\eta^\perp(\x^\perp)$
exceeds $1/2-\xi$ can be bounded above by the probability that the random variable $Y$ exceeds $1/2$,
that is
\begin{align} \label{eq:Y-upper-bound}
\pr_{\x^\perp \sim \D^\perp_\x, x_0}[\eta^\perp(\x^\perp)>1/2-\xi]
\leq \pr_{\x^\perp \sim \D^\perp_\x, x_0}[Y\geq 1 /2 ]\,.
\end{align}
In fact, we show a stronger statement than Equation~\eqref{eq:Y-upper-bound}
that \nnew{holds for} any fixed $x_0 \in [s,2s]$.
To see this, let $\eta'(\x)=1/2 -2\xi (1-M(\x))$ and notice that $\eta'(\x)\geq \eta(\x)$
for every $\x$.  Then, it holds
\begin{align*}
  1/2-\xi < \eta^\perp(\x^\perp) &= \int_{x_0}^{x_0+s'}\eta(\x_\bw,\x^\perp)\gamma(\x_\bw|\x^\perp)\d \x_\bw
\leq\int_{x_0}^{x_0+s'}\eta'(\x_\bw,\x^\perp)\gamma(\x_\bw|\x^\perp)\d \x_\bw
\\
&= 1/2-2\xi + 2\xi\int_{x_0}^{x_0+s'}M(\x_\bw,\x^\perp)\gamma(\x_\bw|\x^\perp)\d \x_\bw
\;,
\end{align*}
which is equivalent to $\int_{x_0}^{x_0 +s'}M(\vec \x_\bw,\x^\perp)\gamma(\vec \x_\bw|\x^\perp)\d \vec \x_\bw = Y \geq 1/2$.

Our next step is to bound from above the probability of the event $Y \geq 1/2$.
For convenience, let $\phi(\x)$ be the density of the initial isotropic log-concave marginal $\D_\x$.
Thus, we have $\gamma(\vec \x_\bw, \x^\perp) = \phi(\vec \x_\bw, \x^\perp)/\pr_{\D}[B_{x_0}]$.
Moreover, set $Q = \min_{x_0 \in [s,2s]} \pr_{\D}[B_{x_0}]$ and recall that from
Properties~\ref{local:anti}, \ref{local:antianti} we have that
for any $x_0 \in [s,2s]$ it holds $\pr_{\D}[B_{\x_0}] = \Theta(s')$, and thus $Q = \Theta(s')$.
We can bound from above the expectation of $Y$, i.e.,
\begin{align*}
\E[Y]
&= \int_{s}^{2s}\frac{1}{s}\int_{x_0}^{x_0 +s'}\int_{\bw^\perp} M(\vec \x_\bw,\x^\perp) \frac{\phi(\vec \x_\bw,\x^\perp)}{\pr_{\D}[B_{x_0}]} \, \d \x^\perp\, \d \vec \x_\bw\,  \d x_0 \\
&\leq \frac{1}{s Q}
  \int_{s}^{2s}
  \int_{x_0}^{x_0 +s'}\int_{\bw^\perp} M(\vec \x_\bw,\x^\perp) \phi(\vec \x_\bw,\x^\perp) \, \d \x^\perp\, \d \vec \x_\bw\,  \d x_0 \\
&\leq \frac{s'}{s Q}\int_{s}^{2 s+s'}\int_{\bw^\perp} M(\vec \x_\bw,\x^\perp) \phi(\vec \x_\bw,\x^\perp) \d \x^\perp\, \d \vec \x_\bw \\
&\leq \frac{s' \pr_{\D}[\x_\bw \in [s,2s+s']]}{s Q} \pr[\eta(\x)>1/2-2\xi| \x_\bw \in [s, 2s+s'] ] \leq \frac{s'}{Q} \zeta = O(\zeta) \;,
\end{align*}
where to get the third inequality we used the fact that for any non-negative function $g(t)$ it holds
$$
\int_{s}^{2s} \int_{u}^{u +s'} g(t) \d t \d u = \int_{0}^{s'} \int_{s+u}^{2s+u} g(t) \d t \d u
\leq \int_{0}^{s'} \int_{s}^{2s+s'} g(t) \d t\d u = s' \int_{s}^{2s+s'} g(t) \d t \,.
$$
The final inequality follows from Properties \ref{local:anti} and \ref{local:antianti}.
By Markov's inequality, we obtain $\pr[Y\geq 1/2] = O(\zeta)$.
Therefore, combining this bound with Equation~\eqref{eq:Y-upper-bound}, we
obtain the probability that $\eta(\x^\perp) > 1/2 -\xi$ is at most
$$
\pr_{\x^\perp \sim \D^\perp_\x, x_0}[\eta^\perp(\x^\perp)>1/2-\xi]
\leq \pr_{\x^\perp \sim \D^\perp_\x, x_0}[Y\geq 1 /2 ]
= O(\zeta) \;.
$$
So, choosing $\zeta$ to be a sufficiently small absolute constant, we get that
$\pr_{(\x^\perp,y) \sim \D^\perp}[\eta^\perp(\x^\perp) \geq 1/2] \leq \xi^3$
and $\pr_{\x^\perp}[\eta^\perp(\x^\perp)>1/2-\xi]\leq 1/3$ with probability at least $99\%$.
This completes the proof.
\end{proof}

We next show how to efficiently decrease the mean of a \nnew{nearly identity covariance} log-concave distribution
and make it arbitrary close to zero. We achieve this by further conditioning.
In particular, we show that we can efficiently find a reweighting
of the conditional distribution on $\vec x^\perp$ such that it is approximately mean zero isotropic.
\nnew{The high-level idea to achieve this is, for some vector $\vec r$, to run rejection sampling,
where $\bx$ is kept with probability $\min(1, \exp(-\langle \vec r, \bx \rangle))$.
The problem is then to find $\vec r$. We do this by finding an approximate stationary point
of an appropriately defined constrained non-convex optimization problem.}

We will use the following standard fact about the convergence of projected stochastic gradient descent (PSGD)
to stationary points of smooth non-convex functions.
\nnew{Consider the constrained optimization setting of minimizing a (differentiable) function $F$
in the set $\cal X$. In this setting, a point $\bx$ is called $\eps$-stationary, $\eps>0$,
if for all $\vec u \in \cal X$ it holds $\dotp{\nabla F(\bx)}{\vec u-\vec x}  \geq -\eps\snorm{2}{\vec u-\vec x}$.
Note that if $\vec x\in \mathrm{int}(\cal X)$, i.e., $\vec x$ is not on the boundary of $\cal X$,
this inequality is equivalent to $\|\nabla F(\bx)\|_2\leq \eps$.}

\begin{fact}[see, e.g.,~\cite{GLZ16}, Corollary 4 and Equations (4.23) and (4.25)]\label{lem:sgd}
Let $\D$ be a distribution supported on $\R^d$.
Let $F:{\cal X}\mapsto \R$ be an $L$-smooth differentiable function on a compact convex set ${\cal X}\subset\R^d$
with diameter $D$. Let $\vec g:{\cal X}\times\R^d\mapsto \R^d$ be such that
$\E_{\bx \sim \D}[\vec g(\vec r, \x)]=\nabla F(\vec r)$ and $\E_{\bx \sim \D}[\snorm{2}{\vec g(\vec r, \x) }^2]\leq \sigma^2$,
for some $\sigma>0$. Then randomized projected SGD uses $T=O(\sigma^3 D^2 L^2/\eps^2)$ samples from $\D$,
runs $\poly(T, d)$ time, and returns a point $\vec r'$ such that with probability at least $2/3$,
$\vec r'$ is an $\eps$-stationary point \nnew{of $F$}.
\end{fact}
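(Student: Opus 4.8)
The plan is to recognize that Fact~\ref{lem:sgd} is a reformulation of the convergence guarantee for the randomized stochastic projected gradient method of~\cite{GLZ16}, and to recall the standard argument underlying it. The central object is the \emph{gradient mapping}: for a step size $\eta>0$ and any $\vec h\in\R^d$, set
\[
G_\eta(\vec r,\vec h)=\frac{1}{\eta}\Big(\vec r-\proj_{\cal X}(\vec r-\eta\,\vec h)\Big).
\]
Two elementary facts about $G_\eta$ drive the proof: (i) non-expansiveness of the Euclidean projection onto the convex set ${\cal X}$ gives $\snorm{2}{G_\eta(\vec r,\vec h_1)-G_\eta(\vec r,\vec h_2)}\le\snorm{2}{\vec h_1-\vec h_2}$; and (ii) if $\snorm{2}{G_\eta(\vec r,\nabla F(\vec r))}$ is small, then the projected point $\proj_{\cal X}(\vec r-\eta\nabla F(\vec r))$ satisfies $\dotp{\nabla F(\cdot)}{\vec u-\cdot}\ge -O(\eta L+\snorm{2}{G_\eta})\snorm{2}{\vec u-\cdot}$ for all $\vec u\in{\cal X}$, i.e.\ it is $O(\eta L+\snorm{2}{G_\eta})$-stationary (in the interior case this is just $\snorm{2}{\nabla F}\le\snorm{2}{G_\eta}$). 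Both are standard; see~\cite{GLZ16}.

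Next I would run the iteration $\vec r_{t+1}=\proj_{\cal X}(\vec r_t-\eta\,\bar{\vec g}_t)$, where $\bar{\vec g}_t$ is the average of a fresh mini-batch of samples $\vec g(\vec r_t,\bx)$, $\bx\sim\D$, so that $\E[\bar{\vec g}_t]=\nabla F(\vec r_t)$ and the per-step variance $\E[\snorm{2}{\bar{\vec g}_t-\nabla F(\vec r_t)}^2]$ drops to $\sigma^2/m$ for batch size $m$. Combining the $L$-smoothness descent inequality $F(\vec r_{t+1})\le F(\vec r_t)+\dotp{\nabla F(\vec r_t)}{\vec r_{t+1}-\vec r_t}+\tfrac{L}{2}\snorm{2}{\vec r_{t+1}-\vec r_t}^2$ with the three-point property of the projection, then taking expectations and using $\E[\bar{\vec g}_t]=\nabla F(\vec r_t)$ together with fact~(i) and $\E[\snorm{2}{\bar{\vec g}_t-\nabla F(\vec r_t)}^2]\le\sigma^2/m$, yields for $\eta\le 1/(2L)$
\[
\E\big[\snorm{2}{G_\eta(\vec r_t,\nabla F(\vec r_t))}^2\big]\le \frac{c_1}{\eta}\,\E\big[F(\vec r_t)-F(\vec r_{t+1})\big]+c_2\,L\,\frac{\sigma^2}{m}
\]
for absolute constants $c_1,c_2$. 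Telescoping over $t=1,\dots,T$ and using that $F$ varies by only $O(LD^2)$ across ${\cal X}$ (since $F$ is $L$-smooth and ${\cal X}$ has diameter $D$), the time-average of $\E[\snorm{2}{G_\eta}^2]$ is $O\!\big(\tfrac{LD^2}{\eta T}+\tfrac{L\sigma^2}{m}\big)$. Choosing $\eta=\Theta(1/L)$ and balancing $T$ and $m$ against $\eps^2$ using the step-size/batch schedule of~\cite{GLZ16} drives this average below $\eps^2$, with total sample count $mT=O(\sigma^3D^2L^2/\eps^2)$ and total runtime $\poly(mT,d)$ (each step is one batch average and one projection).

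Finally I would make the method ``randomized'' in the sense of~\cite{GLZ16}: draw $R$ uniformly from $\{1,\dots,T\}$ and return $\vec r_R$ (interior case) or its projected image under one additional large-batch gradient estimate (general case). Then $\E_R[\snorm{2}{G_\eta(\vec r_R,\nabla F(\vec r_R))}^2]$ equals the time-average bounded above, so by Markov's inequality $\snorm{2}{G_\eta(\vec r_R,\nabla F(\vec r_R))}\le\sqrt{3}\,\eps$ with probability at least $2/3$, and fact~(ii) then gives that the returned point is $O(\eps)$-stationary; rescaling $\eps$ by a constant finishes. The main obstacle, and the reason this is not the textbook unconstrained-SGD argument, is the constrained setting: the stochastic noise is pushed through the nonlinear map $\proj_{\cal X}$, so one must argue via the gradient mapping and its non-expansiveness rather than directly via gradient norms, and extracting the precise $\sigma$–$D$–$L$–$\eps$ dependence requires the mini-batch balancing of~\cite{GLZ16}. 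Since this is a black-box fact, the write-up can simply cite~\cite{GLZ16} after recording this sketch.
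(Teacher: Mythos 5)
The paper offers no proof of this statement at all: it is invoked as a black-box citation to \cite{GLZ16} (Corollary 4 and Equations (4.23) and (4.25)), and your sketch correctly reconstructs the standard gradient-mapping argument behind that result before arriving at the same conclusion, namely to cite \cite{GLZ16}. The only imprecision is your claim that $F$ varies by $O(LD^2)$ over ${\cal X}$; smoothness alone does not give this, but the hypothesis $\E_{\bx\sim\D}[\snorm{2}{\vec g(\vec r,\bx)}^2]\leq\sigma^2$ yields $\snorm{2}{\nabla F}\leq\sigma$ by Jensen, hence a variation bound of $\sigma D$, which is in fact where the $\sigma^3$ in the stated sample complexity comes from.
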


We show the following:

\begin{lemma}\label{lem:sampling}
Let $\D$ be an isotropic log-concave distribution on $\R^d$.  Let $\bw \in
\R^d$ be a unit vector and let $B = \{\bx \in \R^d : \dotp{\vec w}{\bx}
\in [a , b] \}$ for $a, b > 0$ smaller than some universal absolute constant.
There exists an algorithm that, given $\gamma>0$ and $\poly(d/\gamma)$
independent samples from $\D_{B}^{\proj_{\vec w^\perp}}$, runs in sample
polynomial time, and returns a vector $\vec r$ such that if $\vec z$ is obtained
from $\D_{B}^{\proj_{\vec w^\perp}}$ by rejection sampling, where a sample
$\x$ is accepted with probability $\min(1,e^{-\dotp{\vec r}{\x}})$, then:
\begin{itemize}
\item A sample is rejected with probability $p$, where $p \in(0,1)$ is an absolute constant.
\item The distribution of $\vec z$ is $(\gamma, O(1))$-isotropic log-concave.
\end{itemize}
\end{lemma}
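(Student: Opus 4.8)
The plan is to obtain $\vec r$ as an approximate stationary point of a non-convex objective that penalizes the mean of the reweighted distribution, found by projected SGD (Fact~\ref{lem:sgd}), and then to check directly that the rejection-sampling reweighting it induces has the claimed properties. Throughout write $\D':=\D_{B}^{\proj_{\vec w^\perp}}$. Conditioning on the convex slab $B$ and then orthogonally projecting onto $\vec w^\perp$ preserves log-concavity, so $\D'$ is log-concave; and since $a,b$ lie below the absolute constant of Fact~\ref{fact:lc-basics}, one has $\pr_{\D}[B]=\Theta(b-a)$ and the conditional-then-projected density is bounded below by an absolute constant on a ball of absolute-constant radius about the origin while retaining sub-exponential tails. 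By the same argument as in Claim~\ref{clm:conditional-projected-bounds}, $\D'$ is $(O(1),O(1))$-isotropic log-concave; in particular $\E_{\x\sim\D'}[\snorm{2}{\x}^k]=\poly(d)$ for any fixed $k$, and any expectation of a fixed-degree polynomial in $\x$ against $e^{-\dotp{\vec r}{\x}}$ with $\snorm{2}{\vec r}=O(1)$ is finite and polynomially bounded in $d$.

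First I would fix the domain $\mathcal X=\{\vec r:\snorm{2}{\vec r}\leq\Lambda\}$ for a suitable absolute constant $\Lambda$, set $q_{\vec r}(\x)=\min(1,e^{-\dotp{\vec r}{\x}})=e^{-(\dotp{\vec r}{\x})^+}$, and define
$$\vec m(\vec r)=\E_{\x\sim\D'}[\x\,q_{\vec r}(\x)],\qquad F(\vec r)=\snorm{2}{\vec m(\vec r)}^2,\qquad \nabla F(\vec r)=-2\,\vec S(\vec r)\,\vec m(\vec r),$$
where $\vec S(\vec r)=\E_{\x\sim\D'}[\x\x^\intercal\,e^{-\dotp{\vec r}{\x}}\1\{\dotp{\vec r}{\x}>0\}]\succeq 0$ (the kink at $\dotp{\vec r}{\x}=0$ has measure zero). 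Using the sub-exponential tails of $\D'$, $F$ is $L$-smooth on $\mathcal X$ with $L=\poly(d)$; an unbiased stochastic gradient is $\vec g=-2\big(\x\x^\intercal e^{-\dotp{\vec r}{\x}}\1\{\dotp{\vec r}{\x}>0\}\big)\,\x'q_{\vec r}(\x')$ computed from two independent draws $\x,\x'\sim\D'$, and $\E[\snorm{2}{\vec g}^2]=\poly(d)$ since $e^{-\dotp{\vec r}{\x}}\1\{\dotp{\vec r}{\x}>0\},\,q_{\vec r}(\x')\leq 1$. Hence Fact~\ref{lem:sgd}, applied to $F$ over $\mathcal X$ (running PSGD over $\D'\otimes\D'$), uses $\poly(d/\gamma)$ samples and $\poly(d,1/\gamma)$ time and returns, with probability at least $2/3$, a $\gamma$-stationary point $\vec r$.

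The heart of the argument (corresponding to Claim~\ref{clm:stationary-point}) is to show that every $\gamma$-stationary point $\vec r$ of $F$ on $\mathcal X$ already satisfies $\snorm{2}{\vec m(\vec r)}=O(\gamma)$. I would do this via two ingredients. (i) A uniform bound $\vec S(\vec r)\succeq\mu\vec I$ with $\mu=\Omega(1)$ for all nonzero $\vec r\in\mathcal X$: for a unit vector $\vec u$ one lower-bounds $\vec u^\intercal\vec S(\vec r)\vec u=\E_{\D'}[\dotp{\vec u}{\x}^2 e^{-\dotp{\vec r}{\x}}\1\{\dotp{\vec r}{\x}>0\}]$ by restricting the expectation to a constant-size region near the origin on which the integrand is $\Omega(1)$ and which has $\D'$-mass $\Omega(1)$, using the anti-anti-concentration of log-concave distributions (Fact~\ref{fact:lc-basics}). (ii) A choice of $\Lambda$ for which the descent field $\vec S(\vec r)\vec m(\vec r)$ points strictly inward on the sphere $\snorm{2}{\vec r}=\Lambda$ — exploiting that $\dotp{\vec m(\vec r)}{\vec r}=\E_{\D'}[\dotp{\vec r}{\x}\,e^{-(\dotp{\vec r}{\x})^+}]\to-\infty$ as $\snorm{2}{\vec r}\to\infty$ — so that, for $\gamma$ below an absolute constant, any $\gamma$-stationary $\vec r$ must lie in $\mathrm{int}(\mathcal X)$; then $\snorm{2}{\nabla F(\vec r)}\leq\gamma$, and combining with (i), $\snorm{2}{\vec m(\vec r)}\leq\gamma/(2\mu)=O(\gamma)$. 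The degenerate case $\vec r=\vec 0$ is handled separately, since there the one-sided directional derivative of $F$ at $\vec 0$ along $\vec m(\vec 0)/\snorm{2}{\vec m(\vec 0)}$ equals $-\Omega(\snorm{2}{\vec m(\vec 0)})$, which already forces $\snorm{2}{\vec m(\vec 0)}=O(\gamma)$. I expect the boundary bookkeeping of (ii), and making (i) genuinely uniform, to be the most delicate points.

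Finally, run rejection sampling on $\D'$, accepting $\x$ with probability $q_{\vec r}(\x)$. The acceptance probability is $\E_{\D'}[e^{-(\dotp{\vec r}{\x})^+}]\leq 1$, and by Jensen at least $e^{-\E_{\D'}[|\dotp{\vec r}{\x}|]}=e^{-O(\Lambda)}=\Omega(1)$; an arbitrarily small perturbation of $\vec r$ makes it strictly less than $1$, so the rejection probability $p$ is an absolute constant in $(0,1)$. The law of $\vec z$ has density proportional to the product of the density of $\D'$ and $e^{-(\dotp{\vec r}{\x})^+}$, that is, a product of two log-concave functions, hence is log-concave; its mean equals $\vec m(\vec r)/\E_{\D'}[q_{\vec r}]=O(\gamma)$; its covariance is $\preceq \big(\E_{\D'}[q_{\vec r}]\big)^{-1}\E_{\D'}[\x\x^\intercal]\preceq O(1)\vec I$; and each of its one-dimensional marginals has density at most $\big(\E_{\D'}[q_{\vec r}]\big)^{-1}=O(1)$ times that of $\D'$, hence $O(1)$, so by log-concavity its covariance is also $\succeq\Omega(1)\vec I$. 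Thus $\vec z$ is $(O(\gamma),O(1))$-isotropic log-concave, and running the SGD with target accuracy a suitable constant factor smaller yields the claimed $(\gamma,O(1))$-isotropic log-concave distribution.
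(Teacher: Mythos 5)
Your proposal follows essentially the same route as the paper's proof: the same objective $F(\vec r)=\snorm{2}{\E_{\x\sim\D'}[\x\min(1,e^{-\dotp{\vec r}{\x}})]}^2$ minimized by projected SGD over a constant-radius ball, the same key structural claim that the Jacobian $-\vec S(\vec r)$ is uniformly negative definite (via anti-anti-concentration of the log-concave marginal near the origin) so that approximate stationarity forces $\snorm{2}{\vec m(\vec r)}=O(\gamma)$, the same exclusion of boundary stationary points via $\dotp{\vec m(\vec r)}{\vec r}<0$, and the same closing observation that the exponential reweighting preserves log-concavity and approximate isotropy while keeping the acceptance probability $\Omega(1)$. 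The only cosmetic difference is your separate treatment of $\vec r=\vec 0$ (which the paper avoids by using the closed indicator $\1\{\dotp{\vec r}{\x}\geq 0\}$); otherwise the argument matches the paper's Claims~\ref{clm:stationary-point} and~\ref{clm:smooth-logcon} step for step.
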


\begin{proof}

For notational convenience, let $\D'=\D_{B}^{\proj_{\vec w^\perp}}$.
First, we note that for $\vec u$ any unit vector perpendicular to $\vec w$
and any $\vec r$ perpendicular to $\vec w$,
we can apply Fact~\ref{fact:lc-basics} to the projection of $\vec x$ onto the subspace
spanned by $\vec u, \vec w$ and $\vec r$.

We denote by $c$ the constant $c_3$ from Fact~\ref{fact:lc-basics}.
As a result, we have that the distribution of $\dotp{\vec u}{\vec x'}$ will have constant probability
density in a neighborhood of $0$ and will have exponential tails. Furthermore,
this will still hold after rejection sampling with probability $\min(1,e^{-\dotp{\vec r}{\vec x'}})$. This
implies that no matter what $\vec r$ is chosen, $\vec z$ will be approximately isotropic.
Moreover, $\vec z$ will be log-concave automatically,
because the rejection sampling multiplies the pdf by a log-concave
function. Furthermore, the probability of a sample being accepted will be at
least $\pr_{\x'\sim \D'}[\dotp{\vec r}{\vec x'}\leq 0]$, which is at least $c^4$.

It remains to prove the second condition of the lemma.
We let $R$ be a sufficiently large constant and apply projected SGD to find an
approximate stationary point of the non-convex function
$F(\vec r):=\snorm{2}{\vec g(\vec r)}^2$, where $\vec g(\vec r):=\E_{\x'\sim \D'}[\vec x'\min(1,\exp(-\dotp{\vec r}{\vec x'}))]$, in the feasible set  $\{ \vec r \in \R^d : \snorm{2}{\vec r}\leq R \}$.
Note that $\vec g(\vec r)$ is the mean of the distribution of $\vec z$.

We will need the following claim about the approximate stationary points of $F(\vec r)$.
\begin{claim}\label{clm:stationary-point}
Any interior point of the feasible region, i.e., a point $\vec r$ such that $\snorm{2}{\vec r}< R$,
has $\snorm{2}{\nabla F(\vec r)} = \Omega(\snorm{2}{g(\vec r)})$.
Moreover, $F$ has no stationary points on the boundary, i.e., on the set
$\{ \vec r \in \R^d : \snorm{2}{\vec r} =  R \}$.
\end{claim}
\begin{proof}
We show that the Jacobian matrix of $\vec g$ is negative definite.
In particular, for any vector $\vec u\neq \vec 0$, we have
\begin{align*}
\dotp{ \vec u} {\mathrm{Jac}(\vec g(\vec r)) \vec u }
& = \dotp{\vec u } {-\E_{\x'\sim \D'}\left[\vec x' (\vec x' )^\intercal\1\{\dotp{\vec r}{\vec x'}\geq 0\} \exp(-\dotp{\vec r}{\vec x'})\right]\vec u}\\
& = -\E_{\x'\sim \D'}[\1\{\dotp{\vec r}{\vec x'}\geq 0\}\exp(-\dotp{\vec r}{\vec x'})\dotp{\vec u}{\vec x'}^2]\\
&=-\E_{\x'\sim \D'}\left[\1\{\dotp{\vec r}{\vec x'}\geq 0\}\exp(-\dotp{\vec r}{\vec x'})\dotp{\frac{\vec u}{\snorm{2}{\vec u}}}{\vec x'}^2\right]\snorm{2}{\vec u}^2\\
&\leq-\frac{c^2}{4}e^{-c}\E_{\x'\sim \D'}\left[\1\{c \geq\dotp{\vec r}{\vec x'}\geq 0\}\1\left\{\dotp{\frac{\vec u}{\snorm{2}{\vec u}}}{\vec x'}\geq c/2\right\}\right]\snorm{2}{\vec u}^2\\
& \leq -\frac{c^5}{24}e^{-c}\snorm{2}{\vec u}^2\;,
\end{align*}
where we used Fact~\ref{fact:lc-basics} which gives $\dotp{ \vec u} {\mathrm{Jac}(\vec g(\vec r)) \vec u }=-O(\snorm{2}{\vec u}^2)$.
Observe that the gradient of $F$ at $\vec r$ is
$\nabla F(\vec r)= 2\mathrm{Jac}(\vec g(\vec r)) \vec g(\vec r)$, where $\mathrm{Jac}(\vec g(\vec r))$
is the Jacobian of $\vec g$ at point $\vec r$, thus
$\snorm{2}{\nabla F(\vec r)}\geq \dotp{\vec u}{\mathrm{Jac}(\vec g(\vec r))\vec g(\vec r)}/\snorm{2}{\vec u}$
for any vector $\vec u$. Setting $\vec u=\vec g(\vec r)$, we have that  $\snorm{2}{\nabla F(\vec r)} = \Omega(\snorm{2}{g(\vec r)})$.

It remains to prove that there is no stationary point on the boundary.
That is, for a point $\vec r$ with $\snorm{2}{\vec r}=R$, the gradient of $F$ at $\vec r$
is a negative multiple of $\vec r$. It is easy to see that, using Fact~\ref{fact:lc-basics}
for $R$ at least a sufficiently large constant,
$\dotp{\vec g(\vec r)}{ \vec r} <0$.
So, if the gradient of $F$ at $\vec r$ is a negative multiple of $\vec r$, we have that
$$0 < \dotp{\vec g(\vec r)}{ \nabla F(\vec r)}= 2 \dotp{\vec g(\vec r) }{\mathrm{Jac}(\vec g(\vec r)) \vec g(\vec r) }< 0 \;,$$
which is a contradiction.
\end{proof}

As a result, an internal stationary point of $F$ must have $\snorm{2}{\vec g(\vec r)}$ close to
$0$, which would imply that the conditional distribution $\vec z$ with that $\vec r$ has mean less
than $\gamma$. In the following claim, we prove that $F(\vec r)$ is smooth with respect the Euclidean norm.
See Appendix~\ref{ap:logcon} for the proof.

\begin{claim} \label{clm:smooth-logcon}
The function $F(\vec r)$ is $L$-smooth, for some $L=\poly(d)$.
\end{claim}

Thus, by Fact~\ref{lem:sgd}, running Stochastic Gradient Descent for $T=\poly(d/\gamma)$,
we obtain a $\gamma$-stationary point, assuming we have an unbiased estimator for the gradient of $F$.
Note that by taking two independent samples $\x^{(1)}$ and $\x^{(2)}$ from $\D'$
and setting $\hat{\vec g}(\vec r,\vec x)=\vec x\min(1,\exp(-\dotp{\vec r}{\vec x}))$,
the quantity $2\mathrm{Jac}(\vec {\hat g}(\vec r,\x^{(1)}))\vec {\hat g}(\vec r,\x^{(2)})$
is an unbiased estimator for $\nabla F(\vec r)$.
This completes our proof.
\end{proof}

\begin{algorithm}[H]
\caption{Computing a Good Initialization Vector} \label{alg:find-nontrivial-correlation}
\begin{algorithmic}[1]
\Procedure{WarmStart}{$(\tsya, \tsyb), \eps, \vec w, \D$}\\
\textbf{Input:} Samples from an $O(\gamma, O(1))$-isotropic log-concave distribution that satisfies
the $(\tsyb,\tsya)$-Tsybakov noise condition, and a unit vector $\vec w$ such that $\theta(\vec w, \wstar) = \Theta(\eps)$.\\
\textbf{Output:} A vector $\vec v$ such that $\dotp{\vec v} {(\vec w^\ast)^{\perp_\bw }}\geq \nnew{ (\alpha \eps/A)^{O(1/\alpha)}}$.\\

\State $s\gets \Theta(\alpha \eps /\log(A\log(A)/(\alpha \eps)))$, \nnew{$\xi\gets (\Theta (A/s))^{1/\alpha}$, $s'=\Theta(\xi^3 s \eps)$}
\State \nnew{$N \gets  \poly(d) \cdot (A/(\alpha\eps))^{O(1/\alpha)}$}
\State  Let $x_0$ be a uniform random number on $[s,2s]$.
\State Let $\D'$ denote $\D$ conditioned on $\dotp{\vec w}{\vec x}\in[x_0,x_0 +s']$ and projected onto $\vec w^\perp$.
\State $\widehat{\D}\gets$ \textsc{MakeIsotropic}$(\D',1/\log(1/\xi),N)$
\State  $\bar{\vec {\x}} \gets \E_{\bx\sim \widehat{\D}_\x}[\x]$; $\bar{\vec X}\gets \E_{\bx\sim \widehat{\D}_\x}[\x\x^\intercal]$
\State Normalize all samples in $\widehat{\D}$ with $\bar{\vec {\x}} $ and $\bar{\vec X}$
\State $\vec T'_1 \gets \E_{(\vec x,y)\sim \widehat{\D}}[y\vec x]$ and $\vec T'_2\gets \E_{(\vec x,y)\sim \widehat{\D}}[y(\vec x\vec x^\intercal-\vec I)]$
\State Let $V$ be the subspace spanned by $\vec T'_1$ and the eigenvectors of $\vec T'_2$ whose eigenvalues have absolute value at least $\xi$.
\State $\textbf{return} $ a random vector in $V$.
\EndProcedure
	\end{algorithmic}
\end{algorithm}

\begin{algorithm}[H]
\caption{Putting the Distribution in Nearly-Isotropic Position} \label{alg:make_isotropic}
\begin{algorithmic}[1]
\Procedure{MakeIsotropic}{$\D_B^{\proj_{\vec w^\perp}}$, $\gamma$, $N$} \\
\textbf{Input:} Samples from the log-concave distribution $\D_B^{\proj_{\bw^\perp}}$, i.e., the log-concave distribution $\D$ conditioned on a
band $B = \{\bx: \dotp{\bw}{\bx} \in [a, b]\}$ and then projected onto $\bw^\perp$. \\
\textbf{Output:} \nnew{$N $ i.i.d. samples} from a $(\gamma,O(1))$-isotropic log-concave distribution obtained
from $\D_B^{\proj_{\bw^\perp}}$ \nnew{by rejection sampling.}\\

\State Let $F(\vec r)=\snorm{2}{\E_{\x\sim \D_B^{\proj_{\bw^\perp}}} [\vec x \min(1,\exp(-\dotp{\vec r}{\vec x}))]}^2$
\State Runs SGD on $F$ to obtain a $\gamma$-stationary point $\vec r'$. \Comment{Takes $\poly(d/\gamma)$ time.}
\State $S\gets \emptyset$
\State  \textbf{while} $|S|\leq N$
 \State  \qquad Draw sample $(\bx,y)$ from $\D_B^{\proj_{\bw^\perp}}$.
\State    \qquad $S\gets S\cup\{(\x,y)\}$ with probability $\min(1, \exp(-\dotp{\vec r'}{\bx}))$.
\State  Let $\D_S$ be the uniform distribution from $S$. \\

\textbf{return} the sample $\D_S$.
\EndProcedure
\end{algorithmic}
\end{algorithm}

We are now ready to prove Theorem \ref{thm:init-lc}.

\begin{proof}[Proof of Theorem \ref{thm:init-lc}]
Using the condition $\theta(\vec w, \wstar) = \Theta(\eps)$, we can assume that
$\vec w^\ast= \lambda_1 \vec w+ \lambda_2 (\vec w^\ast)^{\perp_\bw }$, where
$|\lambda_1| = 1- \Theta(\eps)$, and $\lambda_2 = \Theta(\eps)$.  If
$\dotp{\vec w}{\vec w^\ast}=0$, then we can directly apply
Proposition~\ref{prop:nontriavial_angle} to obtain a vector with non-trivial
correlation. For the general case, we show how we can construct a distribution
that satisfies the conditions of Proposition~\ref{prop:nontriavial_angle}.

Let $s$ be a sufficiently small multiple of $\alpha \eps /\log(A\log(A)/(\alpha
\eps))$, $\xi = (\Theta(s/A))^{1/\alpha}$, and let $s' = \xi^3 s \eps$.
Finally, let $x_0$ be a uniform random number in $[s, 2s]$. Consider the
conditional distribution on the random band $B_{x_0} = \{ \dotp{\bw}{\x} \in
  [x_0, x_0 + s']$ and projected onto $\vec w^\perp$, i.e.,
  $\D_{B_{x_0}}^{\proj_{\vec w^\perp}} : = \D^\perp$.

Set $\x^\perp = \proj_{\bw^\perp} \x$, $f^\perp(\vec x^\perp) = \sgn\left(\dotp{\vec x^\perp} {(\vec
w^{\ast})^{\perp_\bw}}+\frac{\lambda_1 x_0}{\lambda_2}\right)$, and
$\eta^\perp(\x^\perp)= \pr_{(\bx^\perp, y) \sim \D^\perp}[y\neq f^\perp(\vec z)|\vec
z=\x^\perp]$. Using Lemma~\ref{lem:noise-random-band}, we get that $\D^\perp$ is
$(O(1),O(1))$-isotropic and with high probability it holds
$\pr_{\x^\perp \sim \D_\x^\perp}[\eta^\perp(\x^\perp) \leq 1/2 - \xi  ] \geq 2/3$ and
$\pr_{\x^\perp \sim \D_\x^\perp}[\eta^\perp(\x^\perp) \geq 1/2] \leq \xi^3$.

At this point, we have that $\D^\perp$ is approximately isotropic, but may be
relatively far from mean $0$ (the mean can be at constant distance from the
origin, whereas we need it to be roughly $1/\log(1/\xi)$).  To overcome this
issue, we apply Lemma~\ref{lem:sampling}.  We define $\bar{\D}$ to be the
distribution of $\vec z$ that is produced according to Lemma~\ref{lem:sampling}
with $\gamma$ a small multiple of ${1/\log(1/\xi)}$, and consider the
distribution on $\vec z$ and $y$. Notice that $y$ is a noisy version of
$f^\perp(\vec x)$ (\nnew{with noise rate} $\eta^\perp(\vec x^\perp)$), because
rejection sampling does not increase the noise rate. Moreover, the mean of
$\dotp{\vec z} {(\vec w^{\ast})^{\perp_\bw}}+\frac{\lambda_1 x_0}{\lambda_2}$
is at most $\gamma+O(s/\eps)$, which is a sufficiently small multiple of {$1/\log(1/\xi)$}.
This means that we can apply Proposition \ref{prop:nontriavial_angle} to the distribution
on $(\vec z,y)$, yielding our final result.
\end{proof}

\subsection{Proof of Theorem~\ref{thm:cert-lc}} \label{ssec:cert-lc-proof}

Using Theorem~\ref{thm:init-lc}, we can prove
Theorem~\ref{thm:cert-lc}. The proof is similar to the proof of Theorem~\ref{thm:cert-wb},
but we additionally need to guess how far the current guess $\bw$  is from $\bw^\ast$.

\begin{proof}[Proof of Theorem~\ref{thm:cert-lc}]
First, we guess a value $\eps$ such that $\snorm{2}{\vec w-\vec w^\ast}=\Theta(\eps)$,
where $\eps=\nnew{\Omega}(\theta)$. From Proposition~\ref{prop:reduction}, for
$\rho=\nnew{O( \theta (\alpha \eps/A)^{O(1/\alpha)})}$,
the distribution $\Dperp$ is $(2, \Omega( \rho),1/\rho ,O( 1/ \rho\log(1/\rho))$-well-behaved
and satisfies the $(\tsyb,O(A / \rho))$-Tsybakov noise condition, where we used
(from Fact~\ref{fact:lc-basics}) that the values $L, R$ are absolute constants. Using
Theorem~\ref{thm:init-lc}, a random unit vector $\vec v\in \R^{d}$ with constant
probability $\delta_1$ satisfies
$\dotp{\vec v}{{(\vec w^{\ast})^{\perp_{\vec w}}}}\geq  (\alpha \eps/A)^{O(1/\alpha)}$.
We call this event $\cal E$. Conditioning on the event $\cal E$,
from Proposition~\ref{prop:cert-wb},
using $\frac{\conb^4}{\theta^2}\left(\frac{A}{\theta\alpha}\right)^{O(1/\tsyb^2)}\log(1/\delta)$
samples, with probability $1-\delta$, we get a $(\vec v', R, t_0)$ such that
$$\E_{(\vec x,y) \sim \Dperp} \left[\1[- R \leq \dotp{\vec v'}{\vec x} \leq - t_0]  y \right]
\leq - \left( \theta\alpha/\tsya \right)^{O(1/\alpha^2)}/\beta\;.$$
Using Lemma~\ref{lem:certificate_reduction}, we get that
$$\E_{(\vec x,y) \sim  \D} \left[T_{\vec w}(\bx)\dotp{\vec x}{\vec w}  y \right]\leq- \left( \theta \alpha/\tsya\right)^{O(1/\alpha^2)}/\beta \;.$$
Conditioning on the event $ {\cal E}^c$, where ${\cal E}^c$ is the complement of $\cal E$,
Algorithm~\ref{alg:find_certificate} either returns a certificate or returns nothing.
Thus, by taking $k=O(\log(1/\delta))$ random vectors, we get that the probability that event ${\cal E}^c$
happens is at most $(1-\delta_1)^k\leq e^{-\delta_1 k}$. Thus, by taking $O(\log 1/\delta)$ random vectors and
running Algorithm~\ref{alg:find_certificate} with confidence $\delta/\log(1/\delta)$, we get a certificate
with probability $1-2\delta$. Moreover, the number of samples needed to construct the empirical
distribution is $\left(\frac{ \tsya}{\theta \alpha }\right)^{O(1/\tsyb^2)}\log(1/\delta)$.
Finally, to guess the value of $\eps$, it suffices to run the algorithm for the values $\theta,2 \theta,\ldots ,1$
which will increase the complexity by a $\log(1/\theta)$ factor. This completes the proof of Theorem~\ref{thm:cert-lc}.
\end{proof}

\section{Learning a Near-Optimal Halfspace via Online Convex Optimization}\label{sec:ogd}

In this section we present a black-box approach that uses our certificate algorithms
from the previous sections to learn halfspaces in the presence of Tsybakov noise.
In more detail, we provide a generic result showing that one can apply a certificate oracle
in a black-box manner combined with online gradient descent to learn the unknown halfspace.
We note that an essentially identical approach, with slightly different formalism,
was given in \cite{DKTZ20b}.

Using the aforementioned approach, we establish the two main algorithmic results
of this paper.

\begin{theorem}[Learning Tsybakov Halfspaces under Well-Behaved Distributions] \label{thm:pac-wb}
Let $\D$ be a $(3, L, R, U, \beta)$-well-behaved isotropic distribution on $\R^{d} \times \{\pm 1\}$ that
satisfies the $(\tsyb,\tsya)$-Tsybakov noise condition with respect to an unknown halfspace
$f(\bx) = \sgn(\dotp{\vec w^{\ast}}{\bx} )$. There exists an algorithm that  draws
$N=   \beta^4 \left(\frac{d \, U \, \tsya}{R L \, \eps}\right)^{O(1/\tsyb)} \log\left(1/\delta\right)$ samples from $\D$,
runs in $\poly(N,d)$ time, and computes a vector $\wh{\vec w}$ such that, with probability $1-\delta$,
we have that $\err_{0-1}^{\D_{\bx}}(h_{\wh{\bw}}, f) \leq \eps$.
\end{theorem}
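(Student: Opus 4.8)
The plan is to invoke the certificate-to-learning reduction of \cite{DKTZ20b} reviewed in Section~\ref{sec:ogd}: run online gradient descent (OGD) over the unit ball, feeding it as (sub)gradients the certifying functions produced by Theorem~\ref{thm:cert-wb}. I would first fix a target angle $\theta = \widetilde{\Theta}\big(\eps/(U\beta^2)\big)$, chosen so that $\theta(\vec w,\vec w^{\ast}) \le \theta$ implies $\err_{0-1}^{\D_{\bx}}(h_{\vec w}, f) \le \eps$. This angle-to-error conversion is routine and is the only place the anti-concentration parameter $U$ enters: the symmetric difference of $h_{\vec w}$ and $f$ projects onto $\mathrm{span}(\vec w,\vec w^{\ast})$ as a pair of opposite wedges of total angular width $2\theta$, so the bound $\gamma_V \le U$ on the $2$-dimensional density bounds the mass of this region inside a ball of radius $r$ by $O(U\theta r^2)$, while the sub-exponential tail $\exp(1-t/\beta)$ controls the mass outside; optimizing over $r$ gives error $O\big(U\beta^2\theta\log^2(1/(U\theta))\big)$. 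Write $\gamma := \frac1\beta\big(LR\theta/(\tsya d)\big)^{O(1/\alpha)}$ for the certificate strength guaranteed by \eqref{eqn:cert-ineq} at this $\theta$.

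Next I would iterate as follows. Start from an arbitrary unit vector $\vec w_1$ and maintain $\vec w_t$ in the unit ball $K = \{\vec w : \snorm{2}{\vec w} \le 1\}$. At round $t$, draw a fresh batch of samples, run the algorithm of Theorem~\ref{thm:cert-wb} on $\vec w_t$ with angle parameter $\theta$ and confidence $\delta/(3T)$ to obtain a function $T_{\vec w_t}$ of the form \eqref{eq:certificate_form}, and empirically test whether $\E_{(\bx,y)\sim\D}[T_{\vec w_t}(\bx)\,y\,\dotp{\vec w_t}{\bx}] \le -\gamma/2$. If the test fails, output $\vec w_t$ and halt: by the contrapositive of the oracle guarantee this can happen, with high probability over the batch, only if $\theta(\vec w_t,\vec w^{\ast}) < \theta$, so $\vec w_t$ is $\eps$-accurate. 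If the test passes, treat the linear (hence convex) loss $\ell_t(\vec w) := \E_{(\bx,y)\sim\D}[T_{\vec w_t}(\bx)\,y\,\dotp{\vec w}{\bx}]$ as the round-$t$ loss and take the OGD step $\vec w_{t+1} = \proj_K\big(\vec w_t - \lambda\,\wh{\vec g}_t\big)$, where $\wh{\vec g}_t$ is an empirical estimate of $\nabla\ell_t = \E[T_{\vec w_t}(\bx)\,y\,\bx]$ and $\lambda$ is the standard OGD step size. On a non-halting round the test ensures $\dotp{\wh{\vec g}_t}{\vec w_t} < 0$, so a Pythagorean computation gives $\snorm{2}{\vec w_t - \lambda\wh{\vec g}_t} \ge \snorm{2}{\vec w_t} = 1$; thus every iterate stays on the unit sphere and no renormalization is needed.

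The analysis combines two facts. Since every $T_{\vec w_t} \ge 0$, Fact~\ref{obs:optimal_condition}(1) gives $\ell_t(\vec w^{\ast}) \ge 0$ for all $t$; and on every round that does not halt, $\ell_t(\vec w_t) \le -\gamma/2$. Over $K$, which has diameter $2$ and over which $\snorm{2}{\nabla\ell_t} = \snorm{2}{\E[T_{\vec w_t}(\bx)\,y\,\bx]} \le \E[T_{\vec w_t}(\bx)\snorm{2}{\bx}] \le \sqrt d/\sigma_1 = O(d/(\theta R))$ — using $T_{\vec w_t} \le 1/\sigma_1$ on its support, the band $\sigma_1 = \Theta(\theta R/\sqrt d)$ from Proposition~\ref{prop:reduction}, and isotropy $\E[\snorm{2}{\bx}^2] = d$ — OGD on the losses $-\ell_t$ has regret $O(GD\sqrt T)$ against the comparator $\vec w^{\ast}$. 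Hence $(\gamma/2)\,T \le \sum_t\big(-\ell_t(\vec w_t)\big) - \sum_t\big(-\ell_t(\vec w^{\ast})\big) \le O(GD\sqrt T)$, so the loop must halt within $T = O(G^2 D^2/\gamma^2) = \mathrm{poly}(\beta, d, 1/L, 1/R, \tsya, U, 1/\eps)^{O(1/\alpha)}$ rounds. Each round uses $\big((\tsya/(LR))(d/\theta)\big)^{O(1/\alpha)}\log(1/\delta) = \big(dU\tsya/(LR\eps)\big)^{O(1/\alpha)}\log(1/\delta)$ samples for Theorem~\ref{thm:cert-wb} plus $\poly(d/\gamma)\log(T/\delta)$ samples to form $\wh{\vec g}_t$ and run the test; a union bound over the $T$ rounds (oracle failures, test failures, and $\snorm{2}{\wh{\vec g}_t - \nabla\ell_t}$ deviations, each at level $\delta/(3T)$) together with $N = T \times (\text{per-round cost})$ — collecting terms and absorbing lower-order logarithmic factors into the $O(1/\alpha)$ exponent — yields $N = \beta^4\,(dU\tsya/(RL\eps))^{O(1/\alpha)}\log(1/\delta)$ and runtime $\poly(N,d)$.

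I expect the conceptual content here to be light, since the reduction is essentially that of \cite{DKTZ20b}; the main obstacle is the bookkeeping in passing from the population statements (Fact~\ref{obs:optimal_condition} and Theorem~\ref{thm:cert-wb}) to empirical versions uniformly across all $T$ rounds — in particular, ensuring the per-round empirical test faithfully distinguishes a valid certificate from the situation $\theta(\vec w_t,\vec w^{\ast}) < \theta$ (so that halting genuinely certifies $\eps$-accuracy and non-halting genuinely supplies a $(-\gamma/2)$-loss for the regret accounting), and that the gradient estimates $\wh{\vec g}_t$ are accurate enough that the OGD regret bound degrades by only a constant factor. Because the certificate functions in \eqref{eq:certificate_form} are specified by $O(d)$ real parameters and are uniformly bounded by $1/\sigma_1$, standard uniform-convergence bounds — or simply using disjoint fresh batches per round and a union bound — handle all of this without new ideas.
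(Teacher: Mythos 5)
Your overall architecture matches the paper's: a certificate oracle built from Theorem~\ref{thm:cert-wb}, an online-gradient-descent loop whose regret bound limits the number of rounds on which a certificate can be produced, a halting rule exploiting the one-sidedness of the oracle, and a final angle-to-error conversion using the density upper bound $U$ (the paper's Claim~\ref{lem:angle_zero_one}). However, there is a genuine inconsistency at the core of your OGD step, and the device you introduce to avoid renormalization does not survive fixing it. You invoke the regret bound for the losses $-\ell_t$ — correctly, since that is the orientation in which $-\ell_t(\vec w_t)\geq \gamma/2$ on non-halting rounds while $-\ell_t(\wstar)\leq 0$, so the regret bound forces $T=O(G^2D^2/\gamma^2)$. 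But OGD on $-\ell_t$ takes the step $\vec w_{t+1}=\proj_K(\vec w_t+\lambda\wh{\vec g}_t)$ with $\wh{\vec g}_t\approx \E[T_{\vec w_t}(\bx)\,y\,\bx]$, not $\vec w_t-\lambda\wh{\vec g}_t$ as you wrote. Your Pythagorean claim that the iterates stay on the unit sphere uses $\dotp{\wh{\vec g}_t}{\vec w_t}<0$ together with the minus sign; with the correct plus sign the cross term $2\lambda\dotp{\wh{\vec g}_t}{\vec w_t}\leq -\lambda\gamma$ is negative and the iterates can drift into the interior of the ball. Conversely, if you keep the minus sign you are running OGD on $\ell_t$, for which the regret inequality $\sum_t\ell_t(\vec w_t)-\sum_t\ell_t(\wstar)\leq O(GD\sqrt T)$ is vacuously satisfied (its left side is already at most $-(\gamma/2)T$) and yields no bound on $T$; geometrically, since $\dotp{\vec g_t}{\wstar}\geq 0$ and $\dotp{\vec g_t}{\vec w_t}<0$, subtracting $\lambda\vec g_t$ moves the iterate \emph{away} from $\wstar$.

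Once iterates can have $\snorm{2}{\vec w_t}<1$, the per-round lower bound collapses: the oracle's guarantee applies to the normalized vector, so by linearity you only get $-\ell_t(\vec w_t)\geq \snorm{2}{\vec w_t}\,\gamma/2$, which is useless as $\snorm{2}{\vec w_t}\to 0$, and the regret accounting no longer bounds $T$. This is precisely the degeneracy the paper's Proposition~\ref{pro:certificate_optimization} is built around: it adds a regularizer $\rho(\eps)/2$ to the certifying function, i.e., uses $\hat\ell_t(\vec w)=-\dotp{\vec w}{\E[(T_{\vec w^{(t)}}(\bx)+\rho(\eps)/2)\,y\,\bx]}$, so that by Lemma~\ref{lem:tsybakov_expectation} the comparator satisfies $\ell_t(\wstar)\leq -\rho(\eps)\cdot\Omega\big(R\,(R^3L)^{1/\alpha}\big)$ uniformly in $t$ (Lemma~\ref{lem:expectation_error}), keeping the per-round regret gap bounded below even when $\snorm{2}{\vec w^{(t)}}$ is small, with a separate branch for $\vec w^{(t)}=\vec 0$. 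Your proposal omits this regularizer and offers no substitute once the stay-on-the-sphere claim fails. The remaining components — the halting logic, the bound on $\snorm{2}{\nabla\ell_t}$, the per-round sample complexity and union bound, and the final conversion from angle to zero-one error — are fine.
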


For the important special case of log-concave distributions on examples,
we give a more efficient learning algorithm.

\begin{theorem}[Learning Tsybakov Halfspaces under Log-concave Distributions] \label{thm:pac-lc}
Let $\D$ be a distribution on $\R^{d} \times \{\pm 1\}$ that satisfies the $(\tsyb,\tsya)$-Tsybakov noise condition
with respect to an unknown halfspace $f(\bx) = \sgn(\dotp{\vec w^{\ast}}{\bx} )$ and is such that
$\D_{\bx}$ is isotropic log-concave. There exists an algorithm that  draws
$N= \poly(d) \cdot\left(\frac{\tsya}{ \eps}\right)^{O(1/\tsyb^2)} \log\left(1/\delta\right)$ samples from $\D$,
runs in $\poly(N,d)$ time, and computes a vector $\wh{\vec w}$ such that,
with probability $1-\delta$, we have that $\err_{0-1}^{\D_{\bx}}(h_{\wh{\bw}}, f) \leq \eps$.
\end{theorem}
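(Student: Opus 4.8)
The plan is to feed the log-concave certificate algorithm of Theorem~\ref{thm:cert-lc} as a black box into the certificate-to-learner reduction of this section, which is the online-gradient-descent argument of~\cite{DKTZ20b}. Since $\D_{\bx}$ is isotropic log-concave, a standard anti-concentration estimate in the two-dimensional subspace $\mathrm{span}(\vec w,\wstar)$ gives $\err_{0-1}^{\D_{\bx}}(h_{\vec w},f)=O(\theta(\vec w,\wstar))$, so it suffices to output a unit vector $\wh{\vec w}$ with $\theta(\wh{\vec w},\wstar)<\theta$ for $\theta$ a sufficiently small constant multiple of $\eps$. Fix this $\theta$ and let $\gamma=(\theta/\tsya)^{O(1/\alpha^2)}$ denote the absolute value of the certificate guarantee that Theorem~\ref{thm:cert-lc} provides for this value of $\theta$.

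The algorithm runs projected online gradient descent over the unit ball $\B^d\subseteq\R^d$, producing iterates $\vec w_1,\vec w_2,\dots$; at step $t$ it calls the certificate oracle of Theorem~\ref{thm:cert-lc} on input $(\vec w_t/\snorm{2}{\vec w_t},\theta)$ (keeping the iterates bounded away from the origin as in~\cite{DKTZ20b}). If the oracle returns a function $T_{\vec w_t}$ of the form~\eqref{eq:certificate_form} whose empirical expectation $\E_{\wh{\D}}[T_{\vec w_t}(\bx)\,y\,\dotp{\vec w_t}{\bx}]=\E_{\wh{\D}}[\psi(\bx)\,y]$ (a bounded, $[-1,1]$-valued quantity) is at most $-\gamma/2$, we compute an empirical estimate $\wh{\vec g}_t$ of $\vec g_t:=\E_{\D}[T_{\vec w_t}(\bx)\,y\,\bx]$ and take the step $\vec w_{t+1}=\Pi_{\B^d}(\vec w_t-\lambda\wh{\vec g}_t)$; otherwise we halt and output $\vec w_t$. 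Two inequalities drive the analysis. First, since $T_{\vec w_t}\geq 0$, item~1 of Fact~\ref{obs:optimal_condition} gives $\dotp{\vec g_t}{\wstar}=\E_{\D}[T_{\vec w_t}(\bx)\,y\,\dotp{\wstar}{\bx}]\geq 0$. Second, whenever a step is taken, $\dotp{\vec g_t}{\vec w_t}=\E_{\D}[T_{\vec w_t}(\bx)\,y\,\dotp{\vec w_t}{\bx}]\leq -\gamma/4$ up to the controlled estimation error, so $\dotp{\wh{\vec g}_t}{\vec w_t-\wstar}\leq -\gamma/8$ on every step.

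To bound the number of rounds, note $\snorm{2}{\wh{\vec g}_t}\leq G$ where, using $|T_{\vec w}(\bx)|\leq \1\{\dotp{\vec w}{\bx}\in[\sigma_1,\sigma_2]\}/\sigma_1$ for the band parameters $\sigma_1,\sigma_2$ chosen in the proof of Theorem~\ref{thm:cert-lc} (both $\Theta(\rho)$ with gap $\Theta(\rho)$, for $R=\Theta(1)$ by Fact~\ref{fact:lc-basics}), Cauchy--Schwarz and the isotropy of $\D_{\bx}$ give $G=O(\sqrt{d/\rho})$. The textbook regret bound for online gradient descent over a set of diameter $2$ yields $\sum_{t=1}^{T}\dotp{\wh{\vec g}_t}{\vec w_t-\wstar}\geq -O(G\sqrt{T})$, so a run that has not halted after $T$ rounds forces $\gamma T/8\leq O(G\sqrt{T})$, i.e. $T\leq T_0:=O((G/\gamma)^2)$. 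Since $1/\rho$ and $1/\gamma$ are both $\poly(d)\,(\tsya/\eps)^{O(1/\alpha^2)}$ after absorbing lower-order factors, $T_0=\poly(d)\,(\tsya/\eps)^{O(1/\alpha^2)}$, so the algorithm provably halts within $T_0$ iterations.

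It remains to fix the per-round sample sizes and union-bound. At each of the at most $T_0$ rounds we call the oracle of Theorem~\ref{thm:cert-lc} with confidence $\delta/(2T_0)$, use $O(\log(T_0/\delta)/\gamma^2)$ fresh samples to estimate the bounded verification expectation to within $\gamma/4$, and $\poly(d,1/\rho,1/\gamma)\log(T_0/\delta)$ fresh samples to estimate $\vec g_t$ to within $\gamma/16$ in $\ell_2$. On the event (probability $\geq 1-\delta$ by a union bound over the rounds) that all these estimates are accurate and all oracle calls succeed, whenever the oracle's output fails verification at $\vec w_t$ we must have $\theta(\vec w_t,\wstar)<\theta$ --- otherwise Theorem~\ref{thm:cert-lc} would return a function with true expectation at most $-\gamma$, which would pass verification --- and hence $\err_{0-1}^{\D_{\bx}}(h_{\wh{\bw}},f)\leq\eps$. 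Summing over the $T_0$ rounds, the total sample complexity is $N=\poly(d)\,(\tsya/\eps)^{O(1/\alpha^2)}\log(1/\delta)$ and the running time is $\poly(N,d)$. As the certificate of Theorem~\ref{thm:cert-lc} already encapsulates the main new ideas, there is no conceptual obstacle left here; the only real work is the parameter bookkeeping that confirms $\gamma$, $G$, and the per-round sample sizes combine to the claimed $(\tsya/\eps)^{O(1/\alpha^2)}$ dependence.
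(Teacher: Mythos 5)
Your proposal follows essentially the same route as the paper: plug the certificate oracle of Theorem~\ref{thm:cert-lc} into the online-gradient-descent reduction (Proposition~\ref{pro:certificate_optimization} / Algorithm~\ref{alg:OPGD-oracle}) and convert the resulting angle bound to zero-one error using anti-concentration of log-concave marginals, so the argument is correct in substance. The one spot you gloss over is the behavior of iterates near the origin: your separation inequality $\dotp{\vec g_t}{\vec w_t}\leq-\gamma/4$ really reads $\dotp{\vec g_t}{\vec w_t}\leq -\snorm{2}{\vec w_t}\,\gamma/2$ since the certificate is computed for the normalized iterate, and the paper resolves this not by keeping $\snorm{2}{\vec w_t}$ bounded below but by adding the regularizer $\tfrac{\rho(\eps)}{2}\,y\,\bx$ to the loss (see Lemma~\ref{lem:expectation_error}), which pushes $\ell_t(\wstar)$ strictly below $-\rho G$ while $\ell_t(\vec w^{(t)})\geq 0$ for any norm of the iterate; you should either adopt that regularizer or supply the bound on $\snorm{2}{\vec w_t}$ you allude to.
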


To formally describe the approach of this section, we require the notion of a
{\em certificate oracle}. A certificate oracle is an algorithm that, given a
candidate weight vector $\bw$ and an accuracy parameter $\rho> 0$, it
returns a certifying function $T(\bx)$. Recall that a certifying function is a non-negative function
that satisfies
$
\E_{(\bx, y) \sim \D}[T(\bx) y \dotp{\bx}{\vec w}] \leq - \rho
$
for some $\rho > 0$.  We have already described how to efficiently implement
such an oracle in Section~\ref{sec:cert-wb}.

\begin{definition}[Certificate Oracle] \label{def:certificate_oracle}
Let $\D$ be a distribution on $\R^d \times \{ \pm 1\}$that satisfies the $(\tsyb,\tsya)$-Tsybakov noise
condition with respect to an unknown halfspace $f(\bx) = \sgn(\dotp{\vec w^{\ast}}{\bx})$.
For  a decreasing function $\rho(\cdot): \R_+  \mapsto \R_+$, we
define $\cal{C}(\vec w, \theta, \delta)$ to be the following $\rho$-certificate
oracle:  For any unit vector $\vec w$ and $\theta > 0$, if $\theta(\vec w,\vec w^\ast)\geq \theta$,
then a call to $\cal{C}(\vec w, \theta, \delta)$, with probability at least $1-\delta$,
returns a function $T(\bx)$, with $\snorm{\infty}{T} \leq 1$ such that
$$\E_{(\bx, y) \sim \D}[T(\bx) y \dotp{\bx}{\vec w}] \leq - \rho(\theta) \,,$$
and with probability at most $\delta$ returns ``FAIL''.
\end{definition}

\begin{remark}\label{rem:one-sided}
We note that the above oracle provides a ``one-sided'' guarantee in the following sense.
When the candidate vector $\bw$ satisfies $\theta(\bw, \wstar) \geq \theta$,
the oracle is required to return a certifying function $T$ with high probability.
But it may also return such a function when $\theta(\bw, \wstar) \leq \theta$.
In other words, the oracle is not required to output ``FAIL" with high probability when
$\bw$ is nearly parallel to $\wstar$. We show that an one-sided oracle of non-optimality
suffices for our purposes.
\end{remark}

\begin{remark}\label{rem:oracle-props}
By Fact~\ref{obs:optimal_condition}, the optimal halfspace
$\wstar$ satisfies $\E_{(\bx, y) \sim \D}[T(\bx) \, y \dotp{\bx}{\wstar}] \geq 0$
for any non-negative function $T$. Therefore, as $\vec w$ approaches $\wstar$,
we have that
$$\lim_{\theta(\vec w, \wstar) \to 0} \inf_{T: \snorm{\infty}{T} \leq 1} \E_{(\bx, y) \sim \D}[T(\bx) \, y \dotp{\bx}{\vec w}] = 0,$$
where $\snorm{\infty}{T}$ is the $\ell_\infty$ norm for functions, i.e., $\snorm{\infty}{T} = \sup_{\bx \in \R^d} |T(\bx)|$.
That is, $\lim_{\theta \to 0} \rho(\theta) = 0$ and it is natural that the non-negative function $\rho(\theta)$
is a decreasing function of the (lower bound on the) angle between $\bw$ and $\wstar$.
Intuitively, the closer $\bw$ is to $\wstar$, the harder it is to find a certifying
function $T$ that makes $\E_{(\bx, y) \sim \D}[T(\bx) \, y \dotp{\bx}{\bw}]$ sufficiently negative.
Moreover, if our goal is to estimate the vector $\wstar$ within angle $\eps$,
we can always give the oracle this worst-case target angle, i.e., $\theta = \eps$.
Finally, notice that when the distribution $\D$ is isotropic, we have $\rho(\theta) \leq 1$,
as follows from $\snorm{\infty}{T} \leq 1$ and the Cauchy-Schwarz inequality.
\end{remark}

Given a certificate oracle, the following result shows we can efficiently approximate
the optimal halfspace using projected online gradient descent.

\begin{proposition}[Certificate-Based Optimization] \label{pro:certificate_optimization}
Let $\D$ be a $(3, L, R, \beta)$-well-behaved isotropic distribution on $\R^{d} \times \{\pm 1\}$
that satisfies the $(\tsyb,\tsya)$-Tsybakov noise condition with respect to an unknown halfspace
$f(\bx) = \sgn(\dotp{\vec w^{\ast}}{\bx})$, and let $\cal C$ be  a $\rho$-certificate oracle.
There exists an algorithm that makes at most
$T = \frac{1}{\rho^2(\eps)} \frac 1\alpha\left(\frac{A}{R\;L}\right)^{O(1/\tsyb)}$
calls to $\cal{C}(\cdot)$, draws
$N = d   \frac{T\beta^2}{ \rho^2(\eps) }\log\left(\frac{dT}{\delta \rho(\eps)}\right)$ samples from $\D$,
runs in time \nnew{$\poly(T, N, d)$}, and computes a weight vector $\widehat{\vec w}$ such that
with probability $1-\delta$ we have that $\theta(\widehat{\vec w}, \wstar) \leq \eps$ .
\end{proposition}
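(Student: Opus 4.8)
The plan is to reduce the problem of finding $\widehat{\vec w}$ with $\theta(\widehat{\vec w}, \wstar) \le \eps$ to an online convex optimization game over the unit ball, where the certificate oracle supplies the loss functions.

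First I would set up the OGD framework. We run projected online gradient descent over the unit ball $\mathcal B = \{\vec w : \snorm{2}{\vec w}\le 1\}$ (or the unit sphere, projecting back to it), producing iterates $\vec w^{(1)}, \dots, \vec w^{(T)}$. At step $i$, having the current candidate $\vec w^{(i)}$, we query the $\rho$-certificate oracle $\mathcal C(\vec w^{(i)}, \eps, \delta/T)$. If the oracle returns ``FAIL'' we will argue below that (with high probability) this means $\theta(\vec w^{(i)}, \wstar) \le \eps$, and we are done — output $\widehat{\vec w} = \vec w^{(i)}$. Otherwise it returns $T_i(\bx)$ with $\snorm{\infty}{T_i}\le 1$ and $\E_{(\bx,y)\sim \D}[T_i(\bx)\, y\, \dotp{\bx}{\vec w^{(i)}}] \le -\rho(\eps)$. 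We then define the linear loss $\ell_i(\vec w) = \E_{(\bx,y)\sim \D}[T_i(\bx)\, y\, \dotp{\bx}{\vec w}]$ with (sub)gradient $\vec z_i = \E_{(\bx,y)\sim \D}[T_i(\bx)\, y\, \bx]$, which we estimate from fresh samples to accuracy $O(\rho(\eps))$ in $\ell_2$-norm using $\wt O(d\beta^2/\rho^2(\eps))$ samples (this is where the sample count $N$ and the $\beta$ dependence come from, since sub-exponential concentration bounds $\snorm{2}{\vec z_i}$ and enables a vector Bernstein/Hoeffding bound). We perform the OGD update $\vec w^{(i+1)} = \Pi_{\mathcal B}(\vec w^{(i)} - \eta\, \wh{\vec z}_i)$ with an appropriate step size $\eta$.

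Next I would run the standard OGD regret bound. By the key property (Fact~\ref{obs:optimal_condition}, item 1), the optimal comparator $\wstar$ has $\ell_i(\wstar) = \E[T_i(\bx)\, y\, \dotp{\bx}{\wstar}] \ge 0$ for every $i$. Therefore the regret bound $\sum_{i=1}^T \ell_i(\vec w^{(i)}) - \sum_{i=1}^T \ell_i(\wstar) \le O(G\sqrt T)$, where $G = O(\beta)$ bounds $\snorm{2}{\vec z_i}$ (via sub-exponential concentration and $\snorm{\infty}{T_i}\le 1$), yields
$$\sum_{i=1}^T \ell_i(\vec w^{(i)}) \le \sum_{i=1}^T \ell_i(\wstar) + O(\beta\sqrt T) \le O(\beta\sqrt T).$$
On the other hand, at every step where the oracle did \emph{not} fail we have $\ell_i(\vec w^{(i)}) \le -\rho(\eps)$ (up to the $O(\rho(\eps))$ estimation error, which we absorb by a constant factor in $\rho$). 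So if the algorithm never terminated early, $\sum_i \ell_i(\vec w^{(i)}) \le -\Omega(T\rho(\eps))$, forcing $T\rho(\eps) = O(\beta\sqrt T)$, i.e. $T = O(\beta^2/\rho^2(\eps))$. Picking $T$ a suitable constant factor larger — specifically $T = \frac{1}{\rho^2(\eps)}\cdot\frac 1\alpha\left(\frac{A}{RL}\right)^{O(1/\tsyb)}$, which subsumes the $\beta^2$ factor given how $\rho$ depends on the parameters in Theorem~\ref{thm:cert-wb} — produces a contradiction unless the algorithm halted early, at some step $i^\star$ where the oracle failed. By the oracle guarantee (Definition~\ref{def:certificate_oracle} / Remark~\ref{rem:one-sided}): conditioned on the high-probability event that all $T$ oracle calls behave correctly, a ``FAIL'' at step $i^\star$ can only occur when $\theta(\vec w^{(i^\star)}, \wstar) < \eps$ — because whenever $\theta(\vec w^{(i^\star)}, \wstar) \ge \eps$ the oracle returns a valid certificate with probability $\ge 1 - \delta/T$. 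Hence $\widehat{\vec w} = \vec w^{(i^\star)}$ satisfies $\theta(\widehat{\vec w}, \wstar) \le \eps$.

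Finally I would handle the failure-probability bookkeeping: union-bound over the $T$ oracle calls (each failing with probability $\le \delta/(2T)$) and over the $T$ gradient-estimation events (each with $N/T$ samples failing with probability $\le \delta/(2T)$), giving total failure probability $\le \delta$; the sample complexity sums to $N = d\,\frac{T\beta^2}{\rho^2(\eps)}\log\!\left(\frac{dT}{\delta\rho(\eps)}\right)$ and the runtime is $\poly(T,N,d)$ since each OGD step and each oracle call is polynomial. The main obstacle — and the one subtle point — is the logical structure of the early-termination argument: we must be careful that the oracle's ``one-sided'' guarantee is enough, i.e. that failing to terminate within $T$ steps is genuinely impossible on the good event, so that the only consistent outcome is an early stop at a near-optimal $\vec w^{(i^\star)}$; this is exactly what Remark~\ref{rem:one-sided} and the non-negativity $\ell_i(\wstar)\ge 0$ buy us, and it is where the regret calculation must be pinned down with the correct constants so the contradiction is clean.
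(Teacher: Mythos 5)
Your overall strategy is the same as the paper's (feed certificate-generated linear losses to projected OGD over the unit ball, use the regret bound against the comparator $\wstar$ to derive a contradiction, and invoke the one-sided oracle guarantee to conclude that a FAIL means $\theta(\vec w^{(i)},\wstar)\leq \eps$). However, there are two genuine problems with the argument as written.

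First, the signs in your regret argument are inconsistent, and as stated there is no contradiction. You define $\ell_i(\vec w)=\E_{(\bx,y)\sim\D}[T_i(\bx)\,y\,\dotp{\bx}{\vec w}]$ and run gradient \emph{descent} on it, so the regret bound gives an \emph{upper} bound $\sum_i \ell_i(\vec w^{(i)})\leq \sum_i\ell_i(\wstar)+O(G\sqrt T)$. With your conventions $\ell_i(\wstar)\geq 0$ (Fact~\ref{obs:optimal_condition}) and $\ell_i(\vec w^{(i)})\leq -\rho(\eps)$, so both facts you derive are upper bounds on $\sum_i\ell_i(\vec w^{(i)})$ and are mutually consistent for every $T$; your step ``$\sum_i\ell_i(\wstar)+O(\beta\sqrt T)\leq O(\beta\sqrt T)$'' silently uses $\sum_i\ell_i(\wstar)\leq 0$, which is the wrong direction. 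To get a contradiction you need the \emph{iterates} to incur large loss and the \emph{comparator} small loss, i.e., you must negate the loss (equivalently do gradient ascent): set $\ell_i(\vec w)=-\dotp{\E[T_i(\bx)y\bx]}{\vec w}$ and update $\vec w^{(i+1)}=\Pi_{\mathcal B}(\vec w^{(i)}+\eta\,\wh{\vec z}_i)$, so that $\ell_i(\vec w^{(i)})\geq \rho(\eps)\snorm{2}{\vec w^{(i)}}$ and $\ell_i(\wstar)\leq 0$. This is exactly the paper's convention.

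Second, even after fixing the sign, the per-round separation you get is only $\ell_i(\vec w^{(i)})-\ell_i(\wstar)\geq \rho(\eps)\snorm{2}{\vec w^{(i)}}$, and nothing in your argument prevents the projected iterates from drifting toward $\vec 0$ (the sphere is not convex, so you cannot project onto it and keep the regret bound), in which case the cumulative separation can be $o(T\rho(\eps))$ and the contradiction fails. The paper handles this by adding a regularizer $\lambda=\rho(\eps)/2$ to the certificate, i.e., using the gradient $-\E[(T_i(\bx)+\lambda)y\bx]$: then Lemma~\ref{lem:tsybakov_expectation} gives $\ell_i(\wstar)\leq -\frac{\rho(\eps)}{2}\E[|\dotp{\wstar}{\bx}|(1-2\eta(\bx))]\leq -\rho(\eps)\,\alpha(RL/A)^{O(1/\alpha)}$ uniformly, while $\ell_i(\vec w^{(i)})\geq \snorm{2}{\vec w^{(i)}}\rho(\eps)/2\geq 0$, so the per-round gap is bounded below independently of the iterates' norms (with a separate trivial branch when $\vec w^{(i)}=\vec 0$, where the oracle cannot even be queried). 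This regularization is not cosmetic: it is precisely the source of the extra factor $\frac{1}{\alpha}(A/(RL))^{O(1/\alpha)}$ in the stated bound on $T$, which your $T=O(\beta^2/\rho^2(\eps))$ accounting does not explain. The remaining ingredients of your write-up (empirical estimation of the gradients to accuracy $O(\rho(\eps))$, the union bound over the $T$ oracle calls, and the one-sided termination logic) match the paper and are fine.
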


The algorithm establishing Proposition~\ref{pro:certificate_optimization} is given in pseudocode
in Algorithm~\ref{alg:OPGD-oracle}. In the remaining part of this section, we provide a proof sketch of
Proposition~\ref{pro:certificate_optimization}. The full argument is given in Appendix~\ref{ap:ogd}.

\begin{proof}[Proof Sketch.]
The main idea of the algorithm is to provide a sequence of adaptively chosen
convex loss functions to an Online Convex Optimization algorithm,
for example Online Gradient Descent (OGD). In more detail, we construct these loss functions
using our certificate oracle $\cal{C}$.  At round $t$, we call the certificate oracle
to obtain a certifying function $T(\vec x)$ and set
$$\ell_t(\vec w) = - \dotp{ \E_{(\vec x,y) \sim \D} \left[(T(\vec x) + \lambda)  y \bx \right] }{\vec w} \,,
$$
where $\lambda > 0$ acts similarly to a
regularizer.  The term $\lambda \dotp{\E_{(\bx,y)\sim \D}[y \bx]}{\bw}$
prevents the trivial vector $\vec w = \vec 0$ from being a valid solution
(in the sense of one that minimizes regret, see also the full proof in Appendix~\ref{ap:ogd}).

The crucial property of the above sequence of loss functions is that they are positive and bounded away from $0$
when $\bw$ is far from $\wstar$. Their value will always be greater than (roughly) $\rho(\eps)$,
given the guarantee of our certificate oracle from Definition~\ref{def:certificate_oracle} for $\theta = \eps$
and assuming that the regularizer $\lambda$ is sufficiently small.

We then provide this convex loss function to the OGD
algorithm that updates the guess according to the gradient of $\ell_t(\vec w)$.
Our analysis follows from the regret guarantee of OGD.
Since we provide convex (and in particular linear) loss functions to OGD, we
know the average regret will converge to $0$ as $T \to \infty$ with a
convergence rate roughly $O(1/\sqrt{T})$.  This means that the oracle can only
succeed in returning certifying functions for a bounded number of rounds, since
every time the oracle succeeds, OGD suffers loss of at least $\rho(\eps)$.
Therefore, after roughly $1/\rho(\eps)^2$ rounds the regret will be so small
that for at least one round the certificate oracle must have failed.
Our algorithm then stops and returns the halfspace of that iteration.
Even though our certificate is ``one-sided", we know that
the probability that it failed with $\theta(\bw, \wstar)$ being larger than
$\eps$ is very small, which implies that we have indeed found a vector $\bw$
very close to $\wstar$.
\end{proof}

\begin{algorithm}[H]
\caption{Learning Halfspaces with Tsybakov Noise using a $\rho$-certificate oracle $\cal C$} \label{alg:OPGD-oracle}
\begin{algorithmic}[1]
\Procedure{ALG}{$\eps,\delta,\D ,\cal C$}
\Comment{$\eps$: accuracy, $\delta$: confidence} \\
\textbf{Input:} $\D$ is a $(3, L, R, \beta)$-well-behaved distribution that satisfies the $(\tsyb,\tsya)$-Tsybakov noise condition, and
$\cal C$ is a $\rho$-certificate oracle.\\
\textbf{Output:} A vector $\widehat{\vec w}$ such that $\err_{0-1}^{\D_{\bx}}(h_{\wh{\bw}}, f) \leq \eps$ with probability at least $1-\delta$.
\State ${\vec w}^{(0)} \gets \vec e_1$
\State $T \gets   \frac {1}{\rho(\eps)^2\alpha}\left(\frac{A}{R\;L}\right)^{O(1/\tsyb)}$
\State Draw $N= \tilde{O}\left(d\cdot  \frac{T\beta^2}{ \rho^2(\eps) }\log\left(\frac{1}{\delta }\right) \right)$ samples from $\D$
to form the empirical distribution $\widehat{\D}$
\State \textbf{for} $t = 1, \dots, T$ \textbf{do}
\State\qquad $\eta_t \gets 1/(\sqrt{t} +\rho(\eps))$
\State   \qquad \textbf{if} $ {\vec w}^{(t-1)}= \vec 0$ \textbf{then}
\State \quad \qquad Set $\hat{\ell_t}(\vec w)\gets \dotp{\vec w}{-\E_{(\vec x,y) \sim \widehat{\D}}\left[\frac {\rho(\eps)} {2} y \bx \right] }$
\State\qquad   \quad ${\vec w}^{(t)} \gets \Pi_{\cal B}\left({\vec w}^{(t-1)} - \eta_t \nabla_{\vec w} \hat{\ell_t}\left( {\vec w}^{(t-1)}\right)\right)$
\State \qquad  \textbf{else}
\State\qquad  \quad $\textsc{Ans} \gets {\cal{C}}({\vec w}^{(t-1)}/\snorm{2}{{\vec w}^{(t-1)}}, \eps, \delta/T)$ \label{alg:cerficate}
\vspace{2mm}
\State\qquad \quad  \textbf{if} $ \textsc{Ans} = \textsc{FAIL}$
\textbf{then}
\State \qquad \qquad \quad \textbf{return} ${\vec w}^{(t-1)}$\label{alg:final_return}
\State   \qquad $ T_{{\vec w}^{(t)}}(\x) \gets \textsc{Ans}$
\State \qquad Set $\hat{\ell_t}(\vec w)\gets \dotp{\vec w}{-\E_{(\vec x,y) \sim \widehat{\D}}\left[\left(T_{{\vec w}^{(t)}}(\vec x) +\frac {\rho(\eps)}{2} \right) y \bx \right] }$
\State\qquad  ${\vec w}^{(t)} \gets \Pi_{\cal B}\left({\vec w}^{(t-1)} - \eta_t \nabla_{\vec w} \hat{\ell_t}\left( {\vec w}^{(t-1)}\right)\right)$ \label{alg:OPGDstep}
\Comment{${\cal B}=\{\vec x \in \R^d : \snorm{2}{\vec x}\leq 1\}$}
\EndProcedure
\end{algorithmic}
\end{algorithm}

Given  Proposition~\ref{pro:certificate_optimization}, it is straightforward
to prove our main results.  Here we give the proof for the case of log-concave densities
and provide a similar argument for well-behaved distributions in Appendix
~\ref{ap:ogd}.

\begin{proof}[Proof of Theorem~\ref{thm:pac-lc}]
First, we require a $\rho$-certificate oracle for log-concave distributions.
The algorithm of Theorem~\ref{thm:cert-lc} returns a function $T_\bw$
such that $\E_{(\vec x,y) \sim \D} \left[T_\bw(\bx) y \dotp{\vec w}{\bx} \right] \leq - \left(\theta /\tsya\right)^{O(1/\alpha^2)}$.
From the definition of $T_{\bw}$ (i.e., Equation~\eqref{eq:certificate_form}), it is clear that
$\snorm{\infty}{T_\bw}\leq \frac{1}{\min_{\x \in B} |\dotp{\vec w}{\vec x} | } \leq \left(\frac{\log A}{\alpha \theta}\right)^{O(1/\alpha)}$, where $B$ is the band from Equation~\eqref{eq:certificate_form}.
Note that the function $T_\bw/\snorm{\infty}{T_\bw}$ satisfies the conditions of the $\rho$-certificate oracle.
Thus, by scaling the output of the algorithm of Theorem~\ref{thm:cert-lc},
we obtain a $\left(\theta \alpha/\tsya\right)^{O(1/\alpha^2)}$-certificate oracle.
From Proposition~\ref{pro:certificate_optimization}, this gives us an algorithm that returns a vector
$\widehat{\bw}$ such that   $\theta(\widehat{\vec w}, \wstar) \leq \frac{\eps}{\log^2(1/\eps)}$ with probability $1-\delta$. Using
the fact that for log-concave distributions
$\err_{0-1}^{\D_{\bx}}(h_{\wh{\bw}}, f) \leq O\left(\log^2(1/\eps)\theta(\wh{\vec w},\vec w^\ast)\right)+\eps$ (Claim~\ref{lem:angle_zero_one})
the result follows.
\end{proof}

\bibliographystyle{alpha}
\bibliography{allrefs}
\clearpage
\appendix

\section{Omitted Proofs from Section~\ref{sec:cert-wb}}\label{ap:cert}

\subsection{Proof of Claim~\ref{clm:first}}
 \begin{proof} [Proof of Claim~\ref{clm:first}]
To bound from below the expectation $I_{2,2}$, we use the fact that the distribution is $(2, L, R, \beta)$-well-behaved.
For $I_{1,2}^{R/2}$, we have
\begin{align*}
I_{2,2}=\E_{\vec x \sim \D_\bx}
\left[\1_{B_3^{R/2}}(\bx)  \nr(\bx)  |\bx_1| \right]&=\int_{B_3^{R/2}}|\bx_1|\nr(\bx)\gamma(\bx)\d \bx\\
&\geq\int_{0}^{R/\sqrt{2}}\int_{R/(2\sqrt{2})}^{R/\sqrt{2}}\bx_1\nr(\bx_1,\bx_2)\gamma(\bx_1,\bx_2)\d \bx_1\bx_2\\
&\geq \frac{R}{2\sqrt{2}}\int_{R/2}^{R/\sqrt{2}}\int_{R/(2\sqrt{2})}^{R/\sqrt{2}}\nr(\bx_1,\bx_2)\gamma(\bx_1,\bx_2)\d \bx_1\bx_2 \\
&\geq\frac{R}{2\sqrt{2}} C_\alpha^A \left(\int_{R/2}^{R/\sqrt{2}}\int_{R/(2\sqrt{2})}^{R/\sqrt{2}}\gamma(\bx_1,\bx_2)\d \bx_1\bx_2\right)^{1/\alpha} \\
&\geq \frac{R}{4} C_\alpha^A \left(\frac{R^2 L}{16}\right)^{1/\alpha}\;,
\end{align*}
where we used Lemma~\ref{lem:tsybakov_expectation}, and we bound from below the integral by a smaller square region,
i.e., $[R/2,R/\sqrt{2}]\times[R/(2\sqrt{2}),R/\sqrt{2}]$. For $I_{2,2}$, we have
\begin{align*}
I_{1,2}^{R/2} = \E_{\vec x \sim \D_\bx} \left[ \1_{B_3^{R/2}}(\bx)  \nr(\bx) \right]&=\int_{B_3^{R/2}}\nr(\bx)\gamma(\bx)\d \bx \\
&\geq C_\alpha^A \left(\int_{B_3^{R/2}}\gamma(\bx)\d \bx \right)^{1/\alpha} \\
&\geq C_\alpha^A \left(\int_{0}^{R/\sqrt{2}}\int_{R/(2\sqrt{2})}^{R/\sqrt{2}}\gamma(\bx_1,\bx_2)\d \bx_1\bx_2\right)^{1/\alpha} \\
&\geq C_\alpha^A \left(\frac{R^2 L}{4}\right)^{1/\alpha}\;,
\end{align*}
where we used Lemma~\ref{lem:tsybakov_expectation}. Thus,
$$I_{1,2}^{R/2} \geq C_\alpha^A \left(\frac{R^2 L}{4}\right)^{1/\alpha}= (RL/A)^{O(1/\alpha)}~~ \text{ and } ~~ I_{2,2} \geq  (RL/A)^{O(1/\alpha)} \,.$$
This completes the proof of Claim~\ref{clm:first}.
\end{proof}

\subsection{Proof of Claim~\ref{clm:sec}}
\begin{proof}[Proof of Claim~\ref{clm:sec}]
Recall that $\xi(\bx_2) = \bx_2/\tan \theta + b/\sin \theta$.
We have that
$$I_1^{R/2} \leq  \E_{\vec x \sim \D_\bx} \left[ \1_{B_1^{R/2}}(\bx)  \nr(\bx) \right]
    -I_{1,2}^{R/2} \leq  \E_{\vec x \sim \D_\bx}\left[ \1_{B_1^{R/2}}(\bx)  \nr(\bx) \right]-\Gamma/2\;.$$
We can bound from below the first term as follows
\begin{align*}
\E_{\vec x \sim \D_\bx} \left[ \1_{B_1^{R/2}}(\bx)  \nr(\bx) \right]
&\leq \int_{-R}^{-R/2} \int_{-\infty}^{\xi(\bx_2)} \gamma(\bx_1, \bx_2) \d \bx_1 \d \bx_2
\leq \int_{-R}^{-R/2} \int_{-\infty}^{\xi(-R)} \gamma(\bx_1, \bx_2) \d \bx_1 \d \bx_2 \\
&\leq  \pr[\bx_2\geq |\xi(-R)|]\leq \exp(1- |\xi(-R)|/\beta) \;.
\end{align*}
Note that $|\xi(-R)|=(R\cos\theta -b) /\sin\theta\geq 3b/\sin\theta$, thus using the assumption $\theta<b \Gamma /(4\beta ) $, we obtain
$\exp(1-|\xi(-R)|/\beta)\leq \Gamma/4$, and therefore $I_1^{R/2}\leq-\Gamma/4\;,$
completing the proof of Claim~\ref{clm:sec}.
\end{proof}

\subsection{Proof of Lemma~\ref{lem:algorithm_function_g}}

We start with a useful fact about the sub-exponential random variables.
\begin{fact}[see, e.g., Corollary of Proposition 2.7.1 in~\cite{Ver18}] \label{fct:tails}
Let $X$ be sub-exponential random variable with tail parameter $\beta$. For any function
$f:\R\mapsto \R$, the random variable $X f(X)- \E[X f(X)]$ is zero mean sub-exponential with tail parameter $O(\beta\sup|f|)$.
\end{fact}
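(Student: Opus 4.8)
The plan is to work directly with the tail-bound characterization of sub-exponentiality used throughout this paper, namely that a random variable $Z$ is sub-exponential with parameter $\beta$ if $\pr[|Z| \geq t] \leq \exp(1 - t/\beta)$ for all $t > 0$ (this is exactly the form of the sub-exponential concentration condition in Definition~\ref{def:wb}). Write $M := \sup_{x \in \R}|f(x)|$; we may assume $M < \infty$, else there is nothing to prove.

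First I would control the uncentered variable $X f(X)$. Since $|Xf(X)| \leq M|X|$ pointwise, for every $t > 0$ we have $\pr[|Xf(X)| \geq t] \leq \pr[|X| \geq t/M] \leq \exp(1 - t/(M\beta))$, so $Xf(X)$ is itself sub-exponential with parameter $M\beta$. In particular its first absolute moment is bounded: $\E[|Xf(X)|] = \int_0^\infty \pr[|Xf(X)| \geq t]\,\d t \leq \int_0^\infty \exp(1 - t/(M\beta))\,\d t = e\,M\beta =: \mu$, hence $|\E[Xf(X)]| \leq \mu = e\,M\beta$.

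Next I would transfer the tail bound to the centered variable $W := Xf(X) - \E[Xf(X)]$, which is zero-mean by construction. By the triangle inequality $|W| \leq |Xf(X)| + \mu$, so for $t \geq 2\mu$ we get $\pr[|W| \geq t] \leq \pr[|Xf(X)| \geq t - \mu] \leq \pr[|Xf(X)| \geq t/2] \leq \exp(1 - t/(2M\beta))$. For $0 < t < 2\mu = 2e M\beta$ the claimed bound holds trivially: with the constant $C = 2e$ we have $t/(C M\beta) < 1$, whence $\exp(1 - t/(C M\beta)) \geq 1 \geq \pr[|W| \geq t]$. Since $\exp(1 - t/(2M\beta)) \leq \exp(1 - t/(2eM\beta))$ as well, combining the two ranges gives $\pr[|W| \geq t] \leq \exp(1 - t/(2e\,M\beta))$ for all $t > 0$, i.e.\ $W$ is zero-mean sub-exponential with tail parameter $O(M\beta) = O(\beta \sup|f|)$, as claimed.

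There is no genuinely hard step here; the two points to be careful about are (i) splitting into the ranges $t \geq 2\mu$ and $t < 2\mu$ so that the additive centering shift is absorbed by a single absolute constant, and (ii) observing that the argument is insensitive to the precise convention for ``tail parameter'': if one instead prefers the moment-generating-function or Orlicz-$\psi_1$ definition, then the same two facts — the pointwise bound $|Xf(X)| \leq M|X|$ and the moment bound $\E|Xf(X)| = O(M\beta)$ — together with the standard equivalence of these definitions (Proposition~2.7.1 of \cite{Ver18}) yield the statement, possibly with different absolute constants.
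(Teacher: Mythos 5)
Your proof is correct. Note that the paper does not actually prove this statement: it is recorded as a Fact with a pointer to Proposition~2.7.1 of \cite{Ver18}, which is the equivalence of the various characterizations of sub-exponential random variables. Your argument fills in the details directly from the tail-bound characterization $\pr[|Z|\geq t]\leq \exp(1-t/\beta)$, which is the right convention to use here since it matches the sub-exponential concentration condition the paper works with (Definition~\ref{def:wb}) and the way the fact is invoked later (e.g., the integral $\int_0^\infty \exp(1-t/\beta)\,\d t = e\beta$ in the proof of Lemma~\ref{lem:algorithm_function_g}). The two steps you flag as requiring care are exactly the right ones: the pointwise domination $|Xf(X)|\leq \sup|f|\cdot|X|$ gives the tail bound for the uncentered variable, and the moment bound $|\E[Xf(X)]|\leq e\beta\sup|f|$ lets you absorb the centering shift into the constant by splitting at $t=2e\beta\sup|f|$. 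The only (pedantic) caveat is that $f$ should be measurable for $Xf(X)$ to be a random variable, but that is surely intended in the statement. Nothing is missing.
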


Using Fact~\ref{fct:tails}, we can bound from above the sample complexity
needed to construct $\widehat{D}$.

\begin{proof}[Proof of Lemma~\ref{lem:algorithm_function_g}]
Let $\vec{ \hat g}= \frac{1}{N}\sum_{i=1}^N \1_{B^{R/2}}\left(\sample{\bx}{i}\right) \sample{y}{i}  \sample{\bx}{i}$.
For any $\vec u\in \R^{d}$, we have that
\begin{align}\label{eq:bound_norm}
|\dotp{\vec u}{\vec g}|&\leq\E_{\vec x \sim \D_\bx} [|\dotp{\vec u}{\vec x}| ]
= \int_{0}^{\infty}\pr_{\bx\sim \D_\bx}[|\dotp{\vec u}{\vec x}|\geq t ] \d t
\leq \int_{0}^{\infty}\exp(1-t/\beta)\d t=e\beta\;,
\end{align}
thus $\snorm{2}{\vec g}\leq e\beta$.
Next we prove that the random variable $X=\1_{B^{R/2}}(\bx) y \bx- \vec{ g}$ is zero-mean
with sub-exponential tails. First, we clearly have that $\E[X]=0$.
Using Fact~\ref{fct:tails}, it follows that $X$ is sub-exponential with tail parameter $\beta'=O(\beta)$.
We will now use the following Bernstein-type inequality.
\begin{fact}\label{lem:bernstein}
Let $X_1, X_2, \ldots, X_N$ be independent zero-mean sub-exponential random variables with tail
parameter $\conb\geq 1$. There exists an absolute constant $c>0$ such that for every $\eps>0$ we have
$$\pr\left[\left|\sum_{i=1}^N X_i \right|\geq \eps N\right] \leq 2 \exp \left(-c N\eps^2 /\conb^2\right)\;.$$
\end{fact}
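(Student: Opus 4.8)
The plan is to prove this by the standard exponential-moment (Chernoff) method, which is exactly the textbook derivation of Bernstein's inequality for sub-exponential variables (cf.\ \cite{Ver18}). The only point needing a word of care is that the stated bound retains only the ``sub-Gaussian'' exponent $\eps^2/\conb^2$; this is the correct form precisely when $\eps$ is at most a constant multiple of $\conb$, which is the only regime in which the inequality is invoked here (in every application $\eps$ is a small accuracy parameter and $\conb\ge 1$), so I will prove it in that regime. For larger $\eps$ the exponent should be $\min(\eps^2/\conb^2,\eps/\conb)$, and the same computation gives this.

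First I would recall the equivalence between the tail, moment, and moment-generating-function characterizations of a zero-mean sub-exponential random variable (the cycle of equivalences in Proposition~2.7.1 of \cite{Ver18}, the same one underlying Fact~\ref{fct:tails}): there is an absolute constant $C_0\ge 1$ such that any zero-mean $X$ with $\pr[|X|\ge t]\le \exp(1-t/\conb)$ satisfies
$$\E[e^{\lambda X}] \le \exp\!\big(C_0\,\conb^2\,\lambda^2\big) \qquad \text{for all } |\lambda|\le \tfrac{1}{C_0\conb}\,.$$
Writing $S=\sum_{i=1}^N X_i$ and using independence of the $X_i$, this tensorizes to $\E[e^{\lambda S}]=\prod_{i=1}^N \E[e^{\lambda X_i}] \le \exp(C_0\conb^2\lambda^2 N)$ for $|\lambda|\le 1/(C_0\conb)$.

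Then I would apply Markov's inequality to $e^{\lambda S}$: for any $\lambda\in(0,1/(C_0\conb)]$,
$$\pr[S\ge \eps N] \le e^{-\lambda\eps N}\,\E[e^{\lambda S}] \le \exp\!\big(-\lambda\eps N + C_0\conb^2\lambda^2 N\big)\,.$$
Choosing $\lambda = \eps/(2C_0\conb^2)$ --- which lies in $(0,1/(C_0\conb)]$ exactly because $\eps\le 2\conb$ --- makes the exponent equal to $-\eps^2 N/(4C_0\conb^2)$, so $\pr[S\ge\eps N]\le \exp(-\eps^2 N/(4C_0\conb^2))$; in the complementary regime $\eps>2\conb$ one instead takes $\lambda=1/(C_0\conb)$ and obtains the stronger linear bound $\exp(-\Omega(\eps N/\conb))$. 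Since each $-X_i$ is again zero-mean sub-exponential with tail parameter $\conb$, the identical argument bounds $\pr[S\le -\eps N]$, and a union bound over the two one-sided events gives $\pr[|S|\ge\eps N]\le 2\exp(-\eps^2 N/(4C_0\conb^2))$, i.e.\ the claim with $c=1/(4C_0)$. There is no genuine obstacle in this argument; the only things to keep track of are the admissible range of $\lambda$ and the resulting absolute constants.
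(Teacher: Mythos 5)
Your proof is correct and is the standard Chernoff/moment-generating-function derivation of Bernstein's inequality for sub-exponential variables; the paper itself states this as a known fact without proof, so there is nothing to compare against beyond noting that your argument is exactly the textbook one (Theorem~2.8.1 of \cite{Ver18}). You are also right to flag that the bound as literally stated, with exponent $\eps^2/\conb^2$ for \emph{every} $\eps>0$, fails for $\eps\gg\conb$ (where the exponent must degrade to $\min(\eps^2/\conb^2,\eps/\conb)$); since the paper only invokes the fact with $\eps$ a small accuracy parameter and $\conb\geq 1$, your restriction to the regime $\eps\lesssim\conb$ covers every use in the paper.
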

Using Fact~\ref{lem:bernstein}, we have that for every $1\leq j\leq d$
it holds
$$\pr\left[\left| \vec{ \hat g_j}- \vec{ g_j} \right| \geq \eps/\sqrt{d}\right] \leq
2 \exp \left(-cN\eps^2 /\left(d\conb'^2\right)\right) \;.$$
Thus, taking $N= O\left((d\beta^2/\eps^2)\log(d/\delta)\right)$, we get that
$\snorm{2}{\vec{ \hat g} - \vec{ g} } \leq \eps$ with probability $1-\delta$.
For the second statement, using the triangle inequality and Equation~\eqref{eq:bound_norm}
the result follows.
\end{proof}

\subsection{Proof of Lemma~\ref{lem:check_certificate}}
The proof requires a couple of known probabilistic facts. The first one is
the bounded-difference inequality.

\begin{fact}[see, e.g., Theorem 2.2 of \cite{DL:01}]\label{lem:bounded_difference}
Let $X_1,\ldots, X_d \in \cal X$ be independent random variables and
let $f:{\cal X}^d \mapsto \R$. Let $c_1,\ldots,c_d$ satisfy
$$\sup_{x_1,\ldots, x_d, x_i'} \left| f(x_1,\ldots, x_i,\ldots, x_d) - f(x_1,\ldots, x_i',\ldots, x_d) \right|\leq c_i$$
for $i\in[d]$. Then we have that
$\pr\left[f(X) - \E[f(X)]\geq  t \right] \leq \exp\bigg(- 2t^2 /\sum_{i=1}^d c_i^2\bigg) \;.$
\end{fact}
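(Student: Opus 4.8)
The plan is to prove this via the classical \emph{method of bounded differences}: build the Doob martingale associated with $f$ and apply an Azuma--Hoeffding-type argument to it. (This is the standard proof of Theorem 2.2 of \cite{DL:01}.)

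First I would set up the Doob martingale. Write $X = (X_1, \ldots, X_d)$, put $Z_0 = \E[f(X)]$ and, for $k \in [d]$, $Z_k = \E[f(X) \mid X_1, \ldots, X_k]$, so that $Z_d = f(X)$ almost surely. With $D_k = Z_k - Z_{k-1}$ we get the telescoping identity $f(X) - \E[f(X)] = \sum_{k=1}^d D_k$, and $\E[D_k \mid X_1, \ldots, X_{k-1}] = 0$. Thus the task reduces to controlling the sum of the martingale differences $D_k$.

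The key step — the one I expect to need the most care — is to show that conditionally on $X_1, \ldots, X_{k-1}$, each $D_k$ is supported in an interval of width at most $c_k$. For fixed $x_1, \ldots, x_{k-1}$ define $h_k(x) = \E[f(x_1, \ldots, x_{k-1}, x, X_{k+1}, \ldots, X_d)]$, where the expectation is over $X_{k+1}, \ldots, X_d$; this is exactly the point where independence of the coordinates is used, since it guarantees that conditioning on $X_1, \ldots, X_k$ leaves the law of $X_{k+1}, \ldots, X_d$ unchanged. Then $Z_k = h_k(X_k)$, $Z_{k-1} = \E[h_k(X_k) \mid X_1, \ldots, X_{k-1}]$, so $D_k = h_k(X_k) - \E h_k(X_k)$ given the past; and for any two values $x, x'$ the bounded-difference hypothesis (after integrating out $X_{k+1}, \ldots, X_d$) gives $|h_k(x) - h_k(x')| \le c_k$. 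Hence $D_k$ lies in an interval of length at most $c_k$ conditioned on $X_1, \ldots, X_{k-1}$.

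Next I would invoke Hoeffding's lemma: a mean-zero random variable $W$ supported in an interval of length $\ell$ satisfies $\E[e^{sW}] \le e^{s^2 \ell^2 / 8}$ for all $s \in \R$ (short proof: convexity of $t \mapsto e^{st}$ on the interval bounds $\E[e^{sW}]$ by the log-mgf of a two-point distribution, whose second-order Taylor estimate gives $s^2\ell^2/8$). Applied conditionally this yields $\E[e^{s D_k} \mid X_1, \ldots, X_{k-1}] \le e^{s^2 c_k^2 / 8}$. Peeling off one coordinate at a time with the tower rule,
$$\E\!\left[e^{s(f(X) - \E f(X))}\right] = \E\!\left[e^{s\sum_{k=1}^d D_k}\right] \le \prod_{k=1}^d e^{s^2 c_k^2 / 8} = \exp\!\Big(\tfrac{s^2}{8}\textstyle\sum_{k=1}^d c_k^2\Big).$$
A Chernoff bound then gives $\pr[f(X) - \E f(X) \ge t] \le \exp\!\big(-st + \tfrac{s^2}{8}\sum_{k=1}^d c_k^2\big)$ for every $s > 0$, and optimizing (minimizer $s = 4t / \sum_{k=1}^d c_k^2$) yields the claimed $\exp(-2t^2 / \sum_{k=1}^d c_k^2)$. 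The bookkeeping (telescoping, tower rule, Chernoff optimization) and Hoeffding's lemma are routine; the only genuinely substantive point is the conditional-range bound for $D_k$, which fails without the independence assumption.
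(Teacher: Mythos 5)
Your proof is correct: the Doob-martingale decomposition, the conditional range bound for $D_k$ (where independence is indeed the crucial ingredient), Hoeffding's lemma applied conditionally, and the Chernoff optimization with $s = 4t/\sum_{i=1}^d c_i^2$ together give exactly the stated bound $\exp(-2t^2/\sum_{i=1}^d c_i^2)$. The paper itself does not prove this statement — it is quoted as a known fact (McDiarmid's bounded-differences inequality, Theorem 2.2 of \cite{DL:01}) — and your argument is precisely the standard proof given in that reference, so there is nothing to reconcile.
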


We additionally require the symmetrization of the empirical distribution.

\begin{fact}[see, e.g., Exercise 8.3.24 of \cite{Ver18}]\label{lem:sym}
Let $\cal F$ be a class of measurable real-valued functions.
Let $X_1,\ldots, X_N$ be $N$ i.i.d. samples from a distribution $\D$.
Then
$$\E\left[ \sup_{f\in \cal F}\left|\frac{1}{N} \sum_{i=1}^N f(X_i)- \E[f(X)]\right|\right]
\leq 2 \E\left[ \sup_{f\in \cal F} \left| \frac{1}{N}\sum_{i=1}^N \eps_i f(X_i)\right|\right]\;,$$
where the $\eps_i$'s are independent Rademacher random variables.
\end{fact}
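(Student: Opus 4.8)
The plan is to use the classical ``ghost sample'' symmetrization trick. First I would introduce an independent copy $X_1', \ldots, X_N'$ of the data, drawn i.i.d.\ from $\D$ and independent of $X_1, \ldots, X_N$, together with Rademacher signs $\eps_1, \ldots, \eps_N$ independent of everything. Since $\E[f(X)] = \E_{X'}\big[\frac1N \sum_{i=1}^N f(X_i')\big]$, for every realization of $X_1,\dots,X_N$ we may write $\frac1N \sum_i f(X_i) - \E[f(X)] = \E_{X'}\big[\frac1N\sum_i (f(X_i) - f(X_i'))\big]$, where the inner expectation is over the ghost sample only.

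Next I would take the supremum over $f \in \cal F$ and push it inside the expectation over $X'$: by convexity of $|\cdot|$ we have $\big|\E_{X'}[\,\cdot\,]\big| \le \E_{X'}\big[|\cdot|\big]$, and since each member of a family is dominated by its supremum, $\sup_{f} \E_{X'}[Z_f] \le \E_{X'}\big[\sup_{f} Z_f\big]$. Taking expectation over $X$ as well yields
\[
\E_X\Big[\sup_{f \in \cal F}\Big|\tfrac1N\textstyle\sum_{i=1}^N f(X_i) - \E f\Big|\Big] \;\le\; \E_{X, X'}\Big[\sup_{f \in \cal F}\Big|\tfrac1N\textstyle\sum_{i=1}^N \big(f(X_i) - f(X_i')\big)\Big|\Big].
\]

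The key step is to insert the signs. For each fixed $i$, the pair $(X_i, X_i')$ is exchangeable, so swapping its two coordinates flips the sign of $f(X_i)-f(X_i')$ while preserving the joint law; hence $\big(f(X_i)-f(X_i')\big)$ has the same distribution as $\eps_i\big(f(X_i)-f(X_i')\big)$. Because the pairs $(X_i,X_i')$ are independent across $i$ and the $\eps_i$ are independent, the whole vector $\big(f(X_i)-f(X_i')\big)_{i=1}^N$ has the same law as $\big(\eps_i(f(X_i)-f(X_i'))\big)_{i=1}^N$, \emph{simultaneously for all $f$}. Thus the right-hand side above equals $\E_{X,X',\eps}\big[\sup_f \big|\frac1N\sum_i \eps_i(f(X_i)-f(X_i'))\big|\big]$. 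Finally, by the triangle inequality and subadditivity of the supremum,
\[
\sup_{f}\Big|\tfrac1N\textstyle\sum_i \eps_i\big(f(X_i)-f(X_i')\big)\Big| \;\le\; \sup_{f}\Big|\tfrac1N\textstyle\sum_i \eps_i f(X_i)\Big| + \sup_{f}\Big|\tfrac1N\textstyle\sum_i \eps_i f(X_i')\Big|,
\]
and the two summands have equal expectation since $X'$ is an identical copy of $X$; this produces the factor $2$ and completes the argument.

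The only real subtlety, and the step I would treat most carefully, is the distributional identity used to introduce the $\eps_i$'s: it is not merely that each coordinate $f(X_i)-f(X_i')$ is symmetric, but that conditioning on the unordered pairs $\{X_i,X_i'\}$ the signs are independent fair coins, so the reweighting by $\eps_i$ is valid uniformly over $f$ and does not interact with the outer supremum. Measurability of $\sup_{f\in\cal F}|\cdot|$ is assumed throughout, as is standard; it is automatic for the countable or parametric classes of band indicators to which this lemma is applied elsewhere in the paper.
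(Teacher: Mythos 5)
Your argument is correct: it is the classical ghost-sample symmetrization proof, with the three essential steps (Jensen to pull the supremum inside the ghost expectation, the exchangeability of each pair $(X_i,X_i')$ to introduce the Rademacher signs simultaneously over all $f\in\mathcal F$, and the triangle inequality giving the factor $2$) all handled properly, including the measurability caveat. The paper does not prove this statement itself but cites it as a known fact (Exercise 8.3.24 of \cite{Ver18}); your write-up is precisely the standard argument behind that citation, so there is no divergence in approach to report.
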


The last fact we need connects the symmetrization with the VC dimension.

\begin{definition}[VC dimension]
A collection of sets $\mathcal{F}$ is said to \emph{shatter} a set $S$ if for all
$S' \subseteq S$, there is an $F \in \mathcal{F}$ so that $F \cap S = S'$.
The VC dimension of $\mathcal{F}$, denoted $\textsc{VC}(\mathcal{F})$,
is the largest $n$ for which there exists an $S$ with $|S| = n$ such that $\mathcal{F}$ shatters $S$.
\end{definition}

We note that a collection of sets $\mathcal{F}$ over a ground set
is equivalent to a class of Boolean-valued functions on the same ground set.
With this terminology, we have the following fact.

\begin{fact}[VC Inequality, see, e.g.,~\cite{DL:01} or Theorem 8.3.3 in \cite{Ver18}]\label{lem:vc}
Let $\cal F$ be a class of Boolean-valued functions with
$\textsc{VC}({\cal F})\geq 1$.  Let $X_1, \ldots, X_N$ be $N$ i.i.d. samples from a distribution $\D$.
Then
$$\E_{\eps_i}\left[ \sup_{f\in \cal F} \left| \frac{1}{N}\sum_{i=1}^N \eps_i f(X_i)\right|\right] \leq C \sqrt{\textsc{VC}({\cal F})/N}\;,$$
where $C>0$ is an absolute constant and the $\eps_i$'s are independent Rademacher random variables.
\end{fact}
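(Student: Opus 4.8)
This statement is a classical fact from empirical process theory (Vapnik--Chervonenkis theory / Dudley), so in the paper we simply cite it; the plan below is the route one would take for a self-contained argument. The strategy is to condition on the sample, reduce the left-hand side to the empirical Rademacher complexity of the finite point set $\{(f(X_1),\dots,f(X_N)) : f \in {\cal F}\} \subseteq \{0,1\}^N$, and then control that quantity via the Sauer--Shelah lemma together with a chaining (Dudley entropy) bound.

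First I would note that, for fixed $X_1,\dots,X_N$, the quantity $\sup_{f \in {\cal F}}\big|\tfrac1N\sum_i \eps_i f(X_i)\big|$ depends on ${\cal F}$ only through the projection ${\cal F}|_{X_{1:N}} := \{(f(X_1),\dots,f(X_N)) : f \in {\cal F}\}$, whose cardinality is the growth function $\Pi_{\cal F}(N)$. By the Sauer--Shelah lemma and the hypothesis $\textsc{VC}({\cal F}) = D \ge 1$, we have $\Pi_{\cal F}(N) \le \sum_{i=0}^{D}\binom{N}{i} \le (eN/D)^{D}$ for $N \ge D$ (and the statement is trivial for $N < D$ after enlarging $C$). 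A direct union bound -- namely Massart's finite-class lemma applied to ${\cal F}|_{X_{1:N}}$, whose vectors have Euclidean norm at most $\sqrt{N}$ -- already yields $\E_{\eps}\big[\sup_{f}\big|\tfrac1N\sum_i \eps_i f(X_i)\big|\big] \le \sqrt{2\log\Pi_{\cal F}(N)/N} = O(\sqrt{D\log(N/D)/N})$. This weaker bound, carrying an extra logarithmic factor, already suffices for most applications of this kind.

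To remove the logarithmic factor and recover the stated $C\sqrt{\textsc{VC}({\cal F})/N}$, I would instead run Dudley's chaining argument on ${\cal F}|_{X_{1:N}}$ equipped with the normalized metric $d(f,g)^2 = \tfrac1N\sum_i(f(X_i)-g(X_i))^2$, which has diameter at most $1$. Chaining gives $\E_{\eps}\big[\sup_{f}\big|\tfrac1N\sum_i \eps_i f(X_i)\big|\big] \le \tfrac{C_0}{\sqrt N}\int_0^{1}\sqrt{\log {\cal N}(\epsilon)}\,d\epsilon$, where ${\cal N}(\epsilon)$ is the $\epsilon$-covering number of ${\cal F}|_{X_{1:N}}$ in this metric. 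The decisive input is the covering-number bound for VC classes, ${\cal N}(\epsilon) \le (C_1/\epsilon)^{cD}$, which strengthens Sauer--Shelah from counting all sign patterns to counting $\epsilon$-separated ones. Substituting, the entropy integral is $\sqrt{cD}\int_0^1\sqrt{\log(C_1/\epsilon)}\,d\epsilon = O(\sqrt D)$, and therefore the whole expectation is $O(\sqrt{D/N}) = O(\sqrt{\textsc{VC}({\cal F})/N})$, as claimed.

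The main obstacle is the polynomial covering-number bound ${\cal N}(\epsilon) \le (C/\epsilon)^{O(D)}$: this is Haussler's theorem, and it does not follow directly from Sauer--Shelah. A self-contained derivation proceeds by randomly sub-sampling $O(D/\epsilon^2)$ of the $N$ coordinates, applying Sauer--Shelah on the sub-sample to bound the number of surviving patterns, and using a probabilistic argument to show that $\epsilon$-separation in the full metric is preserved on a typical sub-sample; this yields the bound with a possibly larger but still $O(D)$ exponent, which is all that is needed. Everything else -- Sauer--Shelah, Massart's finite-class lemma, and the chaining inequality -- is standard.
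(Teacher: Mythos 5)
The paper offers no proof of this fact, citing it as standard (e.g., Theorem 8.3.3 of \cite{Ver18}), and your outline—symmetrization to the empirical Rademacher complexity of the projected class, then Dudley's entropy integral combined with Haussler's polynomial covering-number bound for VC classes to remove the logarithmic factor that a bare Sauer--Shelah/Massart argument would leave—is exactly the argument given in that reference. Your proposal is correct and correctly identifies Haussler's covering bound as the one genuinely non-elementary ingredient.
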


We are ready to bound the sample complexity required to check if
Algorithm~\ref{alg:find_certificate} finds a certificate.

\begin{proof}[Proof of Lemma~\ref{lem:check_certificate}]
The proof is a simple application of the VC inequality. In more detail, we first use the
bounded-difference inequality and then, using the symmetrization, we can apply
the VC inequality to obtain the desired result.

For $N=O(\log(1/\delta)/\eps^2)$, we apply Fact~\ref{lem:bounded_difference} for the function
$$f((X_1,Y_1),\ldots,(X_N,Y_N))=\sup_{t\in \R_+}\left|\E_{(\vec x,y) \sim \D} \left[\1_{B^{t}}(\bx) \, y \right] -
\frac{1}{N}\sum_{i=1}^N\left[\1_{B^{t}}(X_i) \, Y_i \right]\right| \;,$$
noting that $c_i=2/N$ for all $i\leq N$. Therefore, with probability at least $1-\delta$, we have that
$$ \sup_{t\in \R_+}\left| \E_{(\vec x,y) \sim \D} \left[\1_{B^{t}}(\bx) \, y \right] - \frac{1}{N}\sum_{i=1}^N\left[\1_{B^{t}}(X_i) \, Y_i \right]\right|
\leq \E\left[ \sup_{t\in \R_+}\left|\E_{(\vec x,y) \sim \D} \left[\1_{B^{t}}(\bx) \, y \right] -\frac{1}{N}\sum_{i=1}^N\left[\1_{B^{t}}(X_i) \, Y_i \right]\right| \right]+\eps \;.$$
Then, by Fact~\ref{lem:sym}, we have that
\begin{align*}
\E \left[\sup_{t\in \R_+}\left|\E_{(\vec x,y) \sim \D} \left[\1_{B^{t}}(\bx) \, y \right] - \frac{1}{N}\sum_{i=1}^N\left[\1_{B^{t}}(X_i) \, Y_i \right]\right|\right]
&\leq 2 \E_{\eps_i}\left[ \sup_{t\in \cal \R_+} \left| \frac{1}{N}\sum_{i=1}^N \eps_i Y_i  \1_{B^{t}}(X_i)\right|\right]\\
&= 2 \E_{\eps_i}\left[ \sup_{t\in \cal \R_+} \left| \frac{1}{N}\sum_{i=1}^N \eps_i \1_{B^{t}}(X_i)\right|\right]\;,
\end{align*}
where the last inequality follows from the fact that $Y_i \eps_i$ and $\eps_i$
have the same distribution (because $\eps_i$ and $Y_i$ are independent).
Finally, using the fact that the class of indicators of the form $\1\{x \leq t\}$ has VC dimension $1$,
Fact~\ref{lem:vc} implies that
$$\E_{\eps_i}\left[ \sup_{t\in \cal \R_+} \left| \frac{1}{N}\sum_{i=1}^N \eps_i \1_{B^{t}}(X_i)\right|\right]
= O(\sqrt{1/N})= O(\eps) \;.$$
Putting everything together completes the proof.
\end{proof}

\subsection{Useful Technical Lemma}

We are going to use the following simple fact about Tsybakov noise that shows
that large probability regions will also have large integral even if we
weight the integral with the noise function $1-2\eta(\bx)>0$.  Notice that
larger noise $\eta(\bx)$ makes $1-2\eta(\bx)$ closer to $0$, and therefore tends to
reduce the probability mass of the regions where $\eta(\bx)$ is large.
A similar lemma can be found in \cite{tsybakov2004optimal}.
\begin{lemma} \label{lem:tsybakov_expectation}
Let $\D$ be a distribution on $\R^{d} \times \{\pm 1\}$ that satisfies the $(\tsyb,\tsya)$-Tsybakov noise condition.
Then for every measurable set $S \subseteq \R^d$ it holds
$\E_{\vec x \sim D_\bx}[ \1_S(\bx) (1- 2 \eta(\bx))] \geq C_{\tsyb}^\tsya \lp( \E_{\vec x \sim \D_\bx}[ \1_S(\bx)] \rp)^{\frac 1 \tsyb}$,
where $C_{\tsyb}^\tsya = \tsyb \left( \frac{1-\tsyb}{\tsya} \right)^{\frac{1-\tsyb}{\tsyb}}$.
\end{lemma}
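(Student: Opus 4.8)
The plan is to read the $(\alpha,A)$-Tsybakov condition as a quantitative lower bound on the \emph{noise margin} $\zeta(\bx):=1-2\eta(\bx)\ge 0$ and then run a Markov-type truncation argument. Write $p:=\E_{\bx\sim\D_\bx}[\1_S(\bx)]=\pr_{\bx\sim\D_\bx}[\bx\in S]$; if $p=0$ the claim is trivial, so assume $p>0$. First I would restate the hypothesis in terms of $\zeta$: since $\zeta(\bx)\le\tau \iff \eta(\bx)\ge 1/2-\tau/2$, applying the Tsybakov condition with $t=\tau/2\in(0,1/2]$ gives, for every $\tau\in(0,1]$, the tail bound $\pr_{\bx\sim\D_\bx}[\zeta(\bx)\le\tau]\le A(\tau/2)^{\alpha/(1-\alpha)}$. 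I would also record the elementary consequence $A\ge 2^{\alpha/(1-\alpha)}$, obtained by taking $t=1/2$ in the Tsybakov condition and using that $\pr_{\bx\sim\D_\bx}[\eta(\bx)\ge 0]=1$; this is the only slightly non-obvious ingredient, and it is exactly what guarantees that the threshold chosen below has size at most $1$ (so that the tail bound is applicable to it).

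Next I would pick the truncation threshold $\tau:=2\big((1-\alpha)p/A\big)^{(1-\alpha)/\alpha}$. Using $p\le 1$ together with $A\ge 2^{\alpha/(1-\alpha)}\ge(1-\alpha)\,2^{\alpha/(1-\alpha)}$ one checks $0<\tau\le 1$, and a direct substitution gives $A(\tau/2)^{\alpha/(1-\alpha)}=(1-\alpha)p$. Since $\zeta\ge 0$ everywhere and $\1_S\,\1\{\zeta>\tau\}\ge\1_S-\1\{\zeta\le\tau\}$, the truncation estimate reads
\[
\E_{\bx\sim\D_\bx}[\1_S(\bx)\zeta(\bx)]\ \ge\ \tau\,\E_{\bx\sim\D_\bx}\big[\1_S(\bx)\,\1\{\zeta(\bx)>\tau\}\big]\ \ge\ \tau\big(p-\pr_{\bx\sim\D_\bx}[\zeta(\bx)\le\tau]\big)\ \ge\ \tau\big(p-(1-\alpha)p\big)\ =\ \alpha\,p\,\tau .
\]
Substituting the value of $\tau$ and using $(1-\alpha)/\alpha+1=1/\alpha$ yields $\E_{\bx\sim\D_\bx}[\1_S(\bx)\zeta(\bx)]\ge 2\alpha\big((1-\alpha)/A\big)^{(1-\alpha)/\alpha}p^{1/\alpha}=2\,C_{\alpha}^{A}\,p^{1/\alpha}\ge C_{\alpha}^{A}\,p^{1/\alpha}$, which is the assertion (in fact with a spare factor of $2$).

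There is no genuinely hard step here; the only subtlety is the choice of $\tau$, which must simultaneously be $\le 1$ (forcing the observation $A\ge 2^{\alpha/(1-\alpha)}$) and large enough to recover the full $p^{1/\alpha}$ scaling — and the value above is precisely the maximizer of the concave map $\tau\mapsto\tau\big(p-A(\tau/2)^{\alpha/(1-\alpha)}\big)$, so no further optimization is needed. I expect the main care to be purely bookkeeping: keeping the conjugate exponents $\alpha/(1-\alpha)$ and $(1-\alpha)/\alpha$ straight so that the final constant matches $C_{\alpha}^{A}=\alpha\big((1-\alpha)/A\big)^{(1-\alpha)/\alpha}$.
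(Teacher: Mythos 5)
Your argument is correct: the truncation at the optimally chosen threshold $\tau$, combined with the tail bound $\pr[1-2\eta(\bx)\le\tau]\le A(\tau/2)^{\alpha/(1-\alpha)}$ and the observation $A\ge 2^{\alpha/(1-\alpha)}$ (which licenses $\tau\le 1$), is exactly the standard Markov-type proof of this lemma, which the paper itself defers to \cite{DKTZ20b}. All the exponent bookkeeping checks out, and you even obtain the bound with an extra factor of $2$.
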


\noindent See~\cite{DKTZ20b} for the simple proof.

\section{Omitted Proofs from Section~\ref{sec:lc}}\label{ap:logcon}

\subsection{Proof of Lemma~\ref{lem:chow-params-complexity}}
\begin{proof}[Proof of Lemma~\ref{lem:chow-params-complexity}]
For the first condition, the lemma follows from Lemma~\ref{lem:algorithm_function_g}.
For the second condition, let
$\vec X= \E_{(\vec x,y)\sim \D}[y(\vec x\vec x^\intercal-\vec I)]$ and $\vec {\widehat{X}}=\E_{(\vec x,y)\sim \widehat \D}[y(\vec x\vec x^\intercal-\vec I)]$.
We are going to bound the variance, so we can apply Chebyshev's inequality. For $0<i,j\leq d$, we have
\begin{align*}
\var_{(\x,y)\sim \D}[\vec {\widehat{X}}_{ij}] &=\frac{1}{N}\var_{(\x,y)\sim \D}[\vec X_{ij}]
\leq \frac{1}{N}\E_{(\x,y)\sim \D}[\vec X_{ij}^2]= \frac{1}{N}\E_{(\x,y)\sim \D}[y^2(\vec x_{i}\vec x_{j}-1)^2]\\
&\leq \frac{2}{N}\left(\E_{\x\sim \D_\x}[\vec x_{i}^2\vec x_{j}^2]+1\right)\leq \frac{2}{N}\left(\sqrt{\E_{\x\sim \D_\x}[\vec x_{i}^4]\E_{\x\sim \D_\x}[\vec x_{j}^4]}+1\right)=O(1/N)\;,
\end{align*}
where the last inequality follows from the fact that the marginals of a log-concave density have sub-exponential tails.
Thus, from Chebyshev's inequality, for $0<i,j\leq d$, we have that
$$\pr_{(\x,y)\sim \D}[ |\vec {\widehat{X}}_{ij}-\vec {{X}}_{ij} |\geq \eps/d ]=O\left(\frac{d^2}{\eps^2 N}\right)\;.$$
Choosing $N=O(d^4/\eps^2)$, we have that $\snorm{F}{\vec X-\vec{\widehat{X}}}\leq \eps$
with high constant probability. This completes the proof.
\end{proof}

\subsection{Proof of Claim~\ref{clm:conditional-projected-bounds}}

\begin{proof}[Proof of Claim~\ref{clm:conditional-projected-bounds}]
For notational convenience, let $\D^\perp=\D_{B_{x_0}}^{\proj_{\vec w^\perp}}$.
Fix any unit vector $\vec u \in \bw^{\perp}$. Without loss of generality,
we may assume that $\bw = \vec e_1$ and $\vec u = \vec e_2$.
Denote by $\gamma(\x_1, \x_2)$ the marginal density of $\D$ on the first two coordinates.
We have that
$$\E_{\x \sim \D^\perp}[|\x^\intercal \vec u|]
= \frac{1}{\pr_{\D}[B_{x_0}]} \int |\x_2| \1\{x_0 \leq \x_1 \leq x_0+s'\} \gamma(\x_1, \x_2) \d \x_1 \d \x_2 \;.$$
From Fact~\ref{fact:lc-basics}, we have that
$\gamma(\x_1, \x_2) \leq (1/c) \exp(-|\x_2|/c)$, for some absolute constant $c >0$.
Therefore,
$$\frac{1}{\pr_{\D}[{B_{x_0}}]} \int_{-\infty}^\infty \int_{x_0}^{x_0+s'} |\x_2| \gamma(\x_1, \x_2) \d \x_1 \d \x_2
\leq \frac{s'}{c \pr_{\D}[{B_{x_0}}]}  \int_{-\infty}^\infty |x_2| e^{-|\x_2|/c} \d \x_2
= O(1) \;,$$
where we used that $x_0, x_0+s'$ are sufficiently small and it holds $\pr_{\D}[{B_{x_0}}] = \Theta(s')$,
see Fact~\ref{fact:lc-basics}.

We next bound the covariance.  Pick a unit vector $\vec u \in \bw^\perp$.
Without loss of generality, we may assume that $\vec u = \vec e_2$.
Let $\theta = \vec e_2^\intercal \E_{\x \sim \D^\perp}[\x]$ be
the projection of the mean of $\D^\perp$ on the direction $\vec e_2$.
To bound the maximum and minimum eigenvalues of the covariance matrix
of $\D^\perp$, we need to bound from above and below the following expectation:
$$\E_{\x \sim \D^\perp}[(\bx_2 - \theta)^2]
= \frac{1}{\pr_{\D}[{B_{x_0}}]} \int_{-\infty}^\infty \int_{x_0}^{x_0+s'} (\bx_2 - \theta)^2 \gamma(\x_1, \x_2) \d \x_1 \d \x_2 \,.$$
We first bound it from below. Using again Fact~\ref{fact:lc-basics} we know
that, for the same absolute constant $c$ as above, it holds that $\gamma(\x_1, \x_2) \geq c$
for points with distance smaller than $c$ from the origin. Therefore,
$$\frac{1}{\pr_{\D}[{B_{x_0}}]} \int_{-\infty}^\infty  \int_{x_0}^{x_0+s'}  (\bx_2 - \theta)^2 \gamma(\x_1, \x_2) \d \x_1 \d \x_2 \geq
\frac{c}{\pr_{\D}[{B_{x_0}}]} \int_{-c/\sqrt{2}}^{c/\sqrt{2}}  (\bx_2 - \theta)^2 \d \x_2  \int_{x_0}^{x_0+s'} \d \x_1 = \Omega(1) \;,
$$
where we used again the fact that $\pr_{\D}[{B_{x_0}}] = \Theta(s')$ and also picked
the worst case $\theta$ to minimize the above expression, i.e., $\theta = 0$.
We next bound the covariance eigenvalues from above. Using again the fact that
$\gamma(\x_1, \x_2) \leq c \exp(-c |\x_2|)$ for some absolute constant $c >0$,
we compute
$$\frac{1}{\pr_{\D}[{B_{x_0}}]} \int_{-\infty}^\infty  \int_{x_0}^{x_0+s'}  (\bx_2 - \theta)^2 \gamma(\x_1, \x_2) \d \x_1 \d \x_2
\leq \frac{1}{c \pr_{\D}[{B_{x_0}}]} \int_{-\infty}^\infty  \int_{x_0}^{x_0+s'} (\bx_2 - \theta)^2 e^{-|\x_2|/c} \d \x_1 \d \x_2
= O(1) \;,
$$
where we used the fact that $\theta = O(1)$, as already shown above,
and that $\pr_{\D}[{B_{x_0}}] = \Theta(s')$. This completes the proof.
\end{proof}

\subsection{Proof of Claim~\ref{clm:smooth-logcon}}
\begin{proof}[Proof of Claim~\ref{clm:smooth-logcon}]
To prove that $F$ is $L$-smooth, we need to show that
$\sup_{\snorm{2}{\vec r} \leq R} \snorm{2}{\nabla^2 F(\vec r)} \leq L$,
for some $L>0$.  We have
\begin{align*}
G(\vec r) &:= \nabla F(\vec r) = - 2\E_{\bx \sim \D_\x}[\x \x^\intercal \1\{\dotp{\vec r}{\x}\geq 0\} e^{-\dotp{\vec r}{\x}}]
\E_{\bx \sim \D_\x}[\x \min(1, e^{-\dotp{\vec r}{\x}})]\\
             &= - 2\E_{\bx \sim \D_\x}[\x \x^\intercal g_1(\vec r^\intercal \x) ] \E_{\bx \sim \D_\x}[\x g_2(\vec r^\intercal \x) ] \;,
\end{align*}
where $g_1(t) = \1\{ t \geq 0 \} e^{-t}$
and $g_2(t) = \min(1, e^{-t})$.
Using the product rule, we obtain that the derivative of $G(\vec r)$
at $\vec r$, $D G|_{\vec r}$, is the following linear function from $\R^d$
to $\R^d$:
$$
D G|_{\vec r} \vec h =
-2
\E_{\x \sim \D_\x} [\x \x^\intercal g_1'(\x^\intercal \vec r) \x^\intercal \vec h ]
\E_{\x \sim \D_\x} [\x g_2(\x^\intercal \vec r)]
-2
\E_{\x \sim \D_\x} [\x \x^\intercal g_1(\x^\intercal \vec r)]
\E_{\x \sim \D_\x} [\x  g_2'(\x^\intercal \vec r) \x^\intercal \vec h] \;,
$$
where $g'_1(t) = \delta(t) e^{-t} - \1\{t \geq 0\} e^{-t}$
(here by $\delta$ we denote the Dirac delta function),
and $g'_2(t) = -\1\{t \geq 0\} e^{-t}$.
To show that $F$ is smooth, we need to bound the operator norm of $D G|_{\vec r}$, i.e.,
$$
\sup_{\vec h: \snorm{2}{\vec h} = 1} \snorm{2}{D G|_{\vec r} \vec h} \,.
$$
Using the triangle and Cauchy-Schwarz inequalities, we can bound the first term
as follows:
\begin{align*}
&\snorm{2}{\E_{\x \sim \D_\x} [\x \x^\intercal g_1'(\x^\intercal \vec r) \x^\intercal \vec h ]
\E_{\x \sim \D_\x} [\x g_2(\x^\intercal \vec r)]}\\
&\leq \snorm{2}{\E_{\x \sim \D_\x} [\x \x^\intercal g_1'(\x^\intercal \vec r) \x^\intercal \vec h ]} \snorm{2}{ \E_{\x \sim \D_\x} [\x g_2(\x^\intercal \vec r)] }\\
&\leq \snorm{2}{\E_{\x \sim \D_\x} [\x \x^\intercal \x^\intercal \vec h\ \delta(\x^\intercal \vec r) e^{-\x^\intercal \vec r} ]} +
\snorm{2}{\E_{\x \sim \D_\x} [\x \x^\intercal \x^\intercal \vec h]}\snorm{2}{ \E_{\x \sim \D_\x} [\x]} \;.
\end{align*}
We will first handle the term
$\snorm{2}{\E_{\x \sim \D_\x} [\x \x^\intercal \x^\intercal \vec h\ \delta(\x^\intercal \vec r)]}$.
To simplify notation, we may set without loss of generality $\vec r = \vec e_1$.  We have
$$\E_{\x \sim \D_\x} [\x \x^\intercal \x^\intercal \vec h\ \delta(\x^\intercal \vec r) e^{-\x^\intercal \vec r} ]
=\E_{\x' \sim \D_\x'}[\x' (\x')^\intercal (\x')^\intercal \vec h\gamma(0)]\;,$$
where $\D_\x'$ is the distribution $\D_\x$ conditioned on $\x_1=0$, and $\gamma(0)$ is the one-dimensional
p.d.f. at point $0$ (which is bounded by a universal constant for log-concave distributions).
Note that $\D_\x'$ is still log-concave.

Since $\D_\x$ is $(O(1),O(1))$-isotropic, it holds
$$\snorm{2}{\E_{\x' \sim \D_\x'}[\x' (\x')^\intercal (\x')^\intercal \vec h} \leq
\E_{\x \sim \D_\x'} [\snorm{2}{\x' (\x')^\intercal( \x')^\intercal \vec h}]\leq \E_{\x \sim \D_\x'} [\snorm{2}{\x'}^3]\leq \poly(d)\;,
$$
where we used that $\snorm{2}{\vec A \vec B}\leq \snorm{2}{\vec A}\snorm{2}{\vec B}$, and that $\snorm{2}{\vec h}=1$.
Similarly, $\snorm{2}{\E_{\x \sim \D_\x} [\x \x^\intercal \x^\intercal \vec h]}\leq \poly(d)$.
Finally,
$$\snorm{2}{ \E_{\x \sim \D_\x} [\x] }\leq \E_{\x \sim \D_\x}  [\snorm{2}{\x}]\leq \poly(d)\;.$$
Putting everything together, we get that $L=\poly(d)$, which completes the proof.
\end{proof}
 \section{Omitted Proofs from Section~\ref{sec:ogd}}\label{ap:ogd}

\subsection{Proof of Proposition~\ref{pro:certificate_optimization}}
We will require the following standard regret bound
from online convex optimization.

\begin{lemma}[see, e.g., Theorem 3.1 of \cite{hazan2016introduction}]\label{lem:online_optimization}
Let ${\cal V}\subseteq \R^n$ be a non-empty closed convex set with diameter $K$.
Let $\ell_1,\ldots, \ell_T$ be a sequence of T convex functions $\ell_t: {\cal V}\mapsto \R$
differentiable in open sets containing $\cal V$, and let $G=\max_{t\in[T]}\snorm{2}{\nabla_{\bw} \ell_t}$.
Pick any $\vec w^{(1)}\in \cal V$ and set $\eta_t=\frac{K}{G\sqrt{t}}$ for $t\in[T]$. Then, for all
$\vec u\in \cal V$, we have that
$\sum_{t=1}^{T}( \ell_t(\vec w^{(t)}) -\ell_t(\vec u))\leq \frac 32 GK\sqrt{T}$.
\end{lemma}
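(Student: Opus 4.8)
\textbf{Proof of Lemma~\ref{lem:online_optimization}.} The plan is to run the standard potential-function analysis of projected online gradient descent. Write $\vec g_t := \nabla_{\bw}\ell_t(\vec w^{(t)})$ for the gradient queried at round $t$, so that the update is $\vec w^{(t+1)} = \Pi_{\cal V}\bigl(\vec w^{(t)} - \eta_t \vec g_t\bigr)$, where $\Pi_{\cal V}$ denotes Euclidean projection onto $\cal V$. First I would reduce to bounding the \emph{linearized} regret: by convexity of each $\ell_t$, we have $\ell_t(\vec w^{(t)}) - \ell_t(\vec u) \leq \dotp{\vec g_t}{\vec w^{(t)} - \vec u}$, so it suffices to show $\sum_{t=1}^{T}\dotp{\vec g_t}{\vec w^{(t)} - \vec u} \leq \tfrac{3}{2}GK\sqrt{T}$ for every $\vec u \in \cal V$.

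Next, track the potential $\Phi_t := \snorm{2}{\vec w^{(t)} - \vec u}^2$. Since projection onto a convex set is non-expansive and $\vec u \in \cal V$, we get $\Phi_{t+1} \leq \snorm{2}{\vec w^{(t)} - \eta_t \vec g_t - \vec u}^2 = \Phi_t - 2\eta_t \dotp{\vec g_t}{\vec w^{(t)} - \vec u} + \eta_t^2 \snorm{2}{\vec g_t}^2$, which rearranges to $\dotp{\vec g_t}{\vec w^{(t)} - \vec u} \leq \tfrac{\Phi_t - \Phi_{t+1}}{2\eta_t} + \tfrac{\eta_t}{2}\snorm{2}{\vec g_t}^2$. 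Summing over $t = 1,\dots,T$ and using $\snorm{2}{\vec g_t} \leq G$ together with $\eta_t = K/(G\sqrt{t})$, the second term contributes $\sum_{t=1}^{T}\tfrac{\eta_t}{2}\snorm{2}{\vec g_t}^2 \leq \tfrac{GK}{2}\sum_{t=1}^{T} t^{-1/2} \leq GK\sqrt{T}$, invoking the elementary bound $\sum_{t=1}^{T} t^{-1/2} \leq 2\sqrt{T}$.

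For the first term I would use summation by parts (Abel's identity) rather than a naive telescope, since the step sizes vary: $\sum_{t=1}^{T}\tfrac{\Phi_t - \Phi_{t+1}}{2\eta_t} = \tfrac{\Phi_1}{2\eta_1} - \tfrac{\Phi_{T+1}}{2\eta_T} + \sum_{t=2}^{T}\Phi_t\bigl(\tfrac{1}{2\eta_t} - \tfrac{1}{2\eta_{t-1}}\bigr)$. Because $\eta_t$ is non-increasing, each weight $\tfrac{1}{2\eta_t} - \tfrac{1}{2\eta_{t-1}}$ is non-negative, so bounding $\Phi_t \leq K^2$ (the squared diameter of $\cal V$) and dropping the non-positive term $-\Phi_{T+1}/(2\eta_T)$ collapses the sum to $\tfrac{K^2}{2\eta_T} = \tfrac{K^2 G\sqrt{T}}{2K} = \tfrac{GK\sqrt{T}}{2}$. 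Adding the two contributions yields $\sum_{t=1}^{T}\dotp{\vec g_t}{\vec w^{(t)} - \vec u} \leq \tfrac{GK\sqrt{T}}{2} + GK\sqrt{T} = \tfrac{3}{2}GK\sqrt{T}$, which is the claimed bound.

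This is a classical statement, so there is no genuine obstacle; the only point requiring care is the treatment of the time-varying step size in the telescoping step, where one must use summation by parts and exploit monotonicity of $\eta_t$ so that the diameter bound $\Phi_t \leq K^2$ enters with non-negative weights. A minor technicality is that $\ell_t$ need not be differentiable on all of $\cal V$; differentiability on an open neighborhood of $\cal V$, as assumed, makes $\vec g_t$ well-defined, and the argument works verbatim with subgradients otherwise. \hfill$\square$
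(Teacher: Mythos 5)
Your proof is correct: the convexity-based linearization, the non-expansiveness of the projection, the Abel-summation handling of the time-varying step size, and the bound $\sum_{t\le T} t^{-1/2}\le 2\sqrt{T}$ together give exactly the claimed $\tfrac32 GK\sqrt{T}$ bound. The paper does not prove this lemma itself but cites it (Theorem 3.1 of \cite{hazan2016introduction}), and your argument is essentially the standard proof given in that reference.
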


For the set ${\cal B}$, i.e., the unit ball with respect the $\snorm{2}{\cdot}$, the diameter $K$ equals to $2$.
We will show that the optimal vector $\wstar$ and our current candidate vector $\sample{\vec w}{t}$
have a separation in the value of $\ell_t$. Since we do not have access to $\ell_t$ precisely,
we need a function $\hat{\ell}_t$, which is close to $\ell_t$ with high probability.
The following simple lemma gives us an efficient way to compute an approximation $\hat{\ell}_t$ of $\ell_t$.

\begin{lemma}[Estimating the function $\ell_t$]\label{lem:algorithm_function_ell}
Let $\D$ be a $(3, L, R, \beta)$-well-behaved distribution and  $T_{\bw}(\bx)$ be
the non-negative function given by a $\rho$-certificate oracle.
Then after drawing $O(d\beta^2/\eps^2\log(d/\delta))$ samples from $\D$,
with probability at least $1-\delta$, the empirical distribution $\widehat{\D}$
satisfies the following conditions:
\begin{itemize}
\item $\left| \E_{(\bx,y) \sim \widehat{\D}} [(T_{\bw}(\vec x) +\frac{\rho}{2}) y\dotp{\vec u}{\vec x}]-
\E_{(\bx,y) \sim \D} [(T_{\bw}(\vec x) +\frac{\rho}{2}) y\dotp{\vec u}{\vec x}] \right| \leq \eps$,
for any  $\vec u \in {\cal B}$.

\item $\snorm{2}{\E_{(\bx,y) \sim \widehat{\D}} [(T_{\bw}(\vec x) +\frac{\rho}{2}) y\vec x]} \leq 1+\frac{\rho}{2}+\eps$.\end{itemize}
\end{lemma}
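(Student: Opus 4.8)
The plan is to prove Lemma~\ref{lem:algorithm_function_ell} by showing that the empirical average of the vector-valued random variable $\big(T_{\bw}(\bx)+\tfrac{\rho}{2}\big)\,y\,\bx$ concentrates around its mean in $\ell_2$-norm, via a coordinate-wise Bernstein bound, exactly mirroring the proof of Lemma~\ref{lem:algorithm_function_g}. Two observations make this work: (i) by the guarantee of the $\rho$-certificate oracle (Definition~\ref{def:certificate_oracle}) we have $\snorm{\infty}{T_{\bw}}\le 1$, and since $\rho=\rho(\eps)\le 1$ for isotropic $\D$ (Remark~\ref{rem:oracle-props}) we get $|T_{\bw}(\bx)+\tfrac{\rho}{2}|\le \tfrac32$; and (ii) since $\D$ is $(3,L,R,\beta)$-well-behaved, for every $j\in[d]$ the coordinate $\bx_j=\dotp{\vec e_j}{\bx}$ is sub-exponential with parameter $\beta$.

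First I would fix a coordinate $j\in[d]$ and set $Z_j := \big(T_{\bw}(\bx)+\tfrac{\rho}{2}\big)\,y\,\bx_j$. Since $|Z_j|\le \tfrac32|\bx_j|$ pointwise and $\bx_j$ is sub-exponential with parameter $\beta$, the centered variable $Z_j-\E[Z_j]$ is zero-mean sub-exponential with tail parameter $O(\beta)$ (cf.\ Fact~\ref{fct:tails}). Applying the Bernstein-type bound of Fact~\ref{lem:bernstein} to $\frac1N\sum_{i=1}^N Z_j^{(i)}$ with deviation level $\eps/\sqrt{d}$ yields $\pr\big[\,\big|\frac1N\sum_i Z_j^{(i)}-\E[Z_j]\big|\ge \eps/\sqrt d\,\big]\le 2\exp(-cN\eps^2/(d\beta^2))$. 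Union-bounding over the $d$ coordinates and choosing $N=O\big((d\beta^2/\eps^2)\log(d/\delta)\big)$, with probability at least $1-\delta$ every coordinate of the difference vector $\vec\Delta := \E_{(\bx,y)\sim\widehat{\D}}[(T_{\bw}(\bx)+\tfrac\rho2)y\bx]-\E_{(\bx,y)\sim\D}[(T_{\bw}(\bx)+\tfrac\rho2)y\bx]$ is within $\eps/\sqrt d$ of $0$, hence $\snorm{2}{\vec\Delta}\le\eps$. The first bullet then follows by Cauchy--Schwarz: for any $\vec u\in{\cal B}$ we have $\big|\E_{\widehat{\D}}[(T_{\bw}+\tfrac\rho2)y\dotp{\vec u}{\bx}]-\E_{\D}[(T_{\bw}+\tfrac\rho2)y\dotp{\vec u}{\bx}]\big| = |\dotp{\vec u}{\vec\Delta}|\le\snorm{2}{\vec u}\,\snorm{2}{\vec\Delta}\le\eps$.

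For the second bullet I combine the first with an a priori bound on the true mean. By the triangle inequality, $\snorm{2}{\E_{\widehat{\D}}[(T_{\bw}+\tfrac\rho2)y\bx]}\le\snorm{2}{\E_{\D}[(T_{\bw}+\tfrac\rho2)y\bx]}+\snorm{2}{\vec\Delta}$, and for any unit vector $\vec u$, using $|T_{\bw}+\tfrac\rho2|\le 1+\tfrac\rho2$, $|y|=1$, Cauchy--Schwarz, and isotropy of $\D_{\bx}$, $\E_{\D}[(T_{\bw}(\bx)+\tfrac\rho2)y\dotp{\vec u}{\bx}]\le(1+\tfrac\rho2)\E_{\D}[|\dotp{\vec u}{\bx}|]\le(1+\tfrac\rho2)\sqrt{\E_{\D}[\dotp{\vec u}{\bx}^2]}=1+\tfrac\rho2$. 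Taking the supremum over unit $\vec u$ gives $\snorm{2}{\E_{\D}[(T_{\bw}+\tfrac\rho2)y\bx]}\le 1+\tfrac\rho2$, so $\snorm{2}{\E_{\widehat{\D}}[(T_{\bw}+\tfrac\rho2)y\bx]}\le 1+\tfrac\rho2+\eps$.

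The argument is essentially routine, being the same empirical-mean concentration used for Lemma~\ref{lem:algorithm_function_g}; there is no real technical obstacle. The only points requiring a little care are: that the uniform bound of the first bullet over all $\vec u\in{\cal B}$ forces us to control the full $\ell_2$-norm of $\vec\Delta$ (hence the coordinate-wise bound plus union bound) rather than a single linear functional; that the boundedness of $T_{\bw}$ is invoked from the certificate oracle's defining property $\snorm{\infty}{T}\le 1$; and that the a priori bound $1+\tfrac\rho2$ in the second bullet uses isotropy of $\D_{\bx}$, which is exactly the setting of Proposition~\ref{pro:certificate_optimization} where this lemma is applied.
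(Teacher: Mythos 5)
Your proof is correct and follows essentially the same route as the paper's: the paper likewise bounds $\snorm{2}{\E_{\D}[(T_{\bw}+\frac{\rho}{2})y\bx]}\leq 1+\frac{\rho}{2}$ via $|T_{\bw}|\leq 1$ and isotropy, invokes Fact~\ref{fct:tails} to get sub-exponential tails for the centered summands, and then defers to the coordinate-wise Bernstein bound plus union bound from the proof of Lemma~\ref{lem:algorithm_function_g}. Your write-up simply makes those deferred steps explicit.
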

\begin{proof}
The proof of this lemma is similar to the proof of Lemma~\ref{lem:algorithm_function_g}.
Let $\hat{\vec g}= \E_{(\bx,y) \sim \widehat{\D}} [(T_{\bw}(\vec x) +\frac{\rho}{2}) y\vec x]$ and
$\vec g=\E_{(\bx,y) \sim \D} [(T_{\bw}(\vec x) +\frac{\rho}{2}) y\vec x]$.
For any unit vector $\vec u$, we have
\begin{align*}
|\dotp{\vec u}{\vec g}|\leq \E_{\bx \sim \D_\bx} [|T_{\bw}(\vec x)||\dotp{\vec u}{ \vec x}| ]+
\frac{\rho}{2} \E_{\bx \sim \D_\bx} [|\dotp{\vec u}{ \vec x}| ] \leq 1+\frac{\rho}{2}\;,
\end{align*}
where we used that $|T(\bx)|\leq 1$ and that the distribution $\D_\bx$ is in isotropic position.
Moreover, from Fact~\ref{fct:tails}, the random variable $X= (T_{\bw}(\vec x) +\frac{\rho}{2}) y \vec x - \vec g$
is sub-exponential with tail bound $\beta'=O(\beta)$.
Thus, the rest of proof follows as in Lemma~\ref{lem:algorithm_function_g}.
\end{proof}

The last item we need to proceed with our main proof is to establish
that when the oracle $\cal C$ in Step~\ref{alg:cerficate} of Algorithm~\ref{alg:OPGD-oracle}
returns a function $T_{\sample{\bw}{t}}$, then there exists a function $\ell_t$
for which our current candidate vector $\sample{\bw}{t}$ and the optimal vector $\wstar$ are not close.

\begin{lemma}[Error of $\ell_t$]\label{lem:expectation_error}
Let $\sample{\vec w}{t} \in \mathcal{B}$ and $\wstar$ be the optimal weight vector.
For $g_t(\vec x)= -(T_{\sample{\bw}{t}}(\vec x)+ \frac{\rho}{2} )$ and
$\ell_t(\vec w)= \E_{(\bx,y) \sim \D} [\dotp{g_t(\vec x) y {\vec x}}{\vec w}]$,
where $T_{\sample{\bw}{t}}(\vec x)$ is the function given by a $\rho$-certificate oracle,
we have that
$$\ell_t\left( \wstar \right) \le -\rho \alpha\left(\frac{R\;L}{A}\right)^{O(1/\tsyb)}
	\quad\mathrm{ and } \quad  \ell_t({\vec w}^{(t)}) \geq \snorm{2}{{\vec w}^{(t)}}\frac{\rho}{2}\;.$$
\end{lemma}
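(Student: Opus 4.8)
The plan is to split the linear loss $\ell_t(\vec w) = -\,\E_{(\bx,y)\sim\D}\big[(T_{\sample{\bw}{t}}(\bx) + \tfrac{\rho}{2})\, y\, \dotp{\bx}{\vec w}\big]$ into the contribution of the certificate $T_{\sample{\bw}{t}}$ and the contribution of the ``regularizer'' term $\tfrac{\rho}{2}\,y\,\dotp{\bx}{\vec w}$, and to analyze each of these at $\vec w = \wstar$ and at $\vec w = \sample{\bw}{t}$ separately. Both bounds then follow from Fact~\ref{obs:optimal_condition}, the guarantee of the $\rho$-certificate oracle (Definition~\ref{def:certificate_oracle}), and the distributional assumptions of Proposition~\ref{pro:certificate_optimization}.

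For the bound at $\wstar$, I would write $\ell_t(\wstar) = -\,\E[T_{\sample{\bw}{t}}(\bx)\, y\, \dotp{\bx}{\wstar}] - \tfrac{\rho}{2}\,\E[y\,\dotp{\bx}{\wstar}]$. Since $T_{\sample{\bw}{t}} \ge 0$, part~1 of Fact~\ref{obs:optimal_condition} gives $\E[T_{\sample{\bw}{t}}(\bx)\, y\, \dotp{\bx}{\wstar}] \ge 0$, so the first term is nonpositive. For the second term, note that $\E[y\,\dotp{\bx}{\wstar}] = \E_{\bx\sim\D_{\bx}}[(1-2\eta(\bx))\,|\dotp{\wstar}{\bx}|]$ (exactly as in the proof of Fact~\ref{obs:optimal_condition}), and I would lower bound this by a positive quantity: letting $V$ be a $3$-dimensional subspace containing $\wstar$ and $S = \{\bx : \dotp{\wstar}{\bx}\in[R/4,R/2],\ \snorm{2}{\proj_V \bx}\le R\}$, anti-anti-concentration of the $3$-dimensional marginal $\gamma_V$ (Definition~\ref{def:wb-no-U}) gives $\pr_{\bx\sim\D_{\bx}}[\bx\in S] = \Omega(LR^3)$, and Lemma~\ref{lem:tsybakov_expectation} then yields $\E[\1_S(\bx)(1-2\eta(\bx))] \ge C_\alpha^A\,(\Omega(LR^3))^{1/\alpha}$. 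Since $|\dotp{\wstar}{\bx}| \ge R/4$ on $S$, it follows that $\E[(1-2\eta(\bx))|\dotp{\wstar}{\bx}|] \ge (R/4)\,C_\alpha^A\,(\Omega(LR^3))^{1/\alpha} = \alpha\,(RL/A)^{O(1/\alpha)}$, where we keep the factor $\alpha$ coming from $C_\alpha^A = \alpha((1-\alpha)/A)^{(1-\alpha)/\alpha}$ explicit and absorb the remaining polynomial dependence on $R,L,1/A$ (with exponent $O(1/\alpha)$) into the $(RL/A)^{O(1/\alpha)}$ notation. Combining, $\ell_t(\wstar) \le -\tfrac{\rho}{2}\,\alpha\,(RL/A)^{O(1/\alpha)} = -\rho\,\alpha\,(RL/A)^{O(1/\alpha)}$.

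For the bound at $\sample{\bw}{t}$, set $\widehat{\bw} = \sample{\bw}{t}/\snorm{2}{\sample{\bw}{t}}$, a unit vector, so that $\ell_t(\sample{\bw}{t}) = \snorm{2}{\sample{\bw}{t}}\cdot\big(-\E[(T_{\sample{\bw}{t}}(\bx) + \tfrac{\rho}{2})\, y\, \dotp{\bx}{\widehat{\bw}}]\big)$. The certificate returned by the oracle satisfies $\E[T_{\sample{\bw}{t}}(\bx)\, y\, \dotp{\bx}{\widehat{\bw}}] \le -\rho$ by Definition~\ref{def:certificate_oracle} (and Remark~\ref{rem:one-sided}), hence $-\E[T_{\sample{\bw}{t}}(\bx)\, y\, \dotp{\bx}{\widehat{\bw}}] \ge \rho$. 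For the regularizer part, since $|y| = 1$ and $\D_{\bx}$ is isotropic, $|\E[y\,\dotp{\bx}{\widehat{\bw}}]| \le \E[|\dotp{\bx}{\widehat{\bw}}|] \le \sqrt{\E[\dotp{\bx}{\widehat{\bw}}^2]} = 1$, so $-\tfrac{\rho}{2}\E[y\,\dotp{\bx}{\widehat{\bw}}] \ge -\tfrac{\rho}{2}$. Adding, $-\E[(T_{\sample{\bw}{t}}(\bx) + \tfrac{\rho}{2})\, y\, \dotp{\bx}{\widehat{\bw}}] \ge \rho - \tfrac{\rho}{2} = \tfrac{\rho}{2}$, and therefore $\ell_t(\sample{\bw}{t}) \ge \snorm{2}{\sample{\bw}{t}}\,\tfrac{\rho}{2}$. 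The only mildly nontrivial step is the lower bound on $\E[(1-2\eta(\bx))|\dotp{\wstar}{\bx}|]$: one must combine anti-anti-concentration with the Tsybakov condition via Lemma~\ref{lem:tsybakov_expectation} to verify that, even after the reweighting by $1-2\eta(\bx)$, this quantity stays bounded away from $0$ with the claimed $\alpha\,(RL/A)^{O(1/\alpha)}$ dependence; everything else is a direct application of Fact~\ref{obs:optimal_condition}, the certificate-oracle guarantee, and isotropy of $\D_{\bx}$.
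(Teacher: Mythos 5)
Your proposal is correct and follows essentially the same route as the paper's proof: both split $\ell_t$ into the certificate term and the $\tfrac{\rho}{2}$-regularizer term, use Fact~\ref{obs:optimal_condition} plus anti-anti-concentration and Lemma~\ref{lem:tsybakov_expectation} to handle $\wstar$, and combine the oracle guarantee with Cauchy--Schwarz and isotropy at $\sample{\bw}{t}$. The only difference is that you spell out the anti-anti-concentration region and the normalization of $\sample{\bw}{t}$ more explicitly than the paper does.
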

\begin{proof}
Without loss of generality, let $\wstar={\vec e_1}$.
From Fact~\ref{obs:optimal_condition} and the definition of
$\eta(\bx)$, we have that for every $t\in[T]$, it holds
$\ell_t(\wstar) \leq -\lambda\E_{\vec x\sim \D_{\bx}}[|\dotp{\wstar}{\vec x}|(1-2\eta(\bx))]$.
To bound from above this expectation, we use the $\bounded$-bound properties.
We have that
\begin{align}
\E_{\vec x\sim \D_{\bx}}[|\dotp{\wstar}{\vec x}|(1-2\eta(\bx))]& \geq \frac{R}{4} C_{\tsyb}^{\tsya}\left(\frac{R^3\;L}{2}\right)^{1/\tsyb} \nonumber\;,
\end{align}
where in the last inequality we used Lemma~\ref{lem:tsybakov_expectation}. Therefore,
$\ell_t\left( \wstar \right) \le -\frac \rho 2 \, \frac{R}{4} C_{\tsyb}^{\tsya}\left(\frac{R^3\;L}{2}\right)^{1/\tsyb}$.
Then we bound from below $\ell_t({\vec w^{(t)}})$ as follows
\begin{align*}
\ell_t({\vec w^{(t)}})
&=- \E_{(\bx,y) \sim \D} \left[(T_{\sample{\bw}{t}}\left( \bx \right)+\lambda) \dotp{{\vec w^{\left( t \right) }}}{\bx}y\right] \ge \snorm{2}{{\sample{\vec w}{t} }} \rho - \E_{\bx \sim \D_{\x}} \left[\frac \rho 2\dotp{{\vec w^{(t)}}}{\bx}y\right] \\
& \ge \snorm{2}{{\vec w^{(t)}}} \rho- \frac \rho 2 \sqrt{ \E_{\bx \sim \D_{\x}} \left[\dotp{\vec w^{(t)}}{\bx}^2\right] }
\geq \snorm{2}{{\vec w}^{(t)}} \frac \rho 2  \;,
\end{align*}
where we used the Cauchy-Schwarz inequality and the fact that $\x$ is in isotropic position.
\end{proof}

We are ready to prove Proposition~\ref{pro:certificate_optimization}.

\begin{proof}[Proof of Proposition~\ref{pro:certificate_optimization}]
Let $G=  \alpha\left(\frac{R\;L}{A}\right)^{O(1/\tsyb)}$.
Assume, in order to reach a contradiction, that for all steps $t\in[T]$ it holds that
$\theta\left( \sample{\vec w}{t},\wstar \right)\ge \eps $. For each step $t$, let  $T_{\sample{\vec w}{t} }(\bx)$
be the non-negative function output by the oracle ${\cal C}(\sample{\vec w}{t},\eps,\delta/T)$.
Note that
$$\E_{(\bx,y) \sim \D}[T_{\sample{\vec w}{t} }(\bx)y\dotp{\vec w^{(t)}}{\bx}]\leq-\snorm{2}{\vec w^{(t)}} \frac \rho2 \;.$$
Let $ \hat{\ell_t}(\vec w)$ be the empirical estimator of
$\ell_t\left( \vec w \right)=\E[\hat{\ell_t}(\vec w)]=  -\E_{(\bx,y) \sim \D} [\dotp{\left(T_{\sample{\vec w}{t} }(\bx) +\frac{\rho}{2} \right) y {\vec x}}{\vec w}]$. Using Lemma~\ref{lem:algorithm_function_ell},
for $N=O\left(\frac{d \beta^2}{\rho^2 G^2}\log\left( \frac{T}{\delta}\right) \right)$ samples, we have that
$\pr\left[ |\hat{\ell_t}(\vec w^{(t)})-\ell_t(\vec w^{(t)})|\geq \frac 14G\rho\right]\leq \frac{\delta}{2T}$ and
$\pr\left[ | \hat{\ell_t}(\vec w^{\ast})-\ell_t(\vec w^{\ast})|\geq \frac 14 G\rho\right]\leq \frac{\delta}{2T}$.

From Lemma~\ref{lem:expectation_error}, for every  step $t$, we have that
$ \ell_t({\vec w}^{(t)}) \geq \frac 12\snorm{2}{{\vec w}^{(t)}}\rho\geq 0$ and
$\ell_t\left( \wstar \right) \le -\rho G$, thus, with probability at least $1-\frac{\delta}{T}$,
$\hat{\ell_t}({\vec w}^{(t)}) \geq-\frac 14G\rho$ and $\hat{\ell_t}\left( \wstar \right) \le -\frac 34G\rho $.
Using Lemma~\ref{lem:online_optimization},  we get
\[ \frac 1T \sum_{t=1}^{T}\left(\hat{\ell_t}\left( {\vec w^{(t)}} \right)   - \hat{\ell_t}\left( {\vec w^*} \right)\right)
\leq \frac{1+\frac \rho2 + \frac 14\rho G }{\sqrt{T}}\;.\]
By the union bound, it follows that with probability at least $1-\delta$, we have that
\[ \frac12G\rho \leq 	\frac 1T \sum_{t=1}^{T}\left(\hat{\ell_t}\left( {\vec w^{(t)}} \right)   - \hat{\ell_t}\left( {\vec w^{\ast}} \right)\right)
\leq \frac{4}{\sqrt{T}} \;,\]
which leads to a contradiction for $T=\frac{16}{(\rho G)^2}$.

Thus, either there exists $t\in [T]$ such that
$\theta\left( \sample{\vec w}{t},\wstar \right)< \eps $, which the algorithm returns in Step~\ref{alg:final_return},
or the oracle $\cal C$ did not provide a correct certificate, which happens with probability at most $\delta$.
Moreover, the algorithm calls the certificate $T$  times and the number of samples needed to construct
the empirical distribution $\widehat{\D}$ is
$$O(T \, N)=\frac{d\beta^2}{\rho^4}\log\left( \frac{1}{\delta \rho}\right) \frac 1\alpha\left(\frac{A}{R\;L}\right)^{O(1/\tsyb)} \;.$$
This completes the proof.
\end{proof}

Using Proposition~\ref{pro:certificate_optimization} and our certificate algorithms,
we obtain the following parameter estimation result for halfspaces with Tsybakov noise.

\begin{theorem}[Parameter Estimation of Tsybakov Halfspaces Under Well-Behaved Distributions]  \label{thm:main_angle}
Let $\D$ be a $(3, L, R, \beta)$-well-behaved isotropic distribution on $\R^{d} \times \{\pm 1\}$ that
satisfies the $(\tsyb,\tsya)$-Tsybakov noise condition with respect to an unknown halfspace $f(\bx) = \sgn(\dotp{\vec w^{\ast}}{\bx} )$.
There exists an algorithm that draws $N=\beta^4 \left(\frac{d \, \tsya}{R L  \eps}\right)^{O(1/\tsyb)} \log\left(1/\delta\right)$
samples from $\D$, runs in $\poly(N,d)$ time, and computes a vector $\wh{\vec w}$ such that with probability $1-\delta$
we have $\theta(\wh{\vec w}, \wstar) \leq \eps$.
\end{theorem}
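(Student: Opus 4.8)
The plan is to realize the parameter-estimation claim as a direct instantiation of the black-box reduction in Proposition~\ref{pro:certificate_optimization}, using the certifying algorithm of Theorem~\ref{thm:cert-wb} as the certificate oracle. The only mismatch to repair is that Definition~\ref{def:certificate_oracle} requires the certifying function to satisfy $\snorm{\infty}{T}\le 1$, whereas Theorem~\ref{thm:cert-wb} returns a function $T_{\vec w}$ of the form \eqref{eq:certificate_form}, i.e.\ $T_{\vec w}(\bx)=\psi(\bx)/\dotp{\vec w}{\bx}$. Since $\psi$ is supported on the band $\{\sigma_1\le\dotp{\vec w}{\bx}\le\sigma_2\}$ and the proof of Theorem~\ref{thm:cert-wb} takes $\sigma_1=\rho R/2$ with $\rho=\Theta(\theta/\sqrt d)$, we have $\snorm{\infty}{T_{\vec w}}\le 2/(\rho R)=O(\sqrt d/(\theta R))$. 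Hence $T_{\vec w}/\snorm{\infty}{T_{\vec w}}$ is an admissible oracle output, and dividing the guarantee \eqref{eqn:cert-ineq} by $\snorm{\infty}{T_{\vec w}}$ shows that, run on a fresh batch of $((\tsya/(LR))(d/\eps))^{O(1/\tsyb)}\log(T/\delta)$ samples at confidence level $\delta/T$, this procedure implements a $\rho$-certificate oracle with $\rho(\theta)=\frac{1}{\beta}\big(LR\theta/(\tsya d)\big)^{O(1/\tsyb)}$; the polynomial-in-$(d,1/R,1/\theta)$ factor introduced by the renormalization is absorbed into the $O(1/\tsyb)$ exponent (using $1/\tsyb>1$).

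Next I would feed this oracle to Proposition~\ref{pro:certificate_optimization} with worst-case target angle $\theta=\eps$. That proposition runs projected online gradient descent on the adaptively chosen linear losses built from the certifying functions and, with probability $1-\delta$, returns $\wh{\vec w}$ with $\theta(\wh{\vec w},\wstar)\le\eps$, making $T=\frac{1}{\rho^2(\eps)}\,\frac1\tsyb\big(\tsya/(RL)\big)^{O(1/\tsyb)}$ oracle calls and drawing an additional $N_{\mathrm{ogd}}=d\,\frac{T\beta^2}{\rho^2(\eps)}\log\!\big(\tfrac{dT}{\delta\rho(\eps)}\big)$ samples. Crucially, the one-sidedness of the oracle (Remark~\ref{rem:one-sided}) is already handled inside Proposition~\ref{pro:certificate_optimization}, so no extra argument is needed: even though the certificate may "fail" at angles slightly larger than $\eps$, the failure probability there is controlled, and the returned iterate is provably $\eps$-close to $\wstar$.

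The final step is bookkeeping. Substituting $\rho(\eps)=\frac1\beta\big(LR\eps/(\tsya d)\big)^{O(1/\tsyb)}$ gives $1/\rho^2(\eps)=\beta^2\big(\tsya d/(LR\eps)\big)^{O(1/\tsyb)}$ and $T=\beta^2\big(\tsya d/(LR\eps)\big)^{O(1/\tsyb)}$; hence $N_{\mathrm{ogd}}=\beta^4\big(\tsya d/(LR\eps)\big)^{O(1/\tsyb)}\log(1/\delta)$, and the $T$ certificate calls consume $T\cdot((\tsya/(LR))(d/\eps))^{O(1/\tsyb)}\log(T/\delta)=\beta^2\big(\tsya d/(LR\eps)\big)^{O(1/\tsyb)}\log(1/\delta)$ samples in total. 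Summing and absorbing polynomial and logarithmic factors into the exponent and the stated $\log(1/\delta)$ yields overall sample complexity $N=\beta^4\big(d\,\tsya/(RL\,\eps)\big)^{O(1/\tsyb)}\log(1/\delta)$; since sampling, each invocation of the certificate algorithm, and each OGD update run in polynomial time, the total running time is $\poly(N,d)$. I expect the only genuinely delicate point to be the $\snorm{\infty}{T_{\vec w}}$ normalization and verifying that the renormalized $\rho(\theta)$ still has the claimed form, so that the parameters $T$ and $N$ from Proposition~\ref{pro:certificate_optimization} compose to exactly the stated bound; everything else is assembling already-proved components.
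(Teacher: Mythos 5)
Your proposal is correct and follows essentially the same route as the paper: invoke Theorem~\ref{thm:cert-wb}, normalize the certifying function by $\snorm{\infty}{T_{\vec w}}$ (which the paper bounds by $O(d/\theta)$ via $\min_{\x\in B}|\dotp{\vec w}{\x}|\ge\sigma_1$) so that it meets Definition~\ref{def:certificate_oracle}, absorb the resulting polynomial loss into the $O(1/\tsyb)$ exponent of $\rho(\theta)$, and then apply Proposition~\ref{pro:certificate_optimization} with target angle $\eps$. Your sample-complexity bookkeeping is consistent with the stated bound, so no gaps remain.
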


We note here that Theorem~\ref{thm:main_angle} does not require the ``$U$ bounded'' condition of the underlying distribution on examples
that is required in our Theorem~\ref{thm:pac-wb}.  Recall that this condition corresponds to an anti-concentration property of the data distribution.
With this additional property, Theorem~\ref{thm:pac-wb} follows easily from Theorem~\ref{thm:main_angle},
since it allows us to translate the small angle guarantee of Theorem~\ref{thm:main_angle} to the
zero-one loss.

\begin{proof}[Proof of Theorem~\ref{thm:main_angle}]
We start by noting how to obtain a $\rho$-certificate oracle for $(3, L, R, \beta)$-well-behaved
distributions. The algorithm of Theorem~\ref{thm:cert-wb},  returns a function $T_\bw$
such that $\E_{(\vec x,y) \sim \D} \left[T_\bw(\bx) y \dotp{\vec w}{\bx} \right] \leq - \frac 1\beta\left(\theta L R/(d\tsya)\right)^{O(1/\alpha)}$.
By definition, the function $T_{\bw}$ (i.e., Equation~\eqref{eq:certificate_form}) is bounded, namely
$\snorm{\infty}{T_\bw}\leq \frac{1}{\min_{\x \in B} |\dotp{\vec w}{\vec x} | } \leq O\left(\frac{d}{ \theta}\right)$, where $B$ is the band from Equation~\eqref{eq:certificate_form}.
Therefore, the function  $T_\bw/\snorm{\infty}{T_\bw}$ satisfies the conditions of a $\rho$-certificate oracle. Thus,
by scaling the output of the algorithm of Theorem~\ref{thm:cert-lc}, we obtain a
$\frac{1}{\beta}\left(\theta L R/(d\tsya)\right)^{O(1/\alpha)}$-certificate oracle.
From Proposition~\ref{pro:certificate_optimization}, this gives us an algorithm that returns a vector
$\widehat{\bw}$ such that   $\theta(\widehat{\vec w}, \wstar) \leq\eps$ with probability $1-\delta$.
\end{proof}

To prove Theorem~\ref{thm:pac-wb}, we need the following claim for $\boundedU$-well-behaved
distributions.
\begin{claim}[see, e.g., Claim 2.1 of \cite{DKTZ20}] \label{lem:angle_zero_one}
Let $\D_{\bx}$ be an $\boundedU$-well-behaved distribution on $\R^d$.
Then, for any $0< \eps \leq 1$, we have that $\err_{0-1}^{\D_{\bx}}(h_{\vec u},h_{\vec v})
\leq U  \beta^2\log^2\left(1/\eps \right)\cdot \theta(\vec v, \vec u) + \eps \;.$
\end{claim}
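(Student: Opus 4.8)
The plan is to reduce the computation to the two-dimensional plane $V=\mathrm{span}(\vec u,\vec v)$ and then split the disagreement region into a near-origin piece, handled by the anti-concentration bound $U$, and a tail piece, handled by the sub-exponential concentration parameter $\beta$.

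First I would note that whether $h_{\vec u}(\bx)\neq h_{\vec v}(\bx)$ depends on $\bx$ only through $\proj_V(\bx)$, so $\err_{0-1}^{\D_{\bx}}(h_{\vec u},h_{\vec v})=\pr_{\bz\sim(\D_{\bx})_V}[\bz\in W]$, where $W=\{\bz\in V:\sgn(\dotp{\vec u}{\bz})\neq\sgn(\dotp{\vec v}{\bz})\}$ is the ``double wedge'' cut out by the two lines in $V$ orthogonal to $\vec u$ and $\vec v$. Since $\D_{\bx}$ is $\boundedU$-well-behaved, the $2$-dimensional marginal $(\D_{\bx})_V$ has density at most $U$ everywhere and obeys $\pr_{\bz\sim(\D_{\bx})_V}[|\dotp{\vec a}{\bz}|\geq t]\leq\exp(1-t/\beta)$ for every unit vector $\vec a\in V$.

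Next, for a radius $r>0$ I would bound the two contributions separately. A short planar-geometry computation shows that $W\cap\{\snorm{2}{\bz}\leq r\}$ is the union of two opposite circular sectors, each of angular width $\theta=\theta(\vec u,\vec v)$, intersected with the disk of radius $r$, and hence has area $\theta r^2$; using the density bound $U$ this gives $\pr[\bz\in W,\ \snorm{2}{\bz}\leq r]\leq U\theta r^2$. For the complementary event, writing $\snorm{2}{\bz}^2=\dotp{\vec e_1}{\bz}^2+\dotp{\vec e_2}{\bz}^2$ for an orthonormal basis $\vec e_1,\vec e_2$ of $V$, the event $\snorm{2}{\bz}>r$ forces $|\dotp{\vec e_i}{\bz}|>r/\sqrt 2$ for some $i$, so a union bound and the sub-exponential tail bound yield $\pr[\snorm{2}{\bz}>r]\leq 2\exp(1-r/(\sqrt 2\,\beta))$.

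Finally I would choose $r=\Theta(\beta\log(1/\eps))$ so that the tail term is at most $\eps$; plugging this value into $U\theta r^2$ and absorbing the numerical constants into the choice of $r$ gives $\err_{0-1}^{\D_{\bx}}(h_{\vec u},h_{\vec v})\leq U\beta^2\log^2(1/\eps)\,\theta(\vec v,\vec u)+\eps$ (the inequality being vacuous when the right-hand side exceeds $1$). The only steps requiring any care are the elementary geometry identifying the area of the double wedge inside a disk and the lift of the one-dimensional sub-exponential bound to a tail bound on $\snorm{2}{\bz}$ in the plane; neither is a genuine obstacle, which is consistent with the claim being quoted from earlier work.
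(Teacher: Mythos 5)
The paper itself gives no proof of this claim --- it is quoted from Claim 2.1 of \cite{DKTZ20} --- and your argument is exactly the standard one used there: restrict to the plane $V=\mathrm{span}(\vec u,\vec v)$, observe the disagreement set is a double wedge of angle $\theta(\vec u,\vec v)$, bound its mass inside a disk of radius $r$ by $U\theta r^2$ via the density upper bound, bound the mass outside the disk by the sub-exponential tails, and set $r=\Theta(\beta\log(1/\eps))$. So in approach you are aligned with the cited proof.

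Two points deserve care. First, and more substantively: the hypothesis as stated in this paper is that $\D_{\bx}$ is $\boundedU$-well-behaved, and Definition~\ref{def:wb} with $k=3$ only asserts that \emph{three}-dimensional projections have density at most $U$; your step ``the $2$-dimensional marginal $(\D_{\bx})_V$ has density at most $U$ everywhere'' does not formally follow, since a pointwise upper bound on a $3$-dimensional marginal does not in general survive marginalizing out a coordinate (mass can concentrate along the integrated direction, and the sub-exponential condition controls probabilities, not pointwise densities). This is really a mismatch between this paper's definition and the one in \cite{DKTZ20}, where the $U$-bound is imposed on $2$-dimensional projections and your argument goes through verbatim; but as a proof from the stated hypotheses you would need either to assume the $2$-dimensional bound or to derive one (e.g., a bound of the form $O(U\beta\log(\cdot))$ on $2$-dimensional densities, which would worsen the final exponent on $\beta\log(1/\eps)$). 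Second, a cosmetic issue: making the tail term at most $\eps$ forces $r\geq\sqrt{2}\,\beta(1+\ln(2/\eps))$, so the wedge term is $2U\beta^2(1+\ln(2/\eps))^2\theta$ rather than $U\beta^2\log^2(1/\eps)\theta$; constants cannot be ``absorbed into the choice of $r$'' because shrinking $r$ breaks the tail bound. The claim is only ever invoked up to such constant factors, so this does not affect anything downstream, but the inequality as literally written is off by a constant.
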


\begin{proof}[Proof of Theorem~\ref{thm:pac-wb}]
Running Algorithm~\ref{alg:OPGD-oracle} for
$\eps' = \frac{\eps  }{2 U\conb^2} \frac{1}{\log^2(2 / \eps)}$, by
Theorem~\ref{thm:main_angle}, Algorithm~\ref{alg:OPGD-oracle}
outputs a $\widehat{\vec w}$ such that
$\theta(\widehat{\vec w},\wstar)\leq \frac{\eps }{2 U \beta^2} \frac{1}{2 \log^2(1/ \eps)}$,
then from Claim~\ref{lem:angle_zero_one}, we
have $\err_{0-1}( h_{\widehat{\vec w}},f)\leq \eps$.
\end{proof}

\appendix

\end{document}